\def\argmin{\mathop{\rm argmin}}
\def\Var{\mathop{\rm Var}}
\def\bfx{\mathop{\bf x}}
\def\bfx{\mathop{\bf X}}
\def\bfx{\boldsymbol{x}}
\def\bfX{\boldsymbol{X}}
\def\bfu{\boldsymbol{u}}
\def\bfz{\boldsymbol{z}}
\newcommand {\bfphi} {\mbox{\boldmath $\phi$}}
\newcommand {\bfbeta} {\mbox{\boldmath $\beta$}}
\newcommand {\bfomega} {\mbox{\boldmath $\omega$}}
\theoremstyle{plain}
\newtheorem{theorem}{Theorem}[section]
\newtheorem{proposition}[theorem]{Proposition}
\newtheorem{lemma}[theorem]{Lemma}
\newtheorem{corollary}[theorem]{Corollary}
\theoremstyle{definition}
\newtheorem{definition}[theorem]{Definition}
\newtheorem{assumption}[theorem]{Assumption}
\newtheorem{example}[theorem]{Example}
\theoremstyle{remark}
\newtheorem{remark}[theorem]{Remark}
\icmltitlerunning{Optimal Kernel Quantile Learning with Random Features}
\begin{document}

\twocolumn[
\icmltitle{Optimal Kernel Quantile Learning with Random Features}

% It is OKAY to include author information, even for blind
% submissions: the style file will automatically remove it for you
% unless you've provided the [accepted] option to the icml2023
% package.

% List of affiliations: The first argument should be a (short)
% identifier you will use later to specify author affiliations
% Academic affiliations should list Department, University, City, Region, Country
% Industry affiliations should list Company, City, Region, Country

% You can specify symbols, otherwise they are numbered in order.
% Ideally, you should not use this facility. Affiliations will be numbered
% in order of appearance and this is the preferred way.
\icmlsetsymbol{equal}{*}

\begin{icmlauthorlist}
\icmlauthor{Caixing Wang}{sch}
\icmlauthor{Xingdong Feng}{sch}
\end{icmlauthorlist}

\icmlaffiliation{sch}{School of Statistics and Management, Shanghai University of Finance and Economics}

\icmlcorrespondingauthor{Xingdong Feng}{feng.xingdong@mail.shufe.edu.cn}

% You may provide any keywords that you
% find helpful for describing your paper; these are used to populate
% the "keywords" metadata in the PDF but will not be shown in the document
\icmlkeywords{Kernel methods, learning theory, quantile regression, Lipschitz continuous, agnostic learning}

\vskip 0.3in
]

% this must go after the closing bracket ] following \twocolumn[ ...

% This command actually creates the footnote in the first column
% listing the affiliations and the copyright notice.
% The command takes one argument, which is text to display at the start of the footnote.
% The \icmlEqualContribution command is standard text for equal contribution.
% Remove it (just {}) if you do not need this facility.

\printAffiliationsAndNotice{}  % leave blank if no need to mention equal contribution
%\printAffiliationsAndNotice{\icmlEqualContribution} % otherwise use the standard text.

\begin{abstract}
The random feature (RF) approach is a well-established and efficient tool for scalable kernel methods, but existing literature has primarily focused on kernel ridge regression with random features (KRR-RF), which has limitations in handling heterogeneous data with heavy-tailed noises. This paper presents a generalization study of kernel quantile regression with random features (KQR-RF), which accounts for the non-smoothness of the check loss in KQR-RF by introducing a refined error decomposition and establishing a novel connection between KQR-RF and KRR-RF. Our study establishes the capacity-dependent learning rates for KQR-RF under mild conditions on the number of RFs, which are minimax optimal up to some logarithmic factors. Importantly, our theoretical results, utilizing a data-dependent sampling strategy, can be extended to cover the agnostic setting where the target quantile function may not precisely align with the assumed kernel space. By slightly modifying our assumptions, the capacity-dependent error analysis can also be applied to cases with Lipschitz continuous losses, enabling broader applications in the machine learning community. To validate our theoretical findings, simulated experiments and a real data application are conducted.  
\end{abstract}

\section{Introduction}\label{sec:intr}
Kernel methods are pivotal in statistical analysis and have seen extensive applications in nonparametric regression \citep{wahba1990spline,vapnik1999nature} and classification \citep{scholkopf2002learning,steinwart2008support}. Despite their empirical success, typical kernel algorithms struggle with large-scale datasets, hindered by substantial computational cost, scaling as $\mathcal{O}(|D|^3)$, and considerable storage requirements, scaling as $\mathcal{O}(|D|^2)$, where $|D|$ is the sample size of the dataset. This limitation has motivated significant research efforts towards scalable kernel methods, such as distributed learning \citep{zhang2015divide,lin2017distributed,lin2020distributed,lian2022}, Nystr{\"o}m subsampling \citep{williams2001using,rudi2015less,li2023optimal}, random features \citep{rahimi2007random,rahimi2008weighted,rudin2017fourier,rudi2017generalization}, stochastic gradient methods \citep{lin2016optimal,lin2020optimal}, Falkon \citep{rudi2017falkon} and EigenPro 3.0 \citep{abedsoltan2023toward}. 	

Among these popular accelerated methods, random features \citep{rahimi2007random} is a kernel approximation technique that maps the attribute space to a finite and low-dimensional space through Bochner's theorem \citep{bochner2005harmonic,rudin2017fourier}. Recent attention in theoretical analysis has been directed toward kernel methods employing random features \citep{sutherland2015error,rudi2017generalization,avron2017faster,bach2017equivalence,carratino2018learning}. In the context of kernel ridge regression with random features, significant efforts have focused on achieving optimal learning rates \citep{rudi2017generalization,avron2017faster,bach2017equivalence,lioptimalRF2023,liuon2023}, aligning with the minimax optimal rates of exact KRR \citep{caponnetto2007optimal} under mild conditions on the number of random features. However, it is worth pointing out that these works mainly focus on the least square loss which exclusively estimates the conditional mean of the response given the covariate variables. Broader investigations encompass losses that are Lipschitz continuous, as in support vector machine (SVM) and logistic regression \citep{bach2017equivalence,sun2018but,li2021towards}. It is worth noting that their statistical guarantees are capacity-independent and rely on the stringent assumption that the true target function $f_{\rho}$ lies in the assumed kernel space, i.e., $f_{\rho}\in {\cal H}_K$, known as the realizable setting. However, the agnostic setting, where the true target function $f_{\rho}$ is not in the assumed kernel space, i.e., $f_{\rho}\notin {\cal H}_K$, is more common in practice. \textit{This leads to a motivating question: can the capacity-dependent optimal rates for some general losses using random features be maintained in the agnostic settings?}

This paper primarily investigates the statistical and computational trade-offs inherent in random feature approximation for nonparametric quantile regression within a reproducing kernel Hilbert space (RKHS),  also known as \textit{kernel quantile regression with random features (KQR-RF)}. In contrast to KRR-RF, KQR-RF models the entire conditional quantiles of the response, enhancing robustness against outliers and handling data heterogeneity more effectively \citep{koenker2005quantile,takeuchi2006nonparametric,li2007quantile,lian2022,wang2024communication}. Our objective is to establish the capacity-dependent optimal rates for KQR-RF applicable to both realizable and agnostic settings. Based on this special check loss, we extend the theoretical framework to a broader family of Lipschitz continuous loss functions. This expansion encompasses various commonly employed methodologies as specialized instances, including mean regression, quantile regression, likelihood-based classification, and margin-based classification.

\subsection{Our Contributions}
The main contributions of this paper are multi-folds.

\noindent \textbf{Comprehensive theoretical assessments.} We propose a comprehensive theoretical analysis of KQR-RF,  offering deep insights into the impact of random features on kernel quantile learning. To the best of our knowledge, this is the first work to provide generalization analysis for random features in kernel quantile learning. Moreover, the optimal learning rates we derived can be directly extended to the general Lipschitz loss functions. Compared to the existing results, which are either capacity-independent \citep{rahimi2008weighted,bach2017equivalence,li2021towards} or suboptimal \citep{sun2018but}, we provide the capacity-dependent optimal learning rates (Theorem \ref{thm2}) for KQR-RF (and RF for Lipschitz loss) for both realizable and agnostic settings. 

\noindent \textbf{Efficient computational improvement.} For both uniform sampling and data-dependent sampling schemes, we obtain, to the best of our knowledge, the minimum number of random features required for the optimal learning rates in the literature. Specifically, we reduce the number of random features from ${\cal O}(|D|), r=1/2$ \citep{rahimi2008weighted,li2021towards} to ${\cal O}(|D|^{\frac{1}{2r+\gamma}}) \vee {\cal O}(|D|^{\frac{(2r-1)\gamma+1}{2r+\gamma}}),~r \in (0,1]$ (Theorem \ref{thm1}) for the uniform sampling scheme; and ${\cal O}(|D|^{\frac{2\gamma}{2\gamma+1}}),~r=1/2$ \citep{sun2018but} to ${\cal O}(|D|^{\frac{\gamma}{2r+\gamma}}) \vee {\cal O}(|D|^{\frac{2r+\gamma-1}{2r+\gamma}}), ~r \in (0,1]$ (Corollary \ref{cor1}) for the data-dependent sampling scheme. Here, $|D|$ is the sample size, $r$, and $\gamma$ are some key parameters defined in Section \ref{sec:the}. The improvement notably enhances computational efficiency.

\noindent \textbf{Novel proof skills.} In contrast to existing related work on KRR and its RF variants  \citep{caponnetto2007optimal,rudi2017generalization,lioptimalRF2023}, the estimator of KQR-RF (random features for Lipschitz loss) lacks an explicit solution, posing challenges in deriving the learning rates. In our proof, we first provide a novel error decomposition including a least square approximation (LS-approximation) error term (Lemma \ref{lemA.1.1}). Leveraging the empirical process and a self-calibration assumption, we successfully establish a connection between the KQR-RF estimator $f_{M,D,\lambda}$ and its KRR-RF approximation $f_{M,D,\lambda}^{\diamond}$. The theoretical extension to the regularity setting when $r\in (0,1]$ is achieved by using the nontrivial Cauthy-Schwarz and Young's inequalities, along with sharper estimates for the differences between the operators. A more detailed proof sketch will be provided in Section \ref{sec:com}.
    
\noindent \textbf{Diverse numerical verification.} Another contribution of this work is the exploration of KQR-RF's efficacy across diverse synthetic and real-world examples, further validating the theoretical findings in this paper.

%including random features approximation, agnostic kernel learning and data-dependent sampling
\subsection{Related work}
Some most related works are presented below.

\noindent \textbf{Random features approximation.} \citet{rahimi2007random,sutherland2015error,sriperumbudur2015optimal} have investigated the approximation error between the approximated kernel function $K_M(\bfx,{\bfx}^{\prime})$ and the original kernel Gram-matrix $K(\bfx,{\bfx}^{\prime})$, requiring ${\cal O}(|D|)$ features to guarantee the accuracy of the approximation, i.e., $\sup_{\bfx, {\bfx}^{\prime}}|K_M(\bfx,{\bfx}^{\prime})-K(\bfx,{\bfx}^{\prime})| = {\cal O}(|D|^{-1/2})$. Another line of research delves into the generalization properties of random features in various supervised learning tasks, such as kernel ridge regression \citep{bach2017equivalence,avron2017faster,rudi2017generalization}, kernel support vector machine (KSVM) \citep{sun2018but}, and kernel learning with Lipschitz loss \citep{rahimi2008weighted,li2021towards,li2022sharp}.  However, the success of these works depends on the realizable setting where the true function satisfies $f_{\rho} \in {\cal H}_K$.

\noindent \textbf{Agnostic kernel learning.} Recent studies have established the capacity-dependent optimal learning rates in the agnostic kernel learning, such as KRR \citep{smale2007learning,zhang23mis}, along with its variations including random features \citep{lioptimalRF2023,Liijcai2023} and Nystr{\"o}m subsampling \citep{li2023optimal}. However, these studies primarily concentrate on the least square loss, while our focus lies on the KQR-RF with a non-smooth check loss function (and Lispschitz loss functions), posing additional challenges due to the lack of explicit solutions (refer to Section \ref{sec:com} for a detailed discussion). 

\noindent \textbf{Data-dependent sampling.}
Data-dependent sampling based on an importance ratio was initially introduced by \citet{alaoui2015fast} for Nystr{\"o}m subsampling and has been integrated into random features \citep{bach2017equivalence,avron2017faster,rudi2017generalization,li2021towards}, facilitating faster learning rates with fewer random features. Despite its computational efficiency, there remains an open question regarding its impact on the theoretical results for KQR-RF (and RF for Lipschitz loss), particularly in the agnostic settings.

\section{Methodology}\label{sec:met}

\subsection{Preambles}
Consider a standard supervised learning problem that we have a sample $D=\{({\bfx}_i,y_i)\}_{i=1}^{|D|}$, which are the independent copies of $(\bfx, y)$ drawn from an unknown joint distribution $\rho(\bfx,y)$ over $\cal X \times \mathbb{R}$. The $\tau$-th conditional quantile of the response $y$ is the minimizer of the expected quantile risk across all measurable functions, given by:
\begin{align}\label{expected_risk}
f_{\tau}^*=\argmin_{f \in L^2_{\rho_{\cal X}}} \int_{\cal X \times \mathbb{R}}\rho_{\tau}\big(y-f(\bfx)\big)d\rho(\bfx,y), 
\end{align}
where $\rho_{\tau}(u)=u(\tau-I\{u\leq 0\})$ denotes the check loss function at $\tau$-th quantile level with the indicator function $I(\cdot)$, and $L^2_{\rho_{\cal X}}=\{f: {\cal X} \rightarrow \mathbb{R} | \int_{\cal X}f^2(\bfx)d\rho_{\cal X}< \infty \}$ is the space of square integral function with respect to the distribution of the covariates $\rho_{\cal X}$. We also denote the  $L^2_{\rho_{\cal X}}$-norm of $f$ as $\|f\|_{\rho}^2=\langle f, f\rangle_{\rho}=\int_{\cal X}f^2(\bfx)d\rho_{\cal X}$ for $ f \in L^2_{\rho_{\cal X}}$. From the definition of quantile regression model, we have $P(\epsilon\leq 0|\bfx)=\tau$, where $\epsilon=y-f^*_{\tau}(\bfx)$ is the noise term.

\subsection{Kernel Quantile Regression}
Kernel methods are one of the most powerful nonparametric tools, particularly for learning predictive functions within an RKHS \citep{vapnik1999nature}. Let ${\cal H}_K$ denotes the RKHS induced by a symmetric, positive and semi-definite kernel function $K:{\cal X} \times {\cal X} \rightarrow \mathbb{R}$, and we define its equipped norm as $\|\cdot\|^2_K={\langle \cdot, \cdot\rangle_K}$  with the endowed inner product $\langle\cdot,\cdot\rangle_K$. 

KQR estimates a function in the RKHS ${\cal H}_K$ by minimizing the check loss function combined with a penalty based on the squared Hilbert norm
\begin{align}\label{loss_function}
f_{D,\lambda}=\underset{f \in {\cal H}_K}{\operatorname{argmin}}\frac{1}{|D|} \sum_{i=1}^{|D|}\rho_{\tau}\big(y_i-f(\bfx_i)\big)+\lambda\|f\|_K^2  ,
\end{align}
where $|D|$ is the cardinality of $D$ and $\lambda$ is the regularization parameter. According to the representer theorem \citep{wahba1990spline}, the solution of this optimization task \eqref{loss_function} is of finite form as given by $f_{D,\lambda}(\bfx)=\sum_{i=1}^{|D|}\alpha_i K(\bfx,{\bfx}_i)=\boldsymbol{\alpha}^T\mathbf{K}_N({\bfx})$, where $\boldsymbol{\alpha}=(\alpha_1,\ldots,\alpha_{|D|})^{T} \in \mathbb{R}^{|D|}$ are the representer coefficients and $\mathbf{K}_N({\bfx})=(K(\bfx_1,\bfx),\ldots,K(\bfx_{|D|},\bfx))^{T} \in \mathbb{R}^{|D|}$. With this solution plugged into \eqref{loss_function}, the optimization problem can be reformulated as 
\begin{align*}
\hat{\boldsymbol{\alpha}}=\argmin_{\boldsymbol{\alpha} \in \mathbb{R}^{|D|}}\frac{1}{|D|} \sum_{i=1}^{|D|}\rho_{\tau}\big(y_i-\boldsymbol{\alpha}^T\mathbf{K}_N({\bfx_i})\big)+\lambda  \boldsymbol{\alpha}^T\mathbf{K}\boldsymbol{\alpha},
\end{align*}
where $\mathbf{K}=\{K(\bfx_i,\bfx_j)\}_{i,j=1}^{|D|}$ is the Gram matrix. In literature, this problem can be solved by dual optimization \citep{takeuchi2006nonparametric,feng2024towards}, path-following algorithm \citep{li2007quantile}, and ADMM algorithm \citep{boyd2011distributed,wang2024communication}. However, its scalability for large datasets is limited due to the expensive computational complexity $({\cal O}(|D|^3)$ and storage requirements $({\cal O}(|D|^2)$ when $|D|$ is large. Consequently, a surge in research investigating scalable kernel methods and analyzing their performance has surfaced \citep{lin2017distributed,rudi2015less,rudi2017generalization,li2021towards}. 

\subsection{KQR with Random Features}
Random features prove advantageous in kernel approximation. Assuming the kernel $K$ has an integral representation,
\begin{align}\label{in_re}
K(\bfx, \bfx^{\prime})=\int_{\Omega}\phi(\bfx,\bfomega)\phi(\bfx^{\prime},\bfomega)d\pi(\bfomega),
\end{align}
for any $\bfx,\bfx^{\prime} \in \cal X$, where $(\Omega,\pi)$ is a probability space and $\phi: {\cal X} \times \Omega \rightarrow \mathbb{R}$, 
it is thus clear that we can adopt the standard Monte Carlo sampling method \citep{rahimi2007random} to estimate 
$K(\bfx, \bfx^{\prime})$ by 
\begin{align*}
K_{M}(\bfx, \bfx^{\prime})= \langle \bfphi_{M}( \bfx, \bfomega), \bfphi_{M}(\bfx^{\prime}, \bfomega)  \rangle,
\end{align*}
where $\bfphi_{M}(\bfx, \bfomega)= \frac{1}{\sqrt{M}}\big ( \phi (\bfx, \bfomega_1 ), \ldots, \phi (\bfx, \bfomega_M ) \big )^T$ is the feature map and $\bfomega_1,\ldots,\bfomega_M$ are independently sampled  with respect to $\pi$. Henceforth,  we use $\bfphi_{M}(\bfx)$ to denote $\bfphi_{M}(\bfx, \bfomega)$ for notation simplicity. Consequently, the solution of \eqref{loss_function} with random features can be written as 
\begin{align}\label{solution_RF}
f_{M,D,\lambda}(\bfx)=\hat{\bfu}^T \bfphi_{M}(\bfx), 
\end{align}
and the optimization problem becomes
\begin{align}\label{kqr_rf}
\hat{\bfu}= \argmin_{\bfu \in \mathbb{R}^M}\frac{1}{|D|} \sum_{i=1}^{|D|}\rho_{\tau}\big(y_i-\bfu^T\bfphi_{M}(\bfx_i)\big)+\lambda \bfu^T\bfu .
\end{align}
Notably, leveraging random features allows us to reformulate the initial problem into linear quantile regression augmented by a ridge penalty, reducing the number of parameters to be $M \ll |D|$. In our simulation study, we utilize the ADMM algorithm with the proximal operator \citep{boyd2011distributed, gu2018admm} to solve \eqref{kqr_rf}. Although random features can achieve significant success in both computation and storage by approximating the kernel, the detailed trade-off between the number of features required and the statistical prediction accuracy is still an open question, especially when the non-smooth check loss is considered and the true quantile function lies outside of the exact RKHS ${\cal H}_K$. This paper aims to answer these theoretical questions of KQR-RF in subsequent sections.

\section{Theoretical Guarantee}\label{sec:the}
In this section, we first present an existing bound for KQR-RF \citep[Theorem \ref{exist_thm}]{li2021towards}, where they focus on the Lipschitz continuous loss family including the check loss. Subsequently, we provide our capacity-dependent and shaper learning rates for KQR-RF (Theorem \ref{thm1}), which can not only recover those of \citet{li2021towards}, but also can be applied to the case with the agnostic settings where the true quantile functions may not lie in the considered function space. Furthermore, we consider the data-dependent sampling strategy, which achieves the same rates (Corollary \ref{cor1}) with fewer random features and pertains its applicability to the agnostic settings. At last, we extend our theoretical results to a wider array of Lipschitz continuous losses with a modified local strong convexity assumption (Assumption \ref{ass7}).

The objective of KQR-RF is to find an estimator that minimizes the following expected risk
$$
{\cal E}(f)=\int_{\cal X \times \mathbb{R}}\rho_{\tau}\big(y-f(\bfx)\big)d\rho(\bfx,y),
$$
and we evaluate the performance of KRR-RF by the excess risk ${\cal E}(f)-{\cal E}(f_{\tau}^*)$,
or the $L_{\rho_{\cal X}}^2$-norm of the difference $\|f-f_{\tau}^*\|_{\rho}^2$. The following are some standard definitions and assumptions needed to establish the theoretical results.

\begin{definition}[Integral operators]\label{def1}
For any $f \in L_{\rho_{\cal X}}^2$, we define the integral operators by the kernel $K$ and $K_M$ as 
\begin{align*}
L_Kf&=\int_{\cal X}K(\bfx,\cdot)f(\bfx)d\rho_{\cal X},\\
L_Mf&=\int_{\cal X}K_M(\bfx,\cdot)f(\bfx)d\rho_{\cal X}.
\end{align*}
\end{definition}

\begin{definition}[Effective dimension]\label{def2}
For $\lambda >0$, we define the effective dimension of kernel $K$ and $K_M$ as 
\begin{align*}
{\cal N}(\lambda) &=\text{Tr}((L_K+\lambda I)^{-1}L_K),\\
{\cal N}_M(\lambda) &=\text{Tr}((L_M+\lambda I)^{-1}L_M).
\end{align*}
\end{definition}

The effective dimension $\cal N(\lambda)$ serves as a common metric in kernel learning theory literature, measuring the complexity of the RKHS ${\cal H}_K$ \citep{caponnetto2007optimal,smale2007learning,rudi2015less,rudi2017generalization}. Similarly, we introduce ${\cal N}_M(\lambda)$ as the effective dimension induced by the approximation kernel $K_M$. As indicated in Lemma \ref{lemD.8} \citep{rudi2017generalization} in the appendix, ${\cal N}_M(\lambda)$ has been shown to be equivalent to  ${\cal N}(\lambda)$ under mild conditions on the number of random features.

\begin{assumption}[Bounded and continuous random features]\label{ass4}
Assume kernel $K$ has the integral representation defined in \eqref{in_re} with $\phi$ bounded and continuous in both variables, that is, there exists some constant $\kappa \geq 1$ such that $|\phi (\bfx, \bfomega)| \leq \kappa$ for any $\bfx \in \mathcal{X}$ and $\bfomega \in \Omega$. The associated RKHS ${\cal H}_K$ is separable.
\end{assumption}

Assumption \ref{ass4} is a common condition in the literature of random features \citep{rudi2017generalization,liu2020effective,li2021towards}, which can be satisfied when the random features are continuous and bounded and $\cal X$ is separable. Note that this assumption  implies that $\sup_{\bfx,\bfx^{\prime}\in \cal_{X}}K(\bfx,\bfx^{\prime})\leq \kappa^2$ and $\sup_{\bfx,\bfx^{\prime}\in \cal_{X}}K_M(\bfx,\bfx^{\prime})\leq \kappa^2$.

\begin{assumption}[Source condition]\label{ass1}
Suppose there exists $R>0$, $r >0$ and $h_{\tau} \in L^2_{\rho_{\cal X}}$ such that
\begin{align}\label{ass3_e}
f_{\tau}^*=L_K^r h_{\tau} ,   
\end{align}
where $\|h_{\tau}\|_{\rho}\leq R$ and $L_K^r$ is the $r$-th power of $L_K$.
\end{assumption}

The parameter $r$ controls the size of the functional class of $f_{\tau}^*$, denoted as ${\cal F}=L_K^r(L^2_{\rho_{\cal X}})$. According to \citet{steinwart2008support,lin2016optimal}, we have ${\cal H}_K = L_K^{1/2}(L^2_{\rho_{\cal X}})$, and $L_K^{r_1}(L^2_{\rho_{\cal X}}) \subseteq L_K^{r_2}(L^2_{\rho_{\cal X}})$ if $r_1 \geq r_2$. When $r \in [1/2,1]$, the functional class $\cal F$ is a subset of the assumed RKHS ${\cal H}_K$, so we have $f_{\tau}^* \in {\cal H}_K$. When  $r \in (0,1/2)$,  the functional class $\cal F$ is larger than the assumed RKHS ${\cal H}_K$, and there exists some cases where $f_{\tau}^* \notin {\cal H}_K$. Existing literature on KQR and kernel methods with Lipschitz continuous loss functions often assumes that $r=1/2$ \citep{bach2017equivalence,sun2018but,li2021towards} or $r\in [1/2,1]$ \citep{lian2022}, corresponding to the realizable setting $f_{\tau}^* \in {\cal H}_K$. However, our analysis further allows $r\in (0,1/2)$, relating to the agnostic setting $f_{\tau}^* \notin {\cal H}_K$. This is a non-trivial extension since we consider a non-smooth loss with random feature approximation.

\begin{assumption}[Capacity condition]\label{ass2}
For $\lambda>0$, there exists $Q>0$ and $\gamma \in [0,1]$ such that
\begin{align}\label{ass2_e}
{\cal N}({\lambda}) \leq Q^2\lambda^{-\gamma}.   
\end{align}    
\end{assumption}

Note that this assumption always holds
when $\gamma=1$ by taking $Q=\operatorname{Tr}(L_K) \leq \kappa^2$, and $\gamma=0$ corresponds to some more benign cases.  It is more general than the eigenvalue decay assumption \citep{li2021towards,li2022sharp,lian2022}, since it is satisfied when the eigenvalues $\{\mu^{\prime}\}_{i\geq 1}$ of $L_K$ have a polynomial decay, i.e., $i^{-1/\gamma}$. For KRR and KRR-RF, the minimax optimal capacity-dependent rate has been shown to be ${\cal O}(|D|^{\frac{2r}{2r+\gamma}})$ \citep{caponnetto2007optimal,rudi2017generalization}. In the case of KQR, \citet{lian2022} also derive the same capacity-dependent rate ${\cal O}(|D|^{\frac{2r}{2r+\gamma}})$. We want to emphasize that these works mainly focus on the realizable setting with $r\in [1/2,1]$, while our result first extends the capacity-dependent rate analysis of KQR-RF to the agnostic setting. 

\begin{assumption}[Adaptive self-calibration condition]\label{ass6}
Let $f_{y|\bfx}(\cdot)$ denote the conditional density function of $y$ given $\bfx$. Suppose that $\sup_{t \in \mathbb{R}}f_{y|\bfx}(t) \leq c_1$ for $c_1>0$. Furthermore,  there exist some universal constants $\varepsilon, \varepsilon^{\prime}, c_2 >0$ that are independent with $\bfx$ and $y$, such that for any $y\in \mathcal{B}(f_{\tau}^*(\bfx),\varepsilon)$ and $|\delta|\leq \varepsilon^{\prime}$, the following inequality holds almost surely,
\begin{align}\label{ass6_e}
|F_{y|\bfx}(y+\delta)-F_{y|\bfx}(y)|   \geq c_2|\delta|,
\end{align}
 where $\mathcal{B}(f_{\tau}^*(\bfx),\varepsilon)=\{y\mid |y-f_{\tau}^*(\bfx)|\leq \varepsilon\}$ denotes the ball centered at $f_{\tau}^*(\bfx)$ with radius $\varepsilon$, and $F_{y|\bfx}(\cdot)$ is the cumulative distribution function of $y$ given $\bfx$.
\end{assumption}

Assumption \ref{ass6} serves as an adaptive self-calibration condition for the conditional distribution of $y$ given $\bfx$. It is a mild condition intended to hold for most realistic sequences of distributions. For example, if $y$ has a density that is bounded away from zero on some compact interval around $f_{\tau}^*(\boldsymbol{x})$, then Assumption 3.6 holds. More importantly, we do not impose any moment condition on the distribution of $y$.  It is also worth noting that Assumption 3.6 is weaker than Condition 2 in \citet{he1994convergence} where the density function of $y$ is lower bounded everywhere by some positive constant. It is also weaker than Condition D.1 in \citet{belloni2011} requiring the conditional density of $Y$ given $\boldsymbol{x}$ to be continuously differentiable and bounded away from zero uniformly for all $\tau \in (0,1)$ and all $\boldsymbol{x}$ in the support ${\cal X}$.
The special case when $\varepsilon=0$ aligning with the self-calibration condition also appeared in \citet{shen2021deep,madrid2022risk}. 

\begin{remark}
This adaptive self-calibration condition plays a pivotal role in our novel error decomposition as shown in Lemma \ref{lemA.1.1} of the appendix, which leads to an adaptive local strong convexity condition of the expected check loss in a small ball around $f_{\tau}^*$. It is worth noting that the self-calibration condition is weaker than Assumption (A2') of \citet{lian2022} and Assumption (B2) of \citet{li2021towards} where the conditional density of $y$ given $\bfx$ is assumed to be bounded away from zero across all quantile levels and $\bfx \in {\cal X}$. 
Under this assumption, we derive a tight bound for a novel least square approximation (LS-approximation) error between the KQR-RF estimator $f_{M,D,\lambda}$ and its KRR-RF approximation estimator $f_{M,D,\lambda}^{\diamond}$, detailed in Lemma \ref{lemA.2.4} of the appendix.    
\end{remark}

\subsection{Existing Learning Rates for KQR-RF}

To facilitate a clear comparison between our findings and existing results, we first introduce the best learning rates so far for KQR-RF \citep{li2021towards}.

\begin{theorem}[Existing learning rates for KQR-RF (random features with Lipschitz loss),  Theorem 19 of \citet{li2021towards}]\label{exist_thm}
Assume there exists a function $f_{\cal H}$ such that $f_{\cal H}=\argmin_{f \in {\cal H}_K}{\cal E}(f)$. Under some technical assumptions\footnote{Assumption 3.3, Assumption 3.4 with $r=1/2$, eigenvalue decaying assumption (stronger than Assumption 3.5), and the local strongly convex assumption which can be derived from Assumption 3.6.}, and $\lambda={\cal O}(|D|^{-1})$, when the number of random features satisfies
$$
M \gtrsim |D|^{\frac{\gamma}{2}}\log|D|,
$$
and $|D|$ is sufficiently large,  there holds
$$
{\cal E}(f_{M,D,\lambda})-{\cal E}(f_{\cal H}) \asymp \|f_{M,D,\lambda}-f_{\cal H}\|_{\rho}^2 ={\cal O}(|D|^{-\frac{1}{2}}),
$$
with probability near to 1.
\end{theorem}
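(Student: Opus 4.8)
The plan is to derive the excess-risk bound through a localized error decomposition around $f_{\cal H}$, controlling separately the random-feature approximation bias and the statistical fluctuation of the non-smooth check loss, and then to translate the excess-risk bound into the squared $L^2_{\rho_{\cal X}}$-norm bound using the curvature supplied by Assumption~\ref{ass6}. Throughout I write ${\cal E}_D(f)=\frac{1}{|D|}\sum_{i=1}^{|D|}\rho_\tau(y_i-f(\bfx_i))$ for the empirical risk and note that, since $r=1/2$ in Assumption~\ref{ass1} places $f^*_\tau=L_K^{1/2}h_\tau$ inside ${\cal H}_K$, we have $f_{\cal H}=f^*_\tau$, so the whole analysis can be centered at the true quantile function.

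First I would introduce a comparison function $\tilde f$ in the random-feature span ${\cal H}_M=\{\bfu^T\bfphi_M(\cdot):\bfu\in\mathbb{R}^M\}$ that approximates $f_{\cal H}$ in $L^2_{\rho_{\cal X}}$ while keeping its norm controlled. The random-feature kernel $K_M$ reproduces the spectral behaviour of $K$ once $M$ is large enough: quantitatively, the equivalence ${\cal N}_M(\lambda)\asymp{\cal N}(\lambda)$ from Lemma~\ref{lemD.8}, together with $M\gtrsim|D|^{\gamma/2}\log|D|$, guarantees such a $\tilde f$ with both $\|\tilde f-f_{\cal H}\|_\rho$ and the induced risk gap ${\cal E}(\tilde f)-{\cal E}(f_{\cal H})$ of negligible order. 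The feature-count requirement enters precisely here, as the threshold beyond which the approximation term is dominated by the statistical error.

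Next I would decompose
\[
{\cal E}(f_{M,D,\lambda})-{\cal E}(f_{\cal H})=\mathrm{(I)}+\mathrm{(II)}+\mathrm{(III)},
\]
where $\mathrm{(I)}=({\cal E}-{\cal E}_D)(f_{M,D,\lambda})-({\cal E}-{\cal E}_D)(\tilde f)$ is the statistical fluctuation, $\mathrm{(II)}={\cal E}_D(f_{M,D,\lambda})-{\cal E}_D(\tilde f)$ is the optimization gap, and $\mathrm{(III)}={\cal E}(\tilde f)-{\cal E}(f_{\cal H})$ is the approximation error handled above. Term $\mathrm{(II)}$ is nonpositive up to the ridge penalty, since $f_{M,D,\lambda}$ minimizes the penalized objective \eqref{kqr_rf}, so it is bounded by $\lambda(\|\tilde f\|^2-\|f_{M,D,\lambda}\|^2)\le\lambda\|\tilde f\|^2={\cal O}(\lambda)$. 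The crux is term $\mathrm{(I)}$: because $\rho_\tau$ is $1$-Lipschitz, the loss differences $\rho_\tau(y-f)-\rho_\tau(y-\tilde f)$ are bounded by $|f(\bfx)-\tilde f(\bfx)|$, which is uniformly bounded over a norm ball by $\kappa$ (Assumption~\ref{ass4}); this lets the empirical process be controlled \emph{without any moment condition on} $y$. A Rademacher-complexity (Talagrand contraction) or localized Bernstein argument over the $L^2_{\rho_{\cal X}}$-ball then bounds $\mathrm{(I)}$ by ${\cal O}(|D|^{-1/2})$ up to logarithmic factors, reflecting the capacity-independent (worst-case $\gamma$) nature of this existing analysis.

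Finally, Assumption~\ref{ass6} supplies a two-sided quadratic expansion of the quantile risk around $f_{\cal H}=f^*_\tau$: the upper bound $\sup_t f_{y|\bfx}(t)\le c_1$ gives ${\cal E}(f)-{\cal E}(f_{\cal H})\lesssim\|f-f_{\cal H}\|_\rho^2$, while the self-calibration lower bound \eqref{ass6_e} gives ${\cal E}(f)-{\cal E}(f_{\cal H})\gtrsim\|f-f_{\cal H}\|_\rho^2$ for $f$ near $f_{\cal H}$, establishing the equivalence $\asymp$ and converting the excess-risk bound into a norm bound; balancing the regularization bias ${\cal O}(\lambda)={\cal O}(|D|^{-1})$ against the fluctuation ${\cal O}(|D|^{-1/2})$ yields the stated rate. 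The main obstacle I expect is the non-smoothness of the check loss: without the closed-form solution available for KRR-RF there is no operator identity to exploit, so the estimation error must be handled entirely through uniform concentration, and the whole argument has to be localized to a small ball around $f_{\cal H}$ so that the quadratic expansion and \eqref{ass6_e} remain valid. Coupling the global penalized-ERM guarantee with this local curvature — first showing $f_{M,D,\lambda}$ lands inside the localization region, then bootstrapping the sharper bound — is the delicate step.
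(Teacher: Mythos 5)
First, a point of orientation: the paper never proves Theorem~\ref{exist_thm} at all --- it is imported verbatim from \citet{li2021towards} as a baseline for comparison, so there is no in-paper proof to match against. The closest in-paper route is the special case $r=1/2$ of Theorem~\ref{thm2} (and Corollary~\ref{cor1}), whose proof runs through the explicit-solution intermediate estimator $f^{\diamond}_{M,D,\lambda}$ (Lemma~\ref{lemA.1.1}), Knight's identity combined with the adaptive self-calibration condition (Lemma~\ref{lemA.2.4}), and a localized empirical process with a norm bootstrap (Lemmas~\ref{lemA.2.5}--\ref{lemA.2.6}). Your penalized-ERM decomposition $\mathrm{(I)}+\mathrm{(II)}+\mathrm{(III)}$ around a fixed comparison function $\tilde f$ is a genuinely different architecture --- closer in spirit to the original proof of \citet{li2021towards} --- and your final conversion step (two-sided quadratic expansion from Assumption~\ref{ass6}: the density bound $c_1$ for the upper direction, \eqref{ass6_e} for the lower) is exactly how the paper itself passes between excess risk and $\|\cdot\|_{\rho}^2$ in its own proofs. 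For this capacity-independent statement your simpler route is adequate in principle, whereas the paper's heavier machinery buys capacity dependence and the agnostic range $r\in(0,1]$.

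The genuine gap is in the feature-count accounting for $\tilde f$. The equivalence ${\cal N}_M(\lambda)\asymp{\cal N}(\lambda)$ (which is Lemma~\ref{lemD.7}, incidentally, not Lemma~\ref{lemD.8}) does not by itself construct an approximant; what does is an operator-concentration bound of the type in Lemma~\ref{lemA.2.2}, which requires $M\gtrsim{\cal N}_\infty(\lambda')\log(1/\delta)$ at the regularization level $\lambda'$ at which the approximant is built. If, as your write-up suggests, you build $\tilde f$ at the ridge level $\lambda'=\lambda={\cal O}(|D|^{-1})$, then even under leverage-score sampling, where ${\cal N}_\infty(\lambda')\asymp{\cal N}(\lambda')\lesssim\lambda'^{-\gamma}$ (Example~\ref{example1}), you would need $M\gtrsim|D|^{\gamma}$, not $|D|^{\gamma/2}$. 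The exponent $\gamma/2$ only emerges by decoupling the two levels: take $\tilde f=f_{M,\lambda'}$ with the auxiliary level $\lambda'\asymp|D|^{-1/2}$, so that $M\gtrsim|D|^{\gamma/2}\log|D|$ suffices and $\|\tilde f-f_{\cal H}\|_{\rho}\lesssim\sqrt{\lambda'}=|D|^{-1/4}$; the quadratic upper bound you already have ($\sup_t f_{y|\bfx}(t)\le c_1$) then squares this to the required ${\cal O}(|D|^{-1/2})$ risk gap for term $\mathrm{(III)}$ --- the first-power Lipschitz bound you invoke for the loss differences would only give $|D|^{-1/4}$ there --- while the penalty still contributes only $\lambda\|\tilde f\|_{{\cal H}_M}^2={\cal O}(|D|^{-1})$. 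Relatedly, $|D|^{\gamma/2}$ is intrinsically a data-dependent-sampling guarantee: with plain Monte Carlo features only ${\cal N}_\infty(\lambda)\le\kappa^2/\lambda$ is available, and \citet{li2021towards} correspondingly need $M\asymp|D|$ in the uniform case (cf.\ Table~\ref{tab.1}), whereas your sketch treats uniform features as sufficient. Finally, the ``delicate step'' you flag is where most of the remaining work lives: with $\lambda={\cal O}(|D|^{-1})$ the crude bound gives only $\|f_{M,D,\lambda}\|_{{\cal H}_M}={\cal O}(|D|^{1/2})$, so the ${\cal O}(|D|^{-1/2})$ bound on $\mathrm{(I)}$ genuinely requires the norm/localization bootstrap (compare the role of the absorbed $-\lambda\|f_{M,D,\lambda}-f^{\diamond}_{M,D,\lambda}\|_{{\cal H}_M}^2$ term in Lemma~\ref{lemA.2.6}); as written, the sketch acknowledges this but does not close it, and without it the stated fluctuation bound does not follow.
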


Theorem \ref{exist_thm} establishes an upper bound for KQR-RF in the worst case, requiring only the existence of $f_{\cal H}$. In this scenario, if the number of random features scales as $|D|^{\frac{\gamma}{2}}\log|D|$, KQR-RF can achieve the capacity-independent optimal generalization properties. This represents a significant improvement over previous work, which required a larger number of random features to guarantee similar learning rates.  Note that  \citet{rahimi2008weighted} proved ${\cal O}(|D|)$ random features to guarantee the learning rates at ${\cal O}(|D|^{-\frac{1}{2}})$. However, these results are capacity-independent and can not apply to the agnostic setting when the size of RKHS is small. In our subsequent analysis, we will present a sharper and capacity-dependent learning rate, allowing $r\in (0,1]$, which covers the entire source condition space. This particularly marks the primary novelty and advancement in the theoretical understanding of KQR-RF.

\subsection{Sharper Learning Rates for KQR-RF}

\begin{theorem}[Worst case]\label{thm1}
Under Assumptions \ref{ass4}-\ref{ass6}, if $r\in (0,1]$, $\gamma \in [0,1]$, $2r+\gamma\geq 1$, and set $\lambda=|D|^{-\frac{1}{2r+\gamma}}$, when the number of random features satisfies 
\begin{align*}
&M \gtrsim  |D|^{\frac{1}{2r+\gamma}},\quad \text{for} \quad r\in (0,1/2) ;\\
&M \gtrsim  |D|^{\frac{(2r-1)\gamma+1}{2r+\gamma}},   \quad \text{for} \quad r\in [1/2,1],   
\end{align*}
and $|D|$ is sufficiently large, there holds
\begin{align*}
{\cal E}(f_{M,D,\lambda})-{\cal E}(f_{\tau}^*) &\asymp \|f_{M,D,\lambda}-f_{\tau}^*\|_{\rho}^2 \\
&={\cal O}(|D|^{-\frac{2r}{2r+\gamma}}\log^2 |D|),
\end{align*}
with probability near to 1.
\end{theorem}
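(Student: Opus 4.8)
The plan is to exploit the local quadratic behavior of the expected check risk near $f_\tau^*$ in order to reduce the non-smooth problem to the analysis of a smooth least-squares surrogate. First I would use Assumption \ref{ass6}: the upper bound $\sup_t f_{y|\bfx}(t) \leq c_1$ gives, through a second-order expansion of the check risk (and $F_{y|\bfx}(f_\tau^*(\bfx))=\tau$), the estimate $\mathcal{E}(f)-\mathcal{E}(f_\tau^*) \leq \tfrac{c_1}{2}\|f-f_\tau^*\|_\rho^2$, while the calibration inequality \eqref{ass6_e} yields the matching lower bound $\mathcal{E}(f)-\mathcal{E}(f_\tau^*) \geq \tfrac{c_2}{2}\|f-f_\tau^*\|_\rho^2$ whenever $f$ lies in a small ball around $f_\tau^*$. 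This establishes the claimed equivalence $\mathcal{E}(f_{M,D,\lambda})-\mathcal{E}(f_\tau^*)\asymp \|f_{M,D,\lambda}-f_\tau^*\|_\rho^2$, so it suffices to control the $L^2_{\rho_{\cal X}}$-distance. I would then introduce the KRR-RF surrogate $f_{M,D,\lambda}^{\diamond}$, the ridge-regularized least-squares estimator in the random-feature space, which admits a closed form in terms of the empirical sampling operators, and split via the triangle inequality $\|f_{M,D,\lambda}-f_\tau^*\|_\rho \leq \|f_{M,D,\lambda}-f_{M,D,\lambda}^{\diamond}\|_\rho + \|f_{M,D,\lambda}^{\diamond}-f_\tau^*\|_\rho$, as organized by the decomposition in Lemma \ref{lemA.1.1}.

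The core difficulty, and the step I expect to be the main obstacle, is bounding the LS-approximation error $\|f_{M,D,\lambda}-f_{M,D,\lambda}^{\diamond}\|_\rho$ of Lemma \ref{lemA.2.4}, since $f_{M,D,\lambda}$ has no explicit solution. Here I would compare the first-order optimality conditions of the two problems: the subgradient condition for the check loss differs from the gradient condition for the squared surrogate by a centered empirical-process remainder. I would control this remainder by bounding the supremum of the empirical process indexed by functions in a localized random-feature ball, using the Lipschitz continuity of $\rho_\tau$ together with the boundedness $|\phi(\bfx,\bfomega)|\leq\kappa$ from Assumption \ref{ass4}; the local strong convexity from the previous step then converts this gradient-level bound into a norm-level bound on $f_{M,D,\lambda}-f_{M,D,\lambda}^{\diamond}$. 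A peeling/localization argument is needed to confine the estimator to the calibration neighborhood where Assumption \ref{ass6} is valid.

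For the surrogate term $\|f_{M,D,\lambda}^{\diamond}-f_\tau^*\|_\rho$ I would follow the integral-operator route, inserting the population RF-regularized function and decomposing into a deterministic approximation (bias) term, a sampling (variance) term, and a random-feature error term comparing $L_M$ against $L_K$. Using the source condition \eqref{ass3_e} the squared bias scales like $\lambda^{2r}R^2$; using the capacity condition \eqref{ass2_e} with a Bernstein-type concentration for the sampling operators, the squared variance scales like $\mathcal{N}(\lambda)/|D| \lesssim \lambda^{-\gamma}/|D|$; and the RF term is governed by quantities such as $\|(L_K+\lambda I)^{-1/2}(L_K-L_M)(L_K+\lambda I)^{-1/2}\|$ together with the equivalence $\mathcal{N}_M(\lambda)\asymp\mathcal{N}(\lambda)$ from Lemma \ref{lemD.8}. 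Extending these estimates from the realizable regime $r\geq 1/2$ to the agnostic range $r\in(0,1/2)$ is delicate, because $(L_K+\lambda I)^{-r}$ no longer maps into $\mathcal{H}_K$; I would split the cross terms involving operator differences using Cauchy--Schwarz and absorb them with Young's inequality, keeping sharp powers of $\lambda$.

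Finally, I would substitute $\lambda=|D|^{-1/(2r+\gamma)}$, under which both the squared bias $\lambda^{2r}$ and the variance $\lambda^{-\gamma}/|D|$ equal $|D|^{-2r/(2r+\gamma)}$, producing the stated rate; the $\log^2|D|$ factor accumulates from the high-probability concentration steps in the sampling and empirical-process arguments. Tracking how the random-feature error depends on $M$ and requiring it to be dominated by this leading rate yields the two feature thresholds, with the split at $r=1/2$ reflecting the different manner in which the operator difference $L_K-L_M$ couples to the bias versus the variance in the agnostic and realizable regimes. Throughout, the decisive technical hurdle remains the control of the non-smooth empirical process in the LS-approximation step, which is what distinguishes the analysis from its KRR-RF counterpart.
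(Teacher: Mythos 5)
Your proposal follows essentially the same route as the paper's proof: the same four-term decomposition through the KRR-RF surrogate $f^{\diamond}_{M,D,\lambda}$ (Lemma \ref{lemA.1.1}), the same bias/variance/RF-error operator analysis under the source and capacity conditions with the case split at $r=1/2$ (Lemmas \ref{lemA.2.1}--\ref{lemA.2.3}, Proposition \ref{propA.2.1}), the adaptive self-calibration condition converted into local strong convexity (Lemma \ref{lemA.2.4}), a localized empirical-process bound for the non-smooth LS-approximation term (Lemmas \ref{lemA.2.5}--\ref{lemA.2.6}), and the choice $\lambda=|D|^{-\frac{1}{2r+\gamma}}$ balancing $\lambda^{2r}$ against $\mathcal{N}(\lambda)/|D|$, with Theorem \ref{thm1} obtained as the $\alpha=1$ instance. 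The only cosmetic deviation is that you frame the LS-approximation step via a comparison of first-order (subgradient versus gradient) optimality conditions, whereas the paper runs the equivalent basic-inequality argument (the empirical regularized objective of $f_{M,D,\lambda}$ is at most that of $f^{\diamond}_{M,D,\lambda}$) combined with symmetrization, the Ledoux--Talagrand contraction, Bousquet's inequality, and Cauchy--Schwarz or operator Young's inequality (depending on $r$) for the terms $\lambda\langle f^{\diamond}_{M,D,\lambda}, f_{M,D,\lambda}-f^{\diamond}_{M,D,\lambda}\rangle_{\mathcal{H}_M}$ --- a difference of bookkeeping, not of method.
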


The capacity-dependent learning rates obtained in Theorem \ref{thm1} align with those of KRR \citep{caponnetto2007optimal} and KRR-RF \citep{rudi2017generalization}, which is minimax optimal and thus can not be improved any further. Specifically, in scenario of highest regularity ($r=1$) and a small RKHS  ($\gamma=0$), it approaches the standard parametric bound ${\cal O}(1/|D|)$. For $r=1/2$ and $\gamma=1$, corresponding to the worst case, our learning rates and the requirements on the number of the random features match those in Theorem \ref{exist_thm}. More interestingly, our results extend the optimal learning rates to the agnostic case where the true quantile function is located outside of the RKHS ${\cal H}_K$. Specifically, we relax the regularity condition from $r \in [1/2,1]$ to $r \in (0,1], 2r+\gamma \geq 1$, covering a wider range of scenarios.

\begin{remark}\label{rem1}
Recent studies have explored the generalization performance of kernel-based methods in the agnostic setting, including kernel ridge regression \citep{zhang23mis}, kernel ridge regression with Nystr{\"o}m subsampling \citep{lu2019analysis,li2023optimal}, and kernel ridge regression with random features \citep{lioptimalRF2023, Liijcai2023}. However, these studies primarily focus on the least square loss, contrasting with our work that delves into more complex non-smooth check loss and a broader Lipschitz loss family. Our theory requires a distinct set of proof techniques compared to the work grounded in the least square loss paradigm which has an implicit solution, necessitating the use of the empirical process. Specifically, we introduce a novel error decomposition including an LS-approximation error term, which bridges the excess risk for the check loss with the $L_{\rho_{\cal X}}^2$ error of an intermediate estimator $f_{M,D,\lambda}^{\diamond}$ (see details in Lemmas \ref{lemA.1.1} and \ref{lemA.2.4} of the appendix). To derive the faster learning rates for both realizable and agnostic settings, we use different technical skills to take the regularity condition into the LS-approximation error term, such as the non-trivial Young's inequality and Cauthy-Schwarz inequality tailored for operators.
\end{remark}

\begin{figure*}
    \centering
    \subfigure[Agnostic case]{
        \includegraphics[scale=0.23]{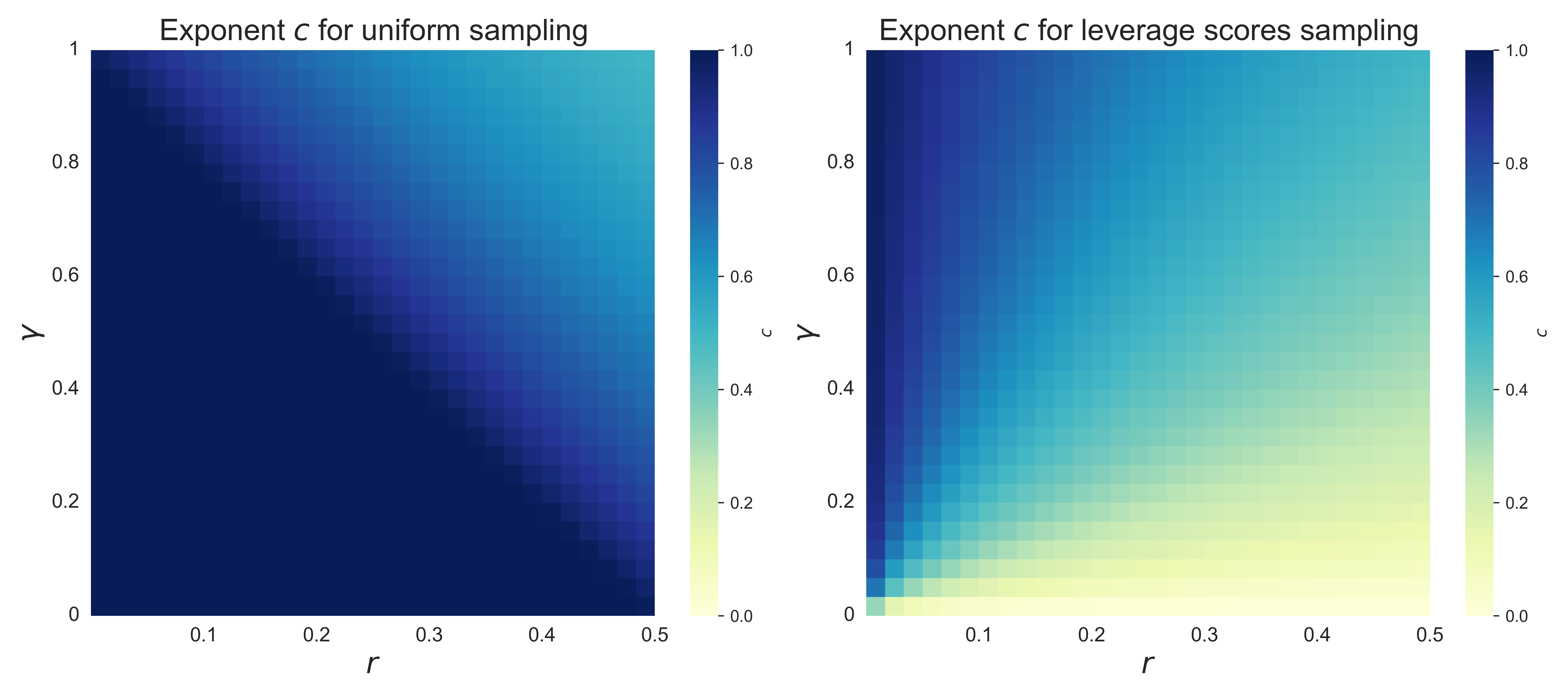}
        \label{agnostic_case}
    }
    \hspace{8mm}
    \subfigure[Realizable case]{
 \includegraphics[scale=0.23]{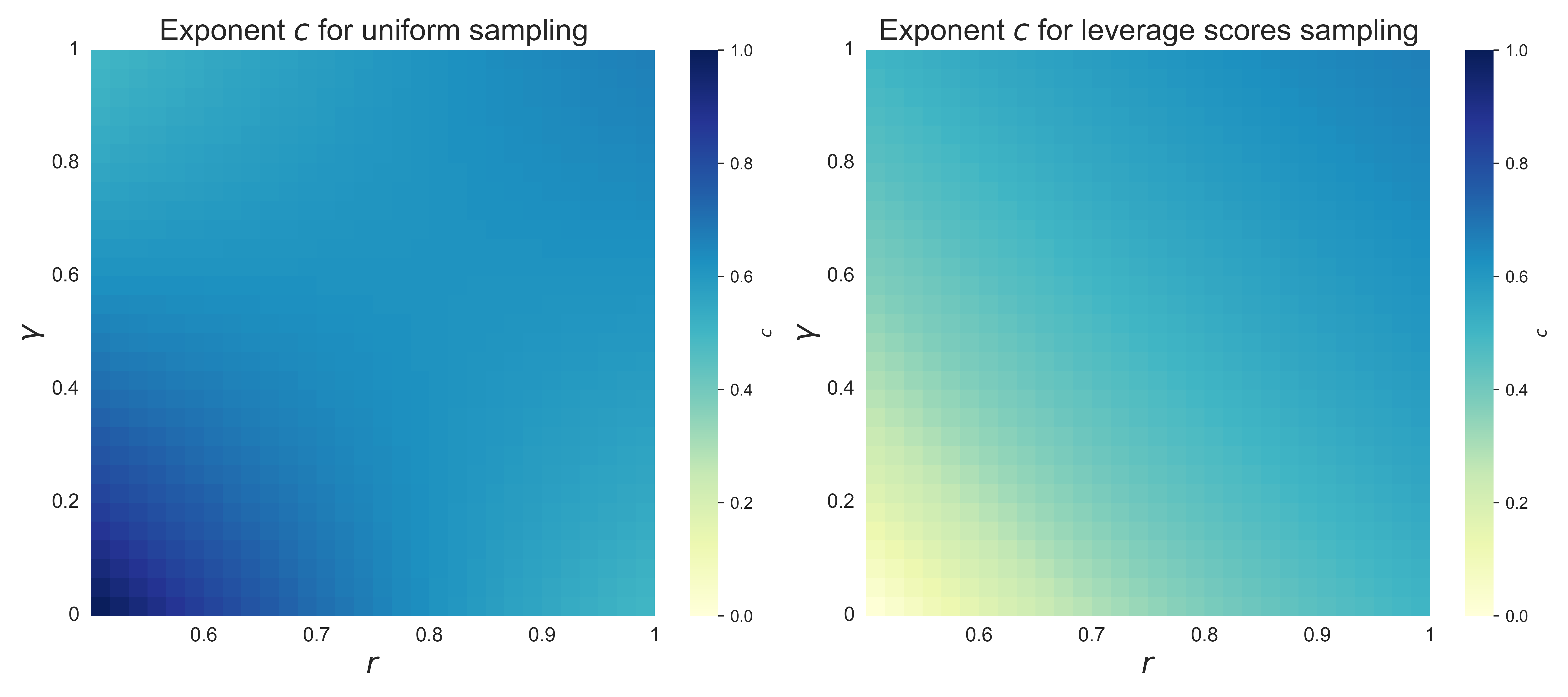}
        \label{realizable_case}
    }
  \caption{Comparison between the number of random features $M={\cal O}(|D|^c)$ required for uniform sampling ($\alpha=1$, left) and leverage scores sampling ($\alpha=\gamma$, right), Figure \ref{agnostic_case} is the agnostic case  and Figure \ref{realizable_case} is the realizable case, respectively.}
  \label{fig1}
\end{figure*} 

\begin{remark}\label{rem2}
Theorem \ref{thm1} broadens the regularity condition for optimal learning rates from $r\in [1/2,1]$ to $(0,1]$, $2r+\gamma \geq 1$. However, it uses the naive uniform sampling strategy for the random features (generate $\phi(\bfx,\bfomega)$ with $\pi(\bfomega)$), which is independent of the training samples. This may lead to an unnecessary burden in computation. Inspired by the data-dependent sampling strategy \cite{bach2017equivalence,avron2017faster,rudi2017generalization}, we aim to demonstrate in the upcoming section how these strategies enable attaining optimal learning rates across the agnostic settings $r\in (0,1]$ with a reduced number of random features in the next section. 
\end{remark}

\subsection{Refined Analysis: Beyond Uniform Sampling}

To obtain sharper learning rates for the setting $r \in (0,1]$ with fewer random features, we first introduce a compatibility condition that is commonly used in the literature \citep{rudi2015less,rudi2017generalization,liu2020effective}.

\begin{assumption}[Compatibility condition]\label{ass5}
Define the maximum dimension of random features as
\begin{align}\label{ass5_e}
\mathcal{N}_{\infty}(\lambda)=\sup _{\bfomega \in \Omega}\left\|(L_K+\lambda I)^{-1 / 2} \phi(\cdot, \bfomega)\right\|_{\rho_{\cal X}}^2,    
\end{align}
where $\lambda>0$. There exist constants $\alpha \in[0,1]$ and $F>0$, such that $\mathcal{N}_{\infty}(\lambda) \leq F \lambda^{-\alpha}$. 
\end{assumption}

The maximum dimension of random features in \eqref{ass5_e} correlates with the data-generating distribution through the integral operator $L_K$, which is always satisfied for $\alpha=1$ and $F=\kappa^2$. Recall the definition of ${\cal N}(\lambda)$ in Definition \ref{def2}. ${\cal N}(\lambda)$ and ${\cal N}_{\infty}(\lambda)$ measure the average and  supreme capacities of ${\cal H}_K$, respectively, so we have ${\cal N}(\lambda)=E_{\bfomega}\left\|(L_K+\lambda I)^{-1 / 2} \phi(\cdot, \bfomega)\right\|_{\rho_{\cal X}}^2\leq \sup _{\bfomega \in \Omega}\left\|(L_K+\lambda I)^{-1 / 2} \phi(\cdot, \bfomega)\right\|_{\rho_{\cal X}}^2= {\cal N}_{\infty}(\lambda)$, where $E_{\bfomega}$ denotes the expectation taking over $\bfomega$.

\begin{theorem}\label{thm2}
Under Assumptions \ref{ass4}-\ref{ass6} and \ref{ass5}, if $r\in (0,1]$, $\gamma \in [0,1]$, $2r+\gamma\geq 1$, and set $\lambda=|D|^{-\frac{1}{2r+\gamma}}$, when the number of random features satisfies 
\begin{align*}
&M \gtrsim  |D|^{\frac{\alpha}{2r+\gamma}},\quad \text{for} \quad r\in (0,1/2) ;\\
&M \gtrsim  |D|^{\frac{(2r-1)(1+\gamma-\alpha)+\alpha}{2r+\gamma}},   \quad \text{for} \quad r\in [1/2,1],   
\end{align*}
and $|D|$ is sufficiently large, there holds
\begin{align*}
{\cal E}(f_{M,D,\lambda})-{\cal E}(f_{\tau}^*) &\asymp \|f_{M,D,\lambda}-f_{\tau}^*\|_{\rho}^2 \\
&={\cal O}(|D|^{-\frac{2r}{2r+\gamma}}\log^2 |D|),
\end{align*}
with probability near to 1.    
\end{theorem}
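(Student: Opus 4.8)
The plan is to follow the same skeleton as the proof of Theorem~\ref{thm1}, but to carry the compatibility exponent $\alpha$ from Assumption~\ref{ass5} through every operator-concentration step instead of fixing $\alpha=1$; since the target rate $\mathcal{O}(|D|^{-2r/(2r+\gamma)}\log^2|D|)$ is unchanged, only the admissible lower bound on $M$ will move. First I would invoke the adaptive self-calibration condition (Assumption~\ref{ass6}) to establish, as in Lemma~\ref{lemA.1.1}, that the excess check-loss risk is sandwiched by the squared $L^2_{\rho_{\cal X}}$ error, i.e. $\mathcal{E}(f_{M,D,\lambda})-\mathcal{E}(f_\tau^*)\asymp\|f_{M,D,\lambda}-f_\tau^*\|_\rho^2$, reducing the whole problem to controlling this $L^2$ distance. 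I would then insert the intermediate KRR-RF surrogate $f_{M,D,\lambda}^{\diamond}$ and split by the triangle inequality into the LS-approximation error $\|f_{M,D,\lambda}-f_{M,D,\lambda}^{\diamond}\|_\rho$ and the pure ridge error $\|f_{M,D,\lambda}^{\diamond}-f_\tau^*\|_\rho$.

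For the ridge error I would decompose further through the population RF-ridge estimator $f_{M,\lambda}^{\diamond}$ and the exact population estimator $f_\lambda$, producing a regularization bias, a sample-variance term, and a random-feature approximation term. The bias is $\mathcal{O}(\lambda^{r})$ by the source condition (Assumption~\ref{ass1}) and the variance is of order $\sqrt{\mathcal{N}(\lambda)/|D|}$ by the capacity condition (Assumption~\ref{ass2}); both are untouched by the sampling scheme. The only place $\alpha$ enters is the feature-approximation term, where I would bound $\|(L_K+\lambda I)^{-1/2}(L_K-L_M)(L_K+\lambda I)^{-1/2}\|$ by an operator Bernstein argument (the content of Lemma~\ref{lemD.8}). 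Because each summand now has squared norm controlled by $\mathcal{N}_\infty(\lambda)\le F\lambda^{-\alpha}$ rather than the crude $\kappa^2\lambda^{-1}$, the concentration holds once $M\gtrsim\mathcal{N}_\infty(\lambda)\log(1/\lambda)\asymp\lambda^{-\alpha}\log|D|$, and with $\lambda=|D|^{-1/(2r+\gamma)}$ this is exactly the stated requirement $M\gtrsim|D|^{\alpha/(2r+\gamma)}$ in the agnostic regime $r\in(0,1/2)$.

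The split between $r<1/2$ and $r\ge1/2$ would arise from how the source condition is threaded into this feature-approximation term. For $r\in(0,1/2)$ the target sits outside $\mathcal{H}_K$, so I only need the resolvent comparison at the $\lambda^{-\alpha}$ level described above. For $r\in[1/2,1]$ the stronger regularity forces me to measure the operator difference in a heavier-weighted norm, where I would apply the operator Cauchy--Schwarz and Young inequalities flagged in Remark~\ref{rem1} to interpolate between $\|(L_K+\lambda I)^{-1/2}(L_K-L_M)(L_K+\lambda I)^{-1/2}\|$ and higher-weighted versions, producing the exponent $(2r-1)(1+\gamma-\alpha)+\alpha$; note this reduces to $\alpha$ at $r=1/2$, matching the other branch continuously. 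Setting $\lambda=|D|^{-1/(2r+\gamma)}$ then balances bias against variance and recovers the optimal rate as soon as $M$ exceeds the stated threshold.

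The hard part will be the LS-approximation error $\|f_{M,D,\lambda}-f_{M,D,\lambda}^{\diamond}\|_\rho$, since the quantile estimator has no closed form and the check loss is non-smooth. My plan is to exploit the local strong convexity of the expected check loss near $f_\tau^*$ that Assumption~\ref{ass6} supplies (the $c_2|\delta|$ lower bound on the CDF increment), converting a first-order optimality comparison of the two estimators into a quadratic lower bound on their $L^2$ gap; the stochastic fluctuation of the empirical check-loss subgradient around its population counterpart would then be controlled by an empirical-process/Talagrand argument, with the bounded-feature Assumption~\ref{ass4} furnishing the envelope. I expect this term, as formalized in Lemma~\ref{lemA.2.4}, to be of strictly smaller order than the ridge error under the prescribed $M$ and $\lambda$, so that assembling all pieces yields the claimed $\mathcal{O}(|D|^{-2r/(2r+\gamma)}\log^2|D|)$ bound with probability near one.
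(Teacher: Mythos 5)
Your proposal is correct and follows essentially the same route as the paper: the same four-term decomposition through the KRR-RF surrogate $f_{M,D,\lambda}^{\diamond}$, with $\alpha$ entering only through $\mathcal{N}_\infty(\lambda)$ in the operator-Bernstein steps, the interpolation exponent $(2r-1)(1+\gamma-\alpha)+\alpha$ for $r\in[1/2,1]$, and a self-calibration plus localized empirical-process argument for the LS-approximation error. Two details to adjust: the LS-approximation term is not of strictly smaller order than the ridge error --- it is precisely where the $\log|D|$ factors originate (Lemma \ref{lemA.2.6}), and the $r$-dependent Cauchy--Schwarz/Young case split actually occurs in that LS-error analysis, while the RF-error interpolation for $r\in[1/2,1]$ uses $\|XA^s\|\le\|X\|^{1-s}\|XA\|^s$ (Lemma \ref{lemD.2}); moreover, $f_{M,D,\lambda}^{\diamond}$ regresses on the noiseless pseudo-responses $f_\tau^*(\bfx_i)$, so the empirical term is bounded by $\lambda^r$ directly rather than by $\sqrt{\mathcal{N}(\lambda)/|D|}$, though the two coincide in rate at the chosen $\lambda$.
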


The above capacity-dependent learning rate is the same as that of Theorem \ref{thm1}, while the required number of random features reduces from ${\cal O}(|D|^{\frac{1}{2r+\gamma}})$ to ${\cal O}(|D|^{\frac{\alpha}{2r+\gamma}})$ when $r \in (0,1/2)$ and ${\cal O}(|D|^{\frac{(2r-1)\gamma+1}{2r+\gamma}})$ to ${\cal O}(|D|^{\frac{(2r-1)(1+\gamma-\alpha)+\alpha}{2r+\gamma}})$ when $r \in [1/2,1]$, owing to the additional imposition of the compatibility condition $\mathcal{N}_{\infty}(\lambda) \leq F \lambda^{-\alpha}$. By adopting a favorable sampling strategy, as demonstrated in Example \ref{example1}, we can further reduce the required number of random features and achieve the optimal learning rates across the range of $r \in (0,1]$ and $2r+\gamma \geq 1$.

\begin{example}[Leverage scores sampling]\label{example1}
Given the integral representation of kernel $K$ as stated in \eqref{in_re}, we adopt the leverage scores sampling strategy \cite{bach2017equivalence,avron2017faster} by employing an importance ratio denoted as $q(\bfomega)=l_{\lambda}(\bfomega)/\int_{\bfomega}l_{\lambda}(\bfomega)d\pi(\bfomega)$, where $l_{\lambda}(\bfomega)=\|(L_K+\lambda I)^{-1 / 2} \phi(\cdot, \bfomega)\|_{\rho_{\cal X}}^2$. Consequently, the random features are computed as $\phi_{l}(\bfx,\bfomega)=[q(\bfomega)]^{-1/2}\phi(\bfx,\bfomega)$ and exhibit a distribution  $\pi_{l}(\bfomega)=q(\bfomega)\pi(\bfomega)$. As pointed out in \citet{rudi2017generalization}, the random features provide the integral representation of $K$ and satisfy Assumption \ref{ass5} with $\alpha=\gamma$ indicating that $\cal{N}(\lambda)=\cal{N}_{\infty}(\lambda)$. 
\end{example}

\begin{remark}
We call $\alpha=1$ as the worst case (Theorem \ref{thm1})
when considering the random features with uniform sampling in \eqref{in_re} which is independent of the training samples, and $\alpha=\gamma$ as the benign case (Corollary \ref{cor1}) when adopting the data-dependent sampling strategy in Example \ref{example1}.     
\end{remark}

\begin{corollary}[Benign case]\label{cor1}
 Under Assumptions \ref{ass4}-\ref{ass6}, if random features are sampled according to the strategy in Example \ref{example1},  $r\in (0,1]$, $\gamma \in [0,1]$, $2r+\gamma\geq 1$, and set $\lambda=|D|^{-\frac{1}{2r+\gamma}}$, when the number of random features satisfies 
\begin{align*}
&M \gtrsim  |D|^{\frac{\gamma}{2r+\gamma}},\quad \text{for} \quad r\in (0,1/2) ;\\
&M \gtrsim  |D|^{\frac{2r+\gamma-1}{2r+\gamma}},   \quad \text{for} \quad r\in [1/2,1],   
\end{align*}
and $|D|$ is sufficiently large, then there holds
\begin{align*}
{\cal E}(f_{M,D,\lambda})-{\cal E}(f_{\tau}^*) &\asymp \|f_{M,D,\lambda}-f_{\tau}^*\|_{\rho}^2 \\
&={\cal O}(|D|^{-\frac{2r}{2r+\gamma}}\log^2 |D|),
\end{align*}
with probability near to 1.    
\end{corollary}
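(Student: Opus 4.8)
The plan is to obtain Corollary \ref{cor1} as a direct specialization of Theorem \ref{thm2} to the leverage scores sampling scheme of Example \ref{example1}. The key observation is that the only hypothesis of Theorem \ref{thm2} that depends on the sampling distribution of the frequencies $\bfomega$ is the compatibility condition (Assumption \ref{ass5}); the remaining hypotheses (Assumptions \ref{ass4}--\ref{ass6}) concern only the kernel $K$, the operator $L_K$, the target $f_{\tau}^*$, and the conditional law of $y$ given $\bfx$, all of which are left untouched by reweighting the frequencies. The first step is therefore to confirm that the reweighted features preserve the integral representation \eqref{in_re}, which is immediate: with $\phi_l(\bfx,\bfomega)=[q(\bfomega)]^{-1/2}\phi(\bfx,\bfomega)$ and $\pi_l=q\,\pi$,
\begin{align*}
\int_{\Omega}\phi_l(\bfx,\bfomega)\phi_l(\bfx^{\prime},\bfomega)\,d\pi_l(\bfomega)=\int_{\Omega}\phi(\bfx,\bfomega)\phi(\bfx^{\prime},\bfomega)\,d\pi(\bfomega)=K(\bfx,\bfx^{\prime}).
\end{align*}
Consequently $L_K$, ${\cal N}(\lambda)$, and the source and capacity conditions (Assumptions \ref{ass1}, \ref{ass2}) are unchanged, so the hypotheses of Theorem \ref{thm2} other than \ref{ass5} are inherited for free.

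The second and crucial step is to compute the maximum dimension of the reweighted features and show that it collapses to the effective dimension, giving $\alpha=\gamma$. Writing $l_{\lambda}(\bfomega)=\|(L_K+\lambda I)^{-1/2}\phi(\cdot,\bfomega)\|_{\rho_{\cal X}}^2$ and recalling that $\int_{\Omega}l_{\lambda}(\bfomega)\,d\pi(\bfomega)={\cal N}(\lambda)$ so that $q(\bfomega)=l_{\lambda}(\bfomega)/{\cal N}(\lambda)$, I would evaluate the quantity in \eqref{ass5_e} at the reweighted map:
\begin{align*}
\sup_{\bfomega\in\Omega}\|(L_K+\lambda I)^{-1/2}\phi_l(\cdot,\bfomega)\|_{\rho_{\cal X}}^2=\sup_{\bfomega\in\Omega}[q(\bfomega)]^{-1}l_{\lambda}(\bfomega)={\cal N}(\lambda).
\end{align*}
Combining this identity with Assumption \ref{ass2} yields ${\cal N}(\lambda)\leq Q^2\lambda^{-\gamma}$, so Assumption \ref{ass5} holds with $\alpha=\gamma$ and $F=Q^2$, exactly as asserted in Example \ref{example1}.

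The third step is a routine substitution of $\alpha=\gamma$ into the random-feature requirements of Theorem \ref{thm2}. For $r\in(0,1/2)$ the threshold $|D|^{\alpha/(2r+\gamma)}$ becomes $|D|^{\gamma/(2r+\gamma)}$; for $r\in[1/2,1]$ the exponent simplifies via $(2r-1)(1+\gamma-\alpha)+\alpha=(2r-1)+\gamma=2r+\gamma-1$, producing the threshold $|D|^{(2r+\gamma-1)/(2r+\gamma)}$. Both agree with the statement, and the excess-risk bound ${\cal O}(|D|^{-2r/(2r+\gamma)}\log^2|D|)$ together with the choice $\lambda=|D|^{-1/(2r+\gamma)}$ is inherited verbatim from Theorem \ref{thm2}.

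The main obstacle is not the arithmetic but the verification that the reweighted map $\phi_l$ still satisfies the structural prerequisites underpinning Theorem \ref{thm2}. Since reweighting divides by $\sqrt{q(\bfomega)}$, the uniform bound $|\phi_l|\leq\kappa$ of Assumption \ref{ass4} can fail, so one cannot apply Theorem \ref{thm2} as a pure black box. Instead I would confirm that every concentration estimate feeding into that theorem is governed by ${\cal N}_{\infty}(\lambda)$ rather than by the raw sup-norm of the features --- which is precisely why the identity above (the maximum dimension equals ${\cal N}(\lambda)$) is the operative fact, while the boundedness hypothesis enters only through the unchanged bound $\sup_{\bfx,\bfx^{\prime}}K(\bfx,\bfx^{\prime})\leq\kappa^2$. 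Once this compatibility with the proof of Theorem \ref{thm2} is established, the corollary follows immediately.
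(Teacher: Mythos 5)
Your proposal is correct and takes essentially the same route as the paper: the appendix proves Corollary \ref{cor1} in one line by invoking Theorem \ref{thm2} with $\alpha=\gamma$, where $\alpha=\gamma$ is supplied by Example \ref{example1} (citing Rudi \& Rosasco, 2017) via precisely your identity $\mathcal{N}_{\infty}(\lambda)=\mathcal{N}(\lambda)$ for the reweighted features, followed by the same exponent simplification $(2r-1)(1+\gamma-\alpha)+\alpha=2r+\gamma-1$. The boundedness subtlety you flag in your last paragraph --- that $|\phi_l|\leq\kappa$ (Assumption \ref{ass4}) may fail after reweighting, so Theorem \ref{thm2} is not strictly a black box (indeed, Lemmas \ref{lemC.2p} and \ref{lemC.3} do use the raw sup-norm $\kappa$, not only $\mathcal{N}_{\infty}(\lambda)$) --- is a genuine point that the paper itself silently glosses over by deferring to the cited reference, so your added caution goes beyond, rather than against, the paper's own argument.
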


Theorem \ref{thm1} is the worst case of Theorem \ref{thm2} with $\alpha=1$, while Corollary \ref{cor1} is the benign case of Theorem \ref{thm2} with $\alpha=\gamma$. This distinction arises from the choice of the uniform sampling strategy $\pi(\bfomega)$, which typically yields an approximate estimate where $\alpha$ tends to $1$. Conversely, employing data-dependent random features assures a favorable scenario where $\alpha=\gamma$. To better illustrate the computational improvement for different cases, we depict a comparison in Figure \ref{agnostic_case} and \ref{realizable_case} between the number of random features required to ensure the optimal learning rates using uniform sampling (left panel) and data-dependent sampling (right panel) for the agnostic case when $r\in (0,1/2)$ and the realizable case when $r \in [1/2,1]$, respectively.

\subsection{Extension to Lipschitz Loss}
Note that the check loss belongs to the family composed of Lipschitz continuous losses.  We aim to extend our theoretical results to the general Lipschitz continuous loss family, including other kernel-based methods, such as kernel support vector machines \citep[KSVM]{sun2018but} and kernel logistic regression \citep[KLR]{keerthi2005fast}. Similar to the quantile regression estimation in \eqref{kqr_rf}, we formulate the following general learning problem
\begin{align*}
\widetilde{\bfu}= \argmin_{\bfu \in \mathbb{R}^M}\frac{1}{|D|} \sum_{i=1}^{|D|}L\big(y_i, \bfu^T\bfphi_{M}(\bfx_i)\big)+\lambda \bfu^T\bfu, 
\end{align*}
where $L(y,\cdot)$ is a Lipschitz continuous loss such that for some $V \geq 0$, there exists a constant $c_{L}>0$ such that $|L(y, x)-L(y, x^{\prime})|\leq c_{L}|x-x^{\prime}|$ holds for all pairs $x, x^{\prime} \in [-V, V]$ and $y \in \mathbb{R}$. We can refer to \citet{feng2024towards} for more specific examples satisfying this property. 

Our objective is to replace the check loss $\rho_{\tau}$ with some general Lipschitz continuous loss and construct a unified theoretical framework. In our proof of the main theorems for KQR-RF, a pivotal step involves controlling the LS-approximation error therm in Lemmas \ref{lemA.2.4} and \ref{lemA.2.5}. To facilitate this, we merely need to substitute Assumption \ref{ass6} with the following substantial assumption.

\begin{assumption}[Local strong convexity]\label{ass7}
There exist some constants $u,u^{\prime},c_3,c_4>0$ such that for any $f$ and $f^{\prime}$ satisfying $\|f-f^{\prime}\|_{\rho}\leq u$ and $\|f^{\prime}-f^*\|_{\rho}\leq u^{\prime}$, there holds
\begin{align}\label{local_con}
 {\cal E}_{L}(f)-{\cal E}_{L}(f^{\prime})\geq c_3  \|f-f^{\prime}\|^2_{\rho},
\end{align}
or 
\begin{align}\label{adap_local_con}
 {\cal E}_{L}(f)-{\cal E}_{L}(f^{\prime})+\|f^{\prime}-f^*\|^2_{\rho}\geq c_4  \|f-f^{\prime}\|^2_{\rho},
\end{align}
where ${\cal E}_{L}(f)=E(L(y,f(\bfx)))$ and $f^*=\argmin_f {\cal E}_L(f)$. Here, we refer to \eqref{local_con} as the local strong convexity of $L(y,f)$, and \eqref{adap_local_con} as the adaptive local strong convexity of $L(y,f)$. 
\end{assumption}

It is worth pointing out that we can verify conditions \eqref{local_con} and \eqref{adap_local_con} for the check loss $\rho_{\tau}(\cdot)$ by using \eqref{ass6_e} in Assumption \ref{ass6} with $\varepsilon=0$ and $\varepsilon \neq 0$, respectively. With Assumption \ref{ass6} replaced by Assumption \ref{ass7} and keeping all other conditions unchanged, we can similarly establish the same learning rates for the Lipschitz loss $L$. Specifically,
$$
\|f^L_{M,D,\lambda}-f^*\|^2_{\rho}={\cal O}(|D|^{-\frac{2r}{2r+\gamma}}\log^2 |D|),
$$
with probability near to 1, where $f^L_{M,D,\lambda}=\widetilde{\bfu}^T\phi_M$. The detailed results for Lipschitz continuous loss and their proofs are deferred to Appendix \ref{proof_lip} due to the space limit.

\section{Comparisons to Related Work}\label{sec:com}

\begin{table*}
\centering
\caption{Summary of conditions for derived learning rates in different methods.}
\begin{threeparttable}
 \scalebox{0.85}{\begin{tabular}{c|c|c|c|c}
\hline
Methods              & Regularity condition & \makecell{Capacity \\condition} & Random centers M & Learning rate \\ \hline
KRR \citep{caponnetto2007optimal}                   &    $r\in [1/2,1]$                  &  $\gamma\in[0,1]$                  &     $\times$             &  $|D|^{-\frac{2r}{2r+\gamma}}$             \\ \hline
KRR \citep{zhang23mis}                   &            $r\in (0,1]$           &              $\gamma\in[0,1]$      &             $\times$      &       $|D|^{-\frac{2r}{2r+\gamma}}$        \\ \hline
KRR-RF-Uniform \citep{rudi2017generalization}        &               $r\in [1/2,1]$       &               $\gamma\in[0,1]$      &              $|D|^{-\frac{(2r-1)\gamma+1}{2r+\gamma}}$     &          $|D|^{-\frac{2r}{2r+\gamma}}$      \\ \hline
KRR-RF-Leverage  \citep{rudi2017generalization}       &                $r\in [1/2,1]$       &            $\gamma\in[0,1]$        &  $|D|^{-\frac{2r+\gamma-1}{2r+\gamma}}$                 &    $|D|^{-\frac{2r}{2r+\gamma}}$            \\ \hline
KRR-RF-Uniform  \citep{lioptimalRF2023}       &       $r\in (0,1], 2r+\gamma \geq 1$                &      $\gamma\in[0,1]$               &  $|D|^{-\frac{1}{2r+\gamma}}$                 &    $|D|^{-\frac{2r}{2r+\gamma}}$\\ \hline
KRR-RF-Leverage    \citep{lioptimalRF2023}       &       $r\in (0,1]$                &      $\gamma\in[0,1]$               &  $|D|^{-\frac{\gamma}{2r+\gamma}}$                 &    $|D|^{-\frac{2r}{2r+\gamma}}$\\ \hline
KQR    \citep{lian2022}        &    $r\in [1/2,1]$                  &  $\gamma\in[0,1]$                  &     $\times$             &  $|D|^{-\frac{2r}{2r+\gamma}}$             \\ \hline
Lip-RF-Uniform \citep{rahimi2008weighted}  &  $r=1/2$                    &  $\gamma\in[0,1]$                &     $|D|$             &      $|D|^{-1/2}$         \\ \hline
Lip-RF-Leverage \cite{bach2017equivalence} &  $r=1/2$                    &  $\gamma\in[0,1]$                &     $|D|^{\frac{\gamma}{2}}$             &      $|D|^{-1/2}$         \\ \hline
Lip-RF-Uniform \citep{li2021towards}  &  $r=1/2$                    &  $\gamma\in[0,1]$                &     $|D|$             &      $|D|^{-1/2}$         \\ \hline
Lip-RF-Leverage \citep{li2021towards}  &  $r=1/2$                    &  $\gamma\in[0,1]$                &     $|D|^{\frac{\gamma}{2}}$             &      $|D|^{-1/2}$  \\ \hline
KSVM-RF   \citep{sun2018but}             &             $r=1/2$          &  $\gamma\in[0,1]$                   &               $|D|^{\frac{2\gamma}{2\gamma+1}}$    &     $|D|^{-\frac{1}{2\gamma+1}}$         \\ \hline
\textbf{KQR-RF (Theorem \ref{thm2})}                &      $r\in(0,1], 2r+\gamma\geq 1$                &   $\gamma\in[0,1]$                   &     \makecell{$|D|^{\frac{\alpha}{2r+\gamma}}, r \in (0,1/2)$\\$|D|^{\frac{(2r-1)(1+\gamma-\alpha)+\alpha}{2r+\gamma}}, r\in[1/2,1]
$}             &      $|D|^{-\frac{2r}{2r+\gamma}}$         \\ \hline
\textbf{KQR-RF-Uniform  (Theorem \ref{thm1})}       & $r\in(0,1], 2r+\gamma\geq 1$                &   $\gamma\in[0,1]$                   &     \makecell{$|D|^{\frac{1}{2r+\gamma}}, r \in (0,1/2)$\\$|D|^{\frac{(2r-1)\gamma+1}{2r+\gamma}}, r\in[1/2,1]
$}             &      $|D|^{-\frac{2r}{2r+\gamma}}$   \\ \hline
\textbf{KQR-RF-Leverage (Corollary \ref{cor1})}&      $r\in(0,1], 2r+\gamma\geq 1$                &   $\gamma\in[0,1]$                   &     \makecell{$|D|^{\frac{\gamma}{2r+\gamma}}, r \in (0,1/2)$\\$|D|^{\frac{2r+\gamma-1}{2r+\gamma}}, r\in[1/2,1]
$}             &      $|D|^{-\frac{2r}{2r+\gamma}}$         \\ \hline
\end{tabular}} 
\end{threeparttable}
\label{tab.1}
\end{table*}

In this section, we compare the conditions and learning rates of our method with related existing approaches including KRR, KRR-RF, KQR, and Lipschitz loss with RF, which are summarized in Table \ref{tab.1}, where ``Uniform" and ``Leverage" stand for the uniformly and leverage scores sampling strategies, respectively, and ``Lip" is short for Lipschitz continuous loss. Note that under Assumption \ref{ass7}, the results in the last three lines of Table \ref{tab.1} can also be directly extended to the cases with the general Lipschitz continuous losses.%, denoted as ``Lip-RF", ``Lip-RF-Uniform", and ``Lip-Leverage", respectively. 

\noindent \textbf{Compared to KRR and its RF variants.} Previous studies have extensively pursued the optimal learning rates for KRR \citep{caponnetto2007optimal,smale2007learning} and KRR-RF \citep{rudi2017generalization,avron2017faster}. Recent extensions \citep{zhang23mis,lioptimalRF2023,Liijcai2023} have enlarged the regularity condition to the agnostic setting when the regression function lies outside of the RKHS. However, we focus on KQR-RF with the non-smooth check loss, which is more challenging since we have no explicit solutions. Notably, deriving a capacity-dependent learning rate for $r\in (0,1/2)$ demands distinct technical skills compared to those required for KRR and its RF variations. Moreover, our results can be easily extended to the Lipschitz losses with a modified assumption, signifying the added novelty of our analysis.

\noindent \textbf{Compared to kernel methods with Lipschitz loss and their RF variants.} Existing literature for random features with Lipschitz loss \citep{rahimi2008weighted,sun2018but,li2021towards,li2022sharp} only consider the ideal case when $r=1/2$, and their learning rates are either capacity-independent \citep{rahimi2008weighted,li2021towards} or suboptimal \citep{sun2018but}. \citet{lian2022} studied the capacity-dependent learning rate for KQR when $r\in [1/2,1]$. However, their work can not be directly applied to the random feature setting. In contrast, our study offers a comprehensive analysis of the capacity-dependent learning rates for KQR-RF (Lip-RF), exhibiting broader applicability across scenarios where the true regression function resides in the  agnostic setting.

We also provide a brief proof sketch to emphasize the theoretical contributions of this paper. 

\noindent \textbf{A novel error decomposition and least square approximation $\|f_{M,D,\lambda}-f_{M,D,\lambda}^{\diamond}\|_{\rho}$.} Unlike existing RF work \cite{rudi2017generalization,li2021towards,lioptimalRF2023}, we first introduce a novel error decomposition in Lemma \ref{lemA.1.1}. Except for the standard empirical, RF approximation, and kernel approximation errors, we have an extra least square approximation (LS-approximation) error term $\|f_{M,D,\lambda}-f_{M,D,\lambda}^{\diamond}\|_{\rho}$. By the adaptive self-calibration assumption, we build an adaptive local strong convexity condition of the expected loss on a small neighborhood of $f^*_{\tau}$. This promises the convergence of the LS-approximation error term. We also use the non-trivial Cauthy-Schwarz and Young's inequalities to take into account the source index when $r\in (0,1/2)$ and $r\in [1/2,1]$, respectively. 

\noindent \textbf{Sharper analysis for $\|f_{M,D,\lambda}^{\diamond}-f^{*}_{\tau}\|_{\rho}$.} As indicated in Lemma \ref{lemA.1.1}, we divide the $\|f_{M,D,\lambda}^{\diamond}-f^{*}_{\tau}\|_{\rho}$ into three terms. To get tighter bounds of the empirical errors $\|f_{M,D,\lambda}^{\diamond}-f_{M,\lambda}\|_{\rho}$ and $\|f_{M,\lambda}-f_{\lambda}\|_{\rho}$, we utilize the compatibility condition to Bernstein's inequalities among operators $L_K, L_M$, and $C_M, C_{M,D}$. This refined procedure helps to relax the conditions on $M$ and  $|D|$, and further enlarges the regularity condition to $r \in (0,1]$. In fact, the convergence of term $\|f_{M,D,\lambda}^{\diamond}-f^{*}_{\tau}\|_{\rho}$ is also an important premise for the convergence of the LS-approximation error  $\|f_{M,D,\lambda}-f_{M,D,\lambda}^{\diamond}\|_{\rho}$.

\section{Numerical Experiments}\label{sec:num}
Inspired by the simulation setup in \citet{rudi2017generalization,li2021towards}, we also consider the spline kernel of order $q$, defined as $\Lambda_q\left(x, x^{\prime}\right)=\sum_{k=-\infty}^{\infty} e^{2 \pi i k x} e^{-2 \pi i k x^{\prime}}|k|^{-q}$, where $x, x^{\prime} \in [0,1]$, and $q\in \mathbb{R}$. According to the property of spline kernel, we have $\int_0^1 \Lambda_q(x, z) \Lambda_{q^{\prime}}\left(x^{\prime}, z\right) d z=\Lambda_{q+q^{\prime}}\left(x, x^{\prime}\right)$, for any $q, q^{\prime} \in \mathbb{R}$. Consequently, for $r \in (0,1]$ and $\gamma \in [0,1]$, let $K(x,x^{\prime})=\Lambda_{\frac{1}{\gamma}}(x, x^{\prime})$, and its corresponding random feature is $\phi(x,w)=\Lambda_{\frac{1}{2\gamma}}(x, w)$ with $w \sim U(0,1)$. We consider the model $y=\Lambda_{\frac{r}{\gamma}+\frac{1}{2}}\left(x, 0\right)+\varepsilon$,
where $\varepsilon \sim N(0,0.01)$ and $x \sim U(0,1)$. Then Assumptions 3.3-3.5 and 3.11 are satisfied and $\alpha=\gamma$ \citep{rudi2017generalization}. To graphically show the true and estimated quantile function, we consider three different settings: (1) worst case ($r=0,\gamma=1$); (2) general case ($r=1/2, \gamma=1$); (3) most benign case ($r=1, \gamma=0$). Without loss of generality, we fix $\tau=0.5$. We generate training data with size $N_{tr}=1000$, and testing data with size $N_{te}=10000$. The regularization parameter $\lambda$ is selected via a grid search based on a validation set with 1000 samples, where the grid is set as $\{10^{0.5s}: s = -20, -19,\ldots, 2\}$. The number of random features is selected according to Theorem \ref{thm2}. The estimated and true quantile curves on the testing data are shown in Figure \ref{fig_2}. From the results, we can conclude that KQR-RF can estimate the quantile functions very well both in realizable and agnostic settings.

%\begin{figure}
%    \centering 
%    \setlength{\abovecaptionskip}{0.cm}
%    \includegraphics[scale=0.22]{icml2023/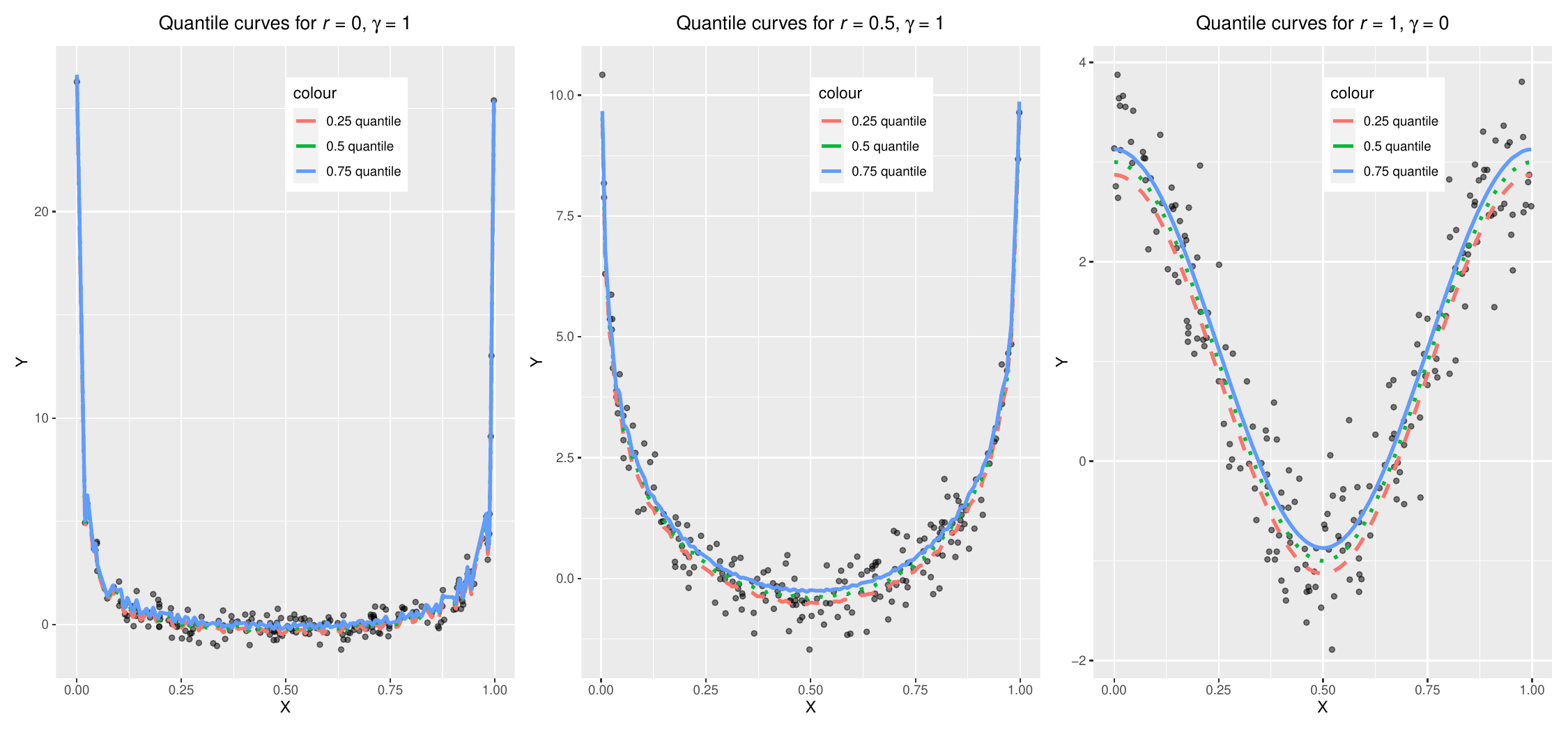} 
%    \caption{True quantile curves for $r=0,\gamma=1$ (left), $r=1/2, \gamma=1$ (middle), and $r=1, \gamma=0$ (right). }
 %   \label{fig_1}
%\end{figure}

\begin{figure}
    \centering 
    \includegraphics[scale=0.22]{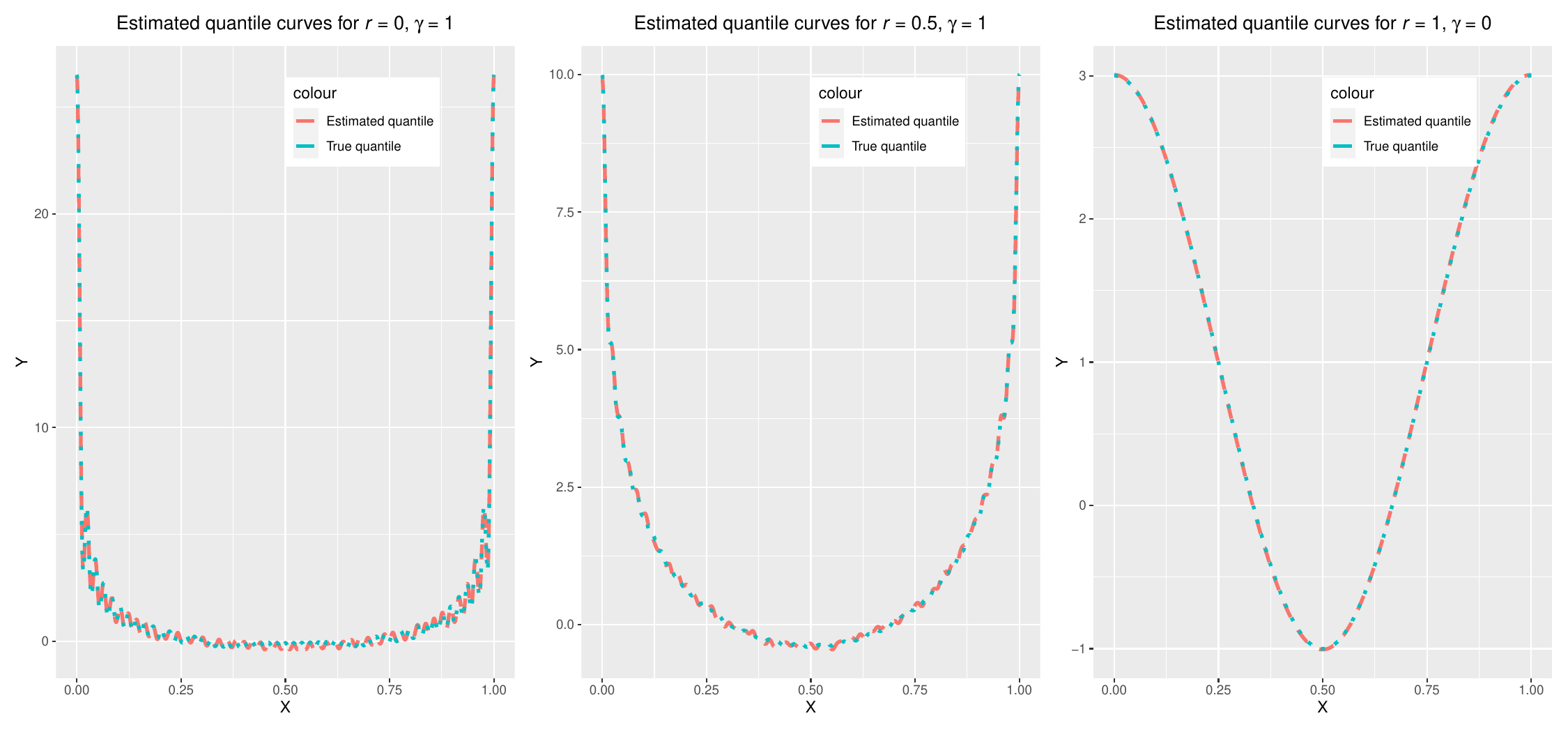} 
    \caption{Estimated and true quantile curves for $r=0,\gamma=1$ (left), $r=1/2, \gamma=1$ (middle), and $r=1, \gamma=0$ (right) when $\tau=0.5$.}
    \label{fig_2}
\end{figure}

To validate the derived learning rates, i.e., ${\cal E}(f_{M,D,\lambda})-{\cal E}(f_{\tau}^*)={\cal O}(|D|^{-\frac{2r}{2r+\gamma}})$, we estimate the excess risk on the testing data and compared it with the theoretical one. We consider two agnostic cases ($r=0.2, \gamma=0.1$ and $r=0.4, \gamma=0.2$) and two realizable cases ($r=0.5, \gamma=0.1$ and $r=0.8, \gamma=0.2$) for better illustration. The setting is the same as the above except that the training data size varies in $\{1000,2000,\ldots,10000\}$. We perform a log transform on the empirical excess risk and the number of training data and plot them in Figure \ref{fig_3}. From the results, we can see that the data points are uniformly distributed on both sides of a straight line, which verifies the derived learning rate. To further investigate the constants in the big-$\mathcal{O}$ bounds, we calculate the slope of each learning curve and compare it to $-\frac{2r}{2r+\gamma}$. The slope constants are $0.81,1.21,1.63,0.95$ in four scenarios. This also highlights our contribution in deriving the sharper and capacity-dependent learning rates. 
\begin{figure}[H]
    \centering 
    \includegraphics[scale=0.45]{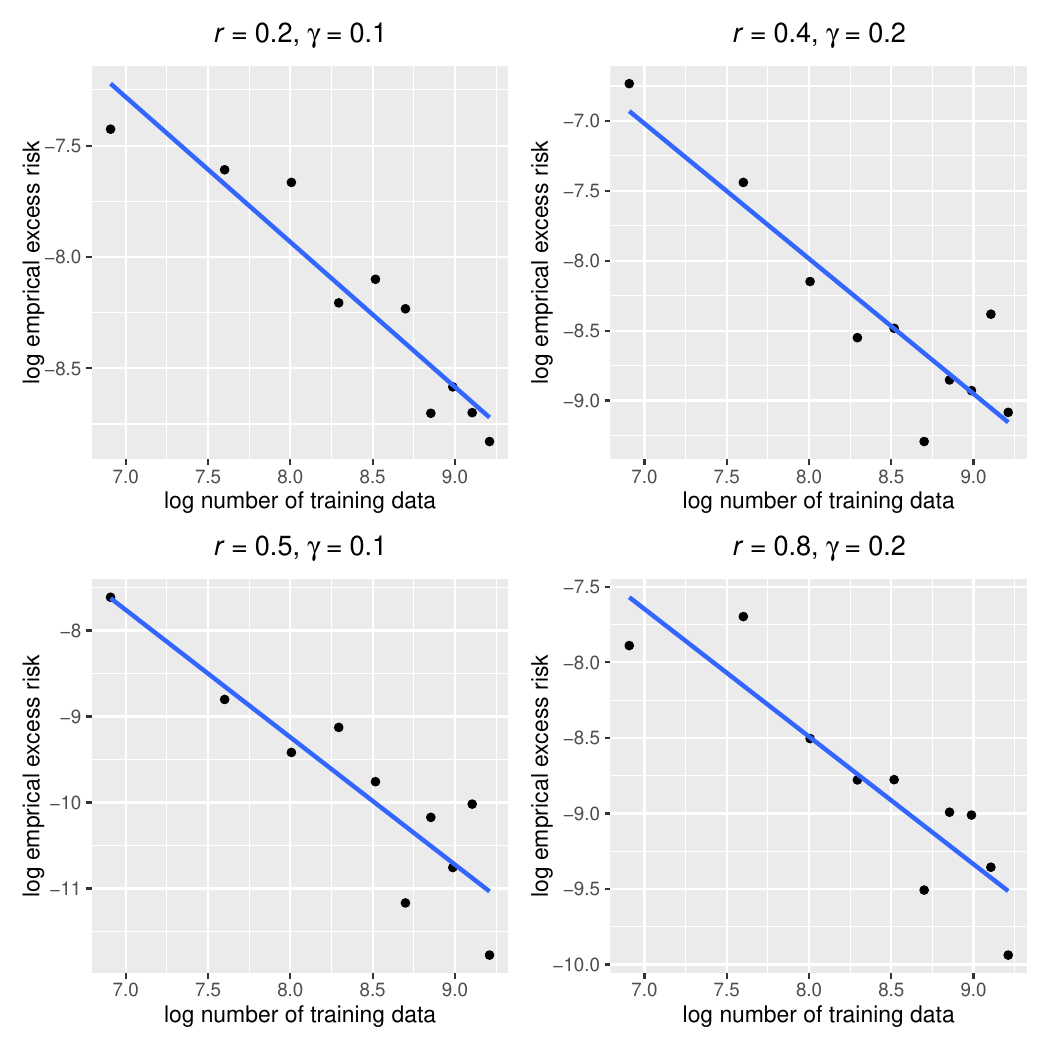} 
    \caption{Log empirical excess risk for $r=0.2,\gamma=0.1$ (left top), $r=0.4, \gamma=0.2$ (right top), $r=0.5, \gamma=0.1$ (left bottom) and $r=0.8, \gamma=0.2$ (right bottom) when $\tau=0.5$.}
    \label{fig_3}
\end{figure}

\section{Conclusion}\label{sec:con}
This paper investigates kernel quantile regression with random features and derives capacity-dependent optimal learning rates for both realizable and agnostic settings. By introducing a modified local strong convexity assumption, our theoretical analysis seamlessly extends to the entire Lipschitz continuous loss family, leading to the sharpest result so far to our best knowledge. Extensive experiments are conducted on both simulated and real case studies, providing empirical evidence that supports the theoretical findings of our paper. Furthermore, it is feasible to extend the theoretical results of random features to incorporate other accelerated approaches, such as stochastic gradient methods or distributed techniques, or consider the parallel problem in deep over-parameterized quantile regression.

\section*{Acknowledgement}
The authors thank the area chair and the anonymous referees for their constructive suggestions, which significantly improved this article. This research is supported in part by NSFC-12371270 and Shanghai Science and Technology Development Funds (23JC1402100).  This
research is also supported by Shanghai Research Center for Data Science and Decision Technology.

\section*{Impact Statement}
This paper presents work whose goal is to advance the field of Machine Learning. There are many potential societal consequences of our work, none which we feel must be specifically highlighted here.

\bibliography{example_paper}

\begin{thebibliography}{61}
\providecommand{\natexlab}[1]{#1}
\providecommand{\url}[1]{\texttt{#1}}
\expandafter\ifx\csname urlstyle\endcsname\relax
  \providecommand{\doi}[1]{doi: #1}\else
  \providecommand{\doi}{doi: \begingroup \urlstyle{rm}\Url}\fi

\bibitem[Abedsoltan et~al.(2023)Abedsoltan, Belkin, and
  Pandit]{abedsoltan2023toward}
Abedsoltan, A., Belkin, M., and Pandit, P.
\newblock Toward large kernel models.
\newblock In \emph{International Conference on Machine Learning}, pp.\  61--78.
  PMLR, 2023.

\bibitem[Alaoui \& Mahoney(2015)Alaoui and Mahoney]{alaoui2015fast}
Alaoui, A. and Mahoney, M.~W.
\newblock Fast randomized kernel ridge regression with statistical guarantees.
\newblock \emph{Advances in Neural Information Processing Systems},
  28:\penalty0 775--783, 2015.

\bibitem[Arora et~al.(2019)Arora, Du, Hu, Li, and Wang]{arora2019fine}
Arora, S., Du, S., Hu, W., Li, Z., and Wang, R.
\newblock Fine-grained analysis of optimization and generalization for
  overparameterized two-layer neural networks.
\newblock In \emph{International Conference on Machine Learning}, pp.\
  322--332. PMLR, 2019.

\bibitem[Avron et~al.(2017)Avron, Clarkson, and Woodruff]{avron2017faster}
Avron, H., Clarkson, K.~L., and Woodruff, D.~P.
\newblock Faster kernel ridge regression using sketching and preconditioning.
\newblock \emph{SIAM Journal on Matrix Analysis and Applications}, 38\penalty0
  (4):\penalty0 1116--1138, 2017.

\bibitem[Bach(2017)]{bach2017equivalence}
Bach, F.
\newblock On the equivalence between kernel quadrature rules and random feature
  expansions.
\newblock \emph{The Journal of Machine Learning Research}, 18\penalty0
  (1):\penalty0 714--751, 2017.

\bibitem[Belkin et~al.(2019)Belkin, Hsu, Ma, and Mandal]{belkin2019reconciling}
Belkin, M., Hsu, D., Ma, S., and Mandal, S.
\newblock Reconciling modern machine-learning practice and the classical
  bias-variance trade-off.
\newblock \emph{Proceedings of the National Academy of Sciences}, 116\penalty0
  (32):\penalty0 15849--15854, 2019.

\bibitem[Belloni \& Chernozhukov(2011)Belloni and Chernozhukov]{belloni2011}
Belloni, A. and Chernozhukov, V.
\newblock $\ell_1$-penalized quantile regression in high-dimensional sparse
  models.
\newblock \emph{The Annals of Statistics}, 39\penalty0 (1):\penalty0 82, 2011.

\bibitem[Blanchard \& Kr{\"a}mer(2010)Blanchard and
  Kr{\"a}mer]{blanchard2010optimal}
Blanchard, G. and Kr{\"a}mer, N.
\newblock Optimal learning rates for kernel conjugate gradient regression.
\newblock \emph{Advances in Neural Information Processing Systems},
  23:\penalty0 226--234, 2010.

\bibitem[Bochner(2005)]{bochner2005harmonic}
Bochner, S.
\newblock \emph{Harmonic analysis and the theory of probability}.
\newblock Courier Corporation, 2005.

\bibitem[Boyd et~al.(2011)Boyd, Parikh, Chu, Peleato, and
  Eckstein]{boyd2011distributed}
Boyd, S., Parikh, N., Chu, E., Peleato, B., and Eckstein, J.
\newblock Distributed optimization and statistical learning via the alternating
  direction method of multipliers.
\newblock \emph{Foundations and Trends{\textregistered} in Machine learning},
  3\penalty0 (1):\penalty0 1--122, 2011.

\bibitem[Caponnetto \& De~Vito(2007)Caponnetto and
  De~Vito]{caponnetto2007optimal}
Caponnetto, A. and De~Vito, E.
\newblock Optimal rates for the regularized least-squares algorithm.
\newblock \emph{Foundations of Computational Mathematics}, 7:\penalty0
  331--368, 2007.

\bibitem[Carratino et~al.(2018)Carratino, Rudi, and
  Rosasco]{carratino2018learning}
Carratino, L., Rudi, A., and Rosasco, L.
\newblock Learning with sgd and random features.
\newblock \emph{Advances in Neural Information Processing Systems},
  31:\penalty0 10213--10224, 2018.

\bibitem[Daniely et~al.(2016)Daniely, Frostig, and Singer]{daniely2016toward}
Daniely, A., Frostig, R., and Singer, Y.
\newblock Toward deeper understanding of neural networks: The power of
  initialization and a dual view on expressivity.
\newblock \emph{Advances in Neural Information Processing Systems},
  29:\penalty0 2261--2269, 2016.

\bibitem[Feng et~al.(2024)Feng, He, Wang, Wang, and Zhang]{feng2024towards}
Feng, X., He, X., Wang, C., Wang, C., and Zhang, J.
\newblock Towards a unified analysis of kernel-based methods under covariate
  shift.
\newblock \emph{Advances in Neural Information Processing Systems},
  36:\penalty0 73839--73851, 2024.

\bibitem[Furuta(2001)]{furuta2001invitation}
Furuta, T.
\newblock \emph{Invitation to linear operators: From matrices to bounded linear
  operators on a Hilbert space}.
\newblock CRC Press, 2001.

\bibitem[Gu et~al.(2018)Gu, Fan, Kong, Ma, and Zou]{gu2018admm}
Gu, Y., Fan, J., Kong, L., Ma, S., and Zou, H.
\newblock Admm for high-dimensional sparse penalized quantile regression.
\newblock \emph{Technometrics}, 60\penalty0 (3):\penalty0 319--331, 2018.

\bibitem[He \& Shi(1994)He and Shi]{he1994convergence}
He, X. and Shi, P.
\newblock Convergence rate of b-spline estimators of nonparametric conditional
  quantile functions.
\newblock \emph{Journal of Nonparametric Statistics}, 3\penalty0
  (3-4):\penalty0 299--308, 1994.

\bibitem[Keerthi et~al.(2005)Keerthi, Duan, Shevade, and Poo]{keerthi2005fast}
Keerthi, S.~S., Duan, K., Shevade, S.~K., and Poo, A.~N.
\newblock A fast dual algorithm for kernel logistic regression.
\newblock \emph{Machine Learning}, 61:\penalty0 151--165, 2005.

\bibitem[Koenker(2005)]{koenker2005quantile}
Koenker, R.
\newblock \emph{Quantile regression}, volume~38.
\newblock Cambridge University Press, 2005.

\bibitem[Li \& Liu(2023)Li and Liu]{Liijcai2023}
Li, J. and Liu, Y.
\newblock Towards sharp analysis for distributed learning with random features.
\newblock In \emph{Proceedings of the Thirty-Second International Joint
  Conference on Artificial Intelligence}, pp.\  3920--3928, 2023.

\bibitem[Li et~al.(2023{\natexlab{a}})Li, Liu, and Wang]{li2023optimal}
Li, J., Liu, Y., and Wang, W.
\newblock Optimal convergence rates for agnostic nystr{\"o}m kernel learning.
\newblock In \emph{International Conference on Machine Learning}, pp.\
  19811--19836. PMLR, 2023{\natexlab{a}}.

\bibitem[Li et~al.(2023{\natexlab{b}})Li, Liu, and Wang]{lioptimalRF2023}
Li, J., Liu, Y., and Wang, W.
\newblock Optimal convergence for agnostic kernel learning with random
  features.
\newblock \emph{IEEE Transactions on Neural Networks and Learning Systems},
  28:\penalty0 1--11, 2023{\natexlab{b}}.

\bibitem[Li et~al.(2007)Li, Liu, and Zhu]{li2007quantile}
Li, Y., Liu, Y., and Zhu, J.
\newblock Quantile regression in reproducing kernel {H}ilbert spaces.
\newblock \emph{Journal of the American Statistical Association}, 102\penalty0
  (477):\penalty0 255--268, 2007.

\bibitem[Li(2022)]{li2022sharp}
Li, Z.
\newblock Sharp analysis of random fourier features in classification.
\newblock In \emph{Proceedings of the AAAI Conference on Artificial
  Intelligence}, volume~36, pp.\  7444--7452, 2022.

\bibitem[Li et~al.(2021)Li, Ton, Oglic, and Sejdinovic]{li2021towards}
Li, Z., Ton, J.-F., Oglic, D., and Sejdinovic, D.
\newblock Towards a unified analysis of random fourier features.
\newblock \emph{The Journal of Machine Learning Research}, 22\penalty0
  (1):\penalty0 4887--4937, 2021.

\bibitem[Lian(2022)]{lian2022}
Lian, H.
\newblock Distributed learning of conditional quantiles in the reproducing
  kernel hilbert space.
\newblock \emph{Advances in Neural Information Processing Systems},
  35:\penalty0 11686--11696, 2022.

\bibitem[Lin \& Cevher(2020)Lin and Cevher]{lin2020optimal}
Lin, J. and Cevher, V.
\newblock Optimal convergence for distributed learning with stochastic gradient
  methods and spectral algorithms.
\newblock \emph{The Journal of Machine Learning Research}, 21\penalty0
  (1):\penalty0 5852--5914, 2020.

\bibitem[Lin \& Rosasco(2016)Lin and Rosasco]{lin2016optimal}
Lin, J. and Rosasco, L.
\newblock Optimal learning for multi-pass stochastic gradient methods.
\newblock \emph{Advances in Neural Information Processing Systems},
  29:\penalty0 4563--4571, 2016.

\bibitem[Lin et~al.(2017)Lin, Guo, and Zhou]{lin2017distributed}
Lin, S.-B., Guo, X., and Zhou, D.-X.
\newblock Distributed learning with regularized least squares.
\newblock \emph{The Journal of Machine Learning Research}, 18\penalty0
  (1):\penalty0 3202--3232, 2017.

\bibitem[Lin et~al.(2020)Lin, Wang, and Zhou]{lin2020distributed}
Lin, S.-B., Wang, D., and Zhou, D.-X.
\newblock Distributed kernel ridge regression with communications.
\newblock \emph{The Journal of Machine Learning Research}, 21:\penalty0
  3718--3755, 2020.

\bibitem[Liu et~al.(2021)Liu, Huang, Chen, and Suykens]{liu2021random}
Liu, F., Huang, X., Chen, Y., and Suykens, J.~A.
\newblock Random features for kernel approximation: A survey on algorithms,
  theory, and beyond.
\newblock \emph{IEEE Transactions on Pattern Analysis and Machine
  Intelligence}, 44\penalty0 (10):\penalty0 7128--7148, 2021.

\bibitem[Liu \& Lian(2023)Liu and Lian]{liuon2023}
Liu, J. and Lian, H.
\newblock On optimal learning with random features.
\newblock \emph{IEEE Transactions on Neural Networks and Learning Systems},
  34\penalty0 (11):\penalty0 9536--9541, 2023.

\bibitem[Liu et~al.(2020)Liu, Liu, and Wang]{liu2020effective}
Liu, Y., Liu, J., and Wang, S.
\newblock Effective distributed learning with random features: Improved bounds
  and algorithms.
\newblock In \emph{International Conference on Learning Representations}, pp.\
  1--13, 2020.

\bibitem[Lu et~al.(2019)Lu, Math{\'e}, and Pereverzyev~Jr]{lu2019analysis}
Lu, S., Math{\'e}, P., and Pereverzyev~Jr, S.
\newblock Analysis of regularized nystr{\"o}m subsampling for regression
  functions of low smoothness.
\newblock \emph{Analysis and Applications}, 17\penalty0 (06):\penalty0
  931--946, 2019.

\bibitem[Madrid~Padilla \& Chatterjee(2022)Madrid~Padilla and
  Chatterjee]{madrid2022risk}
Madrid~Padilla, O.~H. and Chatterjee, S.
\newblock Risk bounds for quantile trend filtering.
\newblock \emph{Biometrika}, 109\penalty0 (3):\penalty0 751--768, 2022.

\bibitem[Peng et~al.(2021)Peng, Pappas, Yogatama, Schwartz, Smith, and
  Kong]{peng2021random}
Peng, H., Pappas, N., Yogatama, D., Schwartz, R., Smith, N., and Kong, L.
\newblock Random feature attention.
\newblock In \emph{International Conference on Learning Representations (ICLR
  2021)}, 2021.

\bibitem[Pollard(2012)]{pollard2012convergence}
Pollard, D.
\newblock \emph{Convergence of stochastic processes}.
\newblock Springer Science \& Business Media, 2012.

\bibitem[Rahimi \& Recht(2007)Rahimi and Recht]{rahimi2007random}
Rahimi, A. and Recht, B.
\newblock Random features for large-scale kernel machines.
\newblock \emph{Advances in Neural Information Processing Systems},
  20:\penalty0 1177--1184, 2007.

\bibitem[Rahimi \& Recht(2008)Rahimi and Recht]{rahimi2008weighted}
Rahimi, A. and Recht, B.
\newblock Weighted sums of random kitchen sinks: Replacing minimization with
  randomization in learning.
\newblock \emph{Advances in Neural Information Processing Systems},
  21:\penalty0 1313--1320, 2008.

\bibitem[Rudi \& Rosasco(2017)Rudi and Rosasco]{rudi2017generalization}
Rudi, A. and Rosasco, L.
\newblock Generalization properties of learning with random features.
\newblock \emph{Advances in Neural Information Processing Systems},
  30:\penalty0 3215--3225, 2017.

\bibitem[Rudi et~al.(2015)Rudi, Camoriano, and Rosasco]{rudi2015less}
Rudi, A., Camoriano, R., and Rosasco, L.
\newblock Less is more: Nystr{\"o}m computational regularization.
\newblock \emph{Advances in Neural Information Processing Systems},
  28:\penalty0 1657--1665, 2015.

\bibitem[Rudi et~al.(2017)Rudi, Carratino, and Rosasco]{rudi2017falkon}
Rudi, A., Carratino, L., and Rosasco, L.
\newblock Falkon: An optimal large scale kernel method.
\newblock \emph{Advances in Neural Information Processing Systems},
  30:\penalty0 3891--3901, 2017.

\bibitem[Rudin(2017)]{rudin2017fourier}
Rudin, W.
\newblock \emph{Fourier analysis on groups}.
\newblock Courier Dover Publications, 2017.

\bibitem[Sch{\"o}lkopf \& Smola(2002)Sch{\"o}lkopf and
  Smola]{scholkopf2002learning}
Sch{\"o}lkopf, B. and Smola, A.~J.
\newblock \emph{Learning with kernels: support vector machines, regularization,
  optimization, and beyond}.
\newblock MIT Press, 2002.

\bibitem[Shen et~al.(2021)Shen, Jiao, Lin, Horowitz, and Huang]{shen2021deep}
Shen, G., Jiao, Y., Lin, Y., Horowitz, J.~L., and Huang, J.
\newblock Deep quantile regression: Mitigating the curse of dimensionality
  through composition.
\newblock \emph{arXiv preprint arXiv:2107.04907}, 2021.

\bibitem[Smale \& Zhou(2007)Smale and Zhou]{smale2007learning}
Smale, S. and Zhou, D.-X.
\newblock Learning theory estimates via integral operators and their
  approximations.
\newblock \emph{Constructive Approximation}, 26\penalty0 (2):\penalty0
  153--172, 2007.

\bibitem[Sriperumbudur \& Szab{\'o}(2015)Sriperumbudur and
  Szab{\'o}]{sriperumbudur2015optimal}
Sriperumbudur, B. and Szab{\'o}, Z.
\newblock Optimal rates for random fourier features.
\newblock \emph{Advances in Neural Information Processing Systems},
  28:\penalty0 1144--1152, 2015.

\bibitem[Steinwart \& Christmann(2008)Steinwart and
  Christmann]{steinwart2008support}
Steinwart, I. and Christmann, A.
\newblock \emph{Support vector machines}.
\newblock Springer Science \& Business Media, 2008.

\bibitem[Sun et~al.(2018)Sun, Gilbert, and Tewari]{sun2018but}
Sun, Y., Gilbert, A., and Tewari, A.
\newblock But how does it work in theory? linear svm with random features.
\newblock \emph{Advances in Neural Information Processing Systems},
  31:\penalty0 3379--3388, 2018.

\bibitem[Sutherland \& Schneider(2015)Sutherland and
  Schneider]{sutherland2015error}
Sutherland, D.~J. and Schneider, J.
\newblock On the error of random fourier features.
\newblock In \emph{Proceedings of the Thirty-First Conference on Uncertainty in
  Artificial Intelligence}, pp.\  862--871, 2015.

\bibitem[Takeuchi et~al.(2006)Takeuchi, Le, Sears, and
  Smola]{takeuchi2006nonparametric}
Takeuchi, I., Le, Q.~V., Sears, T.~D., and Smola, A.~J.
\newblock Nonparametric quantile estimation.
\newblock \emph{The Journal of Machine Learning Research}, 7:\penalty0
  1231--1264, 2006.

\bibitem[Vapnik(1999)]{vapnik1999nature}
Vapnik, V.
\newblock \emph{The nature of statistical learning theory}.
\newblock Springer Science \& Business Media, 1999.

\bibitem[Wahba(1990)]{wahba1990spline}
Wahba, G.
\newblock \emph{Spline models for observational data}.
\newblock SIAM, 1990.

\bibitem[Wainwright(2019)]{wainwright2019high}
Wainwright, M.~J.
\newblock \emph{High-dimensional statistics: A non-asymptotic viewpoint}.
\newblock Cambridge University Press, 2019.

\bibitem[Wang et~al.(2024+)Wang, Li, Zhang, Feng, and
  He]{wang2024communication}
Wang, C., Li, T., Zhang, X., Feng, X., and He, X.
\newblock Communication-efficient nonparametric quantile regression via random
  features.
\newblock \emph{Journal of Computational and Graphical Statistics}, to appear,
  2024+.

\bibitem[Williams \& Seeger(2001)Williams and Seeger]{williams2001using}
Williams, C. and Seeger, M.
\newblock Using the nystroem method to speed up kernel machines.
\newblock \emph{Advances in Neural Information Processing Systems}, pp.\
  682--688, 2001.

\bibitem[Zambon et~al.(2020)Zambon, Alippi, and Livi]{zambon2020graph}
Zambon, D., Alippi, C., and Livi, L.
\newblock Graph random neural features for distance-preserving graph
  representations.
\newblock In \emph{International Conference on Machine Learning}, pp.\
  10968--10977. PMLR, 2020.

\bibitem[Zandieh et~al.(2021)Zandieh, Han, Avron, Shoham, Kim, and
  Shin]{zandieh2021scaling}
Zandieh, A., Han, I., Avron, H., Shoham, N., Kim, C., and Shin, J.
\newblock Scaling neural tangent kernels via sketching and random features.
\newblock \emph{Advances in Neural Information Processing Systems},
  34:\penalty0 1062--1073, 2021.

\bibitem[Zhang et~al.(2021)Zhang, Bengio, Hardt, Recht, and
  Vinyals]{zhang2021understanding}
Zhang, C., Bengio, S., Hardt, M., Recht, B., and Vinyals, O.
\newblock Understanding deep learning (still) requires rethinking
  generalization.
\newblock \emph{Communications of the ACM}, 64\penalty0 (3):\penalty0 107--115,
  2021.

\bibitem[Zhang et~al.(2023)Zhang, Li, Lu, and Lin]{zhang23mis}
Zhang, H., Li, Y., Lu, W., and Lin, Q.
\newblock On the optimality of misspecified kernel ridge regression.
\newblock In \emph{Proceedings of the 40th International Conference on Machine
  Learning}, volume 202, pp.\  41331--41353. PMLR, 2023.

\bibitem[Zhang et~al.(2015)Zhang, Duchi, and Wainwright]{zhang2015divide}
Zhang, Y., Duchi, J., and Wainwright, M.
\newblock Divide and conquer kernel ridge regression: A distributed algorithm
  with minimax optimal rates.
\newblock \emph{The Journal of Machine Learning Research}, 16\penalty0
  (1):\penalty0 3299--3340, 2015.

\end{thebibliography}
\bibliographystyle{icml2024}

%%%%%%%%%%%%%%%%%%%%%%%%%%%%%%%%%%%%%%%%%%%%%%%%%%%%%%%%%%%%%%%%%%%%%%%%%%%%%%%
%%%%%%%%%%%%%%%%%%%%%%%%%%%%%%%%%%%%%%%%%%%%%%%%%%%%%%%%%%%%%%%%%%%%%%%%%%%%%%%
% APPENDIX
%%%%%%%%%%%%%%%%%%%%%%%%%%%%%%%%%%%%%%%%%%%%%%%%%%%%%%%%%%%%%%%%%%%%%%%%%%%%%%%
%%%%%%%%%%%%%%%%%%%%%%%%%%%%%%%%%%%%%%%%%%%%%%%%%%%%%%%%%%%%%%%%%%%%%%%%%%%%%%%
\newpage
\appendix
\onecolumn

\section{Random features in kernel methods and deep neural networks}
Random features mapping is a powerful tool for scaling up kernel methods \citep{rudi2017generalization}, neural tangent kernel \citep{zandieh2021scaling}, graph neural networks \citep{zambon2020graph} and attention in Transformers \citep{peng2021random}.  In fact, random features can be viewed as a class of two-layer neural networks with fixed weights in their first layer \citep{liu2021random}. For example, we consider a two-layer neural network, i.e., $f(\boldsymbol{x},\boldsymbol{\theta})=\sqrt{\frac{2}{M}}\sum_{j=1}^M \alpha_j\sigma(\boldsymbol{\omega}_j^T\boldsymbol{x})$ for some activation function $\sigma$, where $\boldsymbol{x} \in \mathbb{R}^d$ and $\boldsymbol{\omega} \sim N(0,\mathbf{I}_d)$. Its corresponding random features mapping is $k(\boldsymbol{x}, \boldsymbol{x}^{\prime})=E_{\boldsymbol{\omega}}\left[\sigma\left(\boldsymbol{\omega}^{T} \boldsymbol{x}\right) \sigma\left(\boldsymbol{\omega}^{T} \boldsymbol{x}^{\prime}\right)\right]$. If the commonly used ReLU activation $\sigma(x)=\max\{0,x\}$ is adopted, then the kernel is the first order arc-cosine kernel, i.e., $k\left(\boldsymbol{x}, \boldsymbol{x}^{\prime}\right) \equiv \kappa_1(u)=\frac{1}{\pi}\left(u(\pi-\arccos (u))+\sqrt{1-u^2}\right)$ with $u=\left\langle\boldsymbol{x}, \boldsymbol{x}^{\prime}\right\rangle /\left(\|\boldsymbol{x}\|\left\|\boldsymbol{x}^{\prime \prime}\right\|\right)$. This relationships helps to explain phenomena such as the fit the random labels \citep{zhang2021understanding} and double descent \citep{belkin2019reconciling} in the two-layer overparameterized neural networks \citep{arora2019fine}. For a deep neural network with more than two layers and fixed weights except for the output layer, we can also find a compositional kernel with its widths tending to infinity \citep{daniely2016toward}. In view of the connection of random features for kernel methods and neural networks, it is meaningful to study the generalization properties of random features in over-parameterized quantile neural networks by those in KQR, especially in the agnostic setting.

\section{Proofs of the Learning Rate KQR-RF}

To start with, we define a $M$-dimensional function space ${\cal H}_M$ related to $\bfphi_{M}(\bfx)$ as 
$$
{\cal H}_M=\left\{f \mid f(\bfx)=\bfu^T \phi_M(\bfx), \bfx\in \mathcal{X}, \bfu \in \mathbb{R}^M \right\}.
$$
It thus clear that ${\cal H}_M$ is a RKHS induced by kernel function $K_{M}(\bfx, \bfx^{\prime})= \langle \bfphi_{M}( \bfx, \bfomega), \bfphi_{M}(\bfx^{\prime}, \bfomega)  \rangle$. For $f=\bfu^T \phi_M(\bfx)\in {\cal H}_M, g=\bfz^T \phi_M(\bfx) \in {\cal H}_M$, we define their inner product in $ {\cal H}_M$ as $\langle f,g\rangle_{ {\cal H}_M}= \bfu^T\bfz$. And the corresponding norm of $f$ in $ {\cal H}_M$ is $\|f\|_{{\cal H}_M}=\sqrt{\bfu^T\bfu}=\|\bfu\|_2$. 

In the rest of this paper, we denote $\|\cdot\|$ as the operatorial norm, $\|\cdot\|_{HS}$ as the Hilbert-Schmidt norm and $\|\cdot\|_2$ as the Euclidean norm of a vector in $\mathbb{R}^n$. Let $\cal H$ be a Hilbert space, we denote with $\langle \cdot, \cdot \rangle_{\cal H}$ the associated inner product, with $\|\cdot\|_{\cal H}$ the norm and with $\text{Tr}(\cdot)$ the trace.

\subsection{Kernel and Random Feature Operators}

In this section, we provide some popular kernel and random feature operators used in the proofs. 

\begin{definition}\label{defA2}
For any $g \in L_{\rho_{\cal X}}^2$, $\bfbeta \in \mathbb{R}^M$, we define
\begin{itemize}
    \item $S_M:  \mathbb{R}^M \rightarrow L_{\rho_{\cal X}}^2, \ \ (S_M\bfbeta)(\cdot)=  \bfbeta^T\bfphi_M(\cdot),$
    \item $S_M^*: L_{\rho_{\cal X}}^2 \rightarrow \mathbb{R}^M, \ S_M^*g=\int_{\cal X} \bfphi_M(\bfx)g(\bfx)d\rho_{\cal X}(\bfx),$
    \item $S_{M,D}^*: L_{\rho_{\cal X}}^2 \rightarrow \mathbb{R}^M, \
S_{M,D}^*g=\frac{1}{|D|}\sum_{i=1}^{|D|}\bfphi_M({\bfx}_i)g({\bfx}_i),$
    \item $C_M: \mathbb{R}^M \rightarrow \mathbb{R}^M,  \ C_M=\int_{\cal X} \bfphi_M(\bfx)\bfphi_M(\bfx)^T d\rho_{\cal X}(\bfx),$
    \item $C_{M,D}: \mathbb{R}^M \rightarrow \mathbb{R}^M,  \
C_{M,D}=\frac{1}{|D|}\sum_{i=1}^{|D|}\bfphi_M({\bfx}_i)\bfphi_M({\bfx}_i)^T.$
\end{itemize}
\end{definition}

According to Definition \ref{def1}, \ref{defA2} and Assumption \ref{ass4}, we have $L_M, C_M, S_M, C_{M,D}$ are finite dimensional. Moreover, we have $L_M=S_MS_M^*, C_M=S_M^*S_M$ and $C_{M,D}=S_{M,D}^*S_M$. Finally, $L_M, C_M, C_{M,D}$ are self-adjoint and positive operator, with
spectrum is $[0,\kappa^2]$.

\subsection{Error Decomposition}
In this section, we first introduce some intermediate estimators based on check loss and least square loss and then explain the relationship between the estimators. Finally, we give a tight error decomposition for KQR-RF.

\begin{definition}\label{defA1}
We define the following intermediate estimators:
\begin{align*}
f_{M,D,\lambda}&=\argmin_{f\in \mathcal{H}_M} \left\{\frac{1}{|D|} \sum_{(\bfx,y)\in D}\rho_{\tau}\left(y-f({\bfx})\right)+\lambda\|f\|_{{\cal H}_M}^2\right\},\\
f_{M,D,\lambda}^{\diamond}&=\argmin_{f\in \mathcal{H}_M} \left\{\frac{1}{|D|} \sum_{(\bfx,y)\in D}\left(f({\bfx})-f_{\tau}^*(\bfx)\right)^2+\lambda\|f\|_{{\cal H}_M}^2\right\},\\
f_{M,\lambda}&=\argmin_{f \in {\cal H}_M}\left\{\int_{\cal X}\big(f({\bfx})-f_{\tau}^*(\bfx)\big)^2d \rho_{\cal X}(\bfx)+\lambda\|f\|_{{\cal H}_M}^2\right\}, \\
f_{\lambda}&=\argmin_{f \in {\cal H}_K}\left\{\int_{\cal X}\big(f({\bfx})-f_{\tau}^*(\bfx)\big)^2d \rho_{\cal X}(\bfx)+\lambda\|f\|_K^2\right\}. 
\end{align*}
We can also write $f_{M,D,\lambda}^{\diamond}=\bfphi_M(\cdot)^T{\bfomega}_{M,D,\lambda}^{\diamond}$ and ${\bfomega}_{M,D,\lambda}^{\diamond}=\argmin_{\bfomega \in \mathbb{R}^M}\frac{1}{|D|} \sum_{(\bfx,y)\in D}(\bfphi_M(\bfx)^T{\bfomega}-f_{\tau}^*(\bfx))^2+\lambda \|\bfomega\|_2^2$, $f_{M,\lambda}=\bfphi_M(\cdot)^T{\bfomega}_{M,\lambda}$ and ${\bfomega}_{M,\lambda}=\argmin_{\bfomega \in \mathbb{R}^M}\int_{\cal X}(\bfphi_M(\bfx)^T{\bfomega}-f_{\tau}^*(\bfx))^2d \rho_{\cal X}(\bfx)+\lambda \|\bfomega\|_2^2$.
\end{definition}

Note that $f_{M,D,\lambda}$ is the global estimator of KQR-RF, and it does not have an explicit form due to the non-smoothness of the check loss function, while the other three estimators are defined by the least square loss function. Recall the operators defined in Definition \ref{def1} and \ref{defA2}, we have 
\begin{align*}
&f_{M,D,\lambda}^{\diamond}=S_M(C_{M,D}+\lambda I)^{-1}S_{M,D}^*f_{\tau}^*,\\
&f_{M,\lambda}=(L_M+\lambda I)^{-1}L_Mf_{\tau}^*=S_M(C_{M}+\lambda I)^{-1}S_{M}^*f_{\tau}^*, \\
&f_{\lambda}=(L_K+\lambda I)^{-1}L_Kf_{\tau}^*.
\end{align*}

According to the definition of these estimators, we summarized a relationship chain from the KQR-RF estimator to the true quantile function $f_{\tau}^*$ in $L^2_{\rho_{\cal X}}$: 
$$
f_{M,D,\lambda} \stackrel{\rho_{\tau}\rightarrow ls}{\longrightarrow}f_{M,D,\lambda}^{\diamond}\stackrel{\rho_{\cal X}(\bfx)}{\longrightarrow}f_{M,\lambda}\stackrel{{\cal H}_M \rightarrow {\cal H}_K}{\longrightarrow}f_{\lambda}\stackrel{{\cal H}_K \rightarrow L_2}{\longrightarrow}f_{\tau}^*.
$$
Consequently, we can decompose the error in terms of $L_{\rho_{\cal X}}^2$-norm into four parts: $\|f_{M,D,\lambda}-f_{M,D,\lambda}^{\diamond}\|_{\rho}$ is the LS-approximation error from the non-smooth check loss to the least square loss; $\|f_{M,D,\lambda}^{\diamond}-f_{M,\lambda}\|_{\rho}$ is the empirical error from the sample to the expectation; $\|f_{M,\lambda}-f_{\lambda}\|_{\rho}$ is the approximation error introduced by the random features; and $\|f_{\lambda}-f_{\tau}^*\|_{\rho}$ is the approximation error between ${\cal H}_K$ and $L^2_{\rho_{\cal X}}$.

\begin{lemma}\label{lemA.1.1}
Let $f_{M,D,\lambda}$, $f_{M,D,\lambda}^{\diamond}$, $f_{M,\lambda}$ and $f_{\lambda}$ be defined in Definition \ref{defA1}, we have the following error decomposition for KQR-RF,
\begin{align}\label{eqA.1.1}
 \|f_{M,D,\lambda}-f_{\tau}^*\|_{\rho} \leq  \underbrace{\|f_{M,D,\lambda}-f_{M,D,\lambda}^{\diamond}\|_{\rho}}_{\text{LS-approximation error}} +\underbrace{  \|f_{M,D,\lambda}^{\diamond}-f_{M,\lambda}\|_{\rho}}_{\text{Empirical error}}+\underbrace{
 \|f_{M,\lambda}-f_{\lambda}\|_{\rho}}_{\text{RF error}}+\underbrace{ \|f_{\lambda}-f_{\tau}^*\|_{\rho}}_{\text{Approximation error}}.
\end{align}
\end{lemma}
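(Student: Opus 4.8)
The statement is an instance of the triangle inequality once the telescoping structure is in place. The plan is to insert the three intermediate estimators $f_{M,D,\lambda}^{\diamond}$, $f_{M,\lambda}$, and $f_{\lambda}$ between the KQR-RF estimator $f_{M,D,\lambda}$ and the target $f_{\tau}^*$, writing the difference as the telescoping sum
\begin{align*}
f_{M,D,\lambda}-f_{\tau}^* &= (f_{M,D,\lambda}-f_{M,D,\lambda}^{\diamond}) + (f_{M,D,\lambda}^{\diamond}-f_{M,\lambda}) \\
&\quad + (f_{M,\lambda}-f_{\lambda}) + (f_{\lambda}-f_{\tau}^*),
\end{align*}
and then applying the triangle inequality for the norm $\|\cdot\|_{\rho}$.

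First I would verify that each of the four estimators is a well-defined element of $L^2_{\rho_{\cal X}}$, so that the subtractions and the norm make sense. The estimators $f_{M,D,\lambda}$, $f_{M,D,\lambda}^{\diamond}$, and $f_{M,\lambda}$ all lie in the finite-dimensional RKHS ${\cal H}_M \subseteq L^2_{\rho_{\cal X}}$: since $\phi$ is bounded by Assumption \ref{ass4}, every $f = \bfu^T\bfphi_M \in {\cal H}_M$ satisfies $\|f\|_{\rho} \le \kappa\|\bfu\|_2 < \infty$. Meanwhile $f_{\lambda}\in {\cal H}_K \subseteq L^2_{\rho_{\cal X}}$ and $f_{\tau}^* = L_K^r h_{\tau} \in L^2_{\rho_{\cal X}}$ by the source condition in Assumption \ref{ass1}. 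Because $\|\cdot\|_{\rho}$ is the norm induced by the inner product $\langle\cdot,\cdot\rangle_{\rho}$ on $L^2_{\rho_{\cal X}}$, it obeys the triangle inequality, and applying it to the four-term decomposition yields \eqref{eqA.1.1} immediately.

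There is no genuine obstacle in the inequality itself, as it is purely the triangle inequality; the entire art lies in the \emph{choice} of the three intermediate estimators rather than in the decomposition step. The substantive content, deferred to the subsequent lemmas, is that this particular chain $f_{M,D,\lambda}\to f_{M,D,\lambda}^{\diamond}\to f_{M,\lambda}\to f_{\lambda}\to f_{\tau}^*$ isolates four errors that can each be controlled by a distinct mechanism: the LS-approximation error $\|f_{M,D,\lambda}-f_{M,D,\lambda}^{\diamond}\|_{\rho}$ via the adaptive self-calibration condition (Assumption \ref{ass6}), which converts the non-smooth check-loss problem into a tractable least-squares comparison; the empirical error $\|f_{M,D,\lambda}^{\diamond}-f_{M,\lambda}\|_{\rho}$ and RF error $\|f_{M,\lambda}-f_{\lambda}\|_{\rho}$ via operator Bernstein inequalities for $L_K,L_M,C_M,C_{M,D}$ together with the capacity and compatibility conditions (Assumptions \ref{ass2} and \ref{ass5}); and the approximation error $\|f_{\lambda}-f_{\tau}^*\|_{\rho}$ via the source condition. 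The crucial design decision worth emphasizing is routing through $f_{M,D,\lambda}^{\diamond}$, the \emph{ridge} (least-squares) surrogate of the quantile estimator defined on the same random-feature space, since this is precisely the bridge that lets the later analysis reuse the KRR-RF machinery on the two smooth terms while confining all difficulty arising from the non-smooth loss to a single LS-approximation term.
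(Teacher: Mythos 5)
Your proof is correct and coincides with the paper's own argument, which likewise obtains \eqref{eqA.1.1} by inserting the three intermediate estimators and applying the triangle inequality directly. Your added checks that all five functions lie in $L^2_{\rho_{\cal X}}$ and your commentary on why this particular chain of estimators is chosen are sound but go beyond what the paper's one-line proof records.
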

\begin{proof}
 According to the triangle inequality, we can obtain the result directly.   
\end{proof}

\subsection{Error Bounds}
In this section, we provide the bounds for the four error terms in Lemma \ref{lemA.1.1}. By utilizing the operator representation of $f_{M,D,\lambda}^{\diamond}$, $f_{M,\lambda}$ and $f_{\lambda}$, we first bound the last three error terms. Benefiting from the maximum dimension of random features and Berntein's inequalities,
our refined convergence results allow the source condition index $r\in [0,1]$ which can also explain the agnostic case when $f_{\tau}^* \notin {\cal H}_K$. Thus we can show that $\|f_{M,D,\lambda}^{\diamond}-f_{\tau}^*\|_{\rho}$ is small enough, i.e., $\mathcal{O}_P(\lambda^r)$. Based on this result, we finally bound the LS-approximation error by using the empirical process and some properties of the check loss function.

\subsubsection{Approximation Errors}

\begin{lemma}\label{lemA.2.1}
Let $f_{\lambda}$ and $f_{\tau}^*$ be defined in Definition \ref{defA1} and \eqref{expected_risk}, respectively. Under Assumption \ref{ass1}, for any $\lambda \in (0,1)$ and $r \in (0,1]$, there holds
\begin{align*}
\|f_{\lambda}-f_{\tau}^*\|_{\rho}\leq R\lambda^r.    
\end{align*}
\end{lemma}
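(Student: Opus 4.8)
The plan is to exploit the explicit operator form $f_{\lambda}=(L_K+\lambda I)^{-1}L_K f_{\tau}^*$ recorded just before the lemma and reduce the whole estimate to a scalar spectral bound. First I would rewrite the difference algebraically: since $(L_K+\lambda I)^{-1}L_K - I = (L_K+\lambda I)^{-1}\big(L_K-(L_K+\lambda I)\big) = -\lambda(L_K+\lambda I)^{-1}$, we obtain
\begin{align*}
f_{\lambda} - f_{\tau}^* = \big[(L_K+\lambda I)^{-1}L_K - I\big]f_{\tau}^* = -\lambda (L_K+\lambda I)^{-1} f_{\tau}^*.
\end{align*}
Plugging in the source condition $f_{\tau}^* = L_K^r h_{\tau}$ from Assumption \ref{ass1} then gives $f_{\lambda} - f_{\tau}^* = -\lambda (L_K+\lambda I)^{-1} L_K^r h_{\tau}$.

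Next I would pass to norms and peel off $h_{\tau}$ using the operatorial norm:
\begin{align*}
\|f_{\lambda} - f_{\tau}^*\|_{\rho} \leq \big\|\lambda (L_K+\lambda I)^{-1} L_K^r\big\| \cdot \|h_{\tau}\|_{\rho} \leq R\,\big\|\lambda (L_K+\lambda I)^{-1} L_K^r\big\|,
\end{align*}
where the last step uses $\|h_{\tau}\|_{\rho}\leq R$. It remains to show that the operator-norm factor is at most $\lambda^r$. Since $L_K$ is a positive, self-adjoint, trace-class (hence compact) operator on $L^2_{\rho_{\cal X}}$ under Assumption \ref{ass4}, I would invoke the spectral theorem: writing its spectrum as $\{\mu_i\}$, the operator $\lambda(L_K+\lambda I)^{-1}L_K^r$ is diagonalized on the same eigenbasis with eigenvalues $\lambda\mu_i^r/(\mu_i+\lambda)$, so that
\begin{align*}
\big\|\lambda (L_K+\lambda I)^{-1} L_K^r\big\| = \sup_{\mu \geq 0}\frac{\lambda\mu^r}{\mu+\lambda}.
\end{align*}

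The one genuine (though mild) technical point is the scalar inequality $\sup_{\mu\geq 0}\lambda\mu^r/(\mu+\lambda)\leq \lambda^r$ for $r\in(0,1]$, which I regard as the main obstacle. I would prove it by rearranging: the claim is equivalent to $\lambda^{1-r}\mu^r \leq \mu+\lambda$ for all $\mu\geq 0$. For $r=1$ this is immediate; for $r\in(0,1)$ it follows from the weighted arithmetic-geometric mean (Young's) inequality with weights $r$ and $1-r$, namely $\mu^r\lambda^{1-r}\leq r\mu+(1-r)\lambda \leq \mu+\lambda$. Combining this bound with the previous displays yields $\|f_{\lambda}-f_{\tau}^*\|_{\rho} \leq R\lambda^r$, completing the argument. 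The only subtlety to keep track of is the well-definedness of the functional calculus and of the fractional power $L_K^r$, which is justified by the compactness and positivity of $L_K$ guaranteed by Assumption \ref{ass4}; note that this is a purely deterministic approximation bound, requiring no moment or smoothness assumption on the data.
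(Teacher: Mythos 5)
Your proof is correct, and it begins with the same algebraic reduction as the paper: both write $f_{\lambda}-f_{\tau}^{*}=-\lambda(L_K+\lambda I)^{-1}L_K^{r}h_{\tau}$ via the identity $(L_K+\lambda I)^{-1}L_K-I=-\lambda(L_K+\lambda I)^{-1}$ together with the source condition, and then peel off $\|h_{\tau}\|_{\rho}\leq R$. Where you genuinely diverge is in the key norm estimate $\|\lambda(L_K+\lambda I)^{-1}L_K^{r}\|\leq\lambda^{r}$. You diagonalize: since $\lambda(L_K+\lambda I)^{-1}$ and $L_K^{r}$ are both functions of the single positive self-adjoint operator $L_K$, the functional calculus gives $\|\lambda(L_K+\lambda I)^{-1}L_K^{r}\|=\sup_{\mu\in\sigma(L_K)}\lambda\mu^{r}/(\mu+\lambda)$, which you control by the scalar weighted AM--GM inequality $\mu^{r}\lambda^{1-r}\leq r\mu+(1-r)\lambda\leq\mu+\lambda$. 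The paper instead stays at the operator level: it factors $\lambda(L_K+\lambda I)^{-1}L_K^{r}=\lambda^{r}\,\big[\lambda^{1-r}(L_K+\lambda I)^{r-1}\big]\big[(L_K+\lambda I)^{-r}L_K^{r}\big]$ and bounds the two brackets by $\|\lambda(L_K+\lambda I)^{-1}\|^{1-r}\leq 1$ and $\|(L_K+\lambda I)^{-1}L_K\|^{r}\leq 1$ using the Cordes inequality (Lemma \ref{lemD.1}). Your route is more elementary---it reduces everything to a one-variable inequality and never invokes Cordes---but it relies essentially on commutativity, i.e.\ on all operators being simultaneously diagonalizable as functions of $L_K$. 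The paper's factor-and-Cordes technique is precisely the tool that survives when commutativity fails, which is why it reappears in the non-commuting estimates of Lemma \ref{lemA.2.2}, where products mix $L_M$ and $L_K$ (e.g.\ $(L_K+\lambda I)^{-1/2}(L_M-L_K)(L_K+\lambda I)^{r-1}$) and no joint spectral decomposition is available; for the present lemma the two arguments are equally valid and yield the identical bound $R\lambda^{r}$. One cosmetic remark: compactness of $L_K$ is not actually needed for your argument---the bounded functional calculus for the positive self-adjoint $L_K$ with $\sigma(L_K)\subseteq[0,\kappa^{2}]$ already suffices, and your scalar inequality holds over all $\mu\geq 0$ in any case, including the boundary case $r=1$ where the supremum equals $\lambda$ only in the limit $\mu\to\infty$ but is still dominated by $\lambda^{r}$.
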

\begin{proof}
 Recall that $f_{\lambda}=(L_K+\lambda I)^{-1}L_Kf_{\tau}^*$ and Assumption \ref{ass1} that $f_{\tau}^*=L_K^rh_{\rho}$ with $\|h_{\rho}\|_{\rho}\leq R$, we have 
 \begin{align*}
  \|f_{\lambda}-f_{\tau}^*\|_{\rho}&=\|(L_K+\lambda I)^{-1}L_Kf_{\tau}^*-f_{\tau}^*\|_{\rho}=\|\lambda(L_K+\lambda I)^{-1}f_{\tau}^*\|_{\rho}  \\
  &=\lambda\|(L_K+\lambda I)^{-1}L_K^rh_{\rho}\|_{\rho}\\
  &=\lambda^r\|\lambda^{1-r}(L_K+\lambda I)^{r-1}(L_K+\lambda I)^{-r}L_K^rh_{\rho}\|_{\rho}\\
  &\leq \lambda^r\|\lambda(L_K+\lambda I)^{-1}\|^{1-r}\|(L_K+\lambda I)^{-1}L_K\|^r\|h_{\rho}\|_{\rho}\leq R\lambda^r,
 \end{align*}
where the first inequality is from Lemma \ref{lemD.1} and the fact that $\|\lambda(L_K+\lambda I)^{-1}\|^{1-r}\leq 1$ and $\|(L_K+\lambda I)^{-1}L_K\|^{r} \leq 1$ for $r \in (0,1)$. 
Thus we complete the proof.
\end{proof}

\subsubsection{Random Feature Error}
\begin{lemma}\label{lemA.2.2}
Let $f_{M,\lambda}$ and $f_{\lambda}$ be defined in Definition \ref{defA1}, for any $0<\lambda\leq \|L_K\| $ and $\delta\in (0,1)$, if the number of random features satisfies the following inequalities
\begin{align*}
&M \geq 16({\cal N}_\infty(\lambda)+1)\log (4/\delta),\quad \text{for} \quad r\in (0,1/2);\\
&M \geq 16({\cal N}_\infty(\lambda)+1)\log (4/\delta) \vee  128\kappa^2 \lambda^{1-2r}{\cal N}(\lambda)^{2r-1}{\cal N}_\infty(\lambda)^{2-2r}\log(4/\delta),   \quad \text{for} \quad r\in [1/2,1],
\end{align*}
then under Assumptions \ref{ass4}, \ref{ass1} and \ref{ass5},  there holds 
\begin{align*}
   \|f_{M,\lambda}-f_{\lambda}\|_{\rho} \leq  R\lambda^r,
\end{align*}
with probability at least $1-\delta$.
\end{lemma}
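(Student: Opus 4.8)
The plan is to work entirely with the operator representations $f_{M,\lambda}=(L_M+\lambda I)^{-1}L_M f_{\tau}^*$ and $f_{\lambda}=(L_K+\lambda I)^{-1}L_K f_{\tau}^*$ recorded right after Definition \ref{defA1}, and to reduce the random feature error to a single concentration statement for a normalized operator difference. Using $(L+\lambda I)^{-1}L = I-\lambda(L+\lambda I)^{-1}$ for $L=L_M$ and $L=L_K$ together with the resolvent identity yields the exact expression
\begin{align*}
f_{M,\lambda}-f_{\lambda} = \lambda (L_M+\lambda I)^{-1}(L_M-L_K)(L_K+\lambda I)^{-1}f_{\tau}^*.
\end{align*}
Writing $L_{K,\lambda}=L_K+\lambda I$ and $L_{M,\lambda}=L_M+\lambda I$, I would insert $I=L_{K,\lambda}^{1/2}L_{K,\lambda}^{-1/2}$ on both sides of $(L_M-L_K)$ to obtain
\begin{align*}
\|f_{M,\lambda}-f_{\lambda}\|_{\rho}\leq \lambda\,\big\|L_{M,\lambda}^{-1}L_{K,\lambda}^{1/2}\big\|\;\|A\|\;\big\|L_{K,\lambda}^{-1/2}f_{\tau}^*\big\|_{\rho},
\end{align*}
where $A=L_{K,\lambda}^{-1/2}(L_M-L_K)L_{K,\lambda}^{-1/2}$ is the normalized operator difference.

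I would then control the three factors separately. For the source factor, Assumption \ref{ass1} gives $f_{\tau}^*=L_K^r h_{\tau}$ with $\|h_{\tau}\|_{\rho}\le R$, and the spectral estimate $\sup_{\mu}\mu^r(\mu+\lambda)^{-1/2}$ gives $\|L_{K,\lambda}^{-1/2}f_{\tau}^*\|_{\rho}\le R\lambda^{r-1/2}$ when $r<1/2$ and $\|L_{K,\lambda}^{-1/2}f_{\tau}^*\|_{\rho}\le \kappa^{2r-1}R$ when $r\ge 1/2$ (here $\lambda\le\|L_K\|\le\kappa^2$ from Assumption \ref{ass4} is used). For the first factor, the algebraic identity $L_{M,\lambda}=L_{K,\lambda}^{1/2}(I+A)L_{K,\lambda}^{1/2}$ gives $L_{M,\lambda}^{-1}L_{K,\lambda}^{1/2}=L_{K,\lambda}^{-1/2}(I+A)^{-1}$, hence $\|L_{M,\lambda}^{-1}L_{K,\lambda}^{1/2}\|\le \lambda^{-1/2}(1-\|A\|)^{-1}\le 2\lambda^{-1/2}$ whenever $\|A\|\le 1/2$. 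Combining the three bounds reduces everything to controlling $\|A\|$: for $r<1/2$ the choice $\|A\|\le 1/2$ already yields $\|f_{M,\lambda}-f_{\lambda}\|_{\rho}\le \lambda\cdot 2\lambda^{-1/2}\cdot\tfrac12\cdot R\lambda^{r-1/2}=R\lambda^r$, whereas for $r\ge 1/2$ one must instead force the sharper accuracy $\|A\|\lesssim \lambda^{r-1/2}$ so that the constant-order source factor still produces $R\lambda^r$.

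The concentration step is where the random features enter. Since $L_M=\tfrac1M\sum_{j=1}^M L_{\bfomega_j}$ with independent rank-one operators $L_{\bfomega_j}f=\langle \phi(\cdot,\bfomega_j),f\rangle_{\rho}\,\phi(\cdot,\bfomega_j)$ and $\mathbb{E}[L_{\bfomega_j}]=L_K$, the operator $A$ is the centered average $\tfrac1M\sum_j(\xi_j-\mathbb{E}\xi_j)$ with $\xi_j=L_{K,\lambda}^{-1/2}L_{\bfomega_j}L_{K,\lambda}^{-1/2}$. The compatibility condition (Assumption \ref{ass5}) bounds the summands, $\|\xi_j\|=\|(L_K+\lambda I)^{-1/2}\phi(\cdot,\bfomega_j)\|_{\rho}^2\le \mathcal{N}_{\infty}(\lambda)$, and since each $\xi_j$ is rank one one has $\xi_j^2=\|\xi_j\|\,\xi_j$, so $\mathbb{E}[\xi_j^2]\preceq \mathcal{N}_{\infty}(\lambda)\,\mathbb{E}\xi_j$ with $\mathrm{Tr}(\mathbb{E}\xi_j)=\mathcal{N}(\lambda)$. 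Applying the operator Bernstein inequality from the appendix to $A$ at accuracy $1/2$ produces the first requirement $M\gtrsim(\mathcal{N}_{\infty}(\lambda)+1)\log(4/\delta)$, which already settles the case $r<1/2$; demanding the sharper accuracy $\beta=\lambda^{r-1/2}$ for $r\ge 1/2$ produces the second requirement, in which the effective variance interpolates between the trace contribution $\mathcal{N}(\lambda)$ and the operator-norm bound $\mathcal{N}_{\infty}(\lambda)$, giving exactly $M\gtrsim \lambda^{1-2r}\mathcal{N}(\lambda)^{2r-1}\mathcal{N}_{\infty}(\lambda)^{2-2r}\log(4/\delta)$.

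The main obstacle is precisely the regime $r\in[1/2,1]$. The lazy isometry bound $\|A\|\le 1/2$ only delivers $\|f_{M,\lambda}-f_{\lambda}\|_{\rho}\lesssim \lambda^{1/2}$, which is strictly worse than the target $R\lambda^r$; recovering the optimal exponent forces $\|A\|$ down to $\lambda^{r-1/2}$, and achieving this with the economical feature count in the statement requires the sharp interpolated variance proxy $\mathcal{N}(\lambda)^{2r-1}\mathcal{N}_{\infty}(\lambda)^{2-2r}$ rather than the crude bound $\mathcal{N}_{\infty}(\lambda)$. Balancing the operator-norm term against the Hilbert--Schmidt/trace term in Bernstein so that, at the prescribed accuracy $\beta=\lambda^{r-1/2}$, the variance exponents sum to $2r-1$ and $2-2r$ is the delicate part, and it is exactly what separates the two feature-count regimes in the statement.
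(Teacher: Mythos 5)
Your setup — the resolvent identity, the factorization through $L_{K,\lambda}^{1/2}$, the spectral bounds on the source factor, and the case $r\in(0,1/2)$ — is sound and essentially coincides with the paper's proof (the paper writes the same chain slightly differently, splitting off $\|\lambda^{1/2-r}(L_K+\lambda I)^{r-1/2}\|\le 1$ and using $\|(L_M+\lambda I)^{-1/2}(L_K+\lambda I)^{1/2}\|\le\sqrt2$ from Lemma \ref{lemC.2} rather than your Neumann-series bound on $(I+A)^{-1}$, but the substance is identical). The genuine gap is the concentration step for $r\in[1/2,1]$. You claim that applying operator Bernstein to $A=L_{K,\lambda}^{-1/2}(L_M-L_K)L_{K,\lambda}^{-1/2}$ at accuracy $\lambda^{r-1/2}$ produces an ``effective variance'' ${\cal N}(\lambda)^{2r-1}{\cal N}_{\infty}(\lambda)^{2-2r}$. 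No version of Bernstein does this: the variance proxy for $A$ is $\|{\mathbb E}\xi^2\|\le{\cal N}_{\infty}(\lambda)$, the trace ${\cal N}(\lambda)$ can enter a single Bernstein application only through an intrinsic-dimension logarithmic factor, and in any case the exponent of a variance proxy cannot depend on the target accuracy $r$. Quantitatively, Lemma \ref{lemD.5} gives $\|A\|\lesssim\sqrt{{\cal N}_{\infty}(\lambda)\log(2/\delta)/M}$, so with your stated budget $M\asymp\lambda^{1-2r}{\cal N}(\lambda)^{2r-1}{\cal N}_{\infty}(\lambda)^{2-2r}\log(4/\delta)$ one only gets
\begin{align*}
\|A\|\lesssim \lambda^{r-1/2}\left({\cal N}_{\infty}(\lambda)/{\cal N}(\lambda)\right)^{r-1/2},
\end{align*}
which exceeds the required $\lambda^{r-1/2}$ exactly when ${\cal N}_{\infty}(\lambda)\gg{\cal N}(\lambda)$ — i.e., precisely the regime where the refined feature count in the lemma is meaningful. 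Forcing $\|A\|\le\lambda^{r-1/2}$ honestly would need the stronger condition $M\gtrsim\lambda^{1-2r}{\cal N}_{\infty}(\lambda)\log(4/\delta)$, so your route proves a weaker statement than the lemma.

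The obstruction is structural. By fully normalizing both sides of $(L_M-L_K)$ you push all $r$-dependence into the constant-order source factor $\kappa^{2r-1}R$, leaving $\|A\|$ as the sole source of smallness. The paper instead keeps the smoothing power attached to the difference, bounding $\|(L_K+\lambda I)^{-1/2}(L_M-L_K)(L_K+\lambda I)^{r-1}\|$, and then applies the interpolation inequality $\|XA^s\|\le\|X\|^{1-s}\|XA\|^s$ (Lemma \ref{lemD.2}) with $s=2-2r$ and $X=(L_K+\lambda I)^{-1/2}(L_M-L_K)$, splitting it as $\|X\|^{2r-1}\|A\|^{2-2r}$. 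Two \emph{separate} concentration bounds are then combined: the half-normalized difference $\|X\|$ is controlled by the Hilbert--Schmidt Bernstein inequality (Lemma \ref{lemC.2p}, via Lemma \ref{lemD.6}) with variance $\kappa^2{\cal N}(\lambda)$ — this is where the trace legitimately enters — while the fully normalized $\|A\|$ is controlled by Lemma \ref{lemC.1} with ${\cal N}_{\infty}(\lambda)$; the product of the two rates, with exponents $2r-1$ and $2-2r$ summing to one, yields exactly $\sqrt{{\cal N}(\lambda)^{2r-1}{\cal N}_{\infty}(\lambda)^{2-2r}\log(4/\delta)/M}$ and hence the stated feature count. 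To repair your proof, redo the factorization for $r\in[1/2,1]$ so that $(L_K+\lambda I)^{r-1}$ stays on the right of $(L_M-L_K)$ (using $\|(L_K+\lambda I)^{-r}L_K^r\|\le1$ for the source term) and invoke the interpolation step; your current decomposition forecloses it.
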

\begin{proof}
 From the definition of $f_{M,\lambda}$ and $f_{\lambda}$, we have 
\begin{align*}
f_{M,\lambda}-f_{\lambda}&=((L_M+\lambda I)L_M-(L_K+\lambda I)L_K)f_{\tau}^*\\
&=\lambda ((L_K+\lambda I)^{-1}-(L_M+\lambda I)^{-1})f_{\tau}^* \\
&=\lambda (L_M+\lambda I)^{-1}(L_M-L_K)(L_K+\lambda I)^{-1}f_{\tau}^*\\
&=\lambda^{1/2}(\lambda^{1/2}(L_M+\lambda I)^{-1/2})((L_M+\lambda I)^{-1/2}(L_K+\lambda I)^{1/2})[(L_K+\lambda I)^{-1/2}(L_M-L_K)(L_K+\lambda I)^{r-1}]\\
&((L_K+\lambda I)^{-r}L_K^r)h_{\rho},
\end{align*}
where the second and third equalities use $(A+\lambda I)^{-1}A=I-\lambda(A+\lambda I)^{-1}$ and $A^{-1}-B^{-1}=B^{-1}(B-A)A^{-1}$, and the last inequality we use  Assumption \ref{ass1} that $f_{\tau}^*=L_K^rh_{\rho}$ with $\|h_{\rho}\|_{\rho}\leq R$. Note that $\|\lambda^{1/2}(L_M+\lambda I)^{-1/2}\|\leq 1$, $\|(L_K+\lambda I)^{-r}L_K^r\|\leq 1$ and $\|(L_M+\lambda I)^{-1/2}(L_K+\lambda I)^{1/2}\|\leq \sqrt{2}$ in \eqref{lemC.2.2} from Lemma \ref{lemC.2} when $M \geq 16({\cal N}_\infty(\lambda)+1)\log (2/\delta)$, thus for any $\delta \in (0,1)$, there holds
\begin{align}\label{lemA.8.1}
\|f_{M,\lambda}-f_{\lambda}\|_{\rho} \leq R\sqrt{2\lambda}\|(L_K+\lambda I)^{-1/2}(L_M-L_K)(L_K+\lambda I)^{r-1}\|   
\end{align}
with probability at least $1-\delta$. We next to bound $\|f_{M,\lambda}-f_{\lambda}\|_{\rho}$ for two cases:

For the case when $r\in (0,1/2)$, according to \eqref{lemA.8.1}, we have 
\begin{align*}
\|f_{M,\lambda}-f_{\lambda}\|_{\rho} &\leq   R\sqrt{2\lambda}\|(L_K+\lambda I)^{-1/2}(L_M-L_K)(L_K+\lambda I)^{-1/2}\|\|(L_K+\lambda I)^{r-1/2}\| \\
&\leq \sqrt{2}R\lambda^r\|(L_K+\lambda I)^{-1/2}(L_M-L_K)(L_K+\lambda I)^{-1/2}\|\|\lambda^{1/2-r}(L_K+\lambda I)^{r-1/2}\|\\
&\leq \frac{\sqrt{2}}{2} R\lambda^r <R\lambda^r ,
\end{align*}
where the third inequality is from $\|\lambda^{1/2-r}(L_K+\lambda I)^{r-1/2}\|\leq 1$ for $r\in (0,1/2)$, and $\|(L_K+\lambda I)^{-1/2}(L_M-L_K)(L_K+\lambda I)^{-1/2}\| \leq 1/2$ in \eqref{lemC.2.1} from Lemma \ref{lemC.2} when $M \geq 16({\cal N}_\infty(\lambda)+1)\log (2/\delta)$.

For the case when $r\in [1/2,1]$, according to \eqref{lemA.8.1}, we apply Lemma \ref{lemD.2} by letting $s=2-2r \in [0,1]$, $X=(L_K+\lambda I)^{-1/2}(L_M-L_K)$ and $A=(L_K+\lambda I)^{-1/2}$,
$$
\|f_{M,\lambda}-f_{\lambda}\|_{\rho} \leq R\sqrt{2\lambda} \|(L_K+\lambda I)^{-1/2}(L_M-L_K)\|^{2r-1}\|(L_K+\lambda I)^{-1/2}(L_M-L_K)(L_K+\lambda I)^{-1/2}\|^{2-2r}.
$$

Note that from Lemmas \ref{lemC.1} and \ref{lemC.2p}, for any $\delta\in (0,1)$, with probability at least $1-\delta$, we have
\begin{equation}\label{lemA.8.2}
 \begin{aligned}
&\|f_{M,\lambda}-f_{\lambda}\|_{\rho} \leq R\sqrt{2\lambda} \|(L_K+\lambda I)^{-1/2}(L_M-L_K)\|^{2r-1}\|(L_K+\lambda I)^{-1/2}(L_M-L_K)(L_K+\lambda I)^{-1/2}\|^{2-2r}\\
\leq& R\sqrt{2\lambda} \left(\frac{4\kappa\sqrt{{\cal N}_\infty(\lambda)}\log (4/\delta)}{M}+\sqrt{\frac{4\kappa^2{\cal N}(\lambda)\log (4/\delta)}{M}} \right)^{2r-1}\left(\frac{2({\cal N}_\infty(\lambda)+1)\log (4/\delta)}{M}+\sqrt{\frac{2{\cal N}_\infty(\lambda)\log (4/\delta)}{M}}\right)^{2-2r}\\
\leq& R\sqrt{2\lambda}   \left[\left(\frac{4\kappa\sqrt{{\cal N}_\infty(\lambda)}\log (4/\delta)}{M}\right)^{2r-1}+\left(\sqrt{\frac{4\kappa^2{\cal N}(\lambda)\log (4/\delta)}{M}} \right)^{2r-1}\right]\left(\sqrt{\frac{4{\cal N}_\infty(\lambda)\log (4/\delta)}{M}}\right)^{2-2r}\\
 \leq& 4\sqrt{2}R \kappa^{2r-1}\left(\frac{\sqrt{\lambda{\cal N}_\infty(\lambda)}(\log(4/\delta))^r}{M^r}+\sqrt{\frac{\lambda{\cal N}(\lambda)^{2r-1}{\cal N}_\infty(\lambda)^{2-2r}\log(4/\delta)}{M}}\right),
\end{aligned}   
\end{equation}
where the second inequality follows from the inequality that $(a+b)^{2r-1}\leq a^{2r-1}+b^{2r-1}$ for $r\in [1/2,1]$.

Next, we need to add a condition on $M$ to bound $\eqref{lemA.8.2}$ with $R\lambda^r$. We consider
$$
M\geq 128\kappa^2 \lambda^{1-2r}{\cal N}(\lambda)^{2r-1}{\cal N}_\infty(\lambda)^{2-2r}\log(4/\delta),
$$
plug this condition into \eqref{lemA.8.2}, we get 
\begin{align*}
\|f_{M,\lambda}-f_{\lambda}\|_{\rho} &\leq  4\sqrt{2}R \kappa^{2r-1} \left(\sqrt{\frac{\lambda^{4r^2-2r+1}{\cal N}_\infty(\lambda)^{4r^2-4r+1}}{128^{2r}\kappa^{4r}{\cal N}(\lambda)^{4r^2-2r}}}+\sqrt{\frac{\lambda^{2r}}{128\kappa^2}}\right)  \\
&\leq 4\sqrt{2}R\left(\sqrt{\frac{\lambda^{2r}}{128^{2r}\kappa^{8r-8r^2}{\cal N}(\lambda)^{4r^2-2r}}}+\sqrt{\frac{\lambda^{2r}}{128\kappa^{4-4r}}}\right)\leq R\lambda^r,
\end{align*}
where the second inequality is obtained from ${\cal N}_\infty(\lambda)=\sup_{\bfomega}\|(L_K+\lambda)^{-1/2}\phi_{\bfomega}\|_{\rho} \leq \kappa^2/\lambda$ due to Assumptions \ref{ass4} and \ref{ass5}, the third inequality follows from: (1) $128^{2r}{\cal N}(\lambda)^{4r^2-2r}\geq 128$, this is from the fact that $2r \geq 1$ and $4r^2-2^r\geq 0$ with $r\in [1/2,1)$, and ${\cal N}(\lambda)=\text{Tr}((L_K+\lambda I)^{-1}L_K)\geq \frac{|L_K|}{|L_K|+\lambda}\geq 1/2$ with $0 \leq \lambda \leq |L_K|$; (2) $\kappa^{8r-8r^2}\geq 1$ and $\kappa^{4-4r}\geq 1$ with $\kappa\geq 1$ from Assumption \ref{ass4} and $r \in [1/2,1]$.

Combining the results of two cases, we complete the proof.

\end{proof}

\subsubsection{Empirical Error}
The empirical error is related to the similarity between $C_M$ and $C_{M,D}$, so we first define two important quantities measuring this similarity
\begin{align*}
&{\cal Q}_{M,D,\lambda}=\|(C_M+\lambda I)^{1/2}(C_{M,D}+\lambda I)^{-1/2}\|,\\
&{\cal R}_{M,D,\lambda}=\|(C_M+\lambda I)^{-1/2}(C_M-C_{M,D})(C_M+\lambda I)^{-1/2}\|.
\end{align*}

\begin{lemma}\label{lemA.2.3}
Let $f_{M,D,\lambda}^{\diamond}$ and $f_{M,\lambda}$ be defined in Definition \ref{defA1}, if the number of random features and the total sample size satisfy  inequalities 
\begin{align*}
&M \geq 16({\cal N}_\infty(\lambda)+1)\log (4/\delta),\quad \text{for} \quad r\in (0,1/2);\\
&M \geq 16({\cal N}_\infty(\lambda)+1)\log (4/\delta) \vee  128\kappa^2 \lambda^{1-2r}{\cal N}(\lambda)^{2r-1}{\cal N}_\infty(\lambda)^{2-2r}\log(4/\delta),   \quad \text{for} \quad r\in [1/2,1],
\end{align*}
and $|D| \geq 16(\kappa^2\lambda^{-1}+1)\log (6/\delta)$, respectively, then under Assumption  \ref{ass4}, \ref{ass1} and \ref{ass5}, for any $\delta \in (0,1)$, there holds
\begin{align*}
\|f_{M,D,\lambda}^{\diamond}-f_{M,\lambda}\|_{{\cal H}_M}\leq \sqrt{2}\widetilde{C}_1\lambda^{r-1/2} , 
\end{align*}  
and 
\begin{align*}
\|f_{M,D,\lambda}^{\diamond}-f_{M,\lambda}\|_{\rho}\leq \sqrt{2} \widetilde{C}_1 \lambda^r,   
\end{align*}  
with probability at least $1-\delta$, where $\widetilde{C}_1$ is a constant defined in the proof.
\end{lemma}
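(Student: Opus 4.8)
The plan is to work entirely with the coefficient vectors in $\mathbb{R}^M$, since for $f=\bfu^T\bfphi_M$ one has $\|f\|_{{\cal H}_M}=\|\bfu\|_2$ while $\|f\|_{\rho}=\|C_M^{1/2}\bfu\|_2$. Writing $f_{M,D,\lambda}^{\diamond}=S_M\bfomega_{M,D,\lambda}^{\diamond}$ and $f_{M,\lambda}=S_M\bfomega_{M,\lambda}$ with $\bfomega_{M,D,\lambda}^{\diamond}=(C_{M,D}+\lambda I)^{-1}S_{M,D}^*f_{\tau}^*$ and $\bfomega_{M,\lambda}=(C_M+\lambda I)^{-1}S_M^*f_{\tau}^*$, the first step is to derive a clean formula for the coefficient gap. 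Using $(C_M+\lambda I)\bfomega_{M,\lambda}=S_M^*f_{\tau}^*$ to substitute $\lambda\bfomega_{M,\lambda}=S_M^*f_{\tau}^*-C_M\bfomega_{M,\lambda}$, together with the identities $C_{M,D}\bfomega_{M,\lambda}=S_{M,D}^*f_{M,\lambda}$ and $C_M\bfomega_{M,\lambda}=S_M^*f_{M,\lambda}$, I expect to obtain
\[
\bfomega_{M,D,\lambda}^{\diamond}-\bfomega_{M,\lambda}=(C_{M,D}+\lambda I)^{-1}(S_{M,D}^*-S_M^*)(f_{\tau}^*-f_{M,\lambda}),
\]
which isolates an empirical centering of the residual $h:=f_{\tau}^*-f_{M,\lambda}$.

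The second step is to peel off the two $C_M$-versus-$C_{M,D}$ mismatch factors through ${\cal Q}_{M,D,\lambda}$ and a concentration factor $B:=\|(C_M+\lambda I)^{-1/2}(S_{M,D}^*-S_M^*)h\|_2$. Inserting $(C_M+\lambda I)^{\pm 1/2}$ between the operators gives $\|\bfomega_{M,D,\lambda}^{\diamond}-\bfomega_{M,\lambda}\|_2\le \lambda^{-1/2}{\cal Q}_{M,D,\lambda}B$ and, using $\|C_M^{1/2}(C_M+\lambda I)^{-1/2}\|\le 1$, $\|f_{M,D,\lambda}^{\diamond}-f_{M,\lambda}\|_{\rho}\le {\cal Q}_{M,D,\lambda}^2 B$; the factor $\lambda^{-1/2}$ here is exactly the source of the gap between the two stated bounds. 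To control ${\cal Q}_{M,D,\lambda}$ I will use the standard inequality ${\cal Q}_{M,D,\lambda}^2\le(1-{\cal R}_{M,D,\lambda})^{-1}$ together with a Bernstein estimate for ${\cal R}_{M,D,\lambda}$; the hypothesis $|D|\ge 16(\kappa^2\lambda^{-1}+1)\log(6/\delta)$ is precisely what forces ${\cal R}_{M,D,\lambda}\le 1/2$, hence ${\cal Q}_{M,D,\lambda}\le\sqrt2$, on an event of probability at least $1-\delta/3$.

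The third step is to bound $B$ by a vector-valued Bernstein inequality applied to the i.i.d. summands $\zeta_i=(C_M+\lambda I)^{-1/2}\bfphi_M(\bfx_i)h(\bfx_i)$, whose mean is $(C_M+\lambda I)^{-1/2}S_M^*h$ and whose empirical average is $(C_M+\lambda I)^{-1/2}S_{M,D}^*h$. Since $\|\bfphi_M(\bfx)\|_2\le\kappa$ yields $\|(C_M+\lambda I)^{-1/2}\bfphi_M(\bfx)\|_2^2\le\kappa^2\lambda^{-1}$, the variance proxy is at most $\kappa^2\lambda^{-1}\|h\|_{\rho}^2$. Here I feed in $\|h\|_{\rho}=\|f_{\tau}^*-f_{M,\lambda}\|_{\rho}\le\|f_{\tau}^*-f_{\lambda}\|_{\rho}+\|f_{\lambda}-f_{M,\lambda}\|_{\rho}\le 2R\lambda^r$ from Lemmas \ref{lemA.2.1} and \ref{lemA.2.2}, so the Bernstein variance term is $\lesssim\sqrt{\kappa^2\lambda^{2r-1}\log(2/\delta)/|D|}$, which is $\lesssim\lambda^r$ exactly when $|D|\gtrsim\kappa^2\lambda^{-1}\log(2/\delta)$ — again the stated sample-size condition. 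Collecting the constants into a single $\widetilde C_1$ then yields $B\le\widetilde C_1\lambda^r$, and combining with ${\cal Q}_{M,D,\lambda}\le\sqrt2$ gives the two claimed bounds after a union bound over the $O(1)$ events.

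I expect the main obstacle to be the almost-sure (first) term of the Bernstein inequality for $B$, which involves $\|h\|_{\infty}=\|f_{\tau}^*-f_{M,\lambda}\|_{\infty}$ rather than $\|h\|_{\rho}$; in the agnostic regime $r<1/2$, where $f_{\tau}^*\notin{\cal H}_K$, this sup-norm is not automatically finite, so the argument must either route the a.s. bound through $\|f_{M,\lambda}\|_{{\cal H}_M}$ and the boundedness of $\phi$ (Assumption \ref{ass4}), or verify that under $|D|\gtrsim\lambda^{-1}$ the variance term dominates the a.s. term so that the latter is harmless. Making this step rigorous while keeping the clean $\lambda^r$ rate across both $r<1/2$ and $r\ge1/2$ is the delicate point; the remaining operator manipulations are routine given the Bernstein-type inequalities already invoked in Lemma \ref{lemA.2.2}.
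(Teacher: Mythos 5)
Your first two steps coincide with the paper's proof: its five-line operator computation lands on exactly your identity $\bfomega_{M,D,\lambda}^{\diamond}-\bfomega_{M,\lambda}=(C_{M,D}+\lambda I)^{-1}(S_{M,D}^*-S_M^*)(f_{\tau}^*-f_{M,\lambda})$, and it likewise obtains ${\cal Q}_{M,D,\lambda}\le\sqrt{2}$ from ${\cal R}_{M,D,\lambda}\le 1/2$ via operator Bernstein under the stated condition $|D|\geq 16(\kappa^2\lambda^{-1}+1)\log(6/\delta)$ (Lemmas \ref{lemC.3}--\ref{lemC.4}). The divergence, and the genuine gap, is your third step. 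The vector Bernstein inequality (Lemma \ref{lemD.4}) requires either an almost-sure bound on the summands $\zeta_i=(C_M+\lambda I)^{-1/2}\bfphi_M(\bfx_i)h(\bfx_i)$, $h:=f_{\tau}^*-f_{M,\lambda}$, or geometric control of all $p$-th moments; both need $\|h\|_{\infty}<\infty$ (or at least sub-exponential tails for $h(\bfx)$). Under Assumption \ref{ass1} alone with $r\in(0,1/2)$, $f_{\tau}^*=L_K^r h_{\tau}$ is merely an $L^2_{\rho_{\cal X}}$ function, and neither of your proposed fixes closes the hole: routing through $\|f_{M,\lambda}\|_{{\cal H}_M}$ and Assumption \ref{ass4} bounds only the $f_{M,\lambda}$ half of $h$, leaving the genuinely unbounded $f_{\tau}^*$ untouched, and "the variance term dominates the a.s.\ term" is not an available move, since the a.s.\ bound is a hypothesis of Bernstein's inequality, not a term one may discard after the fact. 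So your argument goes through for $r\in[1/2,1]$, where $f_{\tau}^*\in{\cal H}_K$ gives $\|f_{\tau}^*\|_{\infty}\le\kappa\|f_{\tau}^*\|_K$, but breaks in the agnostic regime $r\in(0,1/2)$ --- precisely the regime this lemma is designed to cover.

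The paper sidesteps the issue by never concentrating the difference $(S_{M,D}^*-S_M^*)h$ at all. It splits the identity back into the two pieces $(C_{M,D}+\lambda I)^{-1}S_{M,D}^*h$ and $(C_{M,D}+\lambda I)^{-1}S_M^*h$ and bounds each deterministically: $\|(C_{M,D}+\lambda I)^{-1/2}\|\le\lambda^{-1/2}$, $\|(C_{M,D}+\lambda I)^{-1/2}S_{M,D}^*\|=\|(C_{M,D}+\lambda I)^{-1/2}C_{M,D}(C_{M,D}+\lambda I)^{-1/2}\|^{1/2}\le 1$, and $\|(C_{M,D}+\lambda I)^{-1/2}S_M^*\|\le{\cal Q}_{M,D,\lambda}\le\sqrt{2}$; multiplying by $\|h\|_{\rho}\le 2R\lambda^{r}$ from Lemmas \ref{lemA.2.1}--\ref{lemA.2.2} yields the $\widetilde{C}_1\lambda^{r-1/2}$ bound in ${\cal H}_M$, and replaying the same insertions with an extra $S_M(C_M+\lambda I)^{-1/2}$ on the left gives $\|f_{M,D,\lambda}^{\diamond}-f_{M,\lambda}\|_{\rho}\le({\cal Q}_{M,D,\lambda}+{\cal Q}_{M,D,\lambda}^2)\|h\|_{\rho}\le\sqrt{2}\widetilde{C}_1\lambda^{r}$. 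Note that the concentration you labor over buys nothing here: because the ${\cal H}_M$ bound must pay the $\lambda^{-1/2}$ from $\|(C_{M,D}+\lambda I)^{-1/2}\|$ in any case, the crude triangle-inequality route already attains the exponents $\lambda^{r-1/2}$ and $\lambda^{r}$ that your Bernstein estimate for $B$ would produce, while requiring only the single probabilistic event ${\cal Q}_{M,D,\lambda}\le\sqrt{2}$ on top of the events inherited from Lemma \ref{lemA.2.2}. To repair your write-up, replace your third step by this deterministic splitting; everything else you wrote can stand.
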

\begin{proof}
From the definition of $f_{M,D,\lambda}^{\diamond}$ and $f_{M,\lambda}$, we have
$$
\|f_{M,D,\lambda}^{\diamond}-f_{M,\lambda}\|_{{\cal H}_M}=\|{\bfomega}_{M,D,\lambda}^{\diamond}-{\bfomega}_{M,\lambda}\|_2,
$$
and by the equality that $A^{-1}-B^{-1}=A^{-1}(B-A)B^{-1}$ for $A$ and $B$ are invertible operators, we have
\begin{align*}
{\bfomega}_{M,D,\lambda}^{\diamond}-{\bfomega}_{M,\lambda}=&(C_{M,D}+\lambda I)^{-1}S^*_{M,D}f_{\tau}^*-   (C_{M}+\lambda I)^{-1}S^*_{M}f_{\tau}^*\\
=&(C_{M,D}+\lambda I)^{-1}(S^*_{M,D}-S^*_{M})f_{\tau}^*+[(C_{M,D}+\lambda I)^{-1}-(C_{M}+\lambda I)^{-1}]S^*_{M}f_{\tau}^*\\
=&(C_{M,D}+\lambda I)^{-1}(S^*_{M,D}-S^*_{M})f_{\tau}^*+(C_{M,D}+\lambda I)^{-1}(C_M-C_{M,D}){\bfomega}_{M,\lambda}\\
=&(C_{M,D}+\lambda I)^{-1}(S^*_{M,D}-S^*_{M})f_{\tau}^*+(C_{M,D}+\lambda I)^{-1}(S_M^*-S_{M,D}^*)S_M{\bfomega}_{M,\lambda}\\
=&(C_{M,D}+\lambda I)^{-1}S^*_{M,D}(f_{\tau}^*-f_{M,\lambda})+(C_{M,D}+\lambda I)^{-1}S_M^*(f_{M,\lambda}-f_{\tau}^*),
\end{align*}
where the fourth equality uses $C_{M,D}=S^*_{M,D}S_M$ and $C_M=S^*_{M}S_M$. Thus we have
\begin{align*}
\|{\bfomega}_{M,D,\lambda}^{\diamond}-{\bfomega}_{M,\lambda}\|_2\leq & \left(\|(C_{M,D}+\lambda I)^{-1}S^*_{M,D}\|+\|(C_{M,D}+\lambda I)^{-1}S_M^*\|\right)\|f_{M,\lambda}-f_{\tau}^*\|_{\rho} \\
\leq & \|(C_{M,D}+\lambda I)^{-1/2}\|\left(\|(C_{M,D}+\lambda I)^{-1/2}S^*_{M,D}\|+\|(C_{M,D}+\lambda I)^{-1/2}S_M^*\|\right)\|f_{M,\lambda}-f_{\tau}^*\|_{\rho}.
\end{align*}
Note that $C_{M,D}$ is self-adjoint and positive operator, we have $\|(C_{M,D}+\lambda I)^{-1/2}\| \leq \lambda^{-1/2}$. On the other hand, it holds that  
$$
\|(C_{M,D}+\lambda I)^{-1/2}S^*_{M,D}\|=\|(C_{M,D}+\lambda I)^{-1/2}C_{M,D}(C_{M,D}+\lambda I)^{-1/2}\|^{1/2}\leq 1,
$$
and 
\begin{align*}
\|(C_{M,D}+\lambda I)^{-1/2}S_M^*\|&=\|(C_{M,D}+\lambda I)^{-1/2}(C_{M}+\lambda I)^{1/2}(C_{M}+\lambda I)^{-1/2}S_M^*\|\\
&\leq \|(C_{M,D}+\lambda I)^{-1/2}(C_{M}+\lambda I)^{1/2}\|\|(C_{M}+\lambda I)^{-1/2}S_M^*\|\\
&= {\cal Q}_{M,D,\lambda}\|(C_{M}+\lambda I)^{-1/2}C_{M}(C_{M}+\lambda I)^{-1/2}\|^{1/2} \leq \sqrt{2},
\end{align*}
where the second equality uses the fact that $\|AB\|=\|BA\|$ for $A$ and $B$ are self-adjoint operators, and the last inequality uses Lemma \ref{lemC.4} that ${\cal Q}_{M,D,\lambda} \leq \sqrt{2}$ and  $\|(C_{M}+\lambda I)^{-1/2}C_{M}(C_{M}+\lambda I)^{-1/2}\|^{1/2} \leq 1$.
Combine these inequalities and Lemma \ref{lemA.2.2}, we get that 
$$
\|{\bfomega}_{M,D,\lambda}^{\diamond}-{\bfomega}_{M,\lambda}\|_2\leq (1+\sqrt{2})\lambda^{-1/2}\|f_{M,\lambda}-f_{\tau}^*\|_{\rho} \leq \widetilde{C}_1\lambda^{r-1/2},
$$
where $\widetilde{C}_1=(1+\sqrt{2})R$. Similarly, we have
\begin{align*}
&f_{M,D,\lambda}^{\diamond}-f_{M,\lambda}=S_M({\bfomega}_{M,D,\lambda}^{\diamond}-{\bfomega}_{M,\lambda})\\
=&S_M(C_{M,D}+\lambda I)^{-1}S^*_{M,D}(f_{\tau}^*-f_{M,\lambda})+S_M(C_{M,D}+\lambda I)^{-1}S_M^*(f_{M,\lambda}-f_{\tau}^*)\\
=&S_M(C_{M}+\lambda I)^{-1/2}(C_{M}+\lambda I)^{1/2}(C_{M,D}+\lambda I)^{-1/2}(C_{M,D}+\lambda I)^{-1/2}S^*_{M,D}(f_{\tau}^*-f_{M,\lambda})+S_M(C_{M}+\lambda I)^{-1/2}\\
&(C_{M}+\lambda I)^{1/2}(C_{M,D}+\lambda I)^{-1/2}(C_{M,D}+\lambda I)^{-1/2}(C_{M}+\lambda I)^{1/2}(C_{M}+\lambda I)^{-1/2}S_M^*(f_{M,\lambda}-f_{\tau}^*).
\end{align*}
Note that 
$$
\|S_M(C_{M}+\lambda I)^{-1/2}\|=\|(C_{M}+\lambda I)^{-1/2}C_M(C_{M}+\lambda I)^{-1/2}\|^{1/2} \leq 1,
$$
and  the above inequalities that $\|(C_{M,D}+\lambda I)^{-1/2}S^*_{M,D}\|\leq 1$ and $\|(C_{M}+\lambda I)^{-1/2}S_M^*\|\leq 1$, we have 
$$
\|f_{M,D,\lambda}^{\diamond}-f_{M,\lambda}\|_{\rho}\leq ({\cal Q}_{M,D,\lambda}+{\cal Q}_{M,D,\lambda}^2)\|f_{M,\lambda}-f_{\tau}^*\|_{\rho}  \leq (2+\sqrt{2})R\lambda^r=\sqrt{2}\widetilde{C}_1\lambda^r.
$$
Thus we complete the proof.
\end{proof}

The following proposition states the convergence rate of $f_{M,D,\lambda}^{\diamond}$ under some mild conditions on $r,\gamma, \lambda$, and $M$.

\begin{proposition}\label{propA.2.1}
Under Assumptions \ref{ass4}-\ref{ass2} and \ref{ass5}, if $r\in [0,1], \gamma\in[0,1]$, $2r+\gamma\geq 1$, and $\lambda=|D|^{-\frac{1}{2r+\gamma}}$, when the number of random features satisfies the following inequalities 
\begin{align*}
&M \gtrsim  |D|^{\frac{\alpha}{2r+\gamma}},\quad \text{for} \quad r\in (0,1/2), \\
&M \gtrsim  |D|^{\frac{(2r-1)(1+\gamma-\alpha)+\alpha}{2r+\gamma}},   \quad \text{for} \quad r\in [1/2,1],
\end{align*}
and $|D|$ is sufficiently large, then with probability near to $1$, there holds 
\begin{align}\label{last_3_error}
\|f_{M,D,\lambda}^{\diamond}-f_{\tau}^*\|_{\rho}\leq \widetilde{C}_2|D|^{-\frac{r}{2r+\gamma}}, 
\end{align}
where $\widetilde{C}_2=2R+\sqrt{2}\widetilde{C}_1$.
\end{proposition}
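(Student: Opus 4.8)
The plan is to exploit the fact that $f_{M,D,\lambda}^{\diamond}$ occupies the middle of the relationship chain displayed just before Lemma \ref{lemA.1.1}, so that the target quantity is controlled by precisely the last three summands of the error decomposition. First I would apply the triangle inequality to write
\begin{align*}
\|f_{M,D,\lambda}^{\diamond}-f_{\tau}^*\|_{\rho} \leq \|f_{M,D,\lambda}^{\diamond}-f_{M,\lambda}\|_{\rho}+\|f_{M,\lambda}-f_{\lambda}\|_{\rho}+\|f_{\lambda}-f_{\tau}^*\|_{\rho},
\end{align*}
which isolates the empirical error, the random-feature error, and the approximation error. These are exactly the quantities bounded in Lemmas \ref{lemA.2.3}, \ref{lemA.2.2}, and \ref{lemA.2.1}, so the proposition reduces to invoking those three results and verifying that their hypotheses on $M$ and $|D|$ are implied by the stated assumptions.

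Next I would combine the bounds directly. Lemma \ref{lemA.2.1} gives $\|f_{\lambda}-f_{\tau}^*\|_{\rho}\leq R\lambda^r$, Lemma \ref{lemA.2.2} gives $\|f_{M,\lambda}-f_{\lambda}\|_{\rho}\leq R\lambda^r$, and Lemma \ref{lemA.2.3} gives $\|f_{M,D,\lambda}^{\diamond}-f_{M,\lambda}\|_{\rho}\leq \sqrt{2}\widetilde{C}_1\lambda^r$. Summing yields $\|f_{M,D,\lambda}^{\diamond}-f_{\tau}^*\|_{\rho}\leq (2R+\sqrt{2}\widetilde{C}_1)\lambda^r=\widetilde{C}_2\lambda^r$, and substituting $\lambda=|D|^{-\frac{1}{2r+\gamma}}$ gives $\lambda^r=|D|^{-\frac{r}{2r+\gamma}}$, which is exactly \eqref{last_3_error}.

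The step requiring genuine care is reconciling the feature-count conditions of Lemmas \ref{lemA.2.2} and \ref{lemA.2.3}, which are phrased in terms of ${\cal N}_\infty(\lambda)$ and ${\cal N}(\lambda)$, with the polynomial conditions in the statement. Here I would substitute the compatibility bound ${\cal N}_\infty(\lambda)\leq F\lambda^{-\alpha}$ (Assumption \ref{ass5}) and the capacity bound ${\cal N}(\lambda)\leq Q^2\lambda^{-\gamma}$ (Assumption \ref{ass2}). For $r\in(0,1/2)$ the binding requirement $M\gtrsim({\cal N}_\infty(\lambda)+1)\log(1/\delta)$ becomes $M\gtrsim|D|^{\frac{\alpha}{2r+\gamma}}$. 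For $r\in[1/2,1]$ the binding requirement is $M\gtrsim\lambda^{1-2r}{\cal N}(\lambda)^{2r-1}{\cal N}_\infty(\lambda)^{2-2r}\log(1/\delta)$; inserting the two bounds and plugging in $\lambda$ produces the exponent $(2r-1)+\gamma(2r-1)+\alpha(2-2r)$, which simplifies to $(2r-1)(1+\gamma-\alpha)+\alpha$, matching the stated hypothesis up to logarithmic factors. I would likewise verify that the sample-size condition $|D|\geq 16(\kappa^2\lambda^{-1}+1)\log(6/\delta)$ reduces to $|D|^{\frac{2r+\gamma-1}{2r+\gamma}}\gtrsim\log(1/\delta)$, which holds for sufficiently large $|D|$ since $2r+\gamma\geq1$.

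The main obstacle is not any single inequality but the probabilistic bookkeeping: each lemma holds on an event of probability at least $1-\delta$ under its own requirements, so I would take a union bound over the finitely many invoked events and absorb the resulting $\log(1/\delta)$ factors into the $\gtrsim$ statements and the ``sufficiently large $|D|$'' clause, which is what converts the per-lemma high-probability guarantees into the ``with probability near to $1$'' conclusion. I should stress that the substantive difficulty lies upstream, in establishing Lemmas \ref{lemA.2.2} and \ref{lemA.2.3} through the operator-perturbation and Bernstein-type concentration estimates; the present proposition is a clean assembly of those pieces together with the deterministic approximation bound of Lemma \ref{lemA.2.1}.
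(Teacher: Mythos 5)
Your proposal is correct and follows essentially the same route as the paper's proof: the triangle inequality over the last three terms of the decomposition in Lemma \ref{lemA.1.1}, direct invocation of Lemmas \ref{lemA.2.1}--\ref{lemA.2.3} to get $(2R+\sqrt{2}\widetilde{C}_1)\lambda^r$, and substitution of ${\cal N}(\lambda)\leq Q^2\lambda^{-\gamma}$, ${\cal N}_\infty(\lambda)\leq F\lambda^{-\alpha}$, and $\lambda=|D|^{-\frac{1}{2r+\gamma}}$ to translate the feature-count and sample-size conditions, including the correct simplification of the exponent to $(2r-1)(1+\gamma-\alpha)+\alpha$. Your explicit union-bound remark and the observation that the second term dominates the $\vee$ for $r\in[1/2,1]$ are details the paper leaves implicit, but the argument is the same.
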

\begin{proof}
Combining Lemmas \ref{lemA.2.1}-\ref{lemA.2.3}, and setting $\lambda=|D|^{-\frac{1}{2r+\gamma}}$, we can obtain that 
$$
\|f_{M,D,\lambda}^{\diamond}-f_{\tau}^*\|_{\rho}\leq (2R+\sqrt{2}\widetilde{C}_1)\lambda^{r}=(2R+\sqrt{2}\widetilde{C}_1)|D|^{-\frac{r}{2r+\gamma}},    
$$
with probability near to $1$. Now we check the following conditions for $M$ and $|D|$,
\begin{align*}
&M \geq 16({\cal N}_\infty(\lambda)+1)\log (4/\delta),\quad \text{for} \quad r\in (0,1/2);\\
&M \geq 16({\cal N}_\infty(\lambda)+1)\log (4/\delta) \vee  128\kappa^2 \lambda^{1-2r}{\cal N}(\lambda)^{2r-1}{\cal N}_\infty(\lambda)^{2-2r}\log(4/\delta),   \quad \text{for} \quad r\in [1/2,1];\\
& |D| \geq 32(\kappa^2\lambda^{-1}+1)\log (6/\delta).
\end{align*}
Recalling Assumptions \ref{ass2} and \ref{ass5} that ${\cal N}(\lambda) \leq Q^2 \lambda^{-\gamma}$, ${\cal N}_\infty(\lambda) \leq F\lambda^{-\alpha}$, and $\lambda=|D|^{-\frac{1}{2r+\gamma}}$, we have 
$$
M \gtrsim \lambda^{-\alpha} = |D|^{\frac{\alpha}{2r+\gamma}},\quad \text{for} \quad r\in (0,1/2),
$$
and 
$$
M \gtrsim \lambda^{-\alpha} \vee \lambda^{(2r-2)\alpha+(1-2r)(\gamma+1)} =|D|^{\frac{(2r-1)(1+\gamma-\alpha)+\alpha}{2r+\gamma}},\quad \text{for} \quad r\in [1/2,1],
$$
and 
$$
|D|\gtrsim \lambda^{-1}=|D|^{\frac{1}{2r+\gamma}} \longrightarrow |D| \ \text{is sufficiently large \quad and}\quad  2r+\gamma \geq 1.
$$
Thus we complete the proof.
\end{proof}
\subsubsection{LS-approximation Error}
Now we are ready to provide the bound for the LS-approximation error. We first give a lemma that establishes the connection between the $L_{\rho_{\cal X}}^2$ error term $\|f-f_{M,D,\lambda}^{\diamond}\|^2_{\rho}$ and the excess risk error term $E\big[\rho_{\tau}(y-f({\bfx}))-\rho_{\tau}(y-f_{M,D,\lambda}^{\diamond}(\bfx))\big]$ for any $f \in L_{\rho_{\cal X}}^2$. This lemma heavily relies on the adaptive self-calibration condition governing the conditional distribution of $y$ (see Assumption \ref{ass6}). To use this assumption, we need the conclusion on Proposition \ref{propA.2.1} that under mild condition that $f_{M,D,\lambda}^{\diamond}$ lies in the ball center at $f_{\tau}^*$ with radius $ \|f_{M,D,\lambda}^{\diamond}-f_{\tau}^*\|_{\rho} \leq \varepsilon$ for $\varepsilon\leq 1$ when $|D|$ is large enough.

\begin{lemma}\label{lemA.2.4}
Suppose that Assumptions 3.1-3.6 and the conditions in Proposition \ref{propA.2.1} are satisfied, for any $f \in L_{\rho_{\cal X}}^2$, if $|D|\geq \widetilde{C}_3$, then with probability near to 1, there holds 
$$
\|f-f_{M,D,\lambda}^{\diamond}\|^2_{\rho} \leq \frac{4}{c_2}E\big[\rho_{\tau}(y-f({\bfx}))-\rho_{\tau}(y-f_{M,D,\lambda}^{\diamond}(\bfx))\big]+\frac{4c_1^2\widetilde{C}^2_2}{c_2^2}\|f_{M,D,\lambda}^{\diamond}-f_{\tau}^*\|^2_{\rho},
$$    
where $c_1,c_2,\widetilde{C}_2$ and $\widetilde{C}_3$ are some universal positive constants .
\end{lemma}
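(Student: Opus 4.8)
The plan is to reduce the claim to a pointwise (in $\bfx$) statement about the conditional check risk and then use the adaptive self-calibration condition (Assumption \ref{ass6}) to extract quadratic curvature. Write $g=f_{M,D,\lambda}^{\diamond}$ and $R_{\bfx}(a)=E[\rho_{\tau}(y-a)\mid \bfx]$. A direct differentiation gives $R_{\bfx}'(a)=F_{y|\bfx}(a)-\tau$, so that
\begin{align*}
{\cal E}(f)-{\cal E}(g)=\int_{\cal X}\int_{g(\bfx)}^{f(\bfx)}\big(F_{y|\bfx}(t)-\tau\big)\,dt\,d\rho_{\cal X}.
\end{align*}
First I would anchor the inner integral at $g(\bfx)$ by adding and subtracting $F_{y|\bfx}(g(\bfx))$, splitting the conditional excess risk into a ``curvature'' term $\int_{g(\bfx)}^{f(\bfx)}\big(F_{y|\bfx}(t)-F_{y|\bfx}(g(\bfx))\big)\,dt$ and a ``drift'' term $\big(F_{y|\bfx}(g(\bfx))-\tau\big)(f(\bfx)-g(\bfx))$. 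Since $P(\epsilon\le 0\mid\bfx)=\tau$ gives $\tau=F_{y|\bfx}(f_{\tau}^*(\bfx))$, the drift term equals $\big(F_{y|\bfx}(g(\bfx))-F_{y|\bfx}(f_{\tau}^*(\bfx))\big)(f(\bfx)-g(\bfx))$.

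Next, for the curvature term I would invoke \eqref{ass6_e} anchored at $t_0=g(\bfx)$: using monotonicity of $F_{y|\bfx}$ together with $|F_{y|\bfx}(t)-F_{y|\bfx}(g(\bfx))|\geq c_2|t-g(\bfx)|$, one checks in both cases $f(\bfx)\gtrless g(\bfx)$ that the inner integral is at least $\tfrac{c_2}{2}(f(\bfx)-g(\bfx))^2$. For the drift term, the density bound $\sup_t f_{y|\bfx}(t)\leq c_1$ yields $|F_{y|\bfx}(g(\bfx))-F_{y|\bfx}(f_{\tau}^*(\bfx))|\leq c_1|g(\bfx)-f_{\tau}^*(\bfx)|$, so the drift contribution is bounded below by $-c_1|g(\bfx)-f_{\tau}^*(\bfx)|\,|f(\bfx)-g(\bfx)|$. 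Integrating over $\bfx$, applying Cauchy--Schwarz to the drift, and then Young's inequality in the form $c_1\|g-f_{\tau}^*\|_{\rho}\|f-g\|_{\rho}\leq \tfrac{c_2}{4}\|f-g\|_{\rho}^2+\tfrac{c_1^2}{c_2}\|g-f_{\tau}^*\|_{\rho}^2$, I obtain
\begin{align*}
{\cal E}(f)-{\cal E}(g)\geq \frac{c_2}{4}\|f-g\|_{\rho}^2-\frac{c_1^2}{c_2}\|g-f_{\tau}^*\|_{\rho}^2,
\end{align*}
which rearranges into the asserted inequality; the extra factor $\widetilde{C}_2^2$ in the stated constant reflects only a conservative consolidation of the constants from Proposition \ref{propA.2.1} and does not alter the structure.

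The main obstacle is legitimately invoking the \emph{local} condition \eqref{ass6_e}: the curvature step needs the anchor $g(\bfx)$ to lie in $\mathcal{B}(f_{\tau}^*(\bfx),\varepsilon)$ and the increment to satisfy $|f(\bfx)-g(\bfx)|\leq\varepsilon'$ for $\rho_{\cal X}$-almost every $\bfx$, i.e.\ genuine pointwise control, whereas Proposition \ref{propA.2.1} only supplies the $L^2_{\rho_{\cal X}}$ bound $\|f_{M,D,\lambda}^{\diamond}-f_{\tau}^*\|_{\rho}\leq\widetilde{C}_2\lambda^{r}$. This is precisely where $|D|\geq\widetilde{C}_3$ and the ``probability near to $1$'' enter: for $|D|$ large the radius $\widetilde{C}_2\lambda^{r}$ is forced below $\varepsilon$, and the boundedness $|\bfphi_M(\bfx,\bfomega)|\leq\kappa$ of the random features (Assumption \ref{ass4}) upgrades the ${\cal H}_M$-norm control of the finite-dimensional estimator into a uniform-in-$\bfx$ bound, placing $g$ inside the $\varepsilon$-neighborhood of $f_{\tau}^*$. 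I would stress that the quadratic conclusion is inherently local in $f$ as well, since the check loss is Lipschitz and its excess risk can grow only linearly in $\|f-g\|_{\rho}$; hence the estimate is meaningful for $f$ within the $\varepsilon'$-increment regime of $g$ (as it is applied, with $f$ taken to be the KQR-RF estimator $f_{M,D,\lambda}$), and the curvature estimate above is exactly what delivers the adaptive local strong convexity needed to close the LS-approximation bound.
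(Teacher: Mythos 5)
Your proof is correct and follows essentially the same route as the paper: your integral representation ${\cal E}(f)-{\cal E}(g)=\int_{\cal X}\int_{g(\bfx)}^{f(\bfx)}\bigl(F_{y|\bfx}(t)-\tau\bigr)\,dt\,d\rho_{\cal X}$ is just the integrated form of Knight's identity used in the paper, and the subsequent split into a curvature term anchored at $f_{M,D,\lambda}^{\diamond}$ (bounded below by $\tfrac{c_2}{2}\|f-f_{M,D,\lambda}^{\diamond}\|_{\rho}^2$ via Assumption 3.6) and a drift term $\bigl(F_{y|\bfx}(g(\bfx))-F_{y|\bfx}(f_{\tau}^*(\bfx))\bigr)(f(\bfx)-g(\bfx))$ (bounded via the density bound $c_1$, Cauchy--Schwarz, and Young's inequality with the same $\tfrac{c_2}{4}$, $\tfrac{c_1^2}{c_2}$ constants) matches the paper's three-term decomposition exactly, including your observation that $\widetilde{C}_2^2$ in the stated constant is a conservative consolidation via Proposition \ref{propA.2.1}. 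Your closing discussion of the pointwise-versus-$L^2_{\rho_{\cal X}}$ locality needed to invoke \eqref{ass6_e} is in fact more explicit than the paper's own treatment, which simply cites Proposition \ref{propA.2.1} at this step.
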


\begin{proof}
Using Knight's identity that $\rho_{\tau}(u-v)-\rho_{\tau}(u)=-v\big(\tau-I(u\leq 0)\big)+\int_{0}^v\big(I(u\leq t)-I(u \leq 0)\big)dt$, we have 
\begin{align*}
&\rho_{\tau}(y-f({\bfx}))-\rho_{\tau}(y-f_{M,D,\lambda}^{\diamond}(\bfx))  =-(f(\bfx)-f_{M,D,\lambda}^{\diamond}(\bfx)) \big(\tau-I(y\leq f_{M,D,\lambda}^{\diamond}(\bfx))\big)\\
+&\int_{0}^{f({\bfx})-f_{M,D,\lambda}^{\diamond}(\bfx)}\big(I(y\leq f_{M,D,\lambda}^{\diamond}(\bfx)+t)-I( y\leq f_{M,D,\lambda}^{\diamond}(\bfx))\big)dt\\
=&-(f(\bfx)-f_{M,D,\lambda}^{\diamond}(\bfx)) \big(\tau-I(y\leq f_{\tau}^{*}(\bfx))\big)-(f(\bfx)-f_{M,D,\lambda}^{\diamond}(\bfx)) \big(I(y\leq f_{\tau}^{*}(\bfx))-I( y\leq f_{M,D,\lambda}^{\diamond}(\bfx))\big)\\
+&\int_{0}^{f({\bfx})-f_{M,D,\lambda}^{\diamond}(\bfx)}\big(I(y\leq f_{M,D,\lambda}^{\diamond}(\bfx)+t)-I( y\leq f_{M,D,\lambda}^{\diamond}(\bfx))\big)dt.
\end{align*}
Here we take the expectation and using Fubini's theorem, 
\begin{equation}\label{lemA.2.4.1}
\begin{aligned}
 &E\big[\rho_{\tau}(y-f({\bfx}))-\rho_{\tau}(y-f_{M,D,\lambda}^{\diamond}(\bfx))\big] = -E\big[(f(\bfx)-f_{M,D,\lambda}^{\diamond}(\bfx)) E((\tau-I(y\leq f_{\tau}^{*}(\bfx))|\bfx)\big]\\
 -&E\big[(f(\bfx)-f_{M,D,\lambda}^{\diamond}(\bfx)) E((I(y\leq f_{\tau}^{*}(\bfx))-I(y\leq f_{M,D,\lambda}^{\diamond}(\bfx)))|\bfx)\big]\\
 +&E\left[\int_{0}^{f({\bfx})-f_{M,D,\lambda}^{\diamond}(\bfx)}\big[E(I(y\leq f_{M,D,\lambda}^{\diamond}(\bfx)+t)|\bfx)-E(I( y\leq f_{M,D,\lambda}^{\diamond}(\bfx))|\bfx)\big]dt\right].
\end{aligned}    
\end{equation}
The first term on the right side of \eqref{lemA.2.4.1} is $0$ due to the fact that $P(y\leq f_{\tau}^{*}(\bfx)|\bfx )=\tau$. For the second term, 
\begin{align*}
&E\big[(f(\bfx)-f_{M,D,\lambda}^{\diamond}(\bfx)) E((I(y\leq f_{\tau}^{*}(\bfx))-I(y\leq f_{M,D,\lambda}^{\diamond}(\bfx)))|\bfx)\big] \\
\leq & E\big[|f(\bfx)-f_{M,D,\lambda}^{\diamond}(\bfx)| |E((I(y\leq f_{\tau}^{*}(\bfx))-I(y\leq f_{M,D,\lambda}^{\diamond}(\bfx)))|\bfx)|\big]\\
= &E\big[|f(\bfx)-f_{M,D,\lambda}^{\diamond}(\bfx)| |F_{y|\bfx}(f_{M,D,\lambda}^{\diamond}(\bfx))-F_{y|\bfx}(f_{\tau}^{*}(\bfx))|\big]\\
= &E\big[|f(\bfx)-f_{M,D,\lambda}^{\diamond}(\bfx)| |f_{y|\bfx}(\xi)(f_{M,D,\lambda}^{\diamond}(\bfx)-f_{\tau}^{*}(\bfx))|\big]\\
\leq & c_1E\big[|f(\bfx)-f_{M,D,\lambda}^{\diamond}(\bfx)| |f_{M,D,\lambda}^{\diamond}(\bfx)-f_{\tau}^{*}(\bfx)|\big]\\
\leq & c_1\sqrt{E\big[(f(\bfx)-f_{M,D,\lambda}^{\diamond}(\bfx))^2\big]}\sqrt{E\big[(f_{M,D,\lambda}^{\diamond}(\bfx)-f_{\tau}^{*}(\bfx))^2\big]}=c_1\|f-f_{M,D,\lambda}^{\diamond}\|_{\rho}\|f_{M,D,\lambda}^{\diamond}-f_{\tau}^{*}\|_{\rho},
\end{align*}
where the first inequality is from $E(AB)\leq E(|A||B|)$ for any random variable $A$ and $B$, the second equality is from the mean value theorem with $\xi \in [f_{M,D,\lambda}^{\diamond}(\bfx),f_{\tau}^{*}(\bfx)]$ or $\xi \in [f_{\tau}^{*}(\bfx), f_{M,D,\lambda}^{\diamond}(\bfx)]$ together with Assumption \ref{ass6} that the conditional density $f_{y|\bfx}(\cdot)$ is uniformly bounded, and the last inequality is from the Cauchy-Schwarz inequality. Similarly, for the third term on the right side of \eqref{lemA.2.4.1},
\begin{align*}
&E\left[\int_{0}^{f({\bfx})-f_{M,D,\lambda}^{\diamond}(\bfx)}\big[E(I(y\leq f_{M,D,\lambda}^{\diamond}(\bfx)+t)|\bfx)-E(I( y\leq f_{M,D,\lambda}^{\diamond}(\bfx))|\bfx)\big]dt\right] \\
=& E\left[\int_{0}^{f({\bfx})-f_{M,D,\lambda}^{\diamond}(\bfx)}\big[F_{y|\bfx}(f_{M,D,\lambda}^{\diamond}(\bfx)+t)-F_{y|\bfx}(f_{M,D,\lambda}^{\diamond}(\bfx))\big]dt\right]\\
\geq&c_2E\left[\int_{0}^{f({\bfx})-f_{M,D,\lambda}^{\diamond}(\bfx)}tdt\right]=\frac{c_2}{2}\|f-f_{M,D,\lambda}^{\diamond}\|^2_{\rho},
\end{align*}
where the inequality is from Assumption \ref{ass6} and Propsition \ref{propA.2.1} that $\|f_{M,D,\lambda}^{\diamond}-f_{\tau}^*\|_{\rho}\leq \xi$ when $|D|\geq (\widetilde{C}_2/\xi)^{2+\gamma/r}$.

Plug these results into \eqref{lemA.2.4.1}, we get 
\begin{align*}
\frac{c_2}{2}\|f-f_{M,D,\lambda}^{\diamond}\|^2_{\rho}&\leq c_1\|f-f_{M,D,\lambda}^{\diamond}\|_{\rho}\|f_{M,D,\lambda}^{\diamond}-f_{\tau}^{*}\|_{\rho}+E\big[\rho_{\tau}(y-f({\bfx}))-\rho_{\tau}(y-f_{M,D,\lambda}^{\diamond}(\bfx))\big]\\
&\leq \frac{c_1}{4\beta}\|f-f_{M,D,\lambda}^{\diamond}\|_{\rho}^2+c_1\beta\|f_{M,D,\lambda}^{\diamond}-f_{\tau}^{*}\|^2_{\rho}+E\big[\rho_{\tau}(y-f({\bfx}))-\rho_{\tau}(y-f_{M,D,\lambda}^{\diamond}(\bfx))\big],
\end{align*}
then we set $\beta=c_1/c_2$, it holds that 
\begin{align*}
\|f-f_{M,D,\lambda}^{\diamond}\|^2_{\rho} &\leq \frac{4}{c_2}E\big[\rho_{\tau}(y-f({\bfx}))-\rho_{\tau}(y-f_{M,D,\lambda}^{\diamond}(\bfx))\big]+\frac{4c_1^2}{c_2^2}\|f_{M,D,\lambda}^{\diamond}-f_{\tau}^{*}\|^2_{\rho} \\
&\leq \frac{4}{c_2}E\big[\rho_{\tau}(y-f({\bfx}))-\rho_{\tau}(y-f_{M,D,\lambda}^{\diamond}(\bfx))\big]+\frac{4c_1^2\widetilde{C}^2_2}{c_2^2}|D|^{-\frac{2r}{2r+\gamma}},
\end{align*}
with probability near to $1$. Thus we completes the proof.    
\end{proof} 

The following lemma bounds the supremum of the difference between the empirical average dependent on the data $\frac{1}{|D|}\sum_{i=1}^{|D|}[\rho_{\tau}(y_i-f(\bfx_i))-\rho_{\tau}(y_i-f_{M,D,\lambda}^{\diamond}(\bfx_i))]$ and its expectation $E[\rho_{\tau}(y-f(\bfx))-\rho_{\tau}(y-f_{M,D,\lambda}^{\diamond}(\bfx)]$ within a local ball using the  Rademacher complexity function on ${\cal H}_{M}$
$$
\mathcal{R}_{M}(\delta)=\sqrt{\frac{1}{|D|}\sum_{j=1}^{\infty}\min\{\mu_j,\delta^2\}},
$$
where $\mu_j$'s are the eigenvalues of the spectral decomposition of $L_M$.  Recall the definition of the effective dimension of ${\cal H}_{M}$ that ${\cal{N}}_M(\lambda)=\text{Tr}((L_M+\lambda I)^{-1}L_M)=\sum_{j=1}^{\infty}\mu_j/(\mu_j+\lambda)$. It is easy to verify that ${\cal{N}}_M(\lambda) \asymp |D|\mathcal{R}^2_{M}(\sqrt{\lambda})/\lambda$ (using inequality that $\min(a,b)/2 \leq \frac{ab}{a+b} \leq \min(a,b)$ for $a,b \in \mathbb{R}$ ). So these two quantities are equivalent to some extent, and according to Lemma \ref{lemD.7}, under some mild conditions on the number of random features, we have $\mathcal{R}_{M}(\delta)\asymp \mathcal{R}(\delta)$, where $\mathcal{R}(\delta)$ is the  Rademacher complexity function on ${\cal H}_{K}$ defined by
$$
\mathcal{R}(\delta)=\sqrt{\frac{1}{|D|}\sum_{j=1}^{\infty}\min\{\mu^{\prime}_j,\delta^2\}},
$$
where $\mu^{\prime}_j$'s are the eigenvalues of the spectral decomposition of $L_K$.

\begin{lemma}\label{lemA.2.5}
For any $\delta>0$ and $f\in {\cal H}_M$, we define the event $\cal{M}(\delta)$ as 
{\small$$
\left\{\sup_{f \in \Theta(\delta)}\left|\frac{1}{|D|}\sum_{i=1}^{|D|}[\rho_{\tau}(y_i-f(\bfx_i))-\rho_{\tau}(y_i-f_{M,D,\lambda}^{\diamond}(\bfx_i))]-E[\rho_{\tau}(y-f(\bfx))-\rho_{\tau}(y-f_{M,D,\lambda}^{\diamond}(\bfx)]\right|
\leq C\log |D| \mathcal{R}_{M}(\delta)\right\},
$$}
where $\Theta(\delta):=\{f \in \mathcal{H}_M \mid \|f-f_{M,D,\lambda}^{\diamond}\|_{\rho} \leq \delta, \text{and}\  \|f-f_{M,D,\lambda}^{\diamond}\|_{{\cal H}_M} \leq 1\}$, then $\cal{M}(\delta)$ holds with probability near to 1.
\end{lemma}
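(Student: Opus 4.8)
The plan is to read the quantity inside $\mathcal{M}(\delta)$ as the supremum of a centered empirical process indexed by the localized class $\Theta(\delta)$, and to control it through boundedness/variance estimates, symmetrization plus contraction, a spectral computation of the localized complexity, and finally concentration. Throughout, write $g_f(\bfx,y)=\rho_{\tau}(y-f(\bfx))-\rho_{\tau}(y-f_{M,D,\lambda}^{\diamond}(\bfx))$ for $f\in\Theta(\delta)$, so the event concerns $\sup_{f\in\Theta(\delta)}\big|(E_{|D|}-E)g_f\big|$, where $E_{|D|}$ denotes the empirical average over $D$.

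\textbf{Step 1 (boundedness and variance).} First I would record two structural facts. Since $\rho_{\tau}$ is $1$-Lipschitz (its subgradient lies in $[\tau-1,\tau]\subseteq[-1,1]$), we have $|g_f(\bfx,y)|\le |f(\bfx)-f_{M,D,\lambda}^{\diamond}(\bfx)|$ for \emph{every} $y$, in particular independently of $y$; this cancellation is precisely why no moment condition on $y$ is needed in spite of possibly heavy tails. Combined with the reproducing property and $\sup_{\bfx}K_M(\bfx,\bfx)\le\kappa^2$ from Assumption \ref{ass4}, the constraint $\|f-f_{M,D,\lambda}^{\diamond}\|_{{\cal H}_M}\le 1$ yields the uniform bound $\|g_f\|_{\infty}\le \kappa\|f-f_{M,D,\lambda}^{\diamond}\|_{{\cal H}_M}\le\kappa$, while the $L^2$ constraint gives the variance bound $E[g_f^2]\le\|f-f_{M,D,\lambda}^{\diamond}\|_{\rho}^2\le\delta^2$.

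\textbf{Step 2 (symmetrization and contraction).} Next I would bound $E\sup_{f\in\Theta(\delta)}|(E_{|D|}-E)g_f|$ by a universal constant times the empirical Rademacher complexity of $\{g_f:f\in\Theta(\delta)\}$ via the standard symmetrization inequality. Since each $g_f$ is a $1$-Lipschitz transform of $f(\bfx_i)-f_{M,D,\lambda}^{\diamond}(\bfx_i)$ that vanishes at $f=f_{M,D,\lambda}^{\diamond}$, the Ledoux--Talagrand contraction principle strips off the loss and reduces the problem to the Rademacher complexity of the shifted class $\{h=f-f_{M,D,\lambda}^{\diamond}:f\in\Theta(\delta)\}$, a subset of the kernel ball $\{\|h\|_{{\cal H}_M}\le 1,\ \|h\|_{\rho}\le\delta\}$. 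Expanding $h$ in the eigenbasis $\{\psi_j\}$ of $L_M$ with eigenvalues $\mu_j$, the two norm constraints read $\sum_j a_j^2/\mu_j\le 1$ and $\sum_j a_j^2\le\delta^2$, and the classical computation for kernel balls bounds the complexity by $\sqrt{\tfrac{1}{|D|}\sum_j\min\{\mu_j,\delta^2\}}=\mathcal{R}_M(\delta)$ up to a universal constant, with Lemma \ref{lemD.7} used to pass between the empirical and population spectra.

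\textbf{Step 3 (concentration and the $\log|D|$ factor).} Finally I would upgrade this in-expectation bound to a high-probability statement with Talagrand's (Bousquet's) concentration inequality for the supremum of a bounded empirical process, using the uniform bound $\kappa$ and variance $\delta^2$ from Step 1. Taking the confidence parameter of order $\log|D|$ gives failure probability polynomially small in $|D|$ (hence ``near to $1$''), and the resulting deviation terms $\delta\sqrt{\log|D|/|D|}$ and $\kappa\log|D|/|D|$ are absorbed into $C\log|D|\,\mathcal{R}_M(\delta)$, using $\mathcal{R}_M(\delta)\ge\delta/\sqrt{|D|}$ (valid once $\mu_1\ge\delta^2$) and the mild regime $\delta\gtrsim\kappa/\sqrt{|D|}$ that holds in the intended application. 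The main obstacle I expect is Step 2: making both the $\|\cdot\|_{\rho}$ and $\|\cdot\|_{{\cal H}_M}$ constraints enter simultaneously so as to produce the sharp $\min\{\mu_j,\delta^2\}$ profile rather than a crude global bound, and ensuring the contraction and concentration remain valid with only the Lipschitz cancellation of Step 1 in place of any moment assumption on $y$.
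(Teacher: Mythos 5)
Your proposal matches the paper's proof essentially step for step: symmetrization plus the Ledoux--Talagrand contraction to strip the $1$-Lipschitz check loss, expansion of $g=f-f_{M,D,\lambda}^{\diamond}$ in the eigenbasis of $L_M$ where the two constraints combine into $\sum_j g_j^2/\min\{\mu_j,\delta^2\}\le 2$ and Cauchy--Schwarz yields $E[A]\le C\mathcal{R}_M(\delta)$, followed by Bousquet's inequality (Lemma \ref{lemD.8}) with $t\asymp\sqrt{\log|D|/|D|}$ to produce the $\log|D|$ factor and polynomially small failure probability. The only harmless deviations are that you use the cleaner envelope $\|g_f\|_\infty\le\kappa$ (via the reproducing property and $K_M(\bfx,\bfx)\le\kappa^2$) and variance bound $\delta^2$, where the paper instead derives $\|g_f\|_\infty\le C\sqrt{|D|}\mathcal{R}_M(\delta)$ and variance $C|D|\mathcal{R}_M^2(\delta)$ from the same eigenbasis computation, and that Lemma \ref{lemD.7} is not actually needed inside this lemma—since $\mathcal{R}_M(\delta)$ is defined directly through the spectrum of $L_M$—but is invoked only later, in Lemma \ref{lemA.2.6}, to relate $\mathcal{R}_M$ to $\mathcal{R}$ when solving for $\delta$.
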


\begin{proof}
For the notation simplify, we denote
$$
A=\left|\frac{1}{|D|}\sum_{i=1}^{|D|}[\rho_{\tau}(y_i-f(\bfx_i))-\rho_{\tau}(y_i-f_{M,D,\lambda}^{\diamond}(\bfx_i))]-E[\rho_{\tau}(y-f(\bfx))-\rho_{\tau}(y-f_{M,D,\lambda}^{\diamond}(\bfx)]\right|,
$$
and $C$ is a universal positive constant that may be different from line to line in this lemma.

We first use the standard symmetrization argument in the empirical process \citep{pollard2012convergence} to bound $E[A]$ such that
\begin{equation}\label{lemA.2.5.2}
\begin{aligned}
E[A]&\leq 2E\left[\sup_{f \in \Theta(\delta)}\left |\frac{1}{|D|}\sum_{i=1}^{|D|}\sigma_i\big (\rho_{\tau}(y_i-f({\bfx}_i))-\rho_{\tau}(y_i-f_{M,D,\lambda}^{\diamond}({\bfx}_i))\big)\right | \right]  \\
& \leq  4E\left[\sup_{f \in \Theta(\delta)}\left |\frac{1}{|D|}\sum_{i=1}^{|D|}\sigma_i\big (f({\bfx}_i)-f_{M,D,\lambda}^{\diamond}({\bfx}_i)\big)\right | \right],
\end{aligned}     
\end{equation}
where $\{\sigma_i\}'s$ denote the Rademacher variables taking values in $\{-1,1\}$ with equal probability, the second inequality follows from the fact that $\rho_{\tau}(\cdot)$ is $1$-Lipschitz continuous and the Ledoux–Talagrand contraction
inequality \citep{wainwright2019high}.

For any $f \in \Theta(\delta)$, we denote $g=f-f_{M,D,\lambda}^{\diamond} \in {\cal H}_M$, and $g=\sum_{j=1}^{\infty}g_j\psi_j$ with $g_j=\int_{\cal X}f(\bfx)\psi_j(\bfx)\rho_{\cal{X}}(\bfx)d\bfx$. Note that $\|g\|_{\rho}\leq \delta$ and $\|g\|_{{\cal H}_M}\leq 1$, this implies that $\sum_{j=1}^{\infty}g_j^2\leq \delta^2$ and $\sum_{j=1}^{\infty}g_j^2/\mu_j \leq 1$. Combine these two inequalities, we have
\begin{align}\label{lemA.2.5.3}
\sum_{j=1}^{\infty}\frac{g_j^2}{\min\{\mu_j,\delta^2\}}\leq 2.    
\end{align}
Then we have
\begin{equation}\label{lemA.2.5.4}
\begin{aligned}
&\left|\sum_{i=1}^{|D|}\sigma_i\big (f({\bfx}_i)-f_{M,D,\lambda}^{\diamond}({\bfx}_i)\big)\right|=    \left|\sum_{i=1}^{|D|}\sigma_i\sum_{j=1}^{\infty}g_j\psi_j(\bfx_i)\right|\\
=&\left|\sum_{j=1}^{\infty}\frac{g_j}{\sqrt{\min\{\mu_j,\delta^2\}}}\sqrt{\min\{\mu_j,\delta^2\}}\sum_{i=1}^{|D|}\sigma_i\psi_j(\bfx_i)\right|\leq \sqrt{2}\left\{\sum_{j=1}^{\infty}\min\{\mu_j,\delta^2\}\left(\sum_{i=1}^{|D|}\sigma_i\psi_j(\bfx_i)\right)^2\right\}^{1/2},
\end{aligned}    
\end{equation}
where the inequality is from Cauthy-Schwarz inequality and \eqref{lemA.2.5.3}. Plug \eqref{lemA.2.5.4} into \eqref{lemA.2.5.2}, we have 
\begin{align*}
E[A]&\leq \frac{4\sqrt{2}}{|D|}E\left\{\sum_{j=1}^{\infty}\min\{\mu_j,\delta^2\}\left(\sum_{i=1}^{|D|}\sigma_i\psi_j(\bfx_i)\right)^2\right\}^{1/2}\\
&\leq \frac{4\sqrt{2}}{|D|}\left\{\sum_{j=1}^{\infty}\min\{\mu_j,\delta^2\}E_{\bfx,\sigma}\left(\sum_{i=1}^{|D|}\sigma_i\psi_j(\bfx_i)\right)^2\right\}^{1/2}\\
&=\frac{4\sqrt{2}}{|D|}\left\{\sum_{j=1}^{\infty}\min\{\mu_j,\delta^2\}\sum_{i=1}^{|D|}E_{\bfx,\sigma}\left(\sigma_i^2\psi^2_j(\bfx_i)\right)\right\}^{1/2}\\
&\leq \frac{4\sqrt{2}C}{|D|}\sqrt{\sum_{j=1}^{\infty}\min\{\mu_j,\delta^2\}},
\end{align*}
where the second inequality follows from Jensen's inequality, and the first equality is from the fact that $E_{\bfx,\sigma}(\sigma_i\psi_j(\bfx_i))=0$ for each $i$. Thus we have 
\begin{align}\label{lemA.2.5.5}
E[A]\leq C\mathcal{R}_M(\delta).  
\end{align}

Next, we turn to bound $A-E[A]$, note that
\begin{align*}
|g(\bfx)|=\left|\sum_{j=1}^{\infty}g_j\psi_j(\bfx)\right|&\leq \sqrt{\sum_{j=1}^{\infty}\frac{g_j^2}{\min\{\mu_j,\delta^2\}}}\sqrt{\sum_{j=1}^{\infty}\min\{\mu_j,\delta^2\}\psi_j^2(\bfx)}\leq C\sqrt{\sum_{j=1}^{\infty}\min\{\mu_j,\delta^2\}}=C\sqrt{|D|}\mathcal{R}_M(\delta).
\end{align*}
Thus we have
$$
\big|\rho_{\tau}(y-f({\bfx}))-\rho_{\tau}(y-f_{M,D,\lambda}^{\diamond}({\bfx}))\big|\leq  \big|f({\bfx})-f_{M,D,\lambda}^{\diamond}({\bfx})\big|=|g(\bfx)|\leq C\sqrt{|D|}\mathcal{R}_M(\delta),
$$
and
$$
E\big[\rho_{\tau}(y-f({\bfx}))-\rho_{\tau}(y-f_{M,D,\lambda}^{\diamond}({\bfx}))\big]^2\leq  E\big(f({\bfx})-f_{M,D,\lambda}^{\diamond}({\bfx})\big)^2=E(g(\bfx))^2\leq C|D|\mathcal{R}^2_M(\delta).
$$

With these two inequalities, we use the Bousquet bound inequality in Lemma \ref{lemD.8} and set $t=C\sqrt{\frac{\log |D|}{|D|}}$, then 
\begin{align}\label{lemA.2.5.6}
A-E[A]\leq C\log|D| \mathcal{R}_M(\delta)    
\end{align}
  
holds with probability at least $1-n^{-C}$. 

Combine \eqref{lemA.2.5.5} and \eqref{lemA.2.5.6}, we can obtain the inequality in the lemma. Thus we complete the proof.

\end{proof}

According to Lemma \ref{lemA.2.5}, we can also get the following inequality by some normalized procedure,
\begin{equation}\label{lemA.2.5.normal}
\begin{aligned}
&\left|\frac{1}{|D|}\sum_{i=1}^{|D|}[\rho_{\tau}(y_i-f(\bfx_i))-\rho_{\tau}(y_i-f_{M,D,\lambda}^{\diamond}(\bfx_i))]-E[\rho_{\tau}(y-f(\bfx))-\rho_{\tau}(y-f_{M,D,\lambda}^{\diamond}(\bfx)]\right|\\
\leq& C\log |D| \mathcal{R}_{M}(\delta)\left(\frac{\|f-f_{M,D,\lambda}^{\diamond}\|_{\rho}}{\delta}+\|f-f_{M,D,\lambda}^{\diamond}\|_{{\cal H}_M}\right)   
\end{aligned}    
\end{equation}

\begin{lemma}\label{lemA.2.6}
Suppose that Assumptions \ref{ass4}-\ref{ass6} and \ref{ass5} and the conditions in Proposition \ref{propA.2.1} are satisfied, if $|D|\geq \widetilde{C}_3$, then with probability near to 1, there holds    
$$
\|f_{M,D,\lambda}-f_{M,D,\lambda}^{\diamond}\|_{\rho} \leq C |D|^{-\frac{r}{2r+\gamma}}\log |D|,
$$
where $C$ is a universal positive constant. 
\end{lemma}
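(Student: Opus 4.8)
The plan is to take $f=f_{M,D,\lambda}$ in Lemma \ref{lemA.2.4}, which reduces the task to bounding the expected excess check-risk $E[\rho_{\tau}(y-f_{M,D,\lambda}(\bfx))-\rho_{\tau}(y-f_{M,D,\lambda}^{\diamond}(\bfx))]$, since the residual term there is already $O(|D|^{-2r/(2r+\gamma)})=O(\lambda^{2r})$ by Proposition \ref{propA.2.1}. Writing $g=f_{M,D,\lambda}-f_{M,D,\lambda}^{\diamond}$ and $d=\|g\|_{\rho}$, I would split this expected excess risk into its empirical counterpart plus a fluctuation term, $E[\cdots]=\frac{1}{|D|}\sum_i[\cdots]+\big(E-\tfrac{1}{|D|}\sum_i\big)[\cdots]$, and control the two pieces separately, aiming at a self-bounding inequality for $d$.

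For the empirical part I would use the optimality of $f_{M,D,\lambda}$ for the penalized empirical check problem, giving $\frac{1}{|D|}\sum_i[\rho_{\tau}(y_i-f_{M,D,\lambda}(\bfx_i))-\rho_{\tau}(y_i-f_{M,D,\lambda}^{\diamond}(\bfx_i))]\le\lambda\|f_{M,D,\lambda}^{\diamond}\|_{{\cal H}_M}^2-\lambda\|f_{M,D,\lambda}\|_{{\cal H}_M}^2=-2\lambda\langle g,f_{M,D,\lambda}^{\diamond}\rangle_{{\cal H}_M}-\lambda\|g\|_{{\cal H}_M}^2$. The crude estimate $\lambda\|f_{M,D,\lambda}^{\diamond}\|_{{\cal H}_M}^2=O(\lambda)$ is too weak when $r>1/2$ (there $\lambda\gg\lambda^{2r}$), so instead I would exploit the first-order condition of the least-square minimizer, namely $\lambda\langle f_{M,D,\lambda}^{\diamond},h\rangle_{{\cal H}_M}=-\frac{1}{|D|}\sum_i(f_{M,D,\lambda}^{\diamond}(\bfx_i)-f_{\tau}^*(\bfx_i))h(\bfx_i)$ for all $h\in{\cal H}_M$. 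Taking $h=g$ and applying the empirical Cauchy--Schwarz inequality with Proposition \ref{propA.2.1} yields $-2\lambda\langle g,f_{M,D,\lambda}^{\diamond}\rangle_{{\cal H}_M}\lesssim\|f_{M,D,\lambda}^{\diamond}-f_{\tau}^*\|_{\rho}\,d\lesssim\lambda^r d$, which Young's inequality absorbs as $\tfrac{c_2}{16}d^2+O(\lambda^{2r})$. This is exactly the non-trivial Cauchy--Schwarz/Young step the paper advertises.

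For the fluctuation term I would invoke the localized empirical-process bound of Lemma \ref{lemA.2.5}, in its normalized form \eqref{lemA.2.5.normal}, which controls $|(E-\tfrac{1}{|D|}\sum_i)[\cdots]|$ by $C\log|D|\,\mathcal{R}_M(\delta)(\|g\|_{\rho}/\delta+\|g\|_{{\cal H}_M})$. Using Assumption \ref{ass2} and Lemma \ref{lemD.7} to estimate $\mathcal{R}_M(\delta)\asymp\mathcal{R}(\delta)\asymp\delta^{1-\gamma}/\sqrt{|D|}$ and optimizing the localization radius $\delta$, this term becomes $\lesssim\log|D|\,d^{1-\gamma}\|g\|_{{\cal H}_M}^{\gamma}/\sqrt{|D|}$. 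A companion bound $\|g\|_{{\cal H}_M}\lesssim\lambda^{r-1/2}$ on the RKHS radius of the least-square approximation error — obtained from Lemma \ref{lemA.2.3} (which already gives $\|f_{M,D,\lambda}^{\diamond}-f_{M,\lambda}\|_{{\cal H}_M}\lesssim\lambda^{r-1/2}$) together with the basic inequality and Young's inequality, and this is where the argument genuinely differs for $r\in(0,1/2)$ versus $r\in[1/2,1]$ — then renders this contribution of order $\lambda^{2r}\log|D|$ after inserting $\lambda=|D|^{-1/(2r+\gamma)}$.

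Collecting the three contributions yields a self-bounding quadratic inequality of the schematic form $\tfrac{c_2}{8}d^2\lesssim\lambda^{2r}+\log|D|\,d^{1-\gamma}\lambda^{(r-1/2)\gamma}/\sqrt{|D|}$, and solving it for $d$ gives $d\lesssim\lambda^r\log|D|=|D|^{-r/(2r+\gamma)}\log|D|$, as claimed. The main obstacle, and the heart of the whole argument, is the joint control of the $L_{\rho_{\cal X}}^2$ radius $d$ and the RKHS radius $\|g\|_{{\cal H}_M}$ that both enter the empirical process: since $f_{M,D,\lambda}$ has no closed form, neither radius is available a priori, the penalty difference must be handled sharply through the least-square normal equations rather than a loose norm bound, and the localization must be carried out at the correct radius, with the regularity-dependent estimate of $\|g\|_{{\cal H}_M}$ supplying the case split between the agnostic ($r\in(0,1/2)$) and realizable ($r\in[1/2,1]$) regimes.
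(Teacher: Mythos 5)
Your overall architecture coincides with the paper's: reduce to the expected excess check-risk via Lemma \ref{lemA.2.4} with $f=f_{M,D,\lambda}$, invoke the basic inequality for the penalized empirical problem, write the penalty difference as $-2\lambda\langle g,f_{M,D,\lambda}^{\diamond}\rangle_{{\cal H}_M}-\lambda\|g\|_{{\cal H}_M}^2$, control the fluctuation by the normalized localized bound \eqref{lemA.2.5.normal} with $\mathcal{R}_M(\delta)\asymp\delta^{1-\gamma}/\sqrt{|D|}$, and close with a self-bounding quadratic in $d=\|g\|_\rho$. However, the two steps where you deviate from the paper contain genuine gaps. First, your normal-equation step: the first-order condition gives $-2\lambda\langle g,f_{M,D,\lambda}^{\diamond}\rangle_{{\cal H}_M}=\frac{2}{|D|}\sum_i\bigl(f_{M,D,\lambda}^{\diamond}(\bfx_i)-f_\tau^*(\bfx_i)\bigr)g(\bfx_i)$, and empirical Cauchy--Schwarz then produces \emph{empirical} $L_2$ norms, not $\|f_{M,D,\lambda}^{\diamond}-f_\tau^*\|_\rho$ and $d$. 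Proposition \ref{propA.2.1} is a population-norm statement and does not control $\bigl(\frac{1}{|D|}\sum_i(f_{M,D,\lambda}^{\diamond}(\bfx_i)-f_\tau^*(\bfx_i))^2\bigr)^{1/2}$; while the ${\cal H}_M$-component $f_{M,D,\lambda}^{\diamond}-f_{M,\lambda}$ can be transferred between empirical and population norms via $\mathcal{Q}_{M,D,\lambda},\mathcal{R}_{M,D,\lambda}$ (Lemma \ref{lemC.4}), the deterministic component $f_{M,\lambda}-f_\tau^*$ has no sup-norm or moment control under Assumptions 3.3--3.6 — in the agnostic regime $f_\tau^*$ is only in $L^2_{\rho_{\cal X}}$ — so the Bernstein-type concentration your conversion implicitly needs is unavailable. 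The paper avoids empirical norms here entirely: it splits $f_{M,D,\lambda}^{\diamond}=(f_{M,D,\lambda}^{\diamond}-f_{M,\lambda})+f_{M,\lambda}$ inside the ${\cal H}_M$ inner product, uses Cauchy--Schwarz with Lemma \ref{lemA.2.3} for the first piece, and for $f_{M,\lambda}$ argues deterministically — via $\|f_{M,\lambda}\|_{{\cal H}_M}\leq R\kappa^{2r}$ when $r\in(0,1/2)$, and via the operator Young's inequality $\lambda^{2-2r}L_M^{2r}\preceq(2-2r)\lambda I+2rL_M$ when $r\in[1/2,1]$, yielding $C\lambda^{r+1/2}\|g\|_{{\cal H}_M}+C\lambda^{r}\|g\|_\rho$; this is where the paper's case split actually lives.

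Second, your ``companion bound'' $\|g\|_{{\cal H}_M}\lesssim\lambda^{r-1/2}$ is asserted rather than proved, and it does not follow from Lemma \ref{lemA.2.3}, which bounds $\|f_{M,D,\lambda}^{\diamond}-f_{M,\lambda}\|_{{\cal H}_M}$, not $\|f_{M,D,\lambda}-f_{M,D,\lambda}^{\diamond}\|_{{\cal H}_M}$. Since $f_{M,D,\lambda}$ has no closed form, the natural route — the basic inequality combined with the $1$-Lipschitz property of $\rho_\tau$ — only delivers $\lambda\|f_{M,D,\lambda}\|_{{\cal H}_M}^2\lesssim\kappa\|g\|_{{\cal H}_M}+\lambda\|f_{M,D,\lambda}^{\diamond}\|_{{\cal H}_M}^2$, i.e.\ an a priori radius of order $\lambda^{-1/2}$, which is far too weak when $r>0$. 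The paper never establishes (or needs) such a bound: it retains the negative term $-\lambda\|g\|_{{\cal H}_M}^2$ from the penalty difference and absorbs every $C\lambda^{r+1/2}\log|D|\,\|g\|_{{\cal H}_M}$ contribution by the scalar Young's inequality at cost $C\lambda^{2r}\log^2|D|$, fixing the localization radius by $\mathcal{R}_M(\delta)=\delta^{1+2r}$ (so $\delta=\sqrt{\lambda}$), and then solves $d^2\lesssim\lambda^{2r}\log^2|D|+\lambda^r\log|D|\,d$ via Lemma \ref{lemA.2.4}. If you replace your two problematic steps by this absorption argument and the deterministic operator treatment of $\lambda\langle f_{M,\lambda},g\rangle_{{\cal H}_M}$, your sketch becomes the paper's proof; as written, both the empirical-to-population conversion and the RKHS-radius claim are unsupported, and the second is likely false at the stated order.
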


\begin{proof}
Recall the definition of $f_{M,D,\lambda}$, we have 
$$
\frac{1}{|D|}\sum_{i=1}^{|D|}\rho_{\tau}(y_i-f_{M,D,\lambda}(\bfx_i))+\lambda\|f_{M,D,\lambda}\|_{{\cal H}_M} \leq \frac{1}{|D|}\sum_{i=1}^{|D|}\rho_{\tau}(y_i-f_{M,D,\lambda}^{\diamond}(\bfx_i))+\lambda\|f_{M,D,\lambda}^{\diamond}\|_{{\cal H}_M}.
$$

We can not directly obtain a upper bound of $E[\rho_{\tau}(y-f_{M,D,\lambda}(\bfx))-\rho_{\tau}(y-f_{M,D,\lambda}^{\diamond}(\bfx)]$ according to \eqref{lemA.2.5.normal} from Lemma \ref{lemA.2.5},  because $E[\rho_{\tau}(y-f_{M,D,\lambda}(\bfx))-\rho_{\tau}(y-f_{M,D,\lambda}^{\diamond}(\bfx)] \geq 0$ does not always hold. Thus we use Lemma \ref{lemA.2.4} and note that  $E[\rho_{\tau}(y-f_{M,D,\lambda}(\bfx))-\rho_{\tau}(y-f_{M,D,\lambda}^{\diamond}(\bfx)]+C\|f_{M,D,\lambda}^{\diamond}-f_{\tau}^*\|\geq 0$ holds. Thus combine Lemma \ref{lemA.2.4} and \eqref{lemA.2.5.normal} from Lemma \ref{lemA.2.5} (plus a $C |D|^{-\frac{r}{2r+\gamma}}$ term and minus the same term in the left of \eqref{lemA.2.5.normal}) , with probability near to $1$, we have
\begin{equation}\label{lemA.2.6.1}
\begin{aligned}
&E[\rho_{\tau}(y-f_{M,D,\lambda}(\bfx))-\rho_{\tau}(y-f_{M,D,\lambda}^{\diamond}(\bfx)] +C |D|^{-\frac{2r}{2r+\gamma}} \\
\leq & \lambda\|f_{M,D,\lambda}^{\diamond}\|_{{\cal H}_M}-\lambda\|f_{M,D,\lambda}\|_{{\cal H}_M}+C\log|D|\frac{\mathcal{R}_{M}(\delta)}{\delta}\|f-f_{M,D,\lambda}^{\diamond}\|_{\rho}+C\log|D|\mathcal{R}_{M}(\delta)\|f-f_{M,D,\lambda}^{\diamond}\|_{{\cal H}_M}+C |D|^{-\frac{2r}{2r+\gamma}}\\
= & -2\lambda \langle f_{M,D,\lambda}^{\diamond}, f_{M,D,\lambda}-f_{M,D,\lambda}^{\diamond}\rangle_{{\cal H}_M}-\lambda \|f_{M,D,\lambda}-f_{M,D,\lambda}^{\diamond}\|_{{\cal H}_M}^2+C\log|D|\frac{\mathcal{R}_{M}(\delta)}{\delta}\|f_{M,D,\lambda}-f_{M,D,\lambda}^{\diamond}\|_{\rho}\\
+& C\log|D|\mathcal{R}_{M}(\delta)\|f_{M,D,\lambda}-f_{M,D,\lambda}^{\diamond}\|_{{\cal H}_M}+C |D|^{-\frac{2r}{2r+\gamma}}\\
= & -2\lambda \langle f_{M,D,\lambda}^{\diamond}, f_{M,D,\lambda}-f_{M,D,\lambda}^{\diamond}\rangle_{{\cal H}_M}-\lambda \|f_{M,D,\lambda}-f_{M,D,\lambda}^{\diamond}\|_{{\cal H}_M}^2+C\lambda^r\log|D|\|f_{M,D,\lambda}-f_{M,D,\lambda}^{\diamond}\|_{\rho}\\
+& C\lambda^{r+1/2}\log|D|\|f_{M,D,\lambda}-f_{M,D,\lambda}^{\diamond}\|_{{\cal H}_M}+C \lambda^{2r},
\end{aligned}    
\end{equation}
where in the last equality, we choose $\delta$ satisfying that $\mathcal{R}_{M}(\delta)=\delta^{1+2r}$. Note that $\mathcal{R}_{M}(\delta)\asymp \mathcal{R}(\delta)\asymp \delta^{1-\gamma}/\sqrt{|D|}$ \citep{lian2022}, we can obtain that $\delta=|D|^{-\frac{1}{4r+2\gamma}}$, and $\lambda=\delta^2=|D|^{-\frac{1}{2r+\gamma}}$.

We now establish the bound of the term $-2\lambda \langle f_{M,D,\lambda}^{\diamond}, f_{M,D,\lambda}-f_{M,D,\lambda}^{\diamond}\rangle_{{\cal H}_M}$. Note that by the triangle inequality, we have 
\begin{equation}\label{lemA.2.6.2}
\begin{aligned}
&|\lambda \langle f_{M,D,\lambda}^{\diamond}, f_{M,D,\lambda}-f_{M,D,\lambda}^{\diamond}\rangle_{{\cal H}_M}|  \\
\leq& |\lambda \langle f_{M,D,\lambda}^{\diamond}-f_{M,\lambda}, f_{M,D,\lambda}-f_{M,D,\lambda}^{\diamond}\rangle_{{\cal H}_M}| +|\lambda \langle f_{M,\lambda}, f_{M,D,\lambda}-f_{M,D,\lambda}^{\diamond}\rangle_{{\cal H}_M}|. 
\end{aligned}    
\end{equation}
For the first term of the right side of \eqref{lemA.2.6.2}, we use the Cauchy–Schwarz inequality and Lemma \ref{lemA.2.3} and obtain that
\begin{align*}
|\lambda \langle f_{M,D,\lambda}^{\diamond}-f_{M,\lambda}, f_{M,D,\lambda}-f_{M,D,\lambda}^{\diamond}\rangle_{{\cal H}_M}| &\leq \lambda \|f_{M,D,\lambda}^{\diamond}-f_{M,\lambda}\|_{{\cal H}_M}\|f_{M,D,\lambda}-f_{M,D,\lambda}^{\diamond}\|_{{\cal H}_M}\\
&\leq \widetilde{C}_1\lambda^{r+1/2}\|f_{M,D,\lambda}-f_{M,D,\lambda}^{\diamond}\|_{{\cal H}_M}.
\end{align*}
For the second term in the right side of \eqref{lemA.2.6.2}, we consider the following two cases:
\begin{enumerate}
    \item[(i).] 
For the case when $r\in (0,1/2)$, recall the definition of $f_{M,\lambda}$ we get
\begin{align*}
\|f_{M,\lambda}\|_{{\cal H}_M}&=\|(L_M+\lambda I)^{-1}L_Mf_{\rho}\|_{{\cal H}_M}= \|(L_M+\lambda I)^{-1}L_ML_K^{r}h_{\rho}\|_{{\cal H}_M}\leq \|(L_M+\lambda I)^{-1}L_M\|\|L_K^{r}h_{\rho}\|_{\rho} \\
&\leq \|L_K^{r}h_{\rho}\|_{\rho} \leq \|L_K^{r}\|\|h_{\rho}\|_{\rho}\leq R\kappa^{2r},
\end{align*}
where the first and third inequality is from the fact that $(L_M+\lambda I)^{-1}L_M$ and $L_K^r$ are linear operators, the last inequality is from $\|L_K^r\|\leq \kappa^{2r}$ for $r\in (0,1/2) $ and $\|h_{\rho}\|_{\rho} \leq R$. Then by the Cauchy–Schwarz inequality, we have
\begin{align*}
|\lambda \langle f_{M,\lambda}, f_{M,D,\lambda}-f_{M,D,\lambda}^{\diamond}\rangle_{{\cal H}_M}|
&\leq \lambda \|f_{M,\lambda}\|_{{\cal H}_M}\|f_{M,D,\lambda}-f_{M,D,\lambda}^{\diamond}\|_{{\cal H}_M}\leq R\phi^{2r}\lambda\|f_{M,D,\lambda}-f_{M,D,\lambda}^{\diamond}\|_{{\cal H}_M}\\
&\leq R\kappa^{2r}\lambda^{r+1/2}\|f_{M,D,\lambda}-f_{M,D,\lambda}^{\diamond}\|_{{\cal H}_M}.
\end{align*}
\item[(ii).]
For the case when $r\in [1/2,1]$, we have 
\begin{align*}
&|\lambda \langle f_{M,\lambda}, f_{M,D,\lambda}-f_{M,D,\lambda}^{\diamond}\rangle_{{\cal H}_M}|\\
= &|\lambda \langle f_{M,\lambda}, L_M^{-1}(f_{M,D,\lambda}-f_{M,D,\lambda}^{\diamond})\rangle_{\rho}|\\
=&\lambda| \langle (L_M+\lambda I)^{-1}L_M L_M^rh_{\tau}^*, L_M^{-1}(f_{M,D,\lambda}-f_{M,D,\lambda}^{\diamond}) \rangle_{\rho}|\\
\leq &R\lambda^{r} \| \lambda^{1-r}(L_M+\lambda I)^{-1}L_M^r(f_{M,D,\lambda}-f_{M,D,\lambda}^{\diamond}) \|_\rho
\\
= &R\lambda^{r}\sqrt{ \langle f_{M,D,\lambda}-f_{M,D,\lambda}^{\diamond},   \lambda^{2-2r}L_M^{2r}(L_M+\lambda I)^{-2}(f_{M,D,\lambda}-f_{M,D,\lambda}^{\diamond}) \rangle_\rho}\\
\leq &R\lambda^{r}\sqrt{ \langle f_{M,D,\lambda}-f_{M,D,\lambda}^{\diamond},  ((2-2r)\lambda+ (2r-1)L_M)L_M(L_M+\lambda I)^{-2}(f_{M,D,\lambda}-f_{M,D,\lambda}^{\diamond}) \rangle_\rho}\\
\leq &R\lambda^{r}
\sqrt{ \langle f_{M,D,\lambda}-f_{M,D,\lambda}^{\diamond},  \lambda L_M(L_M+\lambda I)^{-2}(f_{M,D,\lambda}-f_{M,D,\lambda}^{\diamond}) \rangle_\rho}+\\
&R\lambda^{r}
\sqrt{ \langle f_{M,D,\lambda}-f_{M,D,\lambda}^{\diamond},  L_M^2(L_M+\lambda I)^{-2}(f_{M,D,\lambda}-f_{M,D,\lambda}^{\diamond}) \rangle_\rho}\\
=&R\lambda^{r+1/2}
\sqrt{ \langle f_{M,D,\lambda}-f_{M,D,\lambda}^{\diamond},   L_M^2(L_M+\lambda I)^{-2}(f_{M,D,\lambda}-f_{M,D,\lambda}^{\diamond}) \rangle_{{\cal H}_M}}+\\
&R\lambda^{r}
\sqrt{ \langle f_{M,D,\lambda}-f_{M,D,\lambda}^{\diamond},  L_M^2(L_M+\lambda I)^{-2}(f_{M,D,\lambda}-f_{M,D,\lambda}^{\diamond}) \rangle_\rho}\\
&\leq R\lambda^{r+1/2}\|f_{M,D,\lambda}-f_{M,D,\lambda}^{\diamond}\|_{{\cal H}_M}+R\lambda^r\|f_{M,D,\lambda}-f_{M,D,\lambda}^{\diamond}\|_{\rho},
\end{align*}
where we use the fact that $\|f\|_{\rho}=\|L_M^{1/2}f\|_{{\cal H}_M}$ for any $f \in L_{\rho_{\cal X}}^2$, the the second inequality uses the Young's inequality that $\lambda^{2-2r}L_M^{2r}\leq (2-2r)\lambda+2rL_M$ for the positive operator $L_M$, $\lambda>0$, and $r\in [1/2,1]$, the last inequality is from $\|(L_M+\lambda I)^{-1}L_M\|\leq 1$. This technical proof taking consideration of $r \in [1/2,1]$ is inspired from that of KQR in \citet{lian2022}.
\end{enumerate}

Plug the aforementioned two results (i) and (ii) into \eqref{lemA.2.6.2}, we have 
$$
|\lambda \langle f_{M,D,\lambda}^{\diamond}, f_{M,D,\lambda}-f_{M,D,\lambda}^{\diamond}\rangle_{{\cal H}_M}| \leq C\lambda^{r+1/2}\|f_{M,D,\lambda}-f_{M,D,\lambda}^{\diamond}\|_{{\cal H}_M}+C\lambda^r\|f_{M,D,\lambda}-f_{M,D,\lambda}^{\diamond}\|_{\rho},
$$
where $C=(\widetilde{C}_1+R(\phi^{2r}+1))+R$. Plug this result into \eqref{lemA.2.6.1}, we get 
\begin{align*}
&E[\rho_{\tau}(y-f_{M,D,\lambda}(\bfx))-\rho_{\tau}(y-f_{M,D,\lambda}^{\diamond}(\bfx)] \\
\leq & C\lambda^{r+1/2}\|f_{M,D,\lambda}-f_{M,D,\lambda}^{\diamond}\|_{{\cal H}_M}+C\lambda^r\|f_{M,D,\lambda}-f_{M,D,\lambda}^{\diamond}\|_{\rho}-\lambda \|f_{M,D,\lambda}-f_{M,D,\lambda}^{\diamond}\|_{{\cal H}_M}^2\\+& C\lambda^r\log|D|\|f_{M,D,\lambda}-f_{M,D,\lambda}^{\diamond}\|_{\rho}+
C\lambda^{r+1/2}\log|D|\|f_{M,D,\lambda}-f_{M,D,\lambda}^{\diamond}\|_{{\cal H}_M}+C \lambda^{2r}\\
\leq & C\lambda^{2r} +\frac{\lambda}{4}\|f_{M,D,\lambda}-f_{M,D,\lambda}^{\diamond}\|^2_{{\cal H}_M}+C\lambda^r\log|D|\|f_{M,D,\lambda}-f_{M,D,\lambda}^{\diamond}\|_{\rho}-\lambda \|f_{M,D,\lambda}-f_{M,D,\lambda}^{\diamond}\|_{{\cal H}_M}^2\\
+&C\lambda^{2r}\log^2|D| +\frac{\lambda}{4}\|f_{M,D,\lambda}-f_{M,D,\lambda}^{\diamond}\|^2_{{\cal H}_M}\\
\leq & C\lambda^{2r}\log^2|D| + C\lambda^r\log|D|\|f_{M,D,\lambda}-f_{M,D,\lambda}^{\diamond}\|_{\rho}=C|D|^{-\frac{2r}{2r+\gamma}}\log^2|D|+C|D|^{-\frac{r}{2r+\gamma}}\log|D|\|f_{M,D,\lambda}-f_{M,D,\lambda}^{\diamond}\|_{\rho}.
\end{align*}

By Lemma \ref{lemA.2.4}, we have
\begin{align*}
\|f_{M,D,\lambda}-f_{M,D,\lambda}^{\diamond}\|_{\rho}^2&\leq   C E[\rho_{\tau}(y-f_{M,D,\lambda}(\bfx))-\rho_{\tau}(y-f_{M,D,\lambda}^{\diamond}(\bfx)]+C  |D|^{-\frac{2r}{2r+\gamma}}\\
&\leq C|D|^{-\frac{2r}{2r+\gamma}}\log^2|D|+C|D|^{-\frac{r}{2r+\gamma}}\log|D|\|f_{M,D,\lambda}-f_{M,D,\lambda}^{\diamond}\|_{\rho}.
\end{align*}
Solve the above inequality we can finally obtain that 
$$
\|f_{M,D,\lambda}-f_{M,D,\lambda}^{\diamond}\|_{\rho} \leq C|D|^{-\frac{r}{2r+\gamma}}\log |D|.
$$
Thus we complete the proof.
\end{proof}

\subsection{Proofs of Theorems \ref{thm1} and \ref{thm2} and Corollary \ref{cor1}}
Now we are ready to prove Theorems \ref{thm1} and \ref{thm2} and Corollary \ref{cor1}.

\begin{proof}
By Proposition \ref{propA.2.1} and Lemma \ref{lemA.2.6}, if $r\in [0,1], \gamma\in[0,1], 2r+\gamma\geq 1$, and $\lambda=|D|^{-\frac{1}{2r+\gamma}}$, and the number of random features satisfies the following two inequalities
\begin{align*}
&M \gtrsim  |D|^{\frac{\alpha}{2r+\gamma}},\quad \text{for} \quad r\in (0,1/2),\\
&M \gtrsim  |D|^{\frac{(2r-1)(1+\gamma-\alpha)+\alpha}{2r+\gamma}},   \quad \text{for} \quad r\in [1/2,1],
\end{align*}
then with probability near to $1$, there holds 
$$
\|f_{M,D,\lambda}-f_{\tau}^*\|_{\rho}\leq \|f_{M,D,\lambda}-f_{M,D,\lambda}^{\diamond}\|_{\rho}+ \|f_{M,D,\lambda}^{\diamond}-f_{\tau}^*\|_{\rho} \leq (\widetilde{C}_2+C) |D|^{-\frac{r}{2r+\gamma}}\log|D|,
$$
so we have $\|f_{M,D,\lambda}-f_{M,D,\lambda}^{\diamond}\|_{\rho} \leq C|D|^{-\frac{r}{2r+\gamma}}\log^2 |D|$. Recall the knight inequality that 
\begin{align*}
&\rho_{\tau}(y-f_{M,D,\lambda}({\bfx}))-\rho_{\tau}(y-f_{\tau}^*(\bfx))  =-(f_{M,D,\lambda}(\bfx)-f_{\tau}^*(\bfx)) \big(\tau-I(y\leq f_{\tau}^*(\bfx))\big)\\
+&\int_{0}^{f_{M,D,\lambda}({\bfx})-f_{\tau}^*(\bfx)}\big(I(y\leq f_{\tau}^*(\bfx)+t)-I( y\leq f_{\tau}^*(\bfx))\big)dt.
\end{align*}
Taking the expectation and using Fubini's theorem, we obtain that
\begin{align*}
 &E\big[\rho_{\tau}(y-f_{M,D,\lambda}({\bfx}))-\rho_{\tau}(y-f_{\tau}^*(\bfx))\big] = -E\big[(f_{M,D,\lambda}(\bfx)-f_{M,D,\lambda}^{\diamond}(\bfx)) E((\tau-I(y\leq f_{\tau}^{*}(\bfx))|\bfx)\big]\\
 +&E\left[\int_{0}^{f_{M,D,\lambda}({\bfx})-f_{\tau}^*(\bfx)}\big[E(I(y\leq f_{\tau}^*(\bfx)+t)|\bfx)-E(I( y\leq f_{\tau}^*(\bfx))|\bfx)\big]dt\right].
\end{align*}   
The first term is 0 due to that fact that $E((\tau-I(y\leq f_{\tau}^{*}(\bfx))|\bfx)=0$ and the second term can be bounded by 
\begin{align*}
&E\left[\int_{0}^{f_{M,D,\lambda}({\bfx})-f_{\tau}^{*}(\bfx)}\big[E(I(y\leq f_{\tau}^{*}(\bfx)+t)|\bfx)-E(I( y\leq f_{\tau}^{*}(\bfx))|\bfx)\big]dt\right] \\
=& E\left[\int_{0}^{f_{M,D,\lambda}({\bfx})-f_{\tau}^{*}(\bfx)}\big[F_{y|\bfx}(f_{\tau}^{*}(\bfx)+t)-F_{y|\bfx}(f_{\tau}^{*}(\bfx))\big]dt\right]\\
\leq&c_1E\left[\int_{0}^{f_{M,D,\lambda}({\bfx})-f_{\tau}^{*}(\bfx)}tdt\right]=\frac{c_1}{2}\|f_{M,D,\lambda}-f_{\tau}^{*}\|^2_{\rho},
\end{align*}
where the inequality uses Assumption \ref{ass5} that $\sup_{t \in \mathbb{R}}f_{y|\bfx}(t) \leq c_1$. Therefore, we have
$$
{\cal E}(f_{M,D,\lambda})-{\cal E}(f_{\tau}^*) \leq c_1/2 \|f_{M,D,\lambda}-f_{\tau}^*\|_{\rho}^2 \leq C |D|^{-\frac{2r}{2r+\gamma}}\log^2 |D|.
$$

Thus we complete the proof of Theorem \ref{thm2}. By Theorem \ref{thm2} with $\alpha=1$ and $\alpha=\gamma$, we can estabilsh the proofs of Theorem \ref{thm1} and Corollary \ref{cor1}.

\end{proof}

\section{Extension to the Lipschitz Loss} \label{proof_lip}

In this section, we consider random feature method with Lipschitz continuous loss function $L(\cdot,\cdot)$. Similar to the check loss case in \eqref{kqr_rf}, we approximate $y_i$ with $f_{M,D,\lambda}^L=\widetilde{\bfu}\phi_M$ and formulate the following general learning problem
\begin{align*}
\widetilde{\bfu}= \argmin_{\bfu \in \mathbb{R}^M}\frac{1}{|D|} \sum_{i=1}^{|D|}L\big(y_i, \bfu^T\bfphi_{M}(\bfx_i)\big)+\lambda \bfu^T\bfu.
\end{align*}

The following theorem shows the capacity-dependent learning rates for the RF estimator with Lipschitz continuous loss function (Lip-RF), which is sharper than those of the existing literature \citep{li2021towards,li2022sharp} and can be applied to the agnostic setting.

\begin{theorem}\label{thm3}
Under Assumptions \ref{ass4}-\ref{ass6} and \ref{ass7}, if $r\in (0,1]$, $\gamma \in [0,1]$, $2r+\gamma\geq 1$, and $\lambda=|D|^{-\frac{1}{2r+\gamma}}$, when the number of random features satisfies 
\begin{align*}
&M \gtrsim  |D|^{\frac{\alpha}{2r+\gamma}},\quad \text{for} \quad r\in (0,1/2),\\
&M \gtrsim  |D|^{\frac{(2r-1)(1+\gamma-\alpha)+\alpha}{2r+\gamma}},   \quad \text{for} \quad r\in [1/2,1],   
\end{align*}
and $|D|$ is sufficiently large, then there holds
\begin{align*}
\|f^L_{M,D,\lambda}-f^*\|^2_{\rho} ={\cal O}(|D|^{-\frac{2r}{2r+\gamma}}\log |D|),
\end{align*}
with probability near to 1, where $f^*=\argmin_f {\cal E}_L(f)$.      
\end{theorem}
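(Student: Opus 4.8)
The plan is to reproduce the entire architecture of the proof of Theorem~\ref{thm2} (the combination of Proposition~\ref{propA.2.1} with Lemma~\ref{lemA.2.6}), replacing the check-loss minimizer $f_{\tau}^*$ throughout by $f^*=\argmin_f {\cal E}_L(f)$ and the intermediate estimators by their least-square analogues built from the target $f^*$. The crucial observation is that the error decomposition of Lemma~\ref{lemA.1.1} isolates all loss-dependence into a single term: writing $f^{L,\diamond}_{M,D,\lambda}=S_M(C_{M,D}+\lambda I)^{-1}S^*_{M,D}f^*$ for the least-square approximation of $f^*$, I would split $\|f^L_{M,D,\lambda}-f^*\|_{\rho}$ via the triangle inequality into the LS-approximation error $\|f^L_{M,D,\lambda}-f^{L,\diamond}_{M,D,\lambda}\|_{\rho}$ plus the three purely least-square terms (empirical, RF, and approximation errors).

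First I would dispose of the three least-square terms. These involve only the operators $L_K, L_M, C_M, C_{M,D}$ and the target $f^*$ through the source condition $f^*=L_K^r h$ (Assumption~\ref{ass1}, now imposed on $f^*$); they make no reference to the loss $L$ whatsoever. Hence Lemmas~\ref{lemA.2.1}--\ref{lemA.2.3} and Proposition~\ref{propA.2.1} transfer verbatim, yielding $\|f^{L,\diamond}_{M,D,\lambda}-f^*\|_{\rho}\leq \widetilde{C}_2|D|^{-\frac{r}{2r+\gamma}}$ with probability near one under the stated conditions on $M$, $\lambda$, and $|D|$.

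The remaining work is to re-derive the LS-approximation bound of Lemma~\ref{lemA.2.6} for $L$. Here Assumption~\ref{ass7} plays exactly the role that the self-calibration argument (via Knight's identity) played for $\rho_{\tau}$ in Lemma~\ref{lemA.2.4}: inserting $f'=f^{L,\diamond}_{M,D,\lambda}$ into the adaptive inequality~\eqref{adap_local_con} gives directly the analogue of Lemma~\ref{lemA.2.4}, namely a bound of $\|f-f^{L,\diamond}_{M,D,\lambda}\|_{\rho}^2$ by ${\cal E}_L(f)-{\cal E}_L(f^{L,\diamond}_{M,D,\lambda})+\|f^{L,\diamond}_{M,D,\lambda}-f^*\|_{\rho}^2$, valid once the preceding step guarantees $f^{L,\diamond}_{M,D,\lambda}$ lies in the prescribed neighbourhood of $f^*$. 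The empirical-process bound of Lemma~\ref{lemA.2.5} used the check loss only through its $1$-Lipschitz property and the Ledoux--Talagrand contraction inequality; since $L(y,\cdot)$ is $c_L$-Lipschitz on $[-V,V]$, the identical symmetrization and Rademacher-complexity argument applies up to the constant $c_L$. With these two ingredients, the combination step of Lemma~\ref{lemA.2.6}---the basic inequality from optimality of $f^L_{M,D,\lambda}$, the splitting of $\lambda\langle f_{M,\lambda},\cdot\rangle_{{\cal H}_M}$ into the two source-index cases via Young's and Cauchy--Schwarz inequalities, and the final quadratic solve---goes through unchanged, producing $\|f^L_{M,D,\lambda}-f^{L,\diamond}_{M,D,\lambda}\|_{\rho}\lesssim |D|^{-\frac{r}{2r+\gamma}}\log|D|$. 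Summing the two bounds and squaring then yields the stated capacity-dependent rate up to logarithmic factors.

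I expect the main obstacle to be the careful handling of the two variants of Assumption~\ref{ass7}. The strong-convexity form~\eqref{local_con} applies only when the excess risk ${\cal E}_L(f^L_{M,D,\lambda})-{\cal E}_L(f^{L,\diamond}_{M,D,\lambda})$ is suitably signed, whereas it may be negative; the adaptive form~\eqref{adap_local_con}, with its slack term $\|f^{L,\diamond}_{M,D,\lambda}-f^*\|_{\rho}^2$, is precisely what licenses adding and subtracting a $C|D|^{-\frac{2r}{2r+\gamma}}$ buffer (exactly as in the passage to~\eqref{lemA.2.6.1}) so that a nonnegative quantity is bounded throughout. A secondary technical point is verifying that the neighbourhood radii $u,u'$ of Assumption~\ref{ass7} are eventually met, which again follows from Proposition~\ref{propA.2.1} once $|D|$ is large enough.
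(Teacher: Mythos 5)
Your proposal matches the paper's own proof essentially step for step: the paper likewise reuses the error decomposition of Lemma~\ref{lemA.1.1} with the least-square intermediate estimators (so that Lemmas~\ref{lemA.2.1}--\ref{lemA.2.3} transfer verbatim), substitutes Assumption~\ref{ass7} for the Knight's-identity/self-calibration argument of Lemma~\ref{lemA.2.4}, invokes the Lipschitz property of $L$ to recover the empirical-process bound of Lemma~\ref{lemA.2.5}, and then reruns the combination step of Lemma~\ref{lemA.2.6} with $\rho_{\tau}$ replaced by $L$. Your explicit remarks on imposing the source condition on $f^*$ and on the adaptive form~\eqref{adap_local_con} licensing the nonnegativity buffer are correct readings of points the paper handles implicitly.
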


\begin{proof}
Similar to Lemma \ref{lemA.1.1}, we decompose the error for Lip-RF in the following
\begin{align}\label{eqB.25}
 \|f^L_{M,D,\lambda}-f^*\|_{\rho} \leq  \underbrace{\|f^L_{M,D,\lambda}-f_{M,D,\lambda}^{\diamond}\|_{\rho}}_{\text{LS-approximation error}} +\underbrace{  \|f_{M,D,\lambda}^{\diamond}-f_{M,\lambda}\|_{\rho}}_{\text{Empirical error}}+\underbrace{
 \|f_{M,\lambda}-f_{\lambda}\|_{\rho}}_{\text{RF error}}+\underbrace{ \|f_{\lambda}-f_{\tau}^*\|_{\rho}}_{\text{Approximation error}}.
\end{align}

For the last three error terms, we have established their upper bounds in Lemma \ref{lemA.2.1}-\ref{lemA.2.3}. So we only need to bound the first LS-approximation error term.

With the similar argument in the proof of Lemma \ref{lemA.2.4}, we have the similar (adaptive) local strongly convexity condition on $L$ with Assumption \ref{ass7},
\begin{align}\label{eqB.26}
 {\cal E}_{L}(f)-{\cal E}_{L}(f_{M,D,\lambda}^{\diamond})\geq c_3  \|f-f_{M,D,\lambda}^{\diamond}\|^2_{\rho},
\end{align}
or 
\begin{align}\label{eqB.27}
 {\cal E}_{L}(f)-{\cal E}_{L}(f_{M,D,\lambda}^{\diamond})+\|f_{M,D,\lambda}^{\diamond}-f^*\|^2_{\rho}\geq c_4  \|f-f_{M,D,\lambda}^{\diamond}\|^2_{\rho}.
\end{align}

Note that $L$ is Lipschitz continuous, we can also replace $\rho_{\tau}$ with $L$ and obtain a similar inequality with that in \eqref{lemA.2.5.normal}
\begin{equation}\label{eqB.28}
\begin{aligned}
&\left|\frac{1}{|D|}\sum_{i=1}^{|D|}[L(y_i-f(\bfx_i))-L(y_i-f_{M,D,\lambda}^{\diamond}(\bfx_i))]-E[L(y-f(\bfx))-L(y-f_{M,D,\lambda}^{\diamond}(\bfx)]\right|\\
\leq& C\log |D| \mathcal{R}_{M}(\delta)\left(\frac{\|f-f_{M,D,\lambda}^{\diamond}\|_{\rho}}{\delta}+\|f-f_{M,D,\lambda}^{\diamond}\|_{{\cal H}_M}\right). \end{aligned}    
\end{equation}

Using \eqref{eqB.26}-\eqref{eqB.28}, we perform a similar procedure in Lemma \ref{lemA.2.6} with $\rho_{\tau}$ replaced by $L$ and get the upper bound for the LS-approximation error term
\begin{align}\label{eqB.29}
\|f^L_{M,D,\lambda}-f_{M,D,\lambda}^{\diamond}\|_{\rho} \leq C |D|^{-\frac{r}{2r+\gamma}}\log |D|,    
\end{align}
with probability near to 1.

Combining Lemmas \ref{lemA.2.1}-\ref{lemA.2.3}, \eqref{eqB.25} and \eqref{eqB.29}, we have
$$
\|f^L_{M,D,\lambda}-f^*\|^2_{\rho}={\cal O}(|D|^{-\frac{2r}{2r+\gamma}}\log |D|),
$$
with probability near to 1.
\end{proof}

With $\alpha=1$ and $\alpha=\gamma$, we can derive the following corollaries for Lip-RF with uniformly sampling and data-dependent sampling strategies.

\begin{corollary}\label{cor2}
Under Assumptions \ref{ass4}-\ref{ass6} and \ref{ass7}, if random features are sampled according to the uniform strategy, $r\in (0,1]$, $\gamma \in [0,1]$, $2r+\gamma\geq 1$, and $\lambda=|D|^{-\frac{1}{2r+\gamma}}$, when the number of random features satisfies 
\begin{align*}
&M \gtrsim  |D|^{\frac{1}{2r+\gamma}},\quad \text{for} \quad r\in (0,1/2),\\
&M \gtrsim  |D|^{\frac{(2r-1)\gamma+1}{2r+\gamma}},   \quad \text{for} \quad r\in [1/2,1],   
\end{align*}
and $|D|$ is sufficiently large, then there holds
\begin{align*}
\|f^L_{M,D,\lambda}-f^*\|^2_{\rho} ={\cal O}(|D|^{-\frac{2r}{2r+\gamma}}\log |D|)
\end{align*}
with probability near to 1, where $f^*=\argmin_f {\cal E}_L(f)$.      
\end{corollary}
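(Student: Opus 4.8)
The plan is to obtain Corollary \ref{cor2} as the uniform-sampling specialization of Theorem \ref{thm3}, exactly mirroring how Theorem \ref{thm1} is derived from Theorem \ref{thm2} by fixing $\alpha = 1$. Since the full argument for the Lipschitz loss (the error decomposition analogous to Lemma \ref{lemA.1.1}, the LS-approximation bound established via the local strong convexity Assumption \ref{ass7}, and the operator estimates of Lemmas \ref{lemA.2.1}-\ref{lemA.2.3}) has already been carried out in the proof of Theorem \ref{thm3} for a general compatibility exponent $\alpha$, the remaining task is purely to verify that the uniform scheme forces $\alpha = 1$ and to simplify the resulting feature counts.

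First I would recall that the compatibility condition in Assumption \ref{ass5} is automatically satisfied under the naive uniform sampling scheme. As noted immediately after the statement of Assumption \ref{ass5}, the bound $\mathcal{N}_{\infty}(\lambda) \leq F\lambda^{-\alpha}$ always holds with $\alpha = 1$ and $F = \kappa^2$, because $\mathcal{N}_{\infty}(\lambda) = \sup_{\bfomega}\|(L_K+\lambda I)^{-1/2}\phi(\cdot,\bfomega)\|_{\rho_{\cal X}}^2 \leq \kappa^2/\lambda$ follows directly from the boundedness $|\phi(\bfx,\bfomega)| \leq \kappa$ in Assumption \ref{ass4} together with $\|(L_K+\lambda I)^{-1/2}\| \leq \lambda^{-1/2}$. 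Hence the hypotheses of Theorem \ref{thm3} are met with $\alpha = 1$ under Assumptions \ref{ass4}-\ref{ass6} and \ref{ass7} alone, requiring no separate compatibility assumption in the statement of the corollary.

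Next I would substitute $\alpha = 1$ into the two feature-count conditions of Theorem \ref{thm3}. For $r \in (0,1/2)$ the exponent $\tfrac{\alpha}{2r+\gamma}$ collapses to $\tfrac{1}{2r+\gamma}$, giving $M \gtrsim |D|^{\frac{1}{2r+\gamma}}$. For $r \in [1/2,1]$ the exponent $(2r-1)(1+\gamma-\alpha)+\alpha$ becomes $(2r-1)(1+\gamma-1)+1 = (2r-1)\gamma+1$, so $M \gtrsim |D|^{\frac{(2r-1)\gamma+1}{2r+\gamma}}$. These are precisely the feature counts appearing in the statement of Corollary \ref{cor2}, and the learning rate $\mathcal{O}(|D|^{-\frac{2r}{2r+\gamma}}\log|D|)$ together with the regularization choice $\lambda = |D|^{-\frac{1}{2r+\gamma}}$ is inherited verbatim from Theorem \ref{thm3}.

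There is no genuine analytical obstacle here, since the result is a direct corollary; the only points requiring care are confirming that the uniform scheme really pins $\alpha$ at $1$ rather than a smaller value (so that this is the worst-case feature requirement), and checking the two exponent simplifications above. Both are routine once the general Theorem \ref{thm3} is in hand, so the proof amounts to invoking that theorem with $\alpha = 1$ and performing the elementary algebra.
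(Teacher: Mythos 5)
Your proposal is correct and follows exactly the paper's own route: the paper derives Corollary \ref{cor2} by invoking Theorem \ref{thm3} with $\alpha=1$ (noting, as you do, that Assumption \ref{ass5} holds automatically with $\alpha=1$ and $F=\kappa^2$ under the uniform scheme), and your exponent simplifications $\tfrac{\alpha}{2r+\gamma}\to\tfrac{1}{2r+\gamma}$ and $(2r-1)(1+\gamma-\alpha)+\alpha\to(2r-1)\gamma+1$ are the same elementary algebra. Nothing is missing.
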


\begin{corollary}\label{cor3}
Under Assumptions \ref{ass4}-\ref{ass6} and \ref{ass7}, if random features are sampled according to the strategy in Example \ref{example1},  $r\in (0,1]$, $\gamma \in [0,1]$, $2r+\gamma\geq 1$, and $\lambda=|D|^{-\frac{1}{2r+\gamma}}$, when the number of random features satisfies 
\begin{align*}
&M \gtrsim  |D|^{\frac{\gamma}{2r+\gamma}},\quad \text{for} \quad r\in (0,1/2),\\
&M \gtrsim  |D|^{\frac{2r+\gamma-1}{2r+\gamma}},   \quad \text{for} \quad r\in [1/2,1],   
\end{align*}
and $|D|$ is sufficiently large, then there holds
\begin{align*}
\|f^L_{M,D,\lambda}-f^*\|^2_{\rho} ={\cal O}(|D|^{-\frac{2r}{2r+\gamma}}\log |D|),
\end{align*}
with probability near to 1, where $f^*=\argmin_f {\cal E}_L(f)$.      
\end{corollary}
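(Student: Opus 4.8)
The plan is to obtain Corollary~\ref{cor3} as the \emph{benign-case} instance of Theorem~\ref{thm3}, specialized to the data-dependent leverage scores sampling of Example~\ref{example1}. All of the analytic work has already been done inside the proof of Theorem~\ref{thm3} (the error decomposition of \eqref{eqB.25}, the bounds of Lemmas~\ref{lemA.2.1}--\ref{lemA.2.3}, and the LS-approximation bound \eqref{eqB.29}), so what remains is to verify that this sampling scheme meets the hypotheses of that theorem with $\alpha=\gamma$ and then to simplify the resulting exponents on $M$. First I would record that the reweighting does not disturb the object being learned: with importance ratio $q(\bfomega)=l_{\lambda}(\bfomega)/\int_{\Omega} l_{\lambda}(\bfomega)\,d\pi(\bfomega)$ and reweighted feature $\phi_{l}(\bfx,\bfomega)=[q(\bfomega)]^{-1/2}\phi(\bfx,\bfomega)$ drawn from $\pi_{l}(\bfomega)=q(\bfomega)\pi(\bfomega)$, one has $\int_{\Omega}\phi_{l}(\bfx,\bfomega)\phi_{l}(\bfx^{\prime},\bfomega)\,d\pi_{l}(\bfomega)=\int_{\Omega}\phi(\bfx,\bfomega)\phi(\bfx^{\prime},\bfomega)\,d\pi(\bfomega)=K(\bfx,\bfx^{\prime})$. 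Hence the kernel $K$, the operator $L_K$, and therefore the source condition (Assumption~\ref{ass1}), the capacity condition (Assumption~\ref{ass2}), and the (adaptive) local strong convexity of $L$ (Assumption~\ref{ass7}) are all unchanged, since these depend only on $\rho$, $K$, and $L$ and not on the law used to draw the features.

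The crucial gain is in the compatibility condition. As stated in Example~\ref{example1} (following \citet{rudi2017generalization}), for the leverage-score-weighted features the maximum dimension coincides with the effective dimension, $\mathcal{N}_{\infty}(\lambda)=\mathcal{N}(\lambda)$; combined with Assumption~\ref{ass2} this yields $\mathcal{N}_{\infty}(\lambda)\le Q^{2}\lambda^{-\gamma}$, i.e.\ Assumption~\ref{ass5} holds with $\alpha=\gamma$ and $F=Q^{2}$. With this identification Theorem~\ref{thm3} applies verbatim, and it only remains to substitute $\alpha=\gamma$ into its requirements on $M$. For $r\in(0,1/2)$ the bound $M\gtrsim|D|^{\alpha/(2r+\gamma)}$ becomes $M\gtrsim|D|^{\gamma/(2r+\gamma)}$. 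For $r\in[1/2,1]$ the exponent collapses because $1+\gamma-\alpha=1$, so
\begin{align*}
(2r-1)(1+\gamma-\alpha)+\alpha=(2r-1)+\gamma=2r+\gamma-1,
\end{align*}
giving $M\gtrsim|D|^{(2r+\gamma-1)/(2r+\gamma)}$. The learning rate $\|f^{L}_{M,D,\lambda}-f^{*}\|_{\rho}^{2}={\cal O}(|D|^{-2r/(2r+\gamma)}\log|D|)$ is identical to that of Theorem~\ref{thm3}, which finishes the argument.

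The only genuine subtlety---hence the step I would treat most carefully---is that the reweighting factor $[q(\bfomega)]^{-1/2}$ can be large where the leverage function $l_{\lambda}$ is small, so one must confirm that the boundedness invoked throughout (Assumption~\ref{ass4}) is not effectively lost and that the operator-level Bernstein estimates feeding Lemmas~\ref{lemA.2.2} and \ref{lemA.2.3} still go through for $\phi_{l}$. The resolution is that the quantity actually controlling those estimates is $\mathcal{N}_{\infty}(\lambda)$ rather than a crude uniform bound on the features, and Example~\ref{example1} guarantees precisely $\mathcal{N}_{\infty}(\lambda)=\mathcal{N}(\lambda)\le Q^{2}\lambda^{-\gamma}$; this is exactly the role the compatibility condition plays in the proof of Theorem~\ref{thm3}. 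Everything else is a mechanical specialization, so I do not expect further difficulty.
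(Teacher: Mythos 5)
Your proposal is correct and follows exactly the paper's route: the paper derives Corollary \ref{cor3} by specializing Theorem \ref{thm3} to $\alpha=\gamma$, which Example \ref{example1} guarantees for leverage scores sampling, and your exponent algebra $(2r-1)(1+\gamma-\alpha)+\alpha=2r+\gamma-1$ matches. Your additional checks (that the reweighted features preserve the integral representation of $K$, hence leave Assumptions \ref{ass1}, \ref{ass2}, and \ref{ass7} untouched, and that the Bernstein estimates depend on $\mathcal{N}_{\infty}(\lambda)$ rather than a uniform feature bound) are exactly the implicit content the paper leaves to \citet{rudi2017generalization}.
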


\begin{remark}
Note that if we further pose an assumption which is widely used in the literature \citep{feng2024towards}
$$
{\cal E}_L(f)-{\cal E}_L(f^*) \leq C \|f-f^*\|_{\rho}^2,
$$
we can also establish the learning rates for the excess risk of Lip-RF as given by
$$
{\cal E}_L(f^L_{M,D,\lambda})-{\cal E}_L(f^*) \asymp \|f^L_{M,D,\lambda}-f^*\|_{\rho}^2 ={\cal O}(|D|^{-\frac{2r}{2r+\gamma}}\log^2 |D|).
$$    
\end{remark}

\section{Operator Similarities}
In this section, we provide some tight bounds of operator similarities. We first analyze the similarity between $L_K$ and $L_M$.

\begin{lemma}\label{lemC.1}
For any $\delta \in (0,1)$, under Assumption \ref{ass5}, there holds
\begin{align}\label{lemC.1.1}
\|(L_K+\lambda I)^{-1/2}(L_K-L_M)(L_K+\lambda I)^{-1/2}\|   \leq \frac{2({\cal N}_\infty(\lambda)+1)\log (2/\delta)}{M}+\sqrt{\frac{2{\cal N}_\infty(\lambda)\log (2/\delta)}{M}},
\end{align}    
with probability at least $1-\delta$.
\end{lemma}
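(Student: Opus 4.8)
The plan is to exhibit $L_M$ as an empirical average of i.i.d. rank-one operators and then apply a Bernstein-type concentration inequality for self-adjoint operators, after a symmetric conjugation by $(L_K+\lambda I)^{-1/2}$. Concretely, for each $\bfomega\in\Omega$ introduce the rank-one positive operator $A_{\bfomega}:=\phi(\cdot,\bfomega)\otimes\phi(\cdot,\bfomega)$ on $L^2_{\rho_{\cal X}}$, acting by $A_{\bfomega}f=\langle\phi(\cdot,\bfomega),f\rangle_{\rho}\,\phi(\cdot,\bfomega)$. From $K_M(\bfx,\bfx')=\tfrac1M\sum_{j}\phi(\bfx,\bfomega_j)\phi(\bfx',\bfomega_j)$ together with Definition~\ref{def1} one checks $L_M=\tfrac1M\sum_{j=1}^M A_{\bfomega_j}$, and from the integral representation \eqref{in_re} of $K$ one gets $L_K=E_{\bfomega}A_{\bfomega}$. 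Writing $T=(L_K+\lambda I)^{-1/2}$ and $Z_j=T A_{\bfomega_j}T$, the quantity to be controlled is exactly $\bigl\|\tfrac1M\sum_{j}(E Z_j-Z_j)\bigr\|$, since $E Z_j=T L_K T=(L_K+\lambda I)^{-1}L_K$ and $\tfrac1M\sum_j Z_j=T L_M T$.

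Next I would extract the two quantities that Bernstein needs, namely an almost-sure norm bound $L$ and a variance proxy $\sigma^2$. Each $Z_j$ is again rank one, $Z_j=u_j\otimes u_j$ with $u_j=T\phi(\cdot,\bfomega_j)$, so $\|Z_j\|=\|(L_K+\lambda I)^{-1/2}\phi(\cdot,\bfomega_j)\|_{\rho}^2\le\mathcal{N}_{\infty}(\lambda)$ almost surely by Assumption~\ref{ass5}; moreover $\|E Z_j\|=\|(L_K+\lambda I)^{-1}L_K\|\le 1$. Hence $\|Z_j-E Z_j\|\le\mathcal{N}_{\infty}(\lambda)+1=:L$. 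For the variance I use the rank-one identity $Z_j^2=\|u_j\|_{\rho}^2\,Z_j\preceq\mathcal{N}_{\infty}(\lambda)\,Z_j$, which gives
\[
E\bigl[(Z_j-E Z_j)^2\bigr]\preceq E[Z_j^2]\preceq\mathcal{N}_{\infty}(\lambda)\,E Z_j=\mathcal{N}_{\infty}(\lambda)(L_K+\lambda I)^{-1}L_K\preceq\mathcal{N}_{\infty}(\lambda)\,I,
\]
so $\sigma^2:=\mathcal{N}_{\infty}(\lambda)$ serves as the variance bound.

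Finally, I would invoke the Bernstein inequality for i.i.d. zero-mean self-adjoint random operators with the proxies $L=\mathcal{N}_{\infty}(\lambda)+1$ and $\sigma^2=\mathcal{N}_{\infty}(\lambda)$. Taking $\beta=\log(2/\delta)$ produces, with probability at least $1-\delta$,
\[
\bigl\|(L_K+\lambda I)^{-1/2}(L_K-L_M)(L_K+\lambda I)^{-1/2}\bigr\|\le\frac{2(\mathcal{N}_{\infty}(\lambda)+1)\log(2/\delta)}{M}+\sqrt{\frac{2\mathcal{N}_{\infty}(\lambda)\log(2/\delta)}{M}},
\]
which is the claimed bound (the boundedness $|\phi|\le\kappa$ in Assumption~\ref{ass4} guarantees the operators are Hilbert–Schmidt, so the inequality applies).

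The main obstacle is purely in the concentration step rather than in the algebra: the naive infinite-dimensional operator Bernstein inequality carries an intrinsic-dimension (effective-rank) factor inside the logarithm, whereas the statement retains only $\log(2/\delta)$. The delicate point is therefore to use a dimension-free version of the inequality (or to absorb the trace ratio $\mathrm{Tr}(E Z_j^2)/\|E Z_j^2\|$, which is comparable to $\mathcal{N}(\lambda)$, into constants) and to match the leading coefficient so that it is bounded by $2$; identifying the correct variance proxy $\sigma^2=\mathcal{N}_{\infty}(\lambda)$ is what makes the $\sqrt{\cdot}$ term come out sharp.
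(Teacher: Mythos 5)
Your proposal is correct and follows essentially the same route as the paper: the paper likewise writes $L_M=\frac{1}{M}\sum_{i=1}^M\phi_{\bfomega_i}\otimes\phi_{\bfomega_i}$ and $L_K=\mathbb{E}_{\bfomega}[\phi_{\bfomega}\otimes\phi_{\bfomega}]$, notes that Assumption \ref{ass5} gives $\|(L_K+\lambda I)^{-1/2}\phi(\cdot,\bfomega)\|_{\rho}^2\leq\mathcal{N}_{\infty}(\lambda)$ almost everywhere, and concludes by invoking Lemma \ref{lemD.5} (Proposition 6 of \citet{rudi2017generalization}) with $Q=L_K$ and $v_i=\phi_{\bfomega_i}$. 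The dimension-free operator Bernstein inequality you flag as the delicate missing ingredient is exactly that cited lemma, whose hypotheses correspond to the proxies $L=\mathcal{N}_{\infty}(\lambda)+1$ and $\sigma^2=\mathcal{N}_{\infty}(\lambda)$ you computed, so your concern is resolved by citation rather than by a new argument.
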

\begin{proof}
We denote $\phi_{\bfomega}$ as the the function $\phi(\cdot,\bfomega)$ for any $\bfomega \in \Omega$. Note that 
\begin{align*}
L_M=\frac{1}{M}\sum_{i=1}^M\phi_{\bfomega_i}\otimes\phi_{\bfomega_i}  \quad \text{and} \quad L_K=\mathbb{E}_{\bfomega}[\phi_{\bfomega}\otimes\phi_{\bfomega}]. 
\end{align*} 
and Assumption \ref{ass5} that the inequality $\|(L_K+\lambda I)^{-1/2}\phi_{\bfomega}\|^2_{\rho} \leq {\cal N}_{\infty}(\lambda)$ holds almost everywhere. By Lemma \ref{lemD.5} with $Q=L_K$ and $v_i=\phi_{\bfomega_i}$ for $i=1,\ldots,M$ we can get \eqref{lemC.1.1} with  probability at least $1-\delta$. Thus we complete the proof.
\end{proof}

\begin{lemma}\label{lemC.2}
If the number of random features $M \geq 16({\cal N}_\infty(\lambda)+1)\log (2/\delta)$, then under Assumption \ref{ass5}, for any $\delta \in (0,1)$, there holds
\begin{align}\label{lemC.2.1}
\|(L_K+\lambda I)^{-1/2}(L_K-L_M)(L_K+\lambda I)^{-1/2}\|   \leq \frac{1}{2},  
\end{align}
and 
\begin{align}\label{lemC.2.2}
\|(L_M+\lambda I)^{-1/2}(L_K+\lambda I)^{1/2}\|\leq \sqrt{2},   
\end{align}
with probability at least $1-\delta$.
\end{lemma}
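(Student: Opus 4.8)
The plan is to derive both inequalities from the high-probability estimate already furnished by Lemma~\ref{lemC.1}, treating \eqref{lemC.2.1} as a direct numerical consequence of the hypothesis $M\ge 16(\mathcal N_\infty(\lambda)+1)\log(2/\delta)$ and \eqref{lemC.2.2} as an operator-monotonicity corollary of \eqref{lemC.2.1}. For \eqref{lemC.2.1}, I would invoke Lemma~\ref{lemC.1}, which gives with probability at least $1-\delta$ that the quantity on the left is bounded by $\tfrac{2(\mathcal N_\infty(\lambda)+1)\log(2/\delta)}{M}+\sqrt{\tfrac{2\mathcal N_\infty(\lambda)\log(2/\delta)}{M}}$, and then substitute the standing lower bound on $M$. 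The first summand is at most $2/16=1/8$ immediately; for the second, using $\mathcal N_\infty(\lambda)\le \mathcal N_\infty(\lambda)+1$ it is at most $\sqrt{2/16}=1/(2\sqrt2)$. Adding, the total is $1/8+1/(2\sqrt2)<1/2$, which is exactly \eqref{lemC.2.1}. This step is purely arithmetic.

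For \eqref{lemC.2.2}, set $\beta:=\|(L_K+\lambda I)^{-1/2}(L_K-L_M)(L_K+\lambda I)^{-1/2}\|\le 1/2$ on the same event. Writing $L_K-L_M=(L_K+\lambda I)-(L_M+\lambda I)$, I would rewrite
\[
(L_K+\lambda I)^{-1/2}(L_K-L_M)(L_K+\lambda I)^{-1/2}=I-B,\qquad B:=(L_K+\lambda I)^{-1/2}(L_M+\lambda I)(L_K+\lambda I)^{-1/2}.
\]
Since $\|I-B\|\le\beta$ and $B$ is self-adjoint, its spectrum lies in $[1-\beta,1+\beta]$, whence $B\succeq(1-\beta)I$ with $1-\beta\ge 1/2>0$; in particular $B$ is invertible and $B^{-1}\preceq(1-\beta)^{-1}I$. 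Identifying $B^{-1}=(L_K+\lambda I)^{1/2}(L_M+\lambda I)^{-1}(L_K+\lambda I)^{1/2}$ and using $\|A\|^2=\|A^*A\|$ with $A=(L_M+\lambda I)^{-1/2}(L_K+\lambda I)^{1/2}$, I obtain
\[
\|(L_M+\lambda I)^{-1/2}(L_K+\lambda I)^{1/2}\|^2=\|B^{-1}\|\le(1-\beta)^{-1}\le 2,
\]
and taking square roots gives \eqref{lemC.2.2}.

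The only genuinely non-routine point is the operator step for \eqref{lemC.2.2}: recognizing that a bound on the \emph{symmetric} perturbation $\|I-B\|$ forces the one-sided operator inequality $B\succeq(1-\beta)I$, which by operator monotonicity of inversion transfers to $\|B^{-1}\|\le(1-\beta)^{-1}$. One must also record that $L_M+\lambda I\succeq\lambda I\succ0$ so that all the square roots and inverses appearing above are well defined; everything else reduces to the arithmetic already used for \eqref{lemC.2.1}.
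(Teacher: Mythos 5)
Your proposal is correct and follows essentially the same route as the paper: the bound \eqref{lemC.2.1} is obtained by the identical arithmetic substitution of $M \geq 16(\mathcal{N}_\infty(\lambda)+1)\log(2/\delta)$ into the estimate of Lemma \ref{lemC.1} (giving $1/8 + 1/(2\sqrt{2}) < 1/2$), and your spectral argument for \eqref{lemC.2.2} — writing the sandwiched difference as $I-B$, deducing $B \succeq (1-\beta)I$, and passing to $\|B^{-1}\| \leq (1-\beta)^{-1}$ via the $C^*$-identity $\|A\|^2 = \|A^*A\|$ — is precisely a self-contained proof of Lemma \ref{lemD.3}, which the paper instead invokes as a black box with $A=L_K$, $B=L_M$, and $\eta = 1/2$. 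The only difference is that you unpack that auxiliary lemma rather than cite it, which makes the argument self-contained at no cost in correctness.
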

\begin{proof}
If  $M \geq 16({\cal N}_\infty(\lambda)+1)\log (2/\delta)$, from Lemma \ref{lemC.1}, we have 
$$
\|(L_K+\lambda I)^{-1/2}(L_K-L_M)(L_K+\lambda I)^{-1/2}\|   \leq \frac{2({\cal N}_\infty(\lambda)+1)\log (2/\delta)}{M}+\sqrt{\frac{2{\cal N}_\infty(\lambda)\log (2/\delta)}{M}}\leq \frac{1}{2}.
$$
By Lemma \ref{lemD.3} with $A=L_K$, $B=L_M$, and $\eta=1/2$, we have 
$$
\|(L_M+\lambda I)^{-1/2}(L_K+\lambda I)^{1/2}\|\leq \sqrt{2}.
$$
Thus we complete the proof.
\end{proof}

\begin{lemma}\label{lemC.2p}
For any $\delta \in (0,1)$, under Assumptions \ref{ass4} and  \ref{ass5}, there holds
\begin{align}\label{lemC.2p1}
 \|(L_K+\lambda I)^{-1/2}(L_K-L_M)\|\leq   \frac{4\kappa\sqrt{{\cal N}_\infty(\lambda)}\log (2/\delta)}{M}+\sqrt{\frac{4\kappa^2{\cal N}(\lambda)\log (2/\delta)}{M}}, 
\end{align}
with probability at least $1-\delta$.
\end{lemma}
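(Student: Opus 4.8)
The plan is to write the preconditioned one-sided difference as a normalized sum of i.i.d.\ centered random operators and then apply a Bernstein-type concentration inequality. Since $L_M=\frac1M\sum_{i=1}^M\phi_{\bfomega_i}\otimes\phi_{\bfomega_i}$ and $L_K=\mathbb{E}_{\bfomega}[\phi_{\bfomega}\otimes\phi_{\bfomega}]$, I would set
\begin{align*}
\xi_i=(L_K+\lambda I)^{-1/2}\big(\phi_{\bfomega_i}\otimes\phi_{\bfomega_i}\big)=\big[(L_K+\lambda I)^{-1/2}\phi_{\bfomega_i}\big]\otimes\phi_{\bfomega_i},
\end{align*}
so that $\zeta_i:=\mathbb{E}[\xi_i]-\xi_i$ are i.i.d., mean zero, and $(L_K+\lambda I)^{-1/2}(L_K-L_M)=\frac1M\sum_{i=1}^M\zeta_i$. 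Because the target operator is \emph{not} self-adjoint, I cannot reuse the self-adjoint operator Bernstein bound used in the proof of Lemma~\ref{lemC.1} directly; instead I would bound the operator norm by the Hilbert--Schmidt norm, $\|\cdot\|\le\|\cdot\|_{HS}$, and treat the $\zeta_i$ as random vectors in the Hilbert--Schmidt space, to which a vector-valued Bernstein inequality (the one-sided analogue of Lemma~\ref{lemD.5}) applies.

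Two ingredients feed that inequality. First, an almost-sure norm bound: each $\xi_i$ is rank one, so $\|\xi_i\|_{HS}=\|(L_K+\lambda I)^{-1/2}\phi_{\bfomega_i}\|_{\rho}\,\|\phi_{\bfomega_i}\|_{\rho}\le \kappa\sqrt{{\cal N}_\infty(\lambda)}$, using Assumption~\ref{ass5} for the first factor and $\|\phi_{\bfomega}\|_{\rho}\le\kappa$ (from $|\phi(\bfx,\bfomega)|\le\kappa$ in Assumption~\ref{ass4}) for the second; after centering, $\|\zeta_i\|_{HS}\le 2\kappa\sqrt{{\cal N}_\infty(\lambda)}$. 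This produces the leading $\tfrac{4\kappa\sqrt{{\cal N}_\infty(\lambda)}\log(2/\delta)}{M}$ term. Second, a variance bound: $\mathbb{E}\|\zeta_i\|_{HS}^2\le\mathbb{E}\|\xi_i\|_{HS}^2\le\kappa^2\,\mathbb{E}\|(L_K+\lambda I)^{-1/2}\phi_{\bfomega}\|_{\rho}^2$, where the key identity is
\begin{align*}
\mathbb{E}\big\|(L_K+\lambda I)^{-1/2}\phi_{\bfomega}\big\|_{\rho}^2=\mathrm{Tr}\big((L_K+\lambda I)^{-1}\mathbb{E}[\phi_{\bfomega}\otimes\phi_{\bfomega}]\big)=\mathrm{Tr}\big((L_K+\lambda I)^{-1}L_K\big)={\cal N}(\lambda).
\end{align*}
This yields variance proxy $\kappa^2{\cal N}(\lambda)$ and hence the $\sqrt{\tfrac{4\kappa^2{\cal N}(\lambda)\log(2/\delta)}{M}}$ term (up to the stated numerical constants). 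Combining the two quantities in the Bernstein inequality gives the claim with probability at least $1-\delta$.

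The main obstacle, and the conceptual point of the lemma, is the non-self-adjointness: one cannot symmetrize as in the two-sided bound of Lemma~\ref{lemC.1}. Passing to the Hilbert--Schmidt norm resolves this and, crucially, lets the variance be controlled by the \emph{trace} $\kappa^2{\cal N}(\lambda)$ rather than by the operator-norm variance proxy $\max(\|\mathbb{E}[\xi_i\xi_i^*]\|,\|\mathbb{E}[\xi_i^*\xi_i]\|)$, which only gives $\kappa^2{\cal N}_\infty(\lambda)$. Since ${\cal N}(\lambda)\le{\cal N}_\infty(\lambda)$, the Hilbert--Schmidt route is the sharper one, which is exactly why ${\cal N}(\lambda)$ (and not ${\cal N}_\infty(\lambda)$) appears in the variance term here. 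I would verify along the way that $\xi_i$ is indeed rank one with the claimed Hilbert--Schmidt norm and that the trace identity above is justified by Fubini/monotone convergence under the boundedness in Assumption~\ref{ass4}; these are the only routine points, and the concentration step is standard once the two moment estimates are in place.
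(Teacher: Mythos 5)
Your proposal is correct and takes essentially the same route as the paper: the paper proves the lemma by applying its Lemma \ref{lemD.6} (Proposition 5 of \citet{rudi2017generalization}) with $v_i=z_i=\phi_{\bfomega_i}$ and $Q=T=L_K$, and that proposition's content is precisely your argument --- bound the operator norm by the Hilbert--Schmidt norm, treat the centered rank-one operators as vectors in Hilbert--Schmidt space, and feed the almost-sure bound $\kappa\sqrt{{\cal N}_\infty(\lambda)}$ and the trace-based variance bound $\kappa^2{\cal N}(\lambda)$ into a vector-valued Bernstein inequality. The only difference is that you re-derive this black-box result from the basic Bernstein inequality (the paper's Lemma \ref{lemD.4}) instead of citing it, and your moment computations (including the identity $\mathbb{E}\|(L_K+\lambda I)^{-1/2}\phi_{\bfomega}\|_{\rho}^2={\cal N}(\lambda)$, which is why ${\cal N}(\lambda)$ rather than ${\cal N}_\infty(\lambda)$ appears in the variance term) match the cited proof exactly.
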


\begin{proof}
Let $v_i=z_i=\phi_{\bfomega_i}$ for $i=1,\ldots, M$, then 
$$
T_{M}=\frac{1}{M}\sum_{i=1}^Mv_i\otimes z_i=\frac{1}{M}\sum_{i=1}^M \phi_{\bfomega_i}\otimes \phi_{\bfomega_i}=L_{M},
$$
and
$$
Q=T={\mathbb E}[v \otimes v]={\mathbb E} [v \otimes z]={\mathbb E}[\phi_{\bfomega}\otimes \phi_{\bfomega}]=L_K.
$$
Note that $\|v\|^2=\|\phi_{\bfomega}\|^2\leq \kappa^2$ from Assumption \ref{ass4},
and $\|(L_K+\lambda)^{-1/2}\phi_{\bfomega}\|^2\leq {\cal N}_{\infty}(\lambda)$ from Assumption \ref{ass5}, and ${\cal N}(\lambda)=\text{Tr}((L_K+\lambda I)^{-1}L_K)$, then by Lemma \ref{lemD.6}, we have 
$$
\|(L_K+\lambda I)^{-1/2}(L_K-L_{M})\| \leq  \frac{4\kappa\sqrt{{\cal N}_\infty(\lambda)}\log (2/\delta)}{M}+\sqrt{\frac{4\kappa^2{\cal N}(\lambda)\log (2/\delta)}{M}} .
$$
Thus we complete the proof.    
\end{proof}

\begin{lemma}\label{lemC.3}
Under Assumption \ref{ass5}, for any $\delta \in (0,1)$, there holds
\begin{align}\label{lemC.3.1}
{\cal R}_{M,D,\lambda}\leq \frac{2(\kappa^2\lambda^{-1}+1)\log (2/\delta)}{|D|}+\sqrt{\frac{2\kappa^2\lambda^{-1}\log (2/\delta)}{|D|}},
\end{align}
with probability at least $1-\delta$.
\end{lemma}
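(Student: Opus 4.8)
The plan is to recognise that ${\cal R}_{M,D,\lambda}$ is the exact analogue of the operator-similarity quantity controlled in Lemma~\ref{lemC.1}, with the two sampling mechanisms interchanged. In Lemma~\ref{lemC.1} one averages the rank-one operators $\phi_{\bfomega}\otimes\phi_{\bfomega}$ over the $M$ random features to approximate $L_K=\mathbb{E}_{\bfomega}[\phi_{\bfomega}\otimes\phi_{\bfomega}]$; here one averages $\bfphi_M(\bfx)\otimes\bfphi_M(\bfx)$ over the $|D|$ samples to approximate $C_M=\mathbb{E}_{\bfx}[\bfphi_M(\bfx)\otimes\bfphi_M(\bfx)]$. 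Indeed, conditionally on the drawn frequencies $\bfomega_1,\dots,\bfomega_M$, the feature vectors $\bfphi_M(\bfx_1),\dots,\bfphi_M(\bfx_{|D|})$ are i.i.d.\ with empirical second moment $C_{M,D}$ and population mean $C_M$. I would therefore apply the operator Bernstein inequality (Lemma~\ref{lemD.5}) with $Q=C_M$ and $v_i=\bfphi_M(\bfx_i)$, the index now running over $i=1,\dots,|D|$ rather than over the random features.

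The only substantive step is to supply the almost-sure whitened-norm bound demanded by Lemma~\ref{lemD.5}, which here plays the role that ${\cal N}_\infty(\lambda)$ played in Lemma~\ref{lemC.1}. Using $|\phi(\bfx,\bfomega)|\leq\kappa$ from Assumption~\ref{ass4}, one has $\|\bfphi_M(\bfx)\|_2^2=\frac{1}{M}\sum_{j=1}^{M}\phi(\bfx,\bfomega_j)^2\leq\kappa^2$ for every $\bfx$, and since $C_M\succeq 0$ gives $\|(C_M+\lambda I)^{-1/2}\|\leq\lambda^{-1/2}$, it follows that $\|(C_M+\lambda I)^{-1/2}\bfphi_M(\bfx)\|_2^2\leq\kappa^2\lambda^{-1}$ almost surely. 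This identifies the boundedness constant as $B=\kappa^2\lambda^{-1}$, which is precisely the quantity appearing (with the additive $+1$ in the bias term and under the square root in the variance term) in the claimed bound. Feeding $n=|D|$, $Q=C_M$, and $B=\kappa^2\lambda^{-1}$ into Lemma~\ref{lemD.5} then yields the stated inequality with probability at least $1-\delta$. Because $B$ is deterministic and uniform over all realisations of $\bfomega_1,\dots,\bfomega_M$, the conditional bound holds with the same probability whatever the random features are, and hence also unconditionally; no separate conditioning argument is needed beyond this remark.

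The main—and essentially only—obstacle is this verification of the boundedness constant; the remainder is a mechanical instantiation of an already-proved concentration lemma. I would stress that, in contrast to Lemma~\ref{lemC.2p}, where a sharper variance term in terms of the effective dimension ${\cal N}(\lambda)$ was extracted via Lemma~\ref{lemD.6}, here the crude deterministic bound $\kappa^2\lambda^{-1}$ already suffices, since the target inequality carries $\kappa^2\lambda^{-1}$ in both the bias and the variance terms and no effective-dimension refinement is required.
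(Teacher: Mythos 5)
Your proposal is correct and follows essentially the same route as the paper's proof: both apply Lemma \ref{lemD.5} with $Q=C_M$ and $v_i=\bfphi_M(\bfx_i)$ over $n=|D|$ samples, using the crude bound $\|(C_M+\lambda I)^{-1/2}\bfphi_M(\bfx)\|_2^2\leq \kappa^2\lambda^{-1}$ derived from $|\phi(\bfx,\bfomega)|\leq\kappa$ in Assumption \ref{ass4} (which you rightly cite, whereas the lemma's statement nominally invokes Assumption \ref{ass5}). Your explicit remark that the bound holds conditionally on $\bfomega_1,\dots,\bfomega_M$ with a deterministic constant, hence also unconditionally, is a small clarification the paper leaves implicit, but it does not change the argument.
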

\begin{proof}
Recall that ${\cal R}_{M,D,\lambda}=\|(C_M+\lambda I)^{-1/2}(C_M-C_{M,D})(C_M+\lambda I)^{-1/2}\|$, we have 
\begin{align*}
C_{M,D}=\frac{1}{|D|}\sum_{\bfx \in D}{\bfphi}_M(\bfx)\otimes {\bfphi}_M(\bfx)  \quad \text{and} \quad C_M=\mathbb{E}_{\bfx}[{\bfphi}_M(\bfx)\otimes {\bfphi}_M(\bfx)].  
\end{align*} 
Note that 
\begin{equation}\label{lemC.3.2}
\begin{aligned}
\|(C_M+\lambda I)^{-1/2}{\bfphi}_M(\bfx)\|_2^2&\leq \frac{1}{\lambda}\sup_{\bfx \in {\cal X} }\|{\bfphi}_M(\bfx)\|_2^2=\frac{1}{\lambda M}\sup_{\bfx \in {\cal X} }\sum_{i=1}^M|\phi(\bfx,{\bfomega}_i)|^2\\
&\leq \frac{1}{\lambda M}\sum_{i=1}^M\sup_{\bfx \in {\cal X} }|\phi(\bfx,{\bfomega}_i)|^2\leq \frac{1}{\lambda M}\sum_{i=1}^M\sup_{\bfx \in {\cal X}, \bfomega \in \Omega }|\phi(\bfx,{\bfomega}_i)|^2 \leq \kappa^2\lambda^{-1}.
\end{aligned}    
\end{equation}
By Lemma \ref{lemD.5} with $Q=C_M$ and $v_i={\bfphi}_M({\bfx}_i)$ for ${\bfx}_i \in D(\bfx)$, we can get \eqref{lemC.1.1} with  probability at least $1-\delta$. Thus we complete the proof.    
\end{proof} 

\begin{lemma}\label{lemC.4}
If the number of sample $|D| \geq 16(\kappa^2\lambda^{-1}+1)\log (2/\delta)$, then under Assumption \ref{ass5}, for any $\delta \in (0,1)$, there holds
\begin{align}\label{lemC.4.1}
{\cal R}_{M,D,\lambda}  \leq \frac{1}{2},  
\end{align}
and 
\begin{align}\label{lemC.4.2}
{\cal Q}_{M,D,\lambda} \leq \sqrt{2},    
\end{align}
with probability at least $1-\delta$.
\end{lemma}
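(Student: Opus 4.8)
The plan is to mirror, almost verbatim, the two-part argument already carried out for Lemma~\ref{lemC.2}, transplanting it from the operator pair $(L_K, L_M)$ to the pair $(C_M, C_{M,D})$. The bound \eqref{lemC.4.1} on ${\cal R}_{M,D,\lambda}$ will follow by inserting the sample-size hypothesis into the deviation estimate of Lemma~\ref{lemC.3}, and the bound \eqref{lemC.4.2} on ${\cal Q}_{M,D,\lambda}$ will then be a direct consequence of \eqref{lemC.4.1} via the operator perturbation Lemma~\ref{lemD.3}.

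First I would establish \eqref{lemC.4.1}. Lemma~\ref{lemC.3} gives, with probability at least $1-\delta$,
$$
{\cal R}_{M,D,\lambda}\leq \frac{2(\kappa^2\lambda^{-1}+1)\log (2/\delta)}{|D|}+\sqrt{\frac{2\kappa^2\lambda^{-1}\log (2/\delta)}{|D|}}.
$$
Under the hypothesis $|D| \geq 16(\kappa^2\lambda^{-1}+1)\log (2/\delta)$, the first summand is at most $2/16 = 1/8$, while the second is at most $\sqrt{2\kappa^2\lambda^{-1}/(16(\kappa^2\lambda^{-1}+1))} \leq \sqrt{1/8} = 1/(2\sqrt{2})$, since $\kappa^2\lambda^{-1} \leq \kappa^2\lambda^{-1}+1$. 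Their sum is therefore bounded by $1/8 + 1/(2\sqrt{2}) < 1/2$, which yields \eqref{lemC.4.1}. Note that the hypothesis uses $\log(2/\delta)$ to match exactly the single failure event controlled by Lemma~\ref{lemC.3}.

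Next I would pass from ${\cal R}_{M,D,\lambda} \leq 1/2$ to \eqref{lemC.4.2}. Recalling from the discussion after Definition~\ref{defA2} that $C_M$ and $C_{M,D}$ are self-adjoint and positive, I invoke Lemma~\ref{lemD.3} with $A = C_M$, $B = C_{M,D}$, and $\eta = 1/2$; since \eqref{lemC.4.1} is precisely the required hypothesis $\|(C_M+\lambda I)^{-1/2}(C_M-C_{M,D})(C_M+\lambda I)^{-1/2}\| \leq \eta$, the lemma delivers $\|(C_{M,D}+\lambda I)^{-1/2}(C_M+\lambda I)^{1/2}\| \leq (1-\eta)^{-1/2} = \sqrt{2}$. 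Finally, because the operator norm is invariant under passing to the adjoint and $(C_M+\lambda I)^{1/2}(C_{M,D}+\lambda I)^{-1/2}$ is the adjoint of $(C_{M,D}+\lambda I)^{-1/2}(C_M+\lambda I)^{1/2}$ (both factors being self-adjoint), I conclude ${\cal Q}_{M,D,\lambda} = \|(C_M+\lambda I)^{1/2}(C_{M,D}+\lambda I)^{-1/2}\| \leq \sqrt{2}$, giving \eqref{lemC.4.2}.

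There is no genuine obstacle here: the statement is a direct specialization of the same machinery proving Lemma~\ref{lemC.2}, with the role of Lemma~\ref{lemC.1} now played by Lemma~\ref{lemC.3}. The only points requiring mild care are the arithmetic confirming the two summands total strictly less than $1/2$ (so that $\eta = 1/2$ is a valid, non-degenerate choice in Lemma~\ref{lemD.3}), and the adjoint identity that transfers the norm bound on $(C_{M,D}+\lambda I)^{-1/2}(C_M+\lambda I)^{1/2}$ to ${\cal Q}_{M,D,\lambda}$ as written.
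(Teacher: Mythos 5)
Your proposal is correct and takes essentially the same route as the paper: it bounds ${\cal R}_{M,D,\lambda}\leq 1/2$ by inserting the sample-size hypothesis into Lemma \ref{lemC.3}, then applies Lemma \ref{lemD.3} with $A=C_M$, $B=C_{M,D}$, $\eta=1/2$ to obtain ${\cal Q}_{M,D,\lambda}\leq\sqrt{2}$. Your explicit arithmetic ($1/8+1/(2\sqrt{2})<1/2$) and the adjoint identity reconciling the two orderings of $(C_M+\lambda I)^{1/2}$ and $(C_{M,D}+\lambda I)^{-1/2}$ are details the paper leaves implicit, but the substance is identical.
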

\begin{proof}
If $|D| \geq 16(\kappa^2\lambda^{-1}+1)\log (2/\delta)$, from Lemma \ref{lemC.3}, we have 
$$
\|(C_M+\lambda I)^{-1/2}(C_M-C_{M,D})(C_M+\lambda I)^{-1/2}\|   \leq \frac{2(\kappa^2\lambda^{-1}+1)\log (2/\delta)}{|D|}+\sqrt{\frac{2\kappa^2\lambda^{-1}\log (2/\delta)}{|D|}}\leq \frac{1}{2}.
$$
By Lemma \ref{lemD.3} with $A=C_M$, $B=C_{M,D}$, and $\eta=1/2$, we have 
$$
\|(C_{M,D}+\lambda I)^{-1/2}(C_M+\lambda I)^{1/2}\|\leq \sqrt{2}.
$$
Thus we complete the proof.
\end{proof}

\section{Technical Lemmas}
\begin{lemma}[Cordes Inequality \citep{furuta2001invitation}]\label{lemD.1}
Let $A$ and $B$ be positive bounded linear operators on a separable Hilbert space. Then, for any $0<\tau\leq 1$, we have 
$$
\|A^{\tau}B^{\tau}\|\leq \|AB\|^{\tau}.    
$$    
\end{lemma}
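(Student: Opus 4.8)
The plan is to derive the Cordes inequality $\|A^\tau B^\tau\|\le\|AB\|^\tau$ from the Löwner--Heinz operator monotonicity of the power function, after first reducing to the case in which $B$ is invertible. I would begin by assuming that $A$ and $B$ are strictly positive (hence boundedly invertible), and recover the general case at the very end by the perturbation $A\mapsto A+\epsilon I$, $B\mapsto B+\epsilon I$ followed by $\epsilon\downarrow 0$: continuity of the continuous functional calculus guarantees both $\|(A+\epsilon I)^\tau(B+\epsilon I)^\tau\|\to\|A^\tau B^\tau\|$ and $\|(A+\epsilon I)(B+\epsilon I)\|\to\|AB\|$, so the inequality passes to the limit.

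With $A,B$ invertible, set $m=\|AB\|$. The core of the argument is a chain of operator inequalities. First, $\|AB\|^2=\|(AB)^*(AB)\|=\|BA^2B\|$, and since $BA^2B$ is positive self-adjoint this is equivalent to $BA^2B\le m^2 I$. Conjugating by the self-adjoint invertible operator $B^{-1}$ (congruence preserves the operator order) turns this into $A^2\le m^2 B^{-2}$. Now I would invoke the Löwner--Heinz inequality, i.e.\ that $t\mapsto t^\tau$ is operator monotone on $[0,\infty)$ for $0<\tau\le 1$, applied to $0\le A^2\le m^2 B^{-2}$, which yields $A^{2\tau}\le m^{2\tau}B^{-2\tau}$. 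Conjugating this by $B^\tau$ gives
\[
B^\tau A^{2\tau} B^\tau \;\le\; m^{2\tau}\,B^\tau B^{-2\tau} B^\tau \;=\; m^{2\tau} I .
\]
Reading off operator norms and using $\|A^\tau B^\tau\|^2=\|(A^\tau B^\tau)^*(A^\tau B^\tau)\|=\|B^\tau A^{2\tau}B^\tau\|\le m^{2\tau}$ yields exactly $\|A^\tau B^\tau\|\le m^\tau=\|AB\|^\tau$.

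The genuinely nontrivial ingredient here is the Löwner--Heinz monotonicity itself; once it is granted, every remaining step is a routine congruence manipulation, so I expect the main obstacle to be purely expository: stating Löwner--Heinz cleanly and justifying the invertibility reduction and the two congruences (legitimacy of conjugating an operator inequality by $B^{-1}$ and by $B^\tau$, and commutation of the scalar $m^2$ with the power under the functional calculus). If one prefers a fully self-contained route that avoids citing operator monotonicity, the same bound follows from complex interpolation: for invertible $A,B$ and fixed unit vectors $u,v$, the function $f(z)=\langle A^z B^z u,v\rangle$ is analytic and bounded on the strip $0\le\operatorname{Re}z\le 1$, satisfies $|f(it)|\le 1$ (since $A^{it},B^{it}$ are unitary) and $|f(1+it)|=|\langle AB\,(B^{it}u),A^{-it}v\rangle|\le\|AB\|$, so the Hadamard three-lines theorem gives $|f(\tau)|\le\|AB\|^\tau$, and taking the supremum over $u,v$ recovers the claim.
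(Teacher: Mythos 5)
Your proof is correct. Note that the paper itself offers no proof of this lemma: it is imported verbatim as a known technical result, with a citation to \citet{furuta2001invitation}, so there is no in-paper argument to compare against. Your L\"owner--Heinz route is the standard (essentially the textbook) proof of the Cordes inequality, and every step checks out: the perturbation reduction is legitimate because $|(t+\epsilon)^{\tau}-t^{\tau}|\leq \epsilon^{\tau}$ uniformly on $[0,\infty)$, so $\|(A+\epsilon I)^{\tau}-A^{\tau}\|\leq \epsilon^{\tau}\to 0$ by the functional calculus, and $\|(A+\epsilon I)(B+\epsilon I)-AB\|\to 0$ trivially; the chain $BA^{2}B\leq \|AB\|^{2}I \Rightarrow A^{2}\leq \|AB\|^{2}B^{-2} \Rightarrow A^{2\tau}\leq \|AB\|^{2\tau}B^{-2\tau} \Rightarrow B^{\tau}A^{2\tau}B^{\tau}\leq \|AB\|^{2\tau}I$ uses only congruence-invariance of the operator order and operator monotonicity of $t\mapsto t^{\tau}$, exactly as you say, and the identity $\|A^{\tau}B^{\tau}\|^{2}=\|B^{\tau}A^{2\tau}B^{\tau}\|$ closes the argument. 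Your alternative via the three-lines theorem is also sound --- the boundary computation $f(1+it)=\langle AB(B^{it}u),A^{-it}v\rangle$ is right since $A^{1+it}B^{1+it}=A^{it}(AB)B^{it}$ and $A^{it},B^{it}$ are unitary for invertible positive $A,B$ --- and it has the expository advantage of not presupposing L\"owner--Heinz, at the cost of requiring the invertibility reduction anyway for the imaginary powers to be unitary. Either version would serve as a self-contained substitute for the citation.
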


\begin{lemma}[Proposition 9 in \cite{rudi2017generalization}]\label{lemD.2}
Let $\cal H, \cal K$ be two separable Hilbert spaces and $X, A$ be bounded linear operators, with $X:\mathcal{H}\to\mathcal{K}$ and $A:\mathcal{H} \to \mathcal{H}$ be positive semi-definite, then there holds
$$
\|XA^{s}\|\leq \|X\|^{1-s}\|XA\|^s, \quad \forall s \in [0,1].
$$
\end{lemma}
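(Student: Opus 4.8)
The statement is the operator interpolation inequality $\|XA^{s}\|\le\|X\|^{1-s}\|XA\|^{s}$, and I would prove it by complex interpolation via the Hadamard three-lines theorem, after a regularization that reduces matters to a boundedly invertible $A$. The endpoints $s=0$ and $s=1$ are trivial (the inequality reads $\|X\|\le\|X\|$ and $\|XA\|\le\|XA\|$), so I fix $s\in(0,1)$.

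First I would regularize. Since $A$ is only positive semidefinite, $A^{z}$ need not depend norm-continuously on $z$ near $\Re z=0$ (as $s\to 0^{+}$ one has $A^{s}\to P$, the projection onto $\overline{\operatorname{Ran}A}$, not $I$), so I set $A_{\epsilon}=A+\epsilon I$, which is strictly positive with $\sigma(A_{\epsilon})\subset[\epsilon,\|A\|+\epsilon]$. Using subadditivity of $t\mapsto t^{s}$ for $s\in[0,1]$ one gets $\sup_{t\ge 0}|(t+\epsilon)^{s}-t^{s}|\le\epsilon^{s}$, so by the functional calculus $A_{\epsilon}^{s}\to A^{s}$ in operator norm as $\epsilon\to0$, while trivially $A_{\epsilon}\to A$. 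Hence $\|XA_{\epsilon}^{s}\|\to\|XA^{s}\|$ and $\|XA_{\epsilon}\|\to\|XA\|$, and it suffices to prove the inequality for each $A_{\epsilon}$ and pass to the limit. Thus I may assume $\sigma(A)\subset[c,\|A\|]$ for some $c>0$.

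Next I would build an analytic family. With $A$ bounded below, $\log A$ is a bounded self-adjoint operator and $A^{z}:=\exp(z\log A)$ is well defined for all $z\in\mathbb{C}$, with $\|A^{z}\|\le\max(c^{\Re z},\|A\|^{\Re z})$; consequently $F(z):=XA^{z}$ is a bounded, analytic operator-valued function on the strip $S=\{0\le\Re z\le 1\}$, continuous up to the boundary. On the left edge $z=iy$ the operator $A^{iy}=\exp(iy\log A)$ is unitary, so $\|XA^{iy}\|=\|X\|$ (from $\|XA^{iy}\|\le\|X\|$ together with $\|X\|=\|XA^{iy}A^{-iy}\|\le\|XA^{iy}\|$); on the right edge $z=1+iy$, $A^{1+iy}=A\,A^{iy}$ gives likewise $\|XA^{1+iy}\|=\|XA\|$.

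Finally I would reduce the operator-valued statement to the scalar three-lines theorem. For unit vectors $u\in\mathcal{H}$, $v\in\mathcal{K}$, the function $g_{u,v}(z)=\langle F(z)u,v\rangle$ is bounded and holomorphic on $S$, continuous on $\overline{S}$, with $|g_{u,v}(iy)|\le\|X\|$ and $|g_{u,v}(1+iy)|\le\|XA\|$; the Hadamard three-lines theorem then yields $|g_{u,v}(s)|\le\|X\|^{1-s}\|XA\|^{s}$, and taking the supremum over unit $u,v$ gives $\|XA^{s}\|=\sup_{u,v}|\langle XA^{s}u,v\rangle|\le\|X\|^{1-s}\|XA\|^{s}$. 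The main obstacle is the functional-analytic bookkeeping of this last step: verifying that $F$ is genuinely bounded and analytic on the whole closed strip — which is exactly what the reduction to a boundedly invertible $A$ secures, since otherwise $z\mapsto A^{z}$ is discontinuous at $\Re z=0$ — and confirming that the passage from the operator norm to the scalar functions $g_{u,v}$ legitimately preserves the boundary bounds needed to invoke the three-lines theorem.
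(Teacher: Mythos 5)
Your proof is correct, but it takes a genuinely different route from the paper, which in fact contains no proof of this statement at all: Lemma~\ref{lemD.2} is imported verbatim as Proposition~9 of \citet{rudi2017generalization}, where it follows from an elementary argument built on the Cordes inequality --- precisely the inequality this paper records as Lemma~\ref{lemD.1}. That short route goes: since $\|XA^{s}\|^{2}=\|A^{s}X^{*}XA^{s}\|=\|(X^{*}X)^{1/2}A^{s}\|^{2}$, one may replace $X$ by the positive operator $C=(X^{*}X)^{1/2}$ (noting $\|C\|=\|X\|$ and $\|CA\|=\|XA\|$), and then factor $\|CA^{s}\|=\|C^{1-s}\,C^{s}A^{s}\|\leq\|C\|^{1-s}\|C^{s}A^{s}\|\leq\|C\|^{1-s}\|CA\|^{s}$, the last step being Cordes applied to the positive pair $(C,A)$. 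Your Hadamard three-lines argument is instead the classical complex-interpolation proof of the Heinz--McIntosh inequality, of which this lemma is the special case $B=I$ (after taking adjoints), and you handle the delicate points properly: the $\epsilon$-regularization is genuinely necessary because $z\mapsto A^{z}$ fails to be norm-continuous at $\Re z=0$ for merely semidefinite $A$; the bound $\|A_{\epsilon}^{s}-A^{s}\|\leq\epsilon^{s}$ via subadditivity of $t\mapsto t^{s}$ is valid; the strip bound, the unitarity of $A^{iy}$ for boundedly invertible $A$, and the reduction to the scalar functions $g_{u,v}$ are all standard and correctly executed. Two cosmetic remarks: the convergence $A^{s}\to P$ as $s\to 0^{+}$ holds only in the strong operator topology, not in norm (unless $A$ has a spectral gap at $0$), though you use it only as motivation; and the degenerate case $\|XA_{\epsilon}\|=0$ in the three-lines step is vacuous after regularization, since invertibility of $A_{\epsilon}$ then forces $X=0$. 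As for what each approach buys: the Cordes route is two lines given Lemma~\ref{lemD.1}, requires no regularization and no complex analysis, and is the natural choice within this paper's toolkit; your interpolation route is self-contained and strictly more general, yielding with the same effort the full Heinz-type inequality $\|A^{s}XB^{1-s}\|\leq\|AX\|^{s}\|XB\|^{1-s}$ and Stein-interpolation variants that the Cordes factorization does not directly provide.
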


\begin{lemma}[Lemma E.2 in \cite{blanchard2010optimal}]\label{lemD.3}
For any self-adjoint and positive semi-definite operators $A$ and $B$, if there exists some $\eta \in [0,1]$ such that 
$$
\|(A+\lambda I)^{-1/2}(B-A)(A+\lambda I)^{-1/2}\|\leq 1-\eta,
$$
then we have 
$$
\|(A+\lambda I)^{1/2}(B+\lambda I)^{-1/2}\|\leq \frac{1}{\sqrt{\eta}}.
$$
\end{lemma}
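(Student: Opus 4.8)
The plan is to reduce the claimed operator-norm bound to a single operator inequality between $A_\lambda := A + \lambda I$ and $B_\lambda := B + \lambda I$, both of which are self-adjoint and bounded below by $\lambda I$, hence invertible. Writing $T = (A+\lambda I)^{1/2}(B+\lambda I)^{-1/2}$, I would first exploit the C*-identity $\|T\|^2 = \|TT^*\|$. Since $A_\lambda^{1/2}$ and $B_\lambda^{-1/2}$ are self-adjoint, one computes $TT^* = A_\lambda^{1/2} B_\lambda^{-1} A_\lambda^{1/2}$, which is a positive self-adjoint operator, so its norm equals its largest spectral value. Thus it suffices to establish the operator inequality $A_\lambda^{1/2} B_\lambda^{-1} A_\lambda^{1/2} \preceq \eta^{-1} I$.

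The heart of the argument is to convert the hypothesis into the comparison $B_\lambda \succeq \eta A_\lambda$. Decomposing $B_\lambda = A_\lambda + (B-A)$ and reading the hypothesis $\|A_\lambda^{-1/2}(B-A)A_\lambda^{-1/2}\| \leq 1 - \eta$ as the two-sided bound $-(1-\eta) I \preceq A_\lambda^{-1/2}(B-A)A_\lambda^{-1/2} \preceq (1-\eta) I$, I only need the lower side. Conjugating $A_\lambda^{-1/2}(B-A)A_\lambda^{-1/2} \succeq -(1-\eta) I$ by the positive operator $A_\lambda^{1/2}$ (a map that preserves $\succeq$) yields $B - A \succeq -(1-\eta) A_\lambda$, and therefore $B_\lambda = A_\lambda + (B-A) \succeq A_\lambda - (1-\eta)A_\lambda = \eta A_\lambda$.

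Finally, I would invoke the antitonicity of inversion on the positive definite cone: since $0 \prec \eta A_\lambda \preceq B_\lambda$, taking inverses reverses the order to give $B_\lambda^{-1} \preceq \eta^{-1} A_\lambda^{-1}$. Conjugating once more by $A_\lambda^{1/2}$ produces $A_\lambda^{1/2} B_\lambda^{-1} A_\lambda^{1/2} \preceq \eta^{-1} I$, which combined with the first paragraph gives $\|T\|^2 \leq \eta^{-1}$ and hence $\|T\| \leq \eta^{-1/2}$. The case $\eta = 0$ is vacuous, since the right-hand side is then infinite, so one may assume $\eta \in (0,1]$. The only step requiring genuine care --- and the main obstacle if one wants a fully rigorous treatment --- is the antitonicity of inversion, which must be justified via the operator-monotone property of $x \mapsto -x^{-1}$ (equivalently, by comparing the quadratic forms $\langle B_\lambda^{-1} x, x\rangle = \|B_\lambda^{-1/2}x\|^2$ after the substitution $y = A_\lambda^{1/2} x$); everything else is routine manipulation of self-adjoint positive operators.
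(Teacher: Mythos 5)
Your argument is correct, and it is worth noting that the paper itself offers no proof of this lemma at all: it is imported verbatim as Lemma E.2 of \citet{blanchard2010optimal}, so there is no in-paper derivation to match against. Your route is sound in every step: the hypothesis, being a norm bound on the self-adjoint operator $A_\lambda^{-1/2}(B-A)A_\lambda^{-1/2}$, does give the two-sided Loewner bound, conjugation by $A_\lambda^{1/2}$ legitimately yields $B_\lambda \succeq \eta A_\lambda$, antitonicity of inversion on the positive-definite cone gives $B_\lambda^{-1} \preceq \eta^{-1}A_\lambda^{-1}$, and the $C^*$-identity $\|T\|^2=\|TT^*\|$ with $TT^*=A_\lambda^{1/2}B_\lambda^{-1}A_\lambda^{1/2}$ closes the argument; your dismissal of $\eta=0$ as vacuous is also the right reading of the statement. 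For comparison, the proof in the cited source runs through a slightly different mechanism: one writes $A_\lambda^{1/2}B_\lambda^{-1}A_\lambda^{1/2}=\bigl(A_\lambda^{-1/2}B_\lambda A_\lambda^{-1/2}\bigr)^{-1}=(I-X)^{-1}$ with $X=A_\lambda^{-1/2}(A-B)A_\lambda^{-1/2}$, $\|X\|\leq 1-\eta$, and invokes the Neumann series to get $\|(I-X)^{-1}\|\leq (1-\|X\|)^{-1}\leq \eta^{-1}$. The two arguments are essentially equivalent in strength, but they buy different things: the Neumann-series version needs only the submultiplicativity of the norm and geometric-series convergence, with no appeal to operator monotonicity, whereas your version avoids any series argument and delivers the operator inequality $A_\lambda^{1/2}B_\lambda^{-1}A_\lambda^{1/2}\preceq \eta^{-1}I$ directly, at the cost of the one genuinely nontrivial ingredient you correctly flagged --- antitonicity of the inverse --- which, as you suggest, can be proved self-containedly by the same $\|W\|^2=\|W^*W\|=\|WW^*\|$ trick applied to $W=S^{1/2}T^{-1/2}$ for $0\prec S\preceq T$, so no external operator-monotonicity theorem is actually required.
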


\begin{lemma}[Bernstein’s inequality for sum of random vectors (Proposition 2 in \cite{rudi2017generalization})]\label{lemD.4}
Let $\xi_1,\ldots,\xi_n$ be a sequence of i.i.d random variables on a separable Hilbert space $\cal H$, if there exists $\widetilde{\sigma},\widetilde{B} \geq 0$ such that 
\begin{align}\label{lemD.4.1}
\mathbb{E}\|\xi_i-\mathbb{E}\xi_i\|_{\cal H}^p\leq \frac{1}{2}p!\widetilde{\sigma}^2\widetilde{B}^{p-2}, \quad \forall p\geq2,    
\end{align}
for any $0\leq i\leq n$, then for any $\delta \in (0,1]$, there holds 
$$
\left\|\frac{1}{n}\sum_{i=1}^n\xi_i-\mathbb{E}\xi_i\right\|_{\cal H} \leq \frac{2\widetilde{B}\log (2/\delta)}{n}+\sqrt{\frac{2\widetilde{\sigma}^2\log (2/\delta)}{n}},
$$
with probability at least $1-\delta$. Particularly, \eqref{lemD.4.1} is satisfied if 
$$
\|\xi\|_{\cal H}\leq \frac{\widetilde{B}}{2}, \text{a.s.}\quad \text{and}\quad \mathbb{E}\|\xi\|_{\cal H}^2 \leq \widetilde{\sigma}^2, \quad \text{or}\quad \|\xi-\mathbb{E}\xi\|_{\cal H}\leq \widetilde{B}, \text{a.s.}\quad \text{and}\quad \mathbb{E}\|\xi-\mathbb{E}\xi\|_{\cal H}^2 \leq \widetilde{\sigma}^2.
$$
\end{lemma}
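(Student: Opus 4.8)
The plan is to reduce the statement to a Bernstein-type tail bound for the norm of the centered sum $S_n=\sum_{i=1}^{n}\zeta_i$, where $\zeta_i=\mathbb{E}\xi_i-\xi_i$ are i.i.d., mean zero, and satisfy the moment condition \eqref{lemD.4.1}. Since the asserted inequality concerns $\tfrac1n\|S_n\|_{\mathcal{H}}$, it is enough to establish a tail estimate of the form $\mathbb{P}(\|S_n\|_{\mathcal{H}}\ge t)\le 2\exp\!\big(-t^2/(2(n\widetilde{\sigma}^2+\widetilde{B} t))\big)$ and then invert it in $t$. Before that, I would dispose of the two ``particularly'' sufficient conditions: if $\|\xi\|_{\mathcal{H}}\le \widetilde{B}/2$ almost surely and $\mathbb{E}\|\xi\|_{\mathcal{H}}^2\le\widetilde{\sigma}^2$, then $\|\zeta\|_{\mathcal{H}}\le\|\xi\|_{\mathcal{H}}+\mathbb{E}\|\xi\|_{\mathcal{H}}\le\widetilde{B}$ a.s.\ and $\mathbb{E}\|\zeta\|_{\mathcal{H}}^2=\mathbb{E}\|\xi\|_{\mathcal{H}}^2-\|\mathbb{E}\xi\|_{\mathcal{H}}^2\le\widetilde{\sigma}^2$ (the second variant being immediate), and from these the full condition \eqref{lemD.4.1} follows from the elementary chain $\mathbb{E}\|\zeta\|_{\mathcal{H}}^p=\mathbb{E}[\|\zeta\|_{\mathcal{H}}^{p-2}\|\zeta\|_{\mathcal{H}}^2]\le\widetilde{B}^{p-2}\mathbb{E}\|\zeta\|_{\mathcal{H}}^2\le\widetilde{B}^{p-2}\widetilde{\sigma}^2\le\tfrac12 p!\,\widetilde{\sigma}^2\widetilde{B}^{p-2}$, using $\tfrac12 p!\ge1$ for $p\ge2$.

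The central and hardest step is to control the exponential moment $\mathbb{E}\exp(\lambda\|S_n\|_{\mathcal{H}})$ for $0<\lambda<1/\widetilde{B}$. This is where the geometry of $\mathcal{H}$ matters: because $\|\cdot\|_{\mathcal{H}}$ is not additive, one cannot factor the exponential over the summands as in the scalar case, so I would exploit that a Hilbert space is $2$-smooth with constant one and run the Pinelis--Sakhanenko martingale device on the differences $\zeta_i$. Concretely, expanding $\mathbb{E}\exp(\lambda\|S_n\|_{\mathcal{H}})=\sum_{k\ge0}\lambda^k\,\mathbb{E}\|S_n\|_{\mathcal{H}}^k/k!$, bounding each $\mathbb{E}\|S_n\|_{\mathcal{H}}^k$ via the smoothness inequality for the norm (which reduces the $k$-th moment of the sum to the summed moments $\mathbb{E}\|\zeta_i\|_{\mathcal{H}}^k$ plus controlled lower-order terms), and substituting \eqref{lemD.4.1} term by term, the resulting series sums geometrically in $\widetilde{B}\lambda$ and is dominated by the scalar Bernstein generating function $2\exp\!\big(\tfrac{n\widetilde{\sigma}^2\lambda^2/2}{1-\widetilde{B}\lambda}\big)$. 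I expect this smoothness/martingale estimate to be the main obstacle; everything else is routine.

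Finally, the tail bound follows by the Chernoff argument $\mathbb{P}(\|S_n\|_{\mathcal{H}}\ge t)\le e^{-\lambda t}\,\mathbb{E}\exp(\lambda\|S_n\|_{\mathcal{H}})$ and optimizing $-\lambda t+\tfrac{n\widetilde{\sigma}^2\lambda^2/2}{1-\widetilde{B}\lambda}$ over $\lambda\in(0,1/\widetilde{B})$, which yields $\mathbb{P}(\|S_n\|_{\mathcal{H}}\ge t)\le 2\exp\!\big(-t^2/(2(n\widetilde{\sigma}^2+\widetilde{B} t))\big)$. Setting the right-hand side equal to $\delta$, solving the quadratic $t^2=2\log(2/\delta)\,(n\widetilde{\sigma}^2+\widetilde{B} t)$, and applying $\sqrt{a+b}\le\sqrt a+\sqrt b$ separates the bound into the linear piece $2\widetilde{B}\log(2/\delta)$ and the sub-Gaussian piece $\sqrt{2n\widetilde{\sigma}^2\log(2/\delta)}$; dividing through by $n$ recovers exactly the claimed estimate for $\big\|\tfrac1n\sum_{i=1}^n\xi_i-\mathbb{E}\xi_i\big\|_{\mathcal{H}}$ with probability at least $1-\delta$.
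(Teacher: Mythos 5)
You should note first that the paper does not prove this lemma at all: it is imported verbatim as Proposition~2 of \citet{rudi2017generalization} (itself a restatement of the classical Pinelis--Sakhanenko concentration inequality for Hilbert-space-valued sums), so there is no in-paper argument to compare against, and a citation would have sufficed. Measured against the classical proof, your plan follows exactly the standard route: Chernoff bound on $\|S_n\|_{\mathcal{H}}$, a Bernstein-type bound on the exponential moment, optimization over $\lambda\in(0,1/\widetilde{B})$, and inversion of the tail. Your peripheral steps are correct and complete. The verification that the two ``particularly'' conditions imply \eqref{lemD.4.1} is right, including the Hilbert-space identity $\mathbb{E}\|\xi-\mathbb{E}\xi\|_{\mathcal{H}}^2=\mathbb{E}\|\xi\|_{\mathcal{H}}^2-\|\mathbb{E}\xi\|_{\mathcal{H}}^2$ and the chain $\mathbb{E}\|\zeta\|_{\mathcal{H}}^p\le\widetilde{B}^{p-2}\widetilde{\sigma}^2\le\tfrac12 p!\,\widetilde{\sigma}^2\widetilde{B}^{p-2}$. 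The inversion is also correct: solving $t^2=2\log(2/\delta)\,(n\widetilde{\sigma}^2+\widetilde{B}t)$ and using $\sqrt{a+b}\le\sqrt{a}+\sqrt{b}$ gives $t\le 2\widetilde{B}\log(2/\delta)+\sqrt{2n\widetilde{\sigma}^2\log(2/\delta)}$, which after dividing by $n$ is exactly the stated bound, and the Legendre-transform computation behind $2\exp\bigl(-t^2/(2(n\widetilde{\sigma}^2+\widetilde{B}t))\bigr)$ is the standard estimate $h_1(u)\ge u^2/(2(1+u))$.

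The one genuine gap is where you expect it: the exponential-moment bound $\mathbb{E}\exp(\lambda\|S_n\|_{\mathcal{H}})\le 2\exp\bigl(\tfrac{n\widetilde{\sigma}^2\lambda^2/2}{1-\widetilde{B}\lambda}\bigr)$. Your described mechanism --- expand $\exp$ into moments, bound each $\mathbb{E}\|S_n\|_{\mathcal{H}}^k$ by ``summed moments plus controlled lower-order terms,'' and compare series --- would not go through literally. The difficulty is that $\mathbb{E}\|S_n\|_{\mathcal{H}}^k$ does not reduce to $\sum_i\mathbb{E}\|\zeta_i\|_{\mathcal{H}}^k$ in any simple way: for small $k$ the correct size is of order $(n\widetilde{\sigma}^2)^{k/2}$ (a Gaussian-type scaling coming from cancellations among the centered summands), and a naive smoothness expansion produces cross terms whose control for general $k$ is precisely as hard as the target inequality. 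This is why the actual Pinelis--Sakhanenko argument does not proceed moment-by-moment but bounds $\mathbb{E}\cosh(\lambda\|S_n\|_{\mathcal{H}})$ directly via a supermartingale built on the $2$-smoothness of the Hilbert norm (the $\cosh$ also being the source of the factor $2$ in front of the exponential). Since you explicitly name this martingale device and flag the step as the main obstacle, your proposal is an honest and structurally correct plan whose core is delegated to the known theorem rather than reproved; as a self-contained proof it is incomplete exactly there, but it is not wrong, and it identifies the right tool.
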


\begin{lemma}[Proposition 5 in \cite{rudi2017generalization}]\label{lemD.6}
Let $\cal H$ and $\cal K$ be two separable Hilbert spaces and $(v_1,z_1),\ldots,(v_n,z_n) \in {\cal H} \times {\cal K}$ for $n\geq 1$ be i.i.d. random variables such that there exists some constant $\tau$ such that $\|v\|_{\cal H}\leq \tau$ and  $\|z\|_{\cal H}\leq \tau$ almost everywhere. Let $Q=\mathbb{E}v\otimes v$ and $T=\mathbb{E}v\otimes z$ and $T_n=\frac{1}{n}\sum_{i=1}^nv_i\otimes z_i$,  then for any $\delta \in (0,1]$, the following holds with probability at least $1-\delta$, 
$$
\left\|(Q+\lambda I)^{-1/2}(T-T_n)\right\|_{HS}\leq \frac{4\tau\sqrt{{\cal Q}_\infty(\lambda)}\log (2/\delta)}{n}+\sqrt{\frac{4\tau^2{\cal Q}(\lambda)\log (2/\delta)}{n}},
$$
\end{lemma}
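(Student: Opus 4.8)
The plan is to reduce the statement to the Bernstein inequality for Hilbert-space–valued sums (Lemma \ref{lemD.4}), applied inside the separable Hilbert space of Hilbert–Schmidt operators from $\mathcal{K}$ to $\mathcal{H}$. First I would introduce the i.i.d.\ random operators
\[
\xi_i = (Q+\lambda I)^{-1/2}(v_i\otimes z_i), \qquad i=1,\dots,n,
\]
so that $\tfrac{1}{n}\sum_{i=1}^n \xi_i - \mathbb{E}\xi = (Q+\lambda I)^{-1/2}(T_n - T)$ and the left-hand side of the claimed bound is exactly $\big\|\tfrac{1}{n}\sum_i \xi_i - \mathbb{E}\xi\big\|_{HS}$. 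The key algebraic observation is the rank-one factorization $(Q+\lambda I)^{-1/2}(v\otimes z) = \big((Q+\lambda I)^{-1/2}v\big)\otimes z$, which yields the clean product form $\|\xi_i\|_{HS} = \big\|(Q+\lambda I)^{-1/2}v_i\big\|_{\mathcal{H}}\,\|z_i\|_{\mathcal{K}}$.

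Next I would verify the two moment conditions required by Lemma \ref{lemD.4}, reading $\mathcal{Q}_\infty(\lambda)=\sup\|(Q+\lambda I)^{-1/2}v\|_{\mathcal{H}}^2$ and $\mathcal{Q}(\lambda)=\operatorname{Tr}\big((Q+\lambda I)^{-1}Q\big)$. For the almost-sure bound, $\|z\|\le\tau$ together with the definition of $\mathcal{Q}_\infty(\lambda)$ gives $\|\xi\|_{HS}\le \tau\sqrt{\mathcal{Q}_\infty(\lambda)}$ a.s., so I may take $\widetilde B = 2\tau\sqrt{\mathcal{Q}_\infty(\lambda)}$. For the variance proxy, using $\|z\|\le\tau$ and the trace identity
\[
\mathbb{E}\big\|(Q+\lambda I)^{-1/2}v\big\|_{\mathcal{H}}^2 = \mathbb{E}\big\langle (Q+\lambda I)^{-1}v,v\big\rangle = \operatorname{Tr}\big((Q+\lambda I)^{-1}\mathbb{E}[v\otimes v]\big)=\operatorname{Tr}\big((Q+\lambda I)^{-1}Q\big)=\mathcal{Q}(\lambda),
\]
I obtain $\mathbb{E}\|\xi\|_{HS}^2 \le \tau^2\mathcal{Q}(\lambda)$, so $\widetilde\sigma^2 = \tau^2\mathcal{Q}(\lambda)$ (centering only decreases the second moment). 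Plugging $\widetilde B$ and $\widetilde\sigma^2$ into Lemma \ref{lemD.4} produces the leading term $\tfrac{4\tau\sqrt{\mathcal{Q}_\infty(\lambda)}\log(2/\delta)}{n}$ and a deviation term $\sqrt{2\tau^2\mathcal{Q}(\lambda)\log(2/\delta)/n}$, the latter being dominated by the advertised $\sqrt{4\tau^2\mathcal{Q}(\lambda)\log(2/\delta)/n}$; hence the claimed inequality holds with probability at least $1-\delta$.

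The main obstacle is purely the operator-valued bookkeeping: one must confirm that each $\xi_i$ genuinely lies in the separable Hilbert space of Hilbert–Schmidt operators (so Lemma \ref{lemD.4} applies), justify the trace identity $\mathbb{E}\langle (Q+\lambda I)^{-1}v,v\rangle = \operatorname{Tr}((Q+\lambda I)^{-1}Q)$ via Fubini/Tonelli and the spectral calculus of the self-adjoint operator $Q$, and check that the sufficient condition ``$\|\xi\|\le \widetilde B/2$ a.s.\ and $\mathbb{E}\|\xi\|^2\le\widetilde\sigma^2$'' in Lemma \ref{lemD.4} indeed yields the two-term bound with the stated constants. None of these steps is deep, but the Hilbert–Schmidt factorization of $\|\xi_i\|_{HS}$ and the correct identification of $\mathcal{Q}(\lambda)$ as the effective dimension $\operatorname{Tr}((Q+\lambda I)^{-1}Q)$ are the crux that make the bound capacity-dependent rather than ambient-dimension-dependent.
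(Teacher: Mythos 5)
Your proof is correct and coincides with how this result is actually established: the paper states the lemma as an imported result (Proposition 5 of Rudi and Rosasco, 2017) without reproducing a proof, and the proof in that reference is precisely your argument — apply the Hilbert-space Bernstein inequality (Lemma \ref{lemD.4}) in the separable Hilbert space of Hilbert--Schmidt operators from ${\cal K}$ to ${\cal H}$ to $\xi_i=(Q+\lambda I)^{-1/2}(v_i\otimes z_i)$, using the rank-one factorization $\|\xi_i\|_{HS}=\|(Q+\lambda I)^{-1/2}v_i\|_{\cal H}\|z_i\|_{\cal K}$ to take $\widetilde B=2\tau\sqrt{{\cal Q}_\infty(\lambda)}$ and the trace identity to take $\widetilde\sigma^2=\tau^2{\cal Q}(\lambda)$. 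Your bookkeeping is also sound: the resulting $\sqrt{2\tau^2{\cal Q}(\lambda)\log(2/\delta)/n}$ term is dominated by the advertised $\sqrt{4\tau^2{\cal Q}(\lambda)\log(2/\delta)/n}$, and your readings of the statement's typos ($\|z\|_{\cal K}$ rather than $\|z\|_{\cal H}$, and ${\cal Q}(\lambda)=\operatorname{Tr}((Q+\lambda I)^{-1}Q)$) match the intended definitions.
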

where ${\cal Q}_\infty(\lambda)=\sup_{v \in {\cal H}}\|(Q+\lambda I)^{-1/2}v\|^2$ and ${\cal Q}(\lambda)=\text{Tr}((Q+\lambda I)^{-1/2}Q)$.

\begin{lemma}[Proposition 6 in \cite{rudi2017generalization}]\label{lemD.5}
Let $v_1,\ldots,v_n$ be a sequence of i.i.d random variables on a separable Hilbert spaces $\cal H$ such that $Q=\mathbb{E}v\otimes v$ is trace class, and for any $\lambda>0$ there exists a constant ${\cal Q}_\infty(\lambda)<\infty$ such that $\left\langle v,(Q+\lambda I)^{-1}v\right\rangle\leq{\cal Q}_\infty(\lambda)$ almost everywhere. Let $Q_n=\frac1n\sum_{i=1}^n v_i\otimes v_i$,  then for any $\delta \in (0,1]$, the following holds with probability at least $1-\delta$,
$$
\left\|(Q+\lambda I)^{-1/2}(Q-Q_n)(Q+\lambda I)^{-1/2}\right\|\leq \frac{2({\cal Q}_\infty(\lambda)+1)\log (2/\delta)}{n}+\sqrt{\frac{2{\cal Q}_\infty(\lambda)\log (2/\delta)}{n}}.
$$
\end{lemma}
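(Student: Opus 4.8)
The plan is to recognize the quantity as the operator-norm deviation of an average of i.i.d.\ rank-one self-adjoint operators from its mean, and then invoke a non-commutative (operator-norm) Bernstein inequality. Write $T=(Q+\lambda I)^{-1/2}$ and set $w_i=Tv_i$, so that the operators $A_i:=T(v_i\otimes v_i)T=w_i\otimes w_i$ are i.i.d.\ positive and self-adjoint, with $\frac1n\sum_{i=1}^n A_i=(Q+\lambda I)^{-1/2}Q_n(Q+\lambda I)^{-1/2}$ and $\mathbb{E}[A_1]=(Q+\lambda I)^{-1/2}Q(Q+\lambda I)^{-1/2}=TQT$. The target norm is therefore $\bigl\|\frac1n\sum_{i=1}^n(\mathbb{E}[A_i]-A_i)\bigr\|$, an average of the i.i.d.\ zero-mean self-adjoint operators $Z_i:=\mathbb{E}[A_i]-A_i$. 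Two elementary facts drive everything: the hypothesis $\langle v,(Q+\lambda I)^{-1}v\rangle\le\mathcal{Q}_\infty(\lambda)$ gives $\|w_i\|^2=\langle v_i,(Q+\lambda I)^{-1}v_i\rangle\le\mathcal{Q}_\infty(\lambda)$ almost surely, while $Q\preceq Q+\lambda I$ gives $0\preceq \mathbb{E}[A_i]=TQT\preceq I$, hence $\|\mathbb{E}[A_i]\|\le1$ (and $\mathbb{E}[A_i]$ is trace-class because $Q$ is).

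Next I would verify the two moment conditions required by the operator Bernstein inequality. For the almost-sure bound, since $0\preceq A_i\preceq \mathcal{Q}_\infty(\lambda) I$ (as $\|A_i\|=\|w_i\|^2\le\mathcal{Q}_\infty(\lambda)$) and $0\preceq\mathbb{E}[A_i]\preceq I$, the difference of these two positive operators satisfies $-\mathcal{Q}_\infty(\lambda)I\preceq Z_i\preceq I$, so $\|Z_i\|\le\max\{\mathcal{Q}_\infty(\lambda),1\}\le\mathcal{Q}_\infty(\lambda)+1=:B$ almost surely. For the variance proxy, the rank-one identity $A_i^2=\|w_i\|^2\,(w_i\otimes w_i)\preceq\mathcal{Q}_\infty(\lambda)A_i$ yields $\mathbb{E}[Z_i^2]=\mathbb{E}[A_i^2]-(\mathbb{E}[A_i])^2\preceq\mathbb{E}[A_i^2]\preceq\mathcal{Q}_\infty(\lambda)\,\mathbb{E}[A_i]$, and taking operator norms gives $\|\mathbb{E}[Z_i^2]\|\le\mathcal{Q}_\infty(\lambda)\,\|\mathbb{E}[A_i]\|\le\mathcal{Q}_\infty(\lambda)=:\sigma^2$; the dominating operator $\mathcal{Q}_\infty(\lambda)\,\mathbb{E}[A_i]$ is trace-class, which legitimizes the infinite-dimensional Bernstein bound.

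Finally I would feed $B=\mathcal{Q}_\infty(\lambda)+1$ and $\sigma^2=\mathcal{Q}_\infty(\lambda)$ into a self-adjoint operator Bernstein inequality on a separable Hilbert space, which for i.i.d.\ centered $Z_i$ with $\|Z_i\|\le B$ and $\|\mathbb{E}[Z_i^2]\|\le\sigma^2$ yields $\bigl\|\frac1n\sum_{i=1}^nZ_i\bigr\|\le \frac{2B\log(2/\delta)}{n}+\sqrt{\frac{2\sigma^2\log(2/\delta)}{n}}$ with probability at least $1-\delta$; substituting the two values reproduces the claimed bound exactly. The main obstacle is precisely this last step: the statement controls the \emph{operator} norm with only a $\log(2/\delta)$ factor, so one cannot simply treat the $Z_i$ as Hilbert--Schmidt vectors and apply the vector Bernstein inequality (Lemma~\ref{lemD.4}), since that would measure the deviation in Hilbert--Schmidt norm and force the variance proxy $\mathbb{E}\|Z_i\|_{HS}^2$, introducing a spurious effective-dimension factor of order $\mathrm{Tr}(TQT)=\mathcal{N}(\lambda)$. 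The dimension-free conclusion instead relies on the genuinely non-commutative Bernstein inequality, and the care in the moment computation above---keeping every bound in operator norm and exploiting the rank-one structure to collapse $A_i^2$ back to $A_i$---is exactly what makes the clean constants $\mathcal{Q}_\infty(\lambda)+1$ and $\mathcal{Q}_\infty(\lambda)$ emerge.
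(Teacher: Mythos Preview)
Your approach is correct and is precisely the argument used in the source the paper cites (Proposition~6 of \cite{rudi2017generalization}); the present paper does not supply its own proof but simply quotes the result. The reduction to i.i.d.\ centered self-adjoint operators $Z_i=\mathbb{E}[A_i]-A_i$ with $A_i=w_i\otimes w_i$, the almost-sure bound $\|Z_i\|\le\mathcal{Q}_\infty(\lambda)+1$, and the variance estimate $\mathbb{E}[Z_i^2]\preceq\mathcal{Q}_\infty(\lambda)\,TQT$ via the rank-one identity $A_i^2=\|w_i\|^2A_i$ are exactly the ingredients in Rudi--Rosasco's proof, and your observation that a vector/Hilbert--Schmidt Bernstein (Lemma~\ref{lemD.4}) would introduce a spurious $\mathcal{N}(\lambda)$ factor is the right diagnosis of why a genuinely non-commutative inequality is needed here.
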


\begin{lemma}[Bousquet Inequality]\label{lemD.8}
Let $Z_1, \ldots, Z_n$ be independent random elements taking values in some space $\mathcal{Z}$ and let $\Xi$ be a class of real-valued functions on $\mathcal{Z}$, if we have 
\begin{align*}
\|\xi\| \leq \eta_n \quad \text { and } \quad \frac{1}{n} \sum_{i=1}^n \Var \left(\xi\left(Z_i\right)\right) \leq \zeta_n^2, \quad \forall \xi \in \Xi.
\end{align*}
Define $\boldsymbol{Z}:=\sup _{\xi \in \Xi}\left|\frac{1}{n} \sum_{i=1}^n\left(\xi\left(Z_i\right)-{E} \xi\left(Z_i\right)\right)\right|$. Then for $t>0$
\begin{align*}
P\left(\boldsymbol{Z} \geq E(\boldsymbol{Z})+t \sqrt{2\left(\zeta_n^2+2 \eta_n E(\boldsymbol{Z})\right)}+\frac{2 \eta_n t^2}{3}\right) \leq \exp \left(-n t^2\right) .
\end{align*}    
\end{lemma}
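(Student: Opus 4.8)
The plan is to reduce the stated normalized, two–sided bound to the classical one–sided form of Bousquet's inequality for an unnormalized supremum, and then to establish that form by the entropy (Herbst) method. Write $W_\xi=\sum_{i=1}^{n}\big(\xi(Z_i)-E\,\xi(Z_i)\big)$, so that $\boldsymbol{Z}=\tfrac1n\sup_{\xi\in\Xi}|W_\xi|=\tfrac1n\sup_{\xi\in\Xi\cup(-\Xi)}W_\xi$. Since $-\xi$ has the same sup–norm bound $\eta_n$ and the same variances as $\xi$, the enlarged class $\Xi\cup(-\Xi)$ obeys the same hypotheses, so it suffices to treat the one–sided supremum $S:=\sup_{\xi}W_\xi$ over a class with $\|\xi\|\le\eta_n$ and $\sum_{i=1}^n\Var\big(\xi(Z_i)\big)\le n\zeta_n^2$. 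After centering, the summands $g_i:=\xi(Z_i)-E\,\xi(Z_i)$ are mean–zero with $\|g_i\|_\infty\le 2\eta_n$ and $\sum_i E g_i^2\le n\zeta_n^2$ (measurability of the supremum is handled by the standard countability/separability reduction). I would keep track of both the raw bound $\eta_n$ and the centered bound $2\eta_n$ separately, since they enter different terms of the final estimate.

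The heart of the argument is an exponential–moment (log–MGF) bound obtained by the entropy method, following Boucheron, Lugosi and Massart. Let $F(\lambda)=E[e^{\lambda S}]$ and recall $\mathrm{Ent}(e^{\lambda S})=\lambda F'(\lambda)-F(\lambda)\log F(\lambda)$. By the tensorization (sub-additivity) inequality for entropy,
\[
\mathrm{Ent}(e^{\lambda S})\le \sum_{i=1}^n E\big[\mathrm{Ent}_i(e^{\lambda S})\big],
\]
where $\mathrm{Ent}_i$ is the entropy in the $i$-th coordinate with the others frozen. Each conditional term is then controlled by a modified logarithmic Sobolev inequality: comparing $S$ with $S^{(i)}$, the supremum computed after deleting the $i$-th summand, one has $0\le S-S^{(i)}\le$ (the $i$-th increment), and $\mathrm{Ent}_i(e^{\lambda S})\le E_i\big[e^{\lambda S}\,\psi\!\big(-\lambda(S-S^{(i)})\big)\big]$ with $\psi(x)=e^x-x-1$. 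Bounding the increments by the (small, bounded) summands and summing, the boundedness and variance budgets combine into the self-bounding constant, yielding the Bennett-type bound
\[
\log E\big[e^{\lambda(S-E S)}\big]\le v\,(e^{\lambda}-\lambda-1),\qquad \lambda\ge 0,
\]
with $v=\sigma^2+2\eta_n\,E S$ after rescaling (the $2\eta_n\,ES$ term arising precisely from the self-bounding of the supremum).

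The last step is to invert the Bennett bound by the Chernoff method. Optimizing $P(S\ge E S+u)\le\exp\!\big(-\,v\,h(u/v)\big)$ over $\lambda$, where $h(x)=(1+x)\log(1+x)-x$, and using the elementary estimate $v\,h(u/v)\ge \tfrac{u^2}{2(v+2\eta_n u/3)}$, gives the sub-root tail $P\big(S\ge E S+\sqrt{2vs}+\tfrac{2\eta_n s}{3}\big)\le e^{-s}$ for every $s>0$. Now I undo the reductions: with $v=n\zeta_n^2+2\eta_n\,n\,E\boldsymbol Z$, dividing the event through by $n$ gives $\tfrac1n\sqrt{2vs}=\sqrt{2(\zeta_n^2+2\eta_n E\boldsymbol Z)s/n}$ and $\tfrac{1}{n}\cdot\tfrac{2\eta_n s}{3}$ for the linear term. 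Substituting $s=n t^2$ turns these into $t\sqrt{2(\zeta_n^2+2\eta_n E\boldsymbol Z)}$ and $\tfrac{2\eta_n t^2}{3}$, respectively, while the probability bound becomes $e^{-s}=\exp(-n t^2)$, which is exactly the claimed inequality.

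The main obstacle is the modified log–Sobolev step that produces the correct constant $v=\sigma^2+2\eta_n\,ES$: one must carefully control the per-coordinate entropy contributions through the convexity inequality for $\psi$, bound each increment $S-S^{(i)}$ by the corresponding bounded, low-variance summand, and sum so that the variance budget and the self-bounding property of the supremum assemble into the single coefficient $v$. The accompanying constant bookkeeping is delicate but routine: the raw bound $\eta_n$ should be threaded into the variance/self-bounding coefficient (via $E\xi^2\le \eta_n\,E|\xi|$-type estimates), while the centered bound $2\eta_n$ controls the linear term, which is what separates the $2(\zeta_n^2+2\eta_n E\boldsymbol Z)$ under the square root from the $\tfrac{2\eta_n}{3}$ multiplier in the $t^2$ term of the stated bound.
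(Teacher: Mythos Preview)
The paper does not prove this lemma; it is simply stated as a known technical result (Bousquet's concentration inequality) and invoked in the proof of Lemma~\ref{lemA.2.5}. So there is no ``paper's own proof'' to compare against. Your outline via the entropy (Herbst) method, tensorization of entropy, the modified log--Sobolev/self-bounding step, the Bennett-type log-MGF bound, and the Chernoff inversion with $h(x)\ge x^2/(2(1+x/3))$ is exactly the standard route due to Bousquet and Boucheron--Lugosi--Massart, and the final rescaling $s=nt^2$ you perform recovers the normalized statement correctly.

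One point worth tightening: your claim that the \emph{raw} bound $\eta_n$ can be fed into the self-bounding coefficient $v$ while the \emph{centered} bound $2\eta_n$ governs the linear term is not how the standard proof works. In Bousquet's argument the same one-sided bound $b$ on the (centered) summands appears both in $v=\sigma^2+2b\,ES$ and in the linear deviation term $bs/3$; you cannot decouple them without a separate argument. With $\|\xi\|_\infty\le\eta_n$ and centering, the natural upper bound is $b=2\eta_n$, which reproduces the $2\eta_n t^2/3$ term but would give $4\eta_n E\boldsymbol Z$ under the square root rather than $2\eta_n E\boldsymbol Z$. The form stated in the paper matches several textbook normalizations (e.g., taking $b$ as a one-sided bound on the uncentered functions), so if you want the constants exactly as written you should be explicit about which one-sided hypothesis you are invoking rather than mixing $\eta_n$ and $2\eta_n$ ad hoc.
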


\begin{lemma}[Proposition 10 in \cite{rudi2017generalization}]\label{lemD.7}
If the number of random features $M\geq (4+18 {\cal N}_{\infty}(\lambda))\log(12\kappa^2/\lambda\delta)$, then under Assumption \ref{ass4}, for any $\delta \in (0,1)$, there holds
$$
 |{\cal N}_M(\lambda)-{\cal N}(\lambda)|\leq 1.55 {\cal N}(\lambda),   
$$
with probability at least $1-\delta$.
\end{lemma}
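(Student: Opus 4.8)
The plan is to show that the random-feature effective dimension ${\cal N}_M(\lambda)=\mathrm{Tr}\big((L_M+\lambda I)^{-1}L_M\big)$ concentrates around ${\cal N}(\lambda)=\mathrm{Tr}\big((L_K+\lambda I)^{-1}L_K\big)$ by splitting the comparison into two independent ingredients: (i) a \emph{deterministic resolvent-ratio} factor, governed by how close $L_M$ is to $L_K$ in the $(L_K+\lambda I)^{-1/2}$ geometry, and (ii) a \emph{scalar random trace} $T_M:=\mathrm{Tr}\big((L_K+\lambda I)^{-1}L_M\big)$ that concentrates around its mean ${\cal N}(\lambda)$. The reason this split is forced upon us is the key subtlety of the statement: a naive operator sandwich of $L_M$ against $L_K$ cannot yield an ${\cal N}(\lambda)$-relative bound, because expanding the finite-rank $L_M$ through $L_M\preceq(1+\theta_\lambda)L_K+\theta_\lambda\lambda I$ and then tracing against $(L_K+\lambda I)^{-1}$ produces the term $\lambda\,\mathrm{Tr}\big((L_K+\lambda I)^{-1}\big)$, which is \emph{infinite} whenever $L_K$ has infinitely many nonzero eigenvalues. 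The whole trick is to leave $L_M$ inside the trace and concentrate the resulting scalar.

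First I would isolate ingredient (i) algebraically. Since $L_M$ and $C_M$ share their nonzero spectrum, I write ${\cal N}_M(\lambda)=\mathrm{Tr}\big((L_M+\lambda I)^{-1/2}L_M(L_M+\lambda I)^{-1/2}\big)$ and insert $W:=(L_K+\lambda I)^{1/2}(L_M+\lambda I)^{-1/2}$, giving ${\cal N}_M(\lambda)=\mathrm{Tr}\big(W^{*}XW\big)$ with $X:=(L_K+\lambda I)^{-1/2}L_M(L_K+\lambda I)^{-1/2}\succeq 0$. Abbreviating $\theta_\lambda:=\|(L_K+\lambda I)^{-1/2}(L_K-L_M)(L_K+\lambda I)^{-1/2}\|$, the ordering $(1-\theta_\lambda)(L_K+\lambda I)\preceq L_M+\lambda I\preceq(1+\theta_\lambda)(L_K+\lambda I)$ yields $\tfrac{1}{1+\theta_\lambda}I\preceq WW^{*}\preceq\tfrac{1}{1-\theta_\lambda}I$ (the upper estimate $\|W\|^{2}\le(1-\theta_\lambda)^{-1}$ can also be read off directly from Lemma \ref{lemD.3}). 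Using $\mathrm{Tr}(W^{*}XW)=\mathrm{Tr}(X^{1/2}WW^{*}X^{1/2})$ then gives the two-sided bound
\[ \frac{1}{1+\theta_\lambda}\,T_M \;\le\; {\cal N}_M(\lambda) \;\le\; \frac{1}{1-\theta_\lambda}\,T_M, \qquad T_M=\mathrm{Tr}\big((L_K+\lambda I)^{-1}L_M\big). \]

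Second, I would handle ingredient (ii). Because $L_M=\frac{1}{M}\sum_{i=1}^{M}\phi_{\bfomega_i}\otimes\phi_{\bfomega_i}$, the scalar $T_M=\frac1M\sum_{i=1}^{M}\|(L_K+\lambda I)^{-1/2}\phi_{\bfomega_i}\|_{\rho}^{2}$ is an empirical average of i.i.d.\ summands that lie in $[0,{\cal N}_\infty(\lambda)]$ by Assumption \ref{ass5}, have mean $\mathbb{E}_{\bfomega}\|(L_K+\lambda I)^{-1/2}\phi_{\bfomega}\|_{\rho}^{2}=\mathrm{Tr}\big((L_K+\lambda I)^{-1}L_K\big)={\cal N}(\lambda)$, and variance at most ${\cal N}_\infty(\lambda)\,{\cal N}(\lambda)$. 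A scalar Bernstein inequality (Lemma \ref{lemD.4}) then gives $|T_M-{\cal N}(\lambda)|\lesssim \frac{{\cal N}_\infty(\lambda)\log(2/\delta)}{M}+\sqrt{\frac{{\cal N}_\infty(\lambda){\cal N}(\lambda)\log(2/\delta)}{M}}$ with probability at least $1-\delta$.

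Finally I would control $\theta_\lambda$ by the operator Bernstein bound of Lemma \ref{lemC.1} (equivalently Lemma \ref{lemD.5}) and combine the two high-probability events by a union bound over halved confidence levels, turning the display above into $\frac{1-c}{1+\theta_\lambda}\le {\cal N}_M(\lambda)/{\cal N}(\lambda)\le \frac{1+c}{1-\theta_\lambda}$ for small $c,\theta_\lambda$. The hard part is precisely the constant bookkeeping: one must verify that the stated threshold $M\ge(4+18{\cal N}_\infty(\lambda))\log(12\kappa^2/(\lambda\delta))$—using ${\cal N}_\infty(\lambda)\le\kappa^2/\lambda$ from Assumption \ref{ass4}—simultaneously forces $\theta_\lambda$ and the \emph{relative} fluctuation of $T_M$ small enough that $\tfrac{c+\theta_\lambda}{1-\theta_\lambda}\le 1.55$, and that the \emph{additive} Bernstein deviation can be promoted to a relative one via the lower floor ${\cal N}(\lambda)\ge\tfrac12$ (valid for $\lambda\le\|L_K\|$). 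Getting both concentration sources to land below the numerical constant $1.55$, rather than merely $O(1)$, is the only genuinely delicate step; everything else is operator-algebra and a standard Bernstein estimate.
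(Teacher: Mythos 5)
The paper does not prove this lemma at all---it is imported verbatim as Proposition 10 of \citet{rudi2017generalization}---and your argument reconstructs essentially that original proof: the two-sided sandwich $(1+\theta_\lambda)^{-1}T_M\le {\cal N}_M(\lambda)\le(1-\theta_\lambda)^{-1}T_M$ with $T_M=\text{Tr}\big((L_K+\lambda I)^{-1}L_M\big)$, scalar Bernstein concentration of $T_M$ around ${\cal N}(\lambda)$ (i.i.d.\ summands in $[0,{\cal N}_\infty(\lambda)]$ with variance at most ${\cal N}_\infty(\lambda){\cal N}(\lambda)$), and the operator Bernstein bound of Lemmas \ref{lemC.1}/\ref{lemD.5} to control $\theta_\lambda$, is exactly the mechanism used there, and your key structural observation is right---the naive operator sandwich fails because $\lambda\,\text{Tr}\big((L_K+\lambda I)^{-1}\big)$ is infinite, so $L_M$ must stay inside the trace. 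All the operator-algebra steps check out (note only that your appeal to Assumption \ref{ass5} for the bound $Z_i\le{\cal N}_\infty(\lambda)$ is really just the definition of ${\cal N}_\infty$, which is finite under Assumption \ref{ass4} alone since ${\cal N}_\infty(\lambda)\le\kappa^2/\lambda$); the only loose ends are the ones you flag yourself: the numerical constants $1.55$ and $(4+18{\cal N}_\infty(\lambda))\log(12\kappa^2/(\lambda\delta))$ are sketched rather than verified, and promoting the additive Bernstein term to a relative one via the floor ${\cal N}(\lambda)\ge 1/2$ tacitly requires $\lambda\le\|L_K\|$, a restriction absent from the lemma's statement but consistent with how the paper applies it elsewhere (cf.\ the proof of Lemma \ref{lemA.2.2}).
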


\section{Additional numerical Experiments}\label{sec:exp}

In this section, we conduct some additional numerical experiments on both simulated and real-world data to demonstrate the effectiveness of random features in large kernel quantile learning tasks.

\subsection{Simulated Data} \label{sim_data}
For the simulated data, we consider the following  two data-generating schemes that

\noindent \textbf{(\romannumeral1) Homoscedastic case:}
\begin{align*}
y_{i}=\exp(-x_{i1}+x_{i2})-x_{i2}x_{i3}+\bar{x}_{i}+\epsilon_{i}, \quad i=1,2, \ldots, N.
\end{align*}
\noindent \textbf{(\romannumeral2) Heteroscedastic case:}
\begin{align*}
y_{i}=\sum_{j=1}^3 \beta_i\sin(2\pi x_{ij})+ (1+\bar{x}_{i})\big (\epsilon_{i}-F_{\epsilon}^{-1}(\tau)\big ), \quad i=1,2, \ldots, N.
\end{align*}
In both cases, $x_{ij} \sim U(0,1)$ with $\bar{x}_{i}=\frac{1}{p}\sum_{j=1}^p x_{ij}$, and
$\epsilon_{i}$ follows the standard normal distribution. Moreover, in the heteroscedastic case, $\beta_i \sim U(0, 1)$ and $F^{-1}_\epsilon$ denotes the  quantile function of $\epsilon$. Clearly,  the $\tau$-th conditional quantile of $y$ given $\bfx$ is given by $f_{\tau}^*(\bfx)=f(\bfx)+F_{\epsilon}^{-1}(\tau)$ with $f(\bfx)=\exp(-x_{i1}+x_{i2})-x_{i2}x_{i3}+\bar{x}_{i}$; and in the homoscedastic case, the $\tau$-th conditional quantile of $y$ given $\bfx$ is $f_{\tau}^*(\bfx)=\sum_{j=1}^3 \beta_i\sin(2\pi x_{ij})$.  

\textbf{Parameters setting.} In our simulation,
several scenarios are considered by varying quantile level $\tau$  from $ \{0.1,0.25,0.5,0.75,0.9\}$. In all the scenarios, we employ the standard Gaussian kernel $K(\bfx,\bfx^{\prime})=\exp(-\|\bfx-\bfx^{\prime}\|^2/2)$. As suggested by \citet{rahimi2007random} and \citet{rudi2017generalization}, the corresponding random features are taken as $\phi(\bfx,\bfomega)=\sqrt{2}\cos(\bfomega^T\bfx+b)$, where $\bfomega \sim N(0,I)$ and $b \sim U(0,2\pi)$. The regularization parameter $\lambda$ is selected via a grid search based on a validation set with 1000 samples, where the grid is set as $\{10^{0.5s}: s = -20, -19,..., 2\}$. 

\textbf{Performance evaluation.} To assess the numerical performance of KQR-RF, we use the predicted quantile error (PQE) defined on a testing dataset with $n_{te}=10000$ samples $\{{\bfx}_{te}^i,y_{te}^i\}$ as follows
$$
\widehat{\cal E}_{\tau}(\hat{f})=\frac{1}{n_{te}}\sum_{i=1}^{n_{te}}\rho_{\tau} \big (y_{te}^i-\widehat{f}({\bfx}_{te}^i) \big ).
$$
Our investigation delves into evaluating the influence of several critical factors including the number of random features, the sample size, and the sampling type of random features. All the reported numerical results are obtained from an average of 50 independently repeated experiments.

\subsubsection{Effect of Random feature Size}\label{effect_rf}
In this part, we evaluate the performance of KQR-RF by using various random features. Specifically, we consider the fixed setting that $N=1000$ and vary the number of random features $M \in \{2,5,10,50,100,150\}$. The results of KQR-RF  are summarized in Figure \ref{fig2}.

As indicated in Figure \ref{fig2}, we can conclude that the PQE of  KQR-RF tends to be smaller if a larger number of random features are used. The curves of PQEs become relatively flat as the number of random features reaches $50$, which implies that the marginal gain from those increased random features is limited as enough features have been taken into account.
\begin{figure}
    \centering
    \subfigure[$\tau=0.1$]{
	\includegraphics[width=0.3\linewidth]{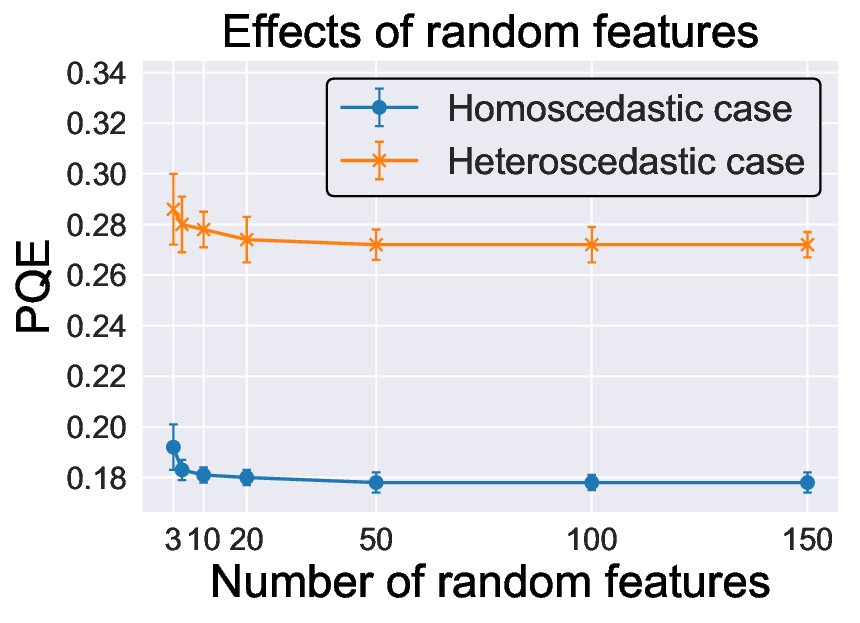}
        \label{label_for_cross_ref_16}
    }
    \subfigure[$\tau=0.25$]{
	\includegraphics[width=0.3\linewidth]{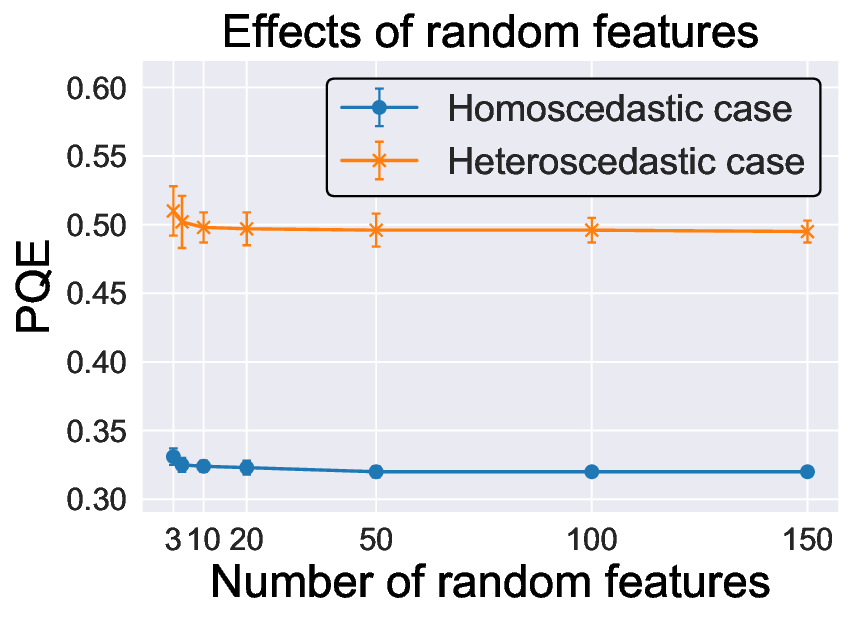}
        \label{label_for_cross_ref_17}
    }
    \subfigure[$\tau=0.5$]{
    	\includegraphics[width=0.3\linewidth]{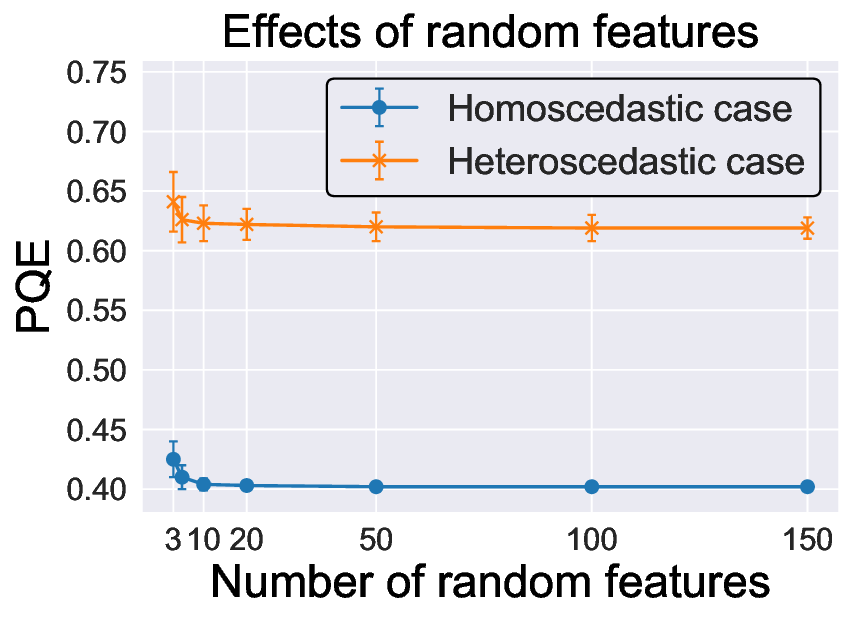}
        \label{label_for_cross_ref_18}
    }
         \subfigure[$\tau=0.75$]{
    	\includegraphics[width=0.3\linewidth]{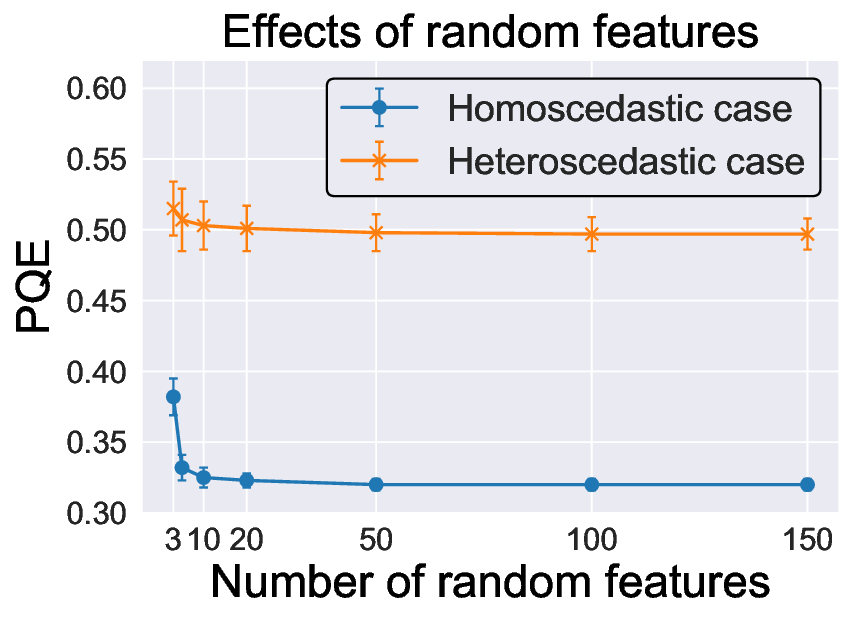}
        \label{label_for_cross_ref_19}
    }
      \subfigure[$\tau=0.9$]{
    	\includegraphics[width=0.3\linewidth]{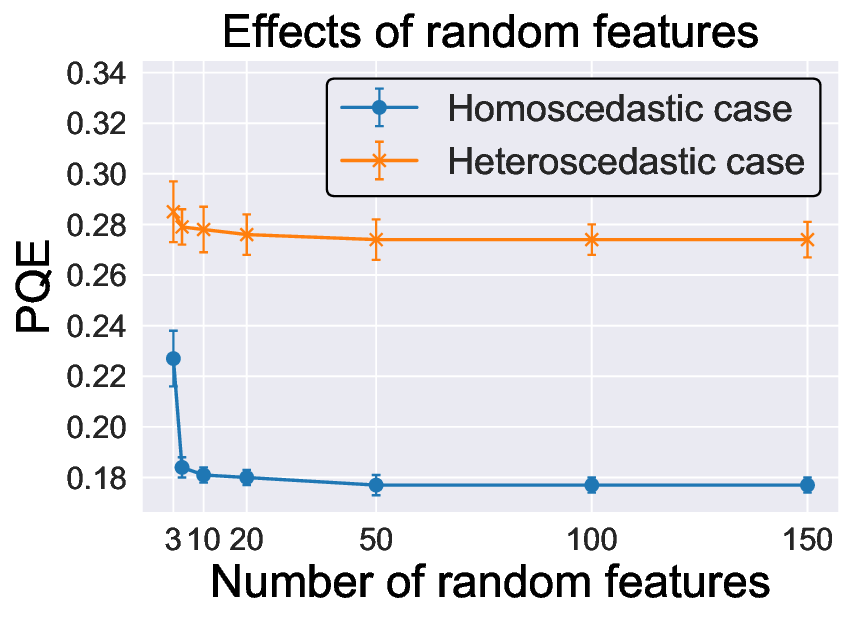}
        \label{label_for_cross_ref_20}
    }

    \caption{Averaged PQE and its standard deviation against the number of random features used in KQR-RF under various scenarios.}
    \label{fig2}
\end{figure}

\subsubsection{Effect of sample size }
In this part, we investigate how the performance of KQR-RF is affected by the sample size $N$, and we also compare it with the exact kernel quantile regression (KQR without random feature).  Specifically, all the settings are exactly the same as those in Section \ref{effect_rf} except that we set $M=50$ and vary  $N \in \{500,1000,2000,5000,10000\}$, respectively.

It is clear from Table \ref{tab.2} that under the homoscedastic case,  the PQE of KQR-RF decreases as $N$ is increased at all the quantile levels, which is consistent with the theoretical result given in Theorem \ref{thm2}. Moreover, the performance of KQR-RF is near to that of KQR as the sample size increases, which shows the consistency of random feature approximation. Similar conclusions can also be drawn in the heteroscedastic case as indicated in Table \ref{tab.3}.

\begin{table}
\centering
\footnotesize
\label{tab.2}
\caption{Averaged PQE and its standard deviation  against  $N$  of different methods in the homoscedastic case.}
\begin{tabular}{ccccccc}
\hline
\multirow{2}{*}{Quantile level  } & \multirow{2}{*}{Method} & \multicolumn{5}{c}{Sample size } \\ \cline{3-7}&& $N=500$ & $N=1000$ & $N=2000$ & $N=5000$ & $N=10000$\\
\hline
\multirow{2}{*}{$\tau=0.1$} &                         KQR-RF         & 0.193(0.014) & 0.185(0.010) & 0.178(0.007) & 0.176(0.005) & 0.174(0.003)
\\ \cline{2-7}
&KQR         & 0.184(0.005) & 0.175(0.004) & 0.168(0.001) & 0.164(0.001) & 0.162(0.001)\\
\hline
\multirow{2}{*}{$\tau=0.25$} &                         KQR-RF         & 0.332(0.007) & 0.325(0.005) & 0.321(0.002) & 0.319(0.002) & 0.317(0.002)
\\ \cline{2-7} 
&KQR         & 0.323(0.002) & 0.315(0.002) & 0.312(0.001) & 0.308(0.001) & 0.306(0.001)\\
\hline
\multirow{2}{*}{$\tau=0.5$} &                         KQR-RF         & 0.419(0.008) & 0.407(0.006) & 0.402(0.002) & 0.400(0.002) & 0.398(0.001)
\\ \cline{2-7} 
&KQR         & 0.405(0.002) & 0.395(0.002) & 0.391(0.002) & 0.388(0.001) & 0.386(0.001)\\
\hline
\multirow{2}{*}{$\tau=0.75$} &                         KQR-RF         & 0.338(0.010) & 0.327(0.008) & 0.322(0.004) & 0.319(0.002) & 0.316(0.002)
\\ \cline{2-7} 
&KQR         & 0.321(0.003) & 0.313(0.003) & 0.306(0.002) & 0.304(0.001) & 0.302(0.001)\\
\hline
\multirow{2}{*}{$\tau=0.9$} &                         KQR-RF         & 0.191(0.016) & 0.183(0.011) & 0.177(0.005) & 0.175(0.004) & 0.174(0.003)
\\ \cline{2-7}
&KQR         & 0.183(0.005) & 0.173(0.004) & 0.168(0.001) & 0.166(0.001) & 0.165(0.001)\\
\hline
\end{tabular}
\end{table}

\begin{table}
\centering
\footnotesize
\label{tab.3}
\caption{Averaged PQE and its standard deviation  against  $N$ and $n$ of different methods in the  heteroscedastic case.}
\begin{tabular}{ccccccc}
\hline
\multirow{2}{*}{Quantile level  } & \multirow{2}{*}{Method} & \multicolumn{5}{c}{Sample size } \\ \cline{3-7}&& $N=500$ & $N=1000$ & $N=2000$ & $N=5000$ & $N=10000$\\
\hline
\multirow{2}{*}{$\tau=0.1$} &                         KQR-RF         & 0.284(0.009) & 0.275(0.008) & 0.268(0.005) & 0.266(0.004) & 0.264(0.003)
\\ \cline{2-7}
&KQR         & 0.271(0.005) & 0.266(0.004) & 0.261(0.001) & 0.259(0.001) & 0.256(0.001)\\
\hline
\multirow{2}{*}{$\tau=0.25$} &                         KQR-RF         & 0.502(0.007) & 0.495(0.006) & 0.487(0.003) & 0.483(0.002) & 0.479(0.002)
\\ \cline{2-7} 
&KQR         & 0.492(0.002) & 0.484(0.002) & 0.479(0.001) & 0.472(0.001) & 0.470(0.001)\\
\hline
\multirow{2}{*}{$\tau=0.5$} &                         KQR-RF         & 0.621(0.009) & 0.614(0.006) & 0.605(0.003) & 0.601(0.002) & 0.598(0.001)
\\ \cline{2-7} 
&KQR         & 0.609(0.002) & 0.601(0.002) & 0.597(0.002) & 0.594(0.001) & 0.591(0.001)\\
\hline
\multirow{2}{*}{$\tau=0.75$} &                         KQR-RF         & 0.501(0.009) & 0.495(0.008) & 0.482(0.004) & 0.479(0.002) & 0.476(0.002)
\\ \cline{2-7} 
&KQR         & 0.493(0.003) & 0.483(0.003) & 0.474(0.002) & 0.467(0.001) & 0.463(0.001)\\
\hline
\multirow{2}{*}{$\tau=0.9$} &                         KQR-RF         & 0.295(0.013) & 0.283(0.011) & 0.270(0.006) & 0.266(0.004) & 0.264(0.003)
\\ \cline{2-7}
&KQR         & 0.284(0.005) & 0.275(0.004) & 0.261(0.001) & 0.256(0.001) & 0.254(0.001)\\
\hline
\end{tabular}
\end{table}

\subsubsection{Effect of sampling strategy}\label{effect_sampling}

In this part, we compare the PQE of KQR-RF with different sampling strategies. Specifically, we consider the following two strategies,
\begin{enumerate}
    \item \textit{Uniform RF:} We generate the random features with uniform sampling strategy in \eqref{in_re}, so the corresponding random features are $\phi(\bfx,\bfomega_i)=\sqrt{2}\cos(\bfomega^T\bfx+b)$, where $\bfomega_i \sim N(0,I)$ and $b \sim U(0,2\pi)$ for $i=1,\ldots,M$.
    \item \textit{Leverage scores RF:} We generate the random features with leverage scores sampling strategy in Example \ref{example1}. By adopting the idea of \citet{sun2018but} and \citet{lioptimalRF2023}, we consider the importance ratio $q(\bfomega_i)=r_i/\sum_{i=1}^Mr_i$, where $\{r_i\}_{i=1}^M$ is the the diagonal of 
    $$
    \bfphi_{M}(\bfX)^T\bfphi_{M}(\bfX)\big(\bfphi_{M}(\bfX)^T\bfphi_{M}(\bfX)+\lambda N I\big)^{-1},
    $$
    with $\bfphi_{M}(\bfX)=(\bfphi_{M}({\bfx}_1),\ldots,\bfphi_{M}({\bfx}_{N}))^{T} \in \mathbb{R}^{N\times M}$. The corresponding random features are then given as $\phi_l(\bfx,\bfomega_i)=[q(\bfomega_i)]^{-1/2}\phi(\bfx,\bfomega_i)$ where $\phi(\bfx,\bfomega_i)$ is the uniform RF.
\end{enumerate}
Specifically, we consider the same settings as those in Section \ref{effect_rf} except that we additionally consider the data-dependent sampling strategy. The results of KQR-RF  are summarized in Figures \ref{fig3}-\ref{fig4} for the homoscedastic and heteroscedastic cases, respectively.

From the results in Figures \ref{fig3}-\ref{fig4}, we can see that both uniform random features and leverage scores random features can achieve better performance as the number of random features increases. The data-dependent sampling strategy is more effective than the uniform sampling strategy with a fixed number of random features, which confirms our theoretical findings in Theorem \ref{thm1} and Corollary \ref{cor1}.

\textbf{Theoretical and empirical leverage scores sampling.} Example \ref{example1} consider a leverage scores sampling strategy by using an importance ratio denoted as $q(\boldsymbol{\omega})=l_{\lambda}(\boldsymbol{\omega})/\int_{\boldsymbol{\omega}}l_{\lambda}(\boldsymbol{\omega})d\pi(\boldsymbol{\omega})$, where $l_{\lambda}(\boldsymbol{\omega})=\|(L_K+\lambda I)^{-1 / 2} \psi(\cdot, \boldsymbol{\omega})\|_{\rho_{\cal X}}^2$. The corresponding parameters $\boldsymbol{\omega}$ are sampled from distribution $\pi_{l}(\boldsymbol{\omega})=q(\boldsymbol{\omega})\pi(\boldsymbol{\omega})$, and random features are $\phi_{l}(\boldsymbol{x},\boldsymbol{\omega})=[q(\boldsymbol{\omega})]^{-1/2}\phi(\boldsymbol{x},\boldsymbol{\omega})$. This reweighted sampling ensure that $K(\boldsymbol{x},{\boldsymbol{x}}^{\prime})=E_{\boldsymbol{\omega} \sim \pi_{l}(\boldsymbol{\omega})}[\langle\phi_{l}(\boldsymbol{x},\boldsymbol{\omega}),\phi_{l}({\boldsymbol{x}}^{\prime},\boldsymbol{\omega})\rangle]$. Here, we want to emphasize that Example \ref{example1} is a theoretical construction, and the data dependent sampling scheme is implicit due to the integral operator $L_K$ and the expectation with respect to $\boldsymbol{x}$ in $\|(L_K+\lambda I)^{-1 / 2} \psi(\cdot, \boldsymbol{\omega})\|_{\rho_{\cal X}}^2$.  However, in the literature, there are a lot of empirical leverage score sampling strategies that highly depend on the data. For example, the empirical random features leverages scores $\widehat{l}_{\lambda}(\boldsymbol{\omega})=\widehat{\Xi}(\boldsymbol{\omega})^T(\mathbf{K}+\lambda I)^{-1}\widehat{\Xi}(\boldsymbol{\omega})^T$, with $\widehat{\Xi}(\boldsymbol{\omega}) \in \mathbb{R}^{|D|}$, $(\widehat{\Xi}(\boldsymbol{\omega}))_i=\phi_M(\boldsymbol{x}_i)$ and $\mathbf{K}=\{k(\boldsymbol{x}_i,\boldsymbol{x}_j)\}_{ij}$ is the data kernel matrix (see Remark 4 in  \citet{rudi2017generalization}). There are also some approximate leverage score sampling strategies to save the computation cost, see in \citet{sun2018but,li2021towards}. 

\begin{figure}
    \centering
    \subfigure[$\tau=0.1$]{
	\includegraphics[width=0.3\linewidth]{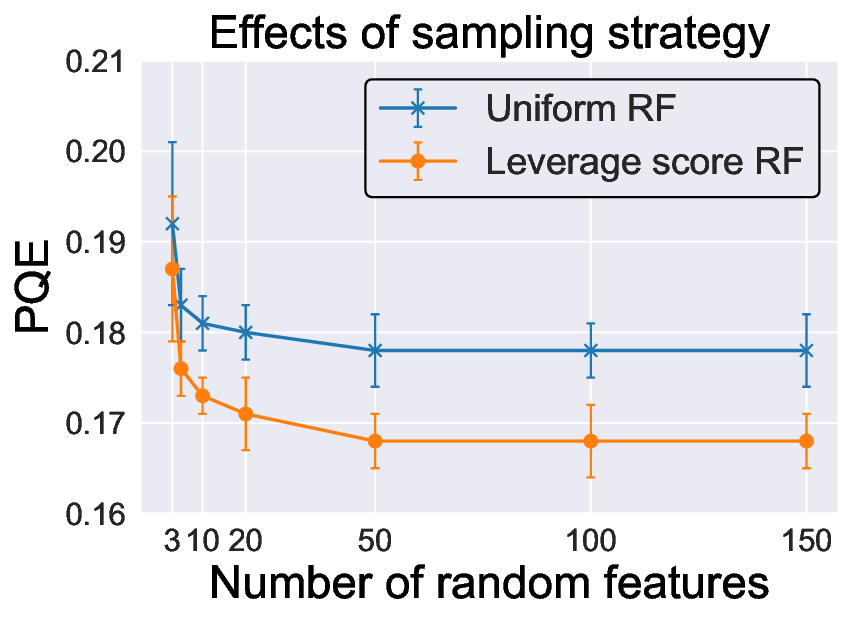}
        \label{label_for_cross_ref_26}
    }
    \subfigure[$\tau=0.25$]{
	\includegraphics[width=0.3\linewidth]{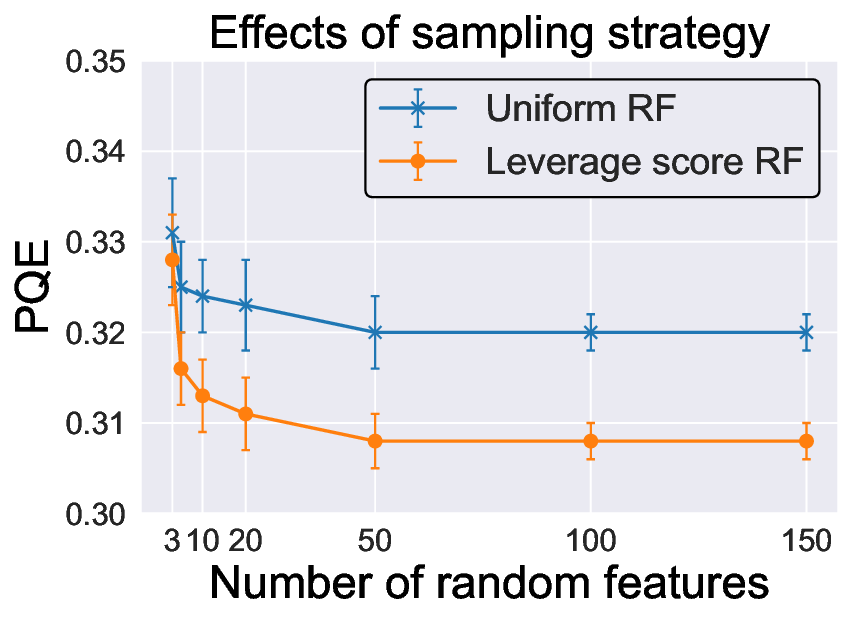}
        \label{label_for_cross_ref_27}
    }
    \subfigure[$\tau=0.5$]{
    	\includegraphics[width=0.3\linewidth]{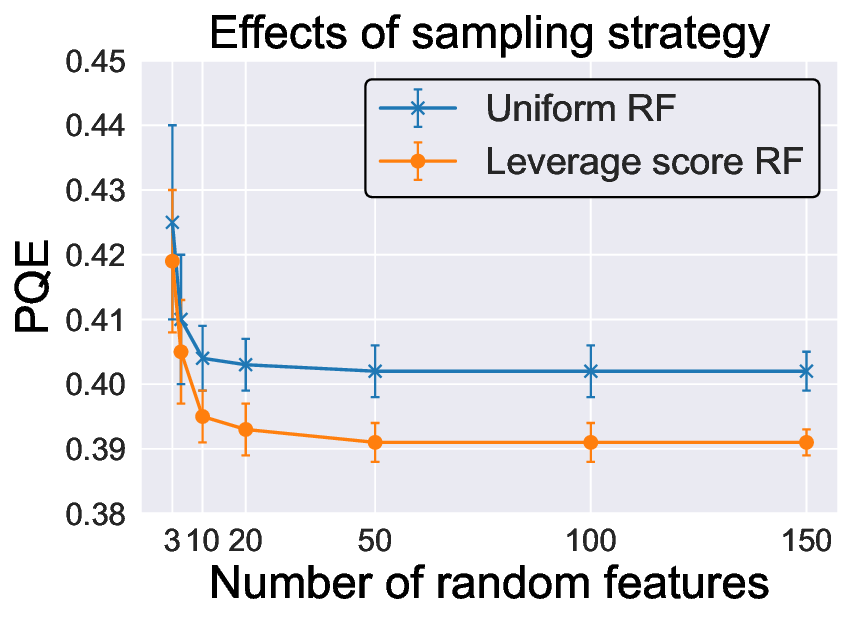}
        \label{label_for_cross_ref_28}
    }
         \subfigure[$\tau=0.75$]{
    	\includegraphics[width=0.3\linewidth]{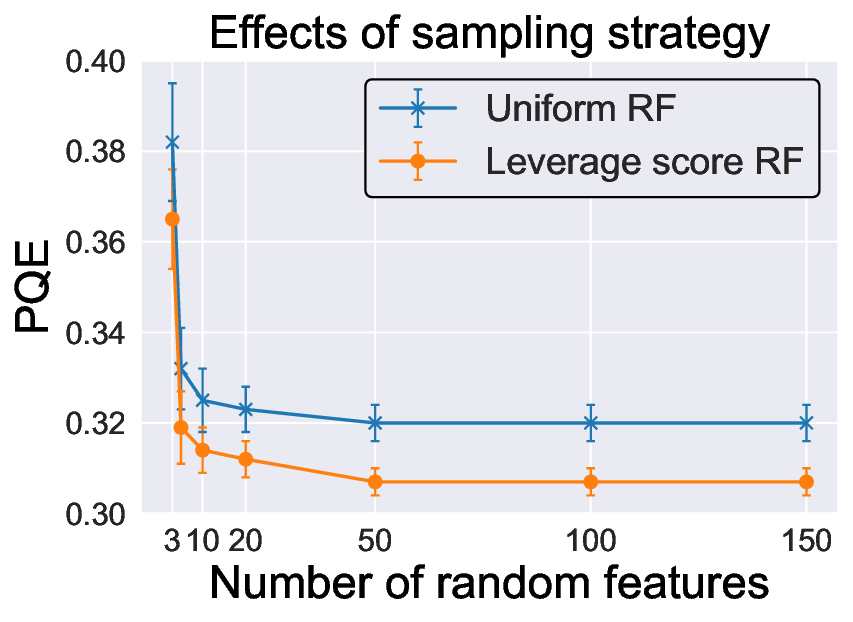}
        \label{label_for_cross_ref_29}
    }
      \subfigure[$\tau=0.9$]{
    	\includegraphics[width=0.3\linewidth]{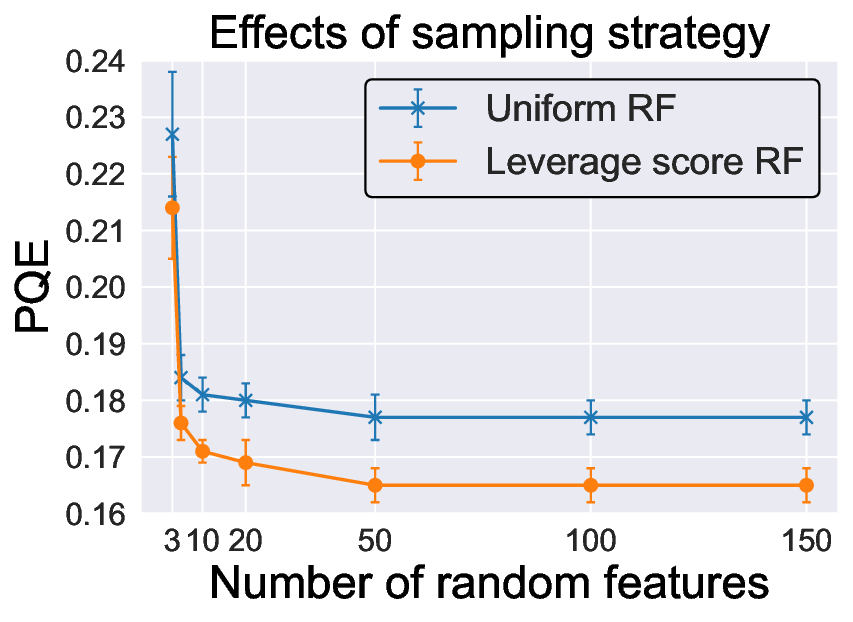}
        \label{label_for_cross_ref_30}
    }

    \caption{Averaged PQE and its standard deviation against the number of random features used in KQR-RF for different sampling strategies in the homoscedastic case.}
    \label{fig3}
\end{figure}

\begin{figure}
    \centering
    \subfigure[$\tau=0.1$]{
	\includegraphics[width=0.3\linewidth]{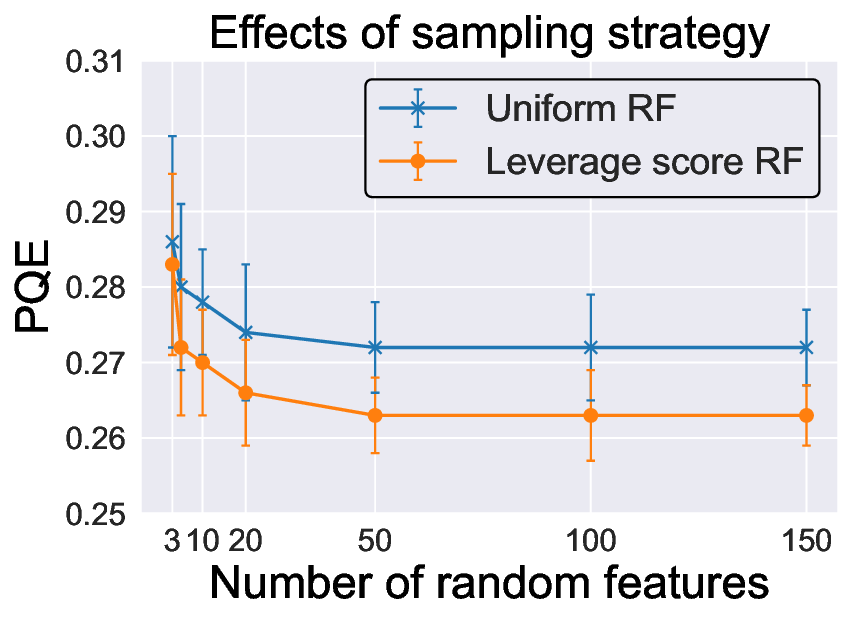}
        \label{label_for_cross_ref_21}
    }
    \subfigure[$\tau=0.25$]{
	\includegraphics[width=0.3\linewidth]{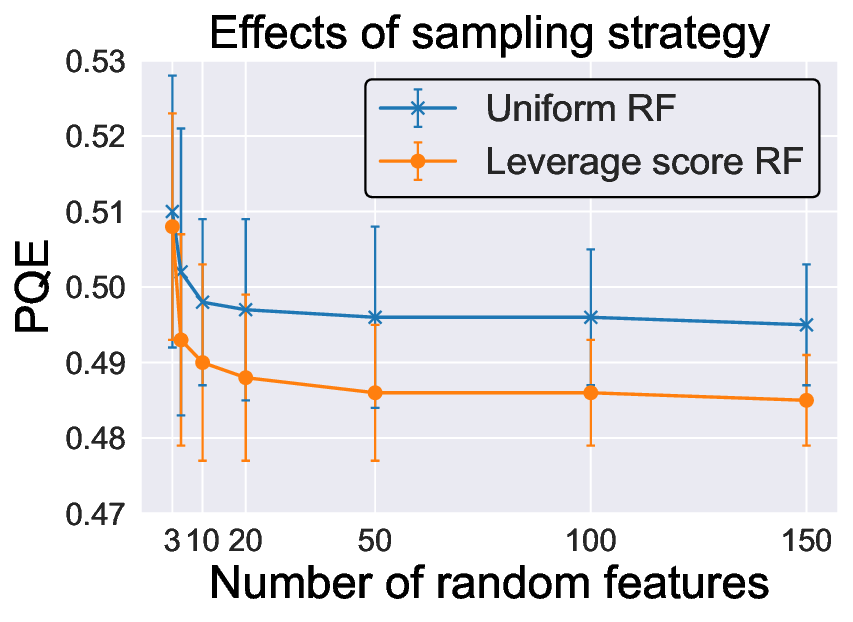}
        \label{label_for_cross_ref_22}
    }
    \subfigure[$\tau=0.5$]{
    	\includegraphics[width=0.3\linewidth]{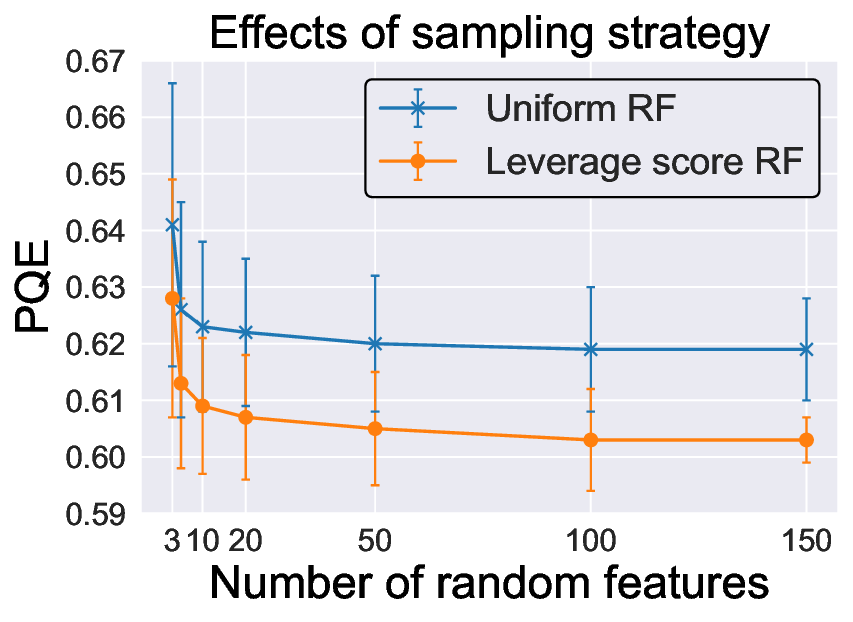}
        \label{label_for_cross_ref_23}
    }
         \subfigure[$\tau=0.75$]{
    	\includegraphics[width=0.3\linewidth]{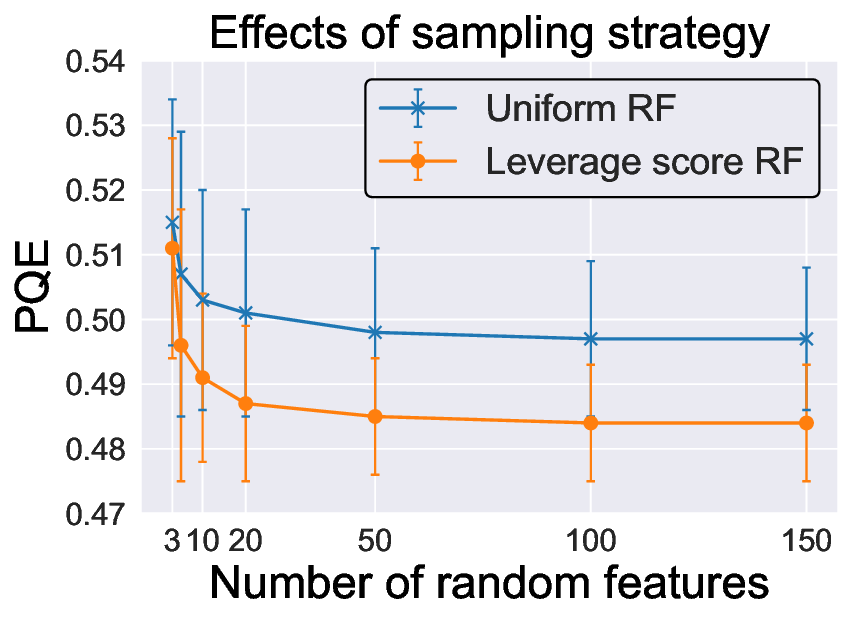}
        \label{label_for_cross_ref_24}
    }
      \subfigure[$\tau=0.9$]{
    	\includegraphics[width=0.3\linewidth]{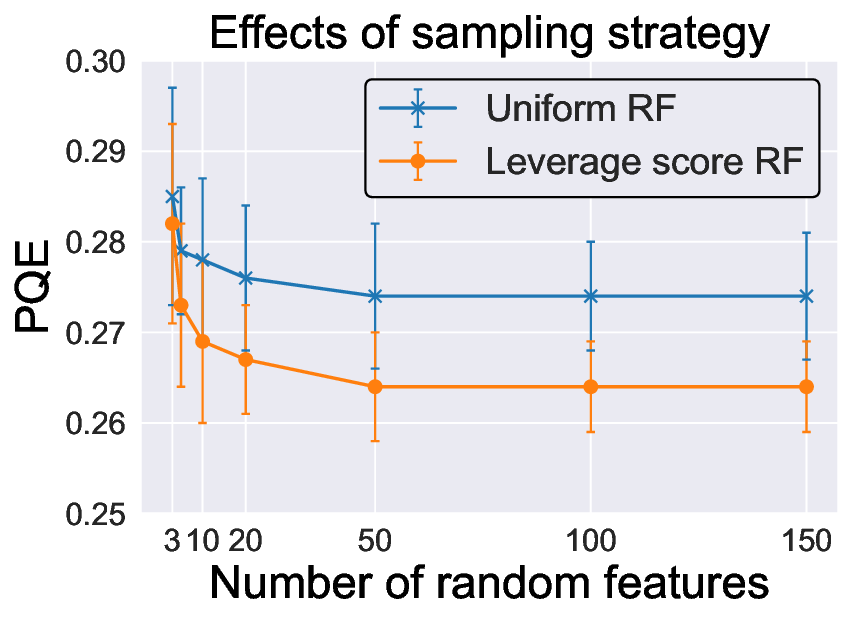}
        \label{label_for_cross_ref_25}
    }

    \caption{Averaged PQE and its standard deviation against the number of random features used in KQR-RF for different sampling strategies in the heteroscedastic case.}
    \label{fig4}
\end{figure}

\subsection{Real Case Study}
In this study, we consider the UK used car prices dataset from Kaggle (\url{https://www.kaggle.com/datasets/kukuroo3/used-car-price-dataset-competition-format}).  In the raw dataset, there are $N=7632$ samples after excluding those with missing values. The response is the price of each used car, while the covariates include crucial information about the used cars, such as the registration year, mileage, road tax, miles per gallon (mpg), and engine size. We mean to predict the prices of used cars, thereby assisting car buyers in making optimal purchasing decisions.

\begin{figure}[!htbp]
    \centering
    \subfigure[Raw price]{
	\includegraphics[width=0.4\linewidth]{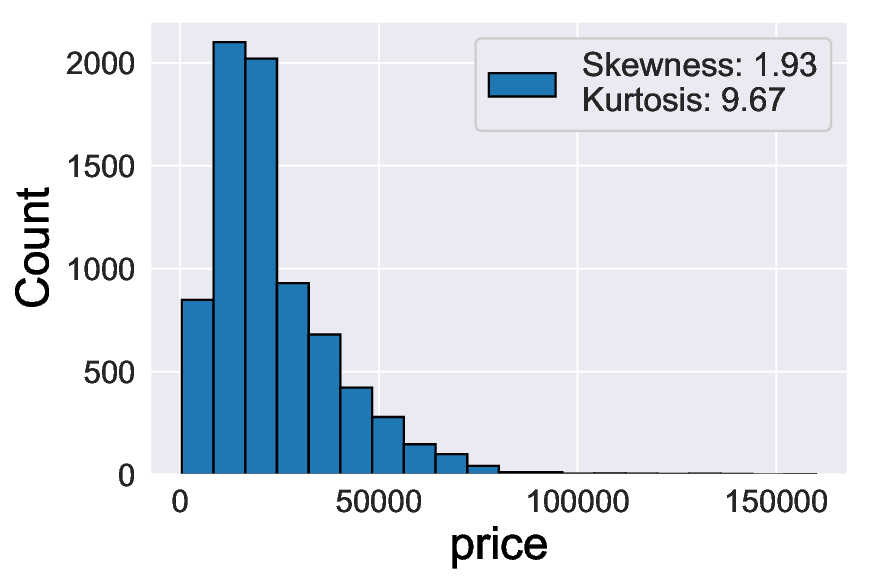}
        
    }
    \subfigure[Log-transformed price]{
    	\includegraphics[width=0.4\linewidth]{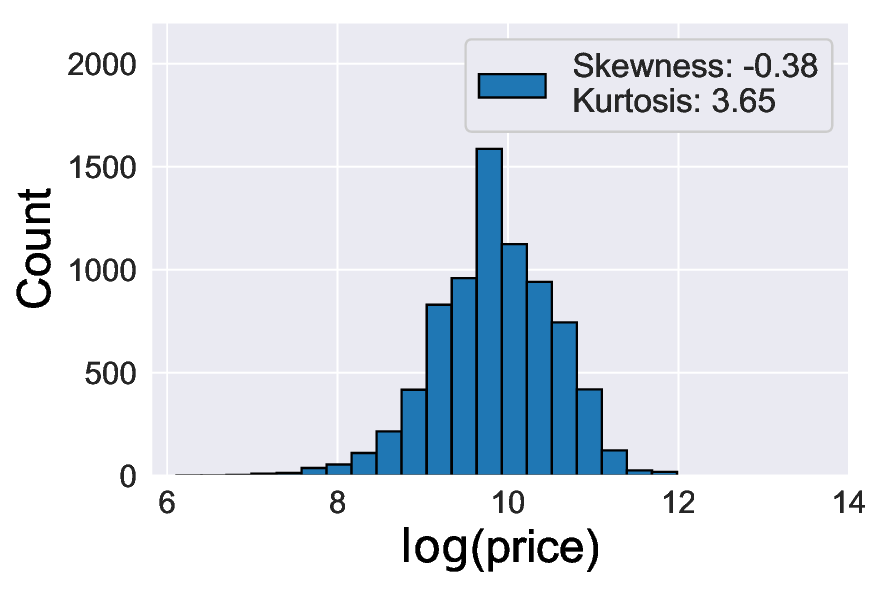}
 
    }

    \caption{The Histogram of the used car price and the log-transformed price.}
    \label{price_fig}
\end{figure}

In Figure \ref{price_fig}, we plot the histogram of the used car price, as well as the skewness and kurtosis, revealing a notable right-skew in the raw price distribution. Although this skewness is partially mitigated by applying a log transformation to the price, there is still a tail dragging on the right. To obtain more robust estimates, we consider the quantile regression with the log-transformed price as the response. Note that the sample size is large, thus it is natural to use random features to save the computing cost. 
In our experimental setup, we randomly choose $N_{tr}=5000$ samples as the training data and assume they are randomly distributed, $N_{va}=1000$ samples as the validation data, and the rest as the testing data. For each dataset, we perform the min-max normalization for each covariate, i.e., $x_{ij}$  is rescaled by $x_{ij}=(x_{ij}-x_{min}^i)/(x_{max}^i-x_{min}^i)$, where $x_{min}^i$ and $x_{max}^i$ is the minimum and maximum values of the $i$-th covariate within the entire dataset. Following the suggestion in Section \ref{effect_rf}, we select the number of random features $M=100$, and the choices of regularization parameters $\lambda$ are the same as that in Section  \ref{sim_data}.

Considering that car buyers primarily focus on the average price of used cars, we only consider the quantile level $\tau=0.5$. Table \ref{tab.4} depicts the averaged PQE and its standard deviation (50 repeats) of three methods, including the exact KQR, KQR-RF with uniform random features, and KQR-RF with leverage scores random features. Clearly, two random feature methods exhibit close performance compared to the exact KQR, especially when we use the data-dependent random features. These results further support the effectiveness of  KQR-RF and substantiate the theoretical results provided in the main text. 

\begin{table}
\centering
\footnotesize
\label{tab.4}
\caption{Averaged PQE and its standard deviation for $\tau=0.5$ of different methods in used car price dataset.}
\begin{tabular}{cccc}
\hline
Method& Exact KQR& KQR-RF(Uniform RF)&KQR-RF(Leverage scores RF)\\
\hline
PQE& 0.094(0.000) & 0.105(0.004) & 0.098(0.003)
\\
\hline
\end{tabular}
\end{table}

\subsection{True quantile functions in the simulation of the main text}

To graphically show the quantile function at different quantile levels in the simulation of the main text , we consider three different settings: (1) worst case ($r=0,\gamma=1$); (2) general case ($r=1/2, \gamma=1$); (3) most benign case ($r=1, \gamma=0$). We generate data with size $N=200$, and plot the quantile curves for $\tau \in \{0.25,0.5,0.75\}$ in Figure \ref{fig_1}.

\begin{figure}
    \centering 
   \setlength{\abovecaptionskip}{0.cm}    \includegraphics[width=1\textwidth]{} 
    \caption{True quantile curves for $r=0,\gamma=1$ (left), $r=1/2, \gamma=1$ (middle), and $r=1, \gamma=0$ (right). }
    \label{fig_1}
\end{figure}

\end{document}